\documentclass[english]{article}
\usepackage[T1]{fontenc}
\usepackage[utf8]{inputenc}
\usepackage{babel}
\usepackage{float}
\usepackage{enumitem}
\usepackage{dsfont}
\usepackage{amsmath}
\usepackage{amsthm}
\usepackage{amssymb}
\usepackage[numbers]{natbib}
\usepackage[pdfusetitle,
 bookmarks=true,bookmarksnumbered=false,bookmarksopen=false,
 breaklinks=false,pdfborder={0 0 1},backref=false,colorlinks=false]
 {hyperref}

\makeatletter

\floatstyle{ruled}
\newfloat{algorithm}{tbp}{loa}
\providecommand{\algorithmname}{Algorithm}
\floatname{algorithm}{\protect\algorithmname}

\theoremstyle{remark}
\newtheorem{rem}{\protect\remarkname}
\theoremstyle{plain}
\newtheorem{assumption}{\protect\assumptionname}
\theoremstyle{plain}
\newtheorem{thm}{\protect\theoremname}
\theoremstyle{plain}
\newtheorem{cor}{\protect\corollaryname}
\theoremstyle{plain}
\newtheorem{lem}{\protect\lemmaname}
\theoremstyle{definition}
\newtheorem{defn}{\protect\definitionname}
\theoremstyle{definition}
\newtheorem{condition}{\protect\conditionname}
\theoremstyle{plain}
\newtheorem{prop}{\protect\propositionname}
\newlist{casenv}{enumerate}{4}
\setlist[casenv]{leftmargin=*,align=left,widest={iiii}}
\setlist[casenv,1]{label={{\itshape\ \casename} \arabic*.},ref=\arabic*}
\setlist[casenv,2]{label={{\itshape\ \casename} \roman*.},ref=\roman*}
\setlist[casenv,3]{label={{\itshape\ \casename\ \alph*.}},ref=\alph*}
\setlist[casenv,4]{label={{\itshape\ \casename} \arabic*.},ref=\arabic*}
\theoremstyle{plain}
\newtheorem{fact}{\protect\factname}

\@ifundefined{date}{}{\date{}}
\usepackage[paperwidth=8.5in, paperheight=11in, margin=1in]{geometry}

\usepackage{enumitem}
\setlist[itemize]{leftmargin=*}
\setlist[enumerate]{leftmargin=*}

\makeatother

\providecommand{\assumptionname}{Assumption}
\providecommand{\casename}{Case}
\providecommand{\conditionname}{Condition}
\providecommand{\corollaryname}{Corollary}
\providecommand{\definitionname}{Definition}
\providecommand{\factname}{Fact}
\providecommand{\lemmaname}{Lemma}
\providecommand{\propositionname}{Proposition}
\providecommand{\remarkname}{Remark}
\providecommand{\theoremname}{Theorem}

\begin{document}
\global\long\def\A{\mathsf{A}}%
\global\long\def\B{\mathcal{B}}%
\global\long\def\C{\mathsf{C}}%
\global\long\def\E{\mathbb{E}}%
\global\long\def\N{\mathbb{N}}%
\global\long\def\R{\mathbb{R}}%
\global\long\def\calR{\mathcal{R}}%
\global\long\def\P{\mathbb{P}}%
\global\long\def\F{\mathcal{F}}%
\global\long\def\S{\mathcal{S}}%
\global\long\def\J{\mathcal{J}}%
\global\long\def\G{\mathcal{G}}%
\global\long\def\X{\mathcal{X}}%
\global\long\def\argmin{\mathrm{argmin}}%
\global\long\def\bg{\boldsymbol{g}}%
\global\long\def\bh{\boldsymbol{h}}%
\global\long\def\bu{\boldsymbol{u}}%
\global\long\def\bv{\boldsymbol{v}}%
\global\long\def\bw{\boldsymbol{w}}%
\global\long\def\bx{\boldsymbol{x}}%
\global\long\def\by{\boldsymbol{y}}%
\global\long\def\bz{\boldsymbol{z}}%
\global\long\def\brho{\boldsymbol{\rho}}%
\global\long\def\bzero{\boldsymbol{0}}%
\global\long\def\p{\mathsf{p}}%
\global\long\def\defeq{\triangleq}%
\global\long\def\d{\mathrm{d}}%
\global\long\def\1{\mathds{1}}%
\global\long\def\prog{\mathsf{prog}}%
\global\long\def\rand{\mathsf{rand}}%
\global\long\def\op{\mathsf{op}}%
\global\long\def\interior{\mathsf{int}}%
\global\long\def\ortho{\mathsf{Ortho}}%
\global\long\def\init{\mathsf{init}}%
\global\long\def\uni{\mathsf{Uniform}}%
\global\long\def\reg{\mathsf{R}}%
\global\long\def\dom{\mathsf{dom}}%
\global\long\def\mydots{\dots}%
\global\long\def\err{\boldsymbol{\epsilon}}%

\global\long\def\OGD{\mathsf{OGD}}%
\global\long\def\OOGD{\mathsf{OOGD}}%
\global\long\def\DA{\mathsf{DA}}%
\global\long\def\FTRL{\mathsf{FTRL}}%
\global\long\def\Ada{\mathsf{AdaGrad}}%
\global\long\def\otnc{\mathsf{O2NC}}%
\global\long\def\SGD{\mathsf{SGD}}%
\global\long\def\OAda{\mathsf{OAdaGrad}}%
\global\long\def\OAdaR{\mathsf{OAdaGradR}}%
\global\long\def\OMD{\mathsf{OMD}}%
\global\long\def\OOMD{\mathsf{OOMD}}%

\title{Online Convex Optimization with Heavy Tails:\\Old Algorithms, New
Regrets, and Applications\thanks{A short, self-contained version, including a subset of results, has
been accepted at ALT 2026.}}
\author{Zijian Liu\thanks{Stern School of Business, New York University, zl3067@stern.nyu.edu.}}
\maketitle
\begin{abstract}
In Online Convex Optimization (OCO), when the stochastic gradient
has a finite variance, many algorithms provably work and guarantee
a sublinear regret. However, limited results are known if the gradient
estimate has a heavy tail, i.e., the stochastic gradient only admits
a finite $\mathsf{p}$-th central moment for some $\mathsf{p}\in\left(1,2\right]$.
Motivated by it, this work examines different old algorithms for OCO
(e.g., Online Gradient Descent) in the more challenging heavy-tailed
setting. Under the standard bounded domain assumption, we establish
new regrets for these classical methods without any algorithmic modification.
Remarkably, these regret bounds are fully optimal in all parameters
(can be achieved even without knowing $\mathsf{p}$), suggesting that
OCO with heavy tails can be solved effectively without any extra operation
(e.g., gradient clipping). Our new results have several applications.
A particularly interesting one is the first provable and optimal convergence
result for nonsmooth nonconvex optimization under heavy-tailed noise
without gradient clipping. Furthermore, we explore broader settings
(e.g., smooth OCO) and extend our ideas to optimistic algorithms to
handle different cases simultaneously.
\end{abstract}

\section{Introduction\label{sec:introduction}}

This paper studies the online learning problem with convex losses,
also known as Online Convex Optimization (OCO), a widely applicable
framework that learns under streaming data \citep{Cesa-Bianchi_Lugosi_2006,OPT-013,orabona2019modern,MAL-018}.
OCO has tons of implications for both designing and analyzing algorithms
in different areas, for example, stochastic optimization \citep{JMLR:v12:duchi11a,kingma2014adam,McMahanS10},
PAC learning \citep{1327806}, control theory \citep{pmlr-v97-agarwal19c,hazan2025introductiononlinecontrol},
etc.

In an OCO problem, a learning algorithm $\A$ would interact with
the environment in $T$ rounds, where $T\in\N$ can be either known
or unknown. Formally, in each round $t$, the learner $\A$ first
decides an output $\bx_{t}\in\X$ from a convex feasible set $\X\subseteq\R^{d}$,
then the environment reveals a convex loss function $\ell_{t}:\X\to\R$,
and $\A$ incurs a loss of $\ell_{t}(\bx_{t})$. After $T$ many rounds,
the quantity measuring the algorithm's performance is called regret,
defined relative to any fixed competitor $\bx\in\X$ as follows:
\[
\reg_{T}^{\A}(\bx)\defeq\sum_{t=1}^{T}\ell_{t}(\bx_{t})-\ell_{t}(\bx).
\]

In the classical setting, instead of observing full information about
$\ell_{t}$, the learner $\A$ is only guaranteed to receive a subgradient
$\nabla\ell_{t}(\bx_{t})\in\partial\ell_{t}(\bx_{t})$ at its decision,
where $\partial\ell_{t}(\bx_{t})$ denotes the subdifferential set
of $\ell_{t}$ at $\bx_{t}$ \citep{rockafellar1997convex}. This
turns out to be enough for our purpose of minimizing the regret, since
any OCO problem can be reduced to an Online Linear Optimization (OLO)
instance via the inequality $\ell_{t}(\bx_{t})-\ell_{t}(\bx)\leq\left\langle \nabla\ell_{t}(\bx_{t}),\bx_{t}-\bx\right\rangle $,
which holds due to convexity. Under the standard bounded domain assumption,
i.e., $\X$ has a finite diameter $D$, many classical algorithms,
e.g., Online Gradient Descent ($\OGD$) \citep{zinkevich2003online},
guarantee an optimal sublinear regret $GD\sqrt{T}$ for $G$-Lipschitz
$\ell_{t}$. Even better, in the case that computing an exact subgradient
is intractable, and one could only query a stochastic estimate $\bg_{t}$
satisfying $\E\left[\bg_{t}\mid\bx_{t}\right]\in\partial\ell_{t}(\bx_{t})$,
the $\OGD$ algorithm can still solve OCO effectively with a provable
$(G+\sigma)D\sqrt{T}$ regret bound in expectation if the stochastic
noise $\bg_{t}-\nabla\ell_{t}(\bx_{t})$ has a bounded second moment
$\sigma^{2}$ for some $\sigma\geq0$, which is called the finite
variance condition.

However, many works have pointed out that even for the easier stochastic
optimization (i.e., $\ell_{t}=F$ for a common $F$), the typical
finite variance assumption is too optimistic and can be violated in
different tasks \citep{pmlr-v139-hodgkinson21a,pmlr-v97-simsekli19a,NEURIPS2020_b05b57f6},
and their observations suggest that the stochastic gradient only admits
a finite $\mathsf{p}$-th central moment upper bounded by $\sigma^{\p}$
for some $\mathsf{p}\in\left(1,2\right]$, which is named heavy-tailed
noise. This new assumption generalizes the classical finite variance
condition ($\p=2$) and becomes challenging when $\p<2$. A particular
evidence is that the famous Stochastic Gradient Descent ($\SGD$)
algorithm \citep{10.1214/aoms/1177729586} (which is exactly $\OGD$
for stochastic optimization) provably diverges \citep{NEURIPS2020_b05b57f6}.

Though heavy-tailed stochastic optimization has been extensively studied
\citep{liu2023stochasticV2,NEURIPS2023_4c454d34,pmlr-v202-sadiev23a},
limited results are known for OCO with heavy tails. The only work
under this topic that we are aware of is \citep{NEURIPS2022_349956de},
which established a parameter-free regret bound in high probability
(more discussions provided later). However, their algorithm includes
many nontrivial modifications like gradient clipping and significantly
deviates from the existing simple OCO algorithms used in practice.
Especially, consider $\OGD$ as an example. Though the heavy-tailed
issue is known, $\OGD$ (or just think of it as $\SGD$) still works
(sometimes very well) in practice even without gradient clipping and
is arguably one of the most popular optimizers, which seemingly contradicts
the theory of nonconvergence mentioned before. This indicates that,
for classical OCO algorithms under heavy-tailed noise, a huge gap
exists between the empirical convergence (or even the effective practical
performance) and theoretical guarantees. Therefore, we are naturally
led to the following question:
\begin{center}
\textit{In what context can old OCO algorithms work under heavy tails,
in what sense, and to what extent?}
\par\end{center}

\subsection{Contributions}

Motivated by the above question, we examine three classical algorithms
for OCO: Online Gradient Descent ($\OGD$) \citep{zinkevich2003online},
Dual Averaging ($\DA$) \citep{nesterov2009primal,NIPS2009_7cce53cf},
and $\Ada$ \citep{JMLR:v12:duchi11a,McMahanS10}, and answer it as
follows:
\begin{center}
\textit{Under the standard bounded domain assumption, the in-expectation
regret $\E\left[\reg_{T}^{\A}(\bx)\right]$ is finite and optimal
for any $\A\in\left\{ \OGD,\DA,\Ada\right\} $, without any algorithmic
modification.}
\par\end{center}

In detail, our new results for heavy-tailed OCO are summarized here:
\begin{itemize}
\item We prove the only and the first optimal regret bound $\E\left[\reg_{T}^{\A}(\bx)\right]\lesssim GD\sqrt{T}+\sigma DT^{1/\p},\forall\bx\in\X$
for any $\A\in\left\{ \OGD,\DA,\Ada\right\} $. Remarkably, $\Ada$
can achieve this result without knowing any of the Lipschitz parameter
$G$, noise level $\sigma$, and tail index $\p$.
\item We extend the analysis of $\OGD$ to Online Strongly Convex Optimization
with heavy tails and establish the first provable result $\E\left[\reg_{T}^{\OGD}(\bx)\right]\lesssim\frac{G^{2}\log T}{\mu}+\frac{\sigma^{\p}D^{2-\p}}{\mu^{\p-1}}T^{2-\p},\forall\bx\in\X$,
where $\mu>0$ is the modulus of strong convexity and $T^{0}$ should
be read as $\log T$.
\end{itemize}
Based on the new regret bounds for OCO with heavy tails, we provide
the following applications:
\begin{itemize}
\item For nonsmooth convex optimization with heavy tails, we show the first
optimal in-expectation rate $GD/\sqrt{T}+\sigma D/T^{1-1/\p}$ achieved
without gradient clipping, which applies to both the average iterate
and last iterate, demonstrating that $\SGD$ does converge once the
domain is bounded. Moreover, we also give the rate when strong convexity
additionally holds.
\item For nonsmooth nonconvex optimization with heavy tails, we show the
first provable sample complexity of $G^{2}\delta^{-1}\epsilon^{-3}+\sigma^{\frac{\p}{\p-1}}\delta^{-1}\epsilon^{-\frac{2\p-1}{\p-1}}$
for finding a $(\delta,\epsilon)$-stationary point without gradient
clipping. In addition, we also establish the first lower bound for
nonsmooth nonconvex optimization under heavy tails, matching our sample
complexity in the high accuracy and noisy regime (i.e., $\epsilon$
is small enough with $\sigma>0$). These two results together provide
a nearly complete characterization of the complexity of finding $(\delta,\epsilon)$-stationary
points in the heavy-tailed setting. Moreover, we give the first convergence
result when the problem-dependent parameters (like $G$, $\sigma$,
and $\p$) are unknown in advance, resolving a question asked by \citep{pmlr-v235-liu24bo}.
\end{itemize}
Furthermore, we explore broader settings. For example, when each $\ell_{t}$
is $H$-smooth, we present new regrets that extend the classical $L^{\star}$
bounds \citep{orabona2019modern,NIPS2010_76cf99d3} to the heavy-tailed
noise case. As an important implication, we show that $\SGD$ converges
at a rate of $HD^{2}/T+\sigma D/T^{1-1/\p}$ for smooth convex optimization
with heavy tails (even for the last iterate). Finally, we extend our
ideas to optimistic algorithms to address various cases simultaneously
and employ optimistic algorithms to give the first provable result
for H\"{o}lder smooth nonconvex optimization under heavy tails, where
the problem-dependent parameters can be either known or unknown.

\subsection{Discussion on \citep{NEURIPS2022_349956de}}

As noted, \citep{NEURIPS2022_349956de} is the only work for OCO with
heavy tails, as far as we know. There are two major discrepancies
between them and us. First, they consider the case where the feasible
set $\X$ is unbounded and aim to establish a parameter-free regret
bound, i.e., the regret bound has a linear dependency on $\left\Vert \bx\right\Vert $
(up to an extra $\mathrm{polylog}\left\Vert \bx\right\Vert $) for
any competitor $\bx\in\X$. Second, they focus on high-probability
rather than in-expectation analysis. As such, their regret is in the
form of $\reg_{T}^{\A}(\bx)\lesssim(G+\sigma)\left\Vert \bx\right\Vert T^{1/\p},\forall\bx\in\X$
(up to extra polylogarithmic factors) with high probability. Without
a doubt, their setting is harder than ours implying their bound is
stronger as it can convert to an in-expectation regret $\E\left[\reg_{T}^{\A}(\bx)\right]\lesssim(G+\sigma)DT^{1/\p}$
for any bounded domain $\X$ with a diameter $D$. 

We emphasize that the motivation behind \citep{NEURIPS2022_349956de}
differs heavily from ours. They aim to solve heavy-tailed OCO with
a new proposed method that contains many nontrivial technical tricks,
including gradient clipping, artificially added regularization, and
solving the additional fixed-point equation. However, their result
cannot reflect why the existing simple OCO algorithms like $\OGD$
work in practice under heavy-tailed noise. In contrast, our goal is
to examine whether, when, and how the classical OCO algorithms work
under heavy tails, thereby filling the missing piece in the literature.

Moreover, we would like to mention two limitations of \citep{NEURIPS2022_349956de}.
First, though the $T^{1/\p}$ regret seems tight as it matches the
lower bound \citep{nemirovskij1983problem,5394945,pmlr-v178-vural22a},
this may not be the best, since an optimal bound should recover the
standard $\sqrt{T}$ regret in the deterministic case (i.e., $\sigma=0$),
as one can imagine. This suggests that their bound is not entirely
optimal. Second, we remark that they require knowing both problem-dependent
parameters $G$, $\sigma$, $\p$ and time horizon $T$ in the algorithm,
which may be hard to satisfy in the online setting. In comparison,
our regret bound $GD\sqrt{T}+\sigma DT^{1/\p}$ is fully optimal in
all parameters\footnote{The optimality is justified by a matching lower bound $GD\sqrt{T}+\sigma DT^{1/\p}$
that can be obtained by combining Theorem 5.1 of \citep{orabona2019modern}
and Section 5.3.1 of \citep{nemirovskij1983problem}.}. Importantly, $\Ada$ can achieve it while oblivious to the problem
information.

\subsection{Discussion on High-Probability Bounds}

One may notice that all of our new results are measured in expectation
and naturally wonder whether high-probability bounds (i.e., polylogarithmic
dependency on the failure probability) are achievable in the same
setting of this work. Though we cannot fully address this question
at this time, some preliminary discussions are provided in Appendix
\ref{sec:hardness}, including a negative result for $\OGD/\SGD$.

\section{Preliminary}

\textbf{Notation. }$\N$ denotes the set of natural numbers (excluding
$0$). $\left[T\right]\triangleq\left\{ 1,\mydots,T\right\} ,\forall T\in\N$.
$a\land b\defeq\min\left\{ a,b\right\} $ and $a\lor b\defeq\max\left\{ a,b\right\} $.
We write $a\lesssim b$ (resp., $a\gtrsim b$) if $a\leq Cb$ (resp.,
$a\geq Cb$) for a universal constant $C>0$. $\left\lfloor \cdot\right\rfloor $
and $\left\lceil \cdot\right\rceil $ respectively represent the floor
and ceiling functions. $\left\langle \cdot,\cdot\right\rangle $ denotes
the Euclidean inner product and $\left\Vert \cdot\right\Vert \defeq\sqrt{\left\langle \cdot,\cdot\right\rangle }$
is the standard $2$-norm. Given $\bx\in\R^{d}$ and $D>0$, $\B^{d}(\bx,D)$
is the Euclidean ball in $\R^{d}$ centered at $\bx$ with a radius
$D$. In the case $\bx=\bzero$, we use the shorthand $\B^{d}(D)$.
Given $A\subseteq\R^{d}$, $\interior A$ stands for the interior
points of $A$. For nonempty closed convex $A\subseteq\R^{d}$, $\Pi_{A}$
is the Euclidean projection operator onto $A$. For a convex function
$f$, $\partial f(\bx)$ denotes its subgradient set at $\bx$.\textbf{}
\begin{rem}
\label{rem:SMD}We choose the Euclidean norm only for simplicity.
Extending the results of this work to any general norm by Online Mirror
Descent ($\OMD$) \citep{BECK2003167,nemirovskij1983problem} via
the Bregman divergence is straightforward.
\end{rem}
This work studies OCO in the context of Assumption \ref{assu:OCO}.
\begin{assumption}
\label{assu:OCO}We consider the following series of assumptions:
\begin{itemize}
\item $\X\subset\R^{d}$ is a nonempty closed convex set bounded by $D$,
i.e., $\sup_{\bx,\by\in\X}\left\Vert \bx-\by\right\Vert \leq D$.
\item $\ell_{t}:\X\to\R$ is closed convex for all $t\in\left[T\right]$.
\item $\ell_{t}$ is $G$-Lipschitz on $\X$, i.e., $\left\Vert \nabla\ell_{t}(\bx)\right\Vert \leq G,\forall\bx\in\X,\nabla\ell_{t}(\bx)\in\partial\ell_{t}(\bx)$,
for all $t\in\left[T\right]$.
\item Given a point $\bx_{t}\in\X$ at the $t$-th iteration, one can query
$\bg_{t}\in\R^{d}$ satisfying $\nabla\ell_{t}(\bx_{t})\defeq\E\left[\bg_{t}\mid\F_{t-1}\right]\in\partial\ell_{t}(\bx_{t})$
and $\E\left[\left\Vert \err_{t}\right\Vert ^{\p}\right]\leq\sigma^{\p}$
for some $\p\in\left(1,2\right]$ and $\sigma\geq0$, where $\F_{t}\defeq\sigma(\bg_{1},\mydots,\bg_{t})$
denotes the natural filtration and $\err_{t}\defeq\bg_{t}-\nabla\ell_{t}(\bx_{t})$
is the stochastic noise.
\end{itemize}
\end{assumption}
\begin{rem}
$D$ is recognized as known, like ubiquitously assumed in the OCO
literature. Moreover, $\bx_{t}$ denotes the decision/output of the
online learning algorithm by default.
\end{rem}
\begin{rem}
The idea presented in this work can be extended to the composite
setting (i.e., $\ell_{t}+\psi_{t}$, where $\psi_{t}$ is convex and
assumed to be known at time $t-1$) via the proximal update, provably.
\end{rem}
In Assumption \ref{assu:OCO}, the first three points are standard,
and the fourth is the heavy-tailed noise assumption. In particular,
$\p=2$ recovers the standard finite variance condition.

\section{Old Algorithms under Heavy Tails\label{sec:OCO}}

In this section, we revisit three classical algorithms for OCO: $\OGD$,
$\DA$, and $\Ada$, whose regret bounds are well-studied in the finite
variance case but remain unknown under heavy-tailed noise.

The basic idea of proving these algorithms work under heavy tails
is to leverage the boundedness property of $\X$. We will describe
it in more detail using $\OGD$ as an illustrated example. The analysis
of $\DA$ follows a similar way at a high level, but differs in some
details. However, though $\Ada$ can be viewed as $\OGD$ with an
adaptive stepsize, the way to utilize the boundedness property is
entirely different. All formal proofs are deferred to the appendix
due to space limitations.

\subsection{New Regret for Online Gradient Descent}

\begin{algorithm}[H]
\caption{\label{alg:OGD}Online Gradient Descent ($\protect\OGD$) \citep{zinkevich2003online}}

\textbf{Input:} initial point $\bx_{1}\in\X$, stepsize $\eta_{t}>0$

\textbf{for} $t=1$ \textbf{to} $T$ \textbf{do}

$\quad$$\bx_{t+1}=\Pi_{\X}(\bx_{t}-\eta_{t}\bg_{t})$

\textbf{end for}
\end{algorithm}

We begin from arguably the most basic algorithm for OCO, Online Gradient
Descent ($\OGD$).

\textbf{A well known analysis. }The regret bound of $\OGD$ has been
extensively studied \citep{OPT-013,orabona2019modern,MAL-018}. The
most well known analysis is perhaps the following one: for any $\bx\in\X$,
there is
\[
\left\Vert \bx_{t+1}-\bx\right\Vert ^{2}=\left\Vert \Pi_{\X}(\bx_{t}-\eta_{t}\bg_{t})-\Pi_{\X}(\bx)\right\Vert ^{2}\leq\left\Vert \bx_{t}-\eta_{t}\bg_{t}-\bx\right\Vert ^{2},
\]
where the inequality holds by the nonexpansive property of $\Pi_{\X}$.
Expanding both sides and rearranging terms yield that
\begin{equation}
\left\langle \bg_{t},\bx_{t}-\bx\right\rangle \leq\frac{\left\Vert \bx_{t}-\bx\right\Vert ^{2}-\left\Vert \bx_{t+1}-\bx\right\Vert ^{2}}{2\eta_{t}}+\frac{\eta_{t}\left\Vert \bg_{t}\right\Vert ^{2}}{2}.\label{eq:OGD-1}
\end{equation}
If $\bg_{t}$ admits a finite variance, i.e., $\p=2$ in Assumption
\ref{assu:OCO}, taking expectations on both sides, then following
a standard analysis for $\eta_{t}=\frac{D}{(G+\sigma)\sqrt{t}}$ (or
$\eta_{t}=\frac{D}{(G+\sigma)\sqrt{T}}$ if $T$ is known) gives the
regret
\[
\E\left[\reg_{T}^{\OGD}(\bx)\right]\lesssim(G+\sigma)D\sqrt{T},\forall\bx\in\X.
\]
However, the step of taking expectations on the R.H.S. of (\ref{eq:OGD-1})
crucially relies on the finite variance condition of $\bg_{t}$. Therefore,
one may naturally think $\OGD$ would not guarantee a finite regret
if $\p<2$.

\textbf{A less well known analysis}{\bfseries\footnote{To clarify, the phrase ``less well known'' is compared to the first
one. This analysis itself is also famous in the literature. For example,
see Lemma 6.10 of \citep{orabona2019modern} and Lemma 3.1 of \citep{lan2020first}.}}\textbf{. }As discussed, the failure of the above proof under heavy-tailed
noise is due to (\ref{eq:OGD-1}). Therefore, if a tighter inequality
than (\ref{eq:OGD-1}) exists, then it might be possible to show that
$\OGD$ still works for $\p<2$. However, does it exist?

Actually, there is another less well known analysis to produce a better
inequality than (\ref{eq:OGD-1}). That is, first showing for any
$\bx\in\X$, by the optimality condition of the update rule,
\[
\left\langle \bg_{t},\bx_{t+1}-\bx\right\rangle \leq\frac{\left\langle \bx_{t}-\bx_{t+1},\bx_{t+1}-\bx\right\rangle }{\eta_{t}}=\frac{\left\Vert \bx_{t}-\bx\right\Vert ^{2}-\left\Vert \bx_{t+1}-\bx\right\Vert ^{2}-\left\Vert \bx_{t}-\bx_{t+1}\right\Vert ^{2}}{2\eta_{t}},
\]
and then obtaining
\begin{equation}
\left\langle \bg_{t},\bx_{t}-\bx\right\rangle \leq\frac{\left\Vert \bx_{t}-\bx\right\Vert ^{2}-\left\Vert \bx_{t+1}-\bx\right\Vert ^{2}}{2\eta_{t}}+\left\langle \bg_{t},\bx_{t}-\bx_{t+1}\right\rangle -\frac{\left\Vert \bx_{t}-\bx_{t+1}\right\Vert ^{2}}{2\eta_{t}}.\label{eq:OGD-2}
\end{equation}
Note that (\ref{eq:OGD-2}) is tighter than (\ref{eq:OGD-1}) as $\left\langle \bg_{t},\bx_{t}-\bx_{t+1}\right\rangle \leq\left\Vert \bg_{t}\right\Vert \left\Vert \bx_{t}-\bx_{t+1}\right\Vert \leq\frac{\eta_{t}\left\Vert \bg_{t}\right\Vert ^{2}}{2}+\frac{\left\Vert \bx_{t}-\bx_{t+1}\right\Vert ^{2}}{2\eta_{t}}$,
where the first step is due to Cauchy-Schwarz inequality and the second
one is by AM-GM inequality.

\textbf{Handle $\p<2$ in a simple way.} Though we have tightened
(\ref{eq:OGD-1}) into (\ref{eq:OGD-2}), can inequality (\ref{eq:OGD-2})
help to overcome heavy tails? The answer is surprisingly positive,
and our solution is fairly simple. Instead of directly applying AM-GM
inequality in the second step, we recall $\bg_{t}=\nabla\ell_{t}(\bx_{t})+\err_{t}$
and use triangle inequality to obtain
\begin{equation}
\left\langle \bg_{t},\bx_{t}-\bx_{t+1}\right\rangle \leq\left\Vert \bg_{t}\right\Vert \left\Vert \bx_{t}-\bx_{t+1}\right\Vert \leq\left(\left\Vert \nabla\ell_{t}(\bx_{t})\right\Vert +\left\Vert \err_{t}\right\Vert \right)\left\Vert \bx_{t}-\bx_{t+1}\right\Vert .\label{eq:OGD-3}
\end{equation}
On the one hand, by $\left\Vert \nabla\ell_{t}(\bx_{t})\right\Vert \leq G$
and AM-GM inequality, there is
\begin{equation}
\left\Vert \nabla\ell_{t}(\bx_{t})\right\Vert \left\Vert \bx_{t}-\bx_{t+1}\right\Vert \leq G\left\Vert \bx_{t}-\bx_{t+1}\right\Vert \leq\eta_{t}G^{2}+\frac{\left\Vert \bx_{t}-\bx_{t+1}\right\Vert ^{2}}{4\eta_{t}}.\label{eq:OGD-4}
\end{equation}
On the other hand, let $\p_{\star}\defeq\frac{\p}{\p-1}$ and $\C(\p)\defeq\frac{(4\p-4)^{\p-1}}{\p^{\p}}$,
we have
\begin{align}
\left\Vert \err_{t}\right\Vert \left\Vert \bx_{t}-\bx_{t+1}\right\Vert  & =\left(\frac{4\eta_{t}}{\p_{\star}}\right)^{\frac{1}{\p_{\star}}}\left\Vert \err_{t}\right\Vert \left\Vert \bx_{t}-\bx_{t+1}\right\Vert ^{1-\frac{2}{\p_{\star}}}\cdot\left(\frac{\p_{\star}\left\Vert \bx_{t}-\bx_{t+1}\right\Vert ^{2}}{4\eta_{t}}\right)^{\frac{1}{\p_{\star}}}\nonumber \\
 & \overset{(a)}{\leq}\frac{\left(\frac{4\eta_{t}}{\p_{\star}}\right)^{\frac{\p}{\p_{\star}}}\left\Vert \err_{t}\right\Vert ^{\p}\left\Vert \bx_{t}-\bx_{t+1}\right\Vert ^{\p-\frac{2\p}{\p_{\star}}}}{\p}+\frac{\left\Vert \bx_{t}-\bx_{t+1}\right\Vert ^{2}}{4\eta_{t}}\nonumber \\
 & \overset{(b)}{\leq}\C(\p)\eta_{t}^{\p-1}\left\Vert \err_{t}\right\Vert ^{\p}D^{2-\p}+\frac{\left\Vert \bx_{t}-\bx_{t+1}\right\Vert ^{2}}{4\eta_{t}},\label{eq:OGD-5}
\end{align}
where $(a)$ is by Young's inequality and $(b)$ is due to $\left\Vert \bx_{t}-\bx_{t+1}\right\Vert \leq D$,
$\p_{\star}=\frac{\p}{\p-1}$, and $\C(\p)=\frac{(4\p-4)^{\p-1}}{\p^{\p}}$.
Next, we plug (\ref{eq:OGD-4}) and (\ref{eq:OGD-5}) back into (\ref{eq:OGD-3}),
then combine with (\ref{eq:OGD-2}) to know
\begin{equation}
\left\langle \bg_{t},\bx_{t}-\bx\right\rangle \leq\frac{\left\Vert \bx_{t}-\bx\right\Vert ^{2}-\left\Vert \bx_{t+1}-\bx\right\Vert ^{2}}{2\eta_{t}}+\eta_{t}G^{2}+\C(\p)\eta_{t}^{\p-1}\left\Vert \err_{t}\right\Vert ^{\p}D^{2-\p}.\label{eq:OGD-6}
\end{equation}
Notably, the term $\left\Vert \err_{t}\right\Vert ^{\p}$ has a correct
exponent $\p$. Thus, we can safely take expectations on both sides.
Finally, a standard analysis yields the following Theorem \ref{thm:main-OGD}
(see Appendix \ref{sec:OGD} for a formal proof).
\begin{thm}
\label{thm:main-OGD}Under Assumption \ref{assu:OCO}, taking $\eta_{t}=\frac{D}{G\sqrt{t}}\land\frac{D}{\sigma t^{1/\p}}$
in $\OGD$ (Algorithm \ref{alg:OGD}), we have
\[
\E\left[\reg_{T}^{\OGD}(\bx)\right]\lesssim GD\sqrt{T}+\sigma DT^{1/\p},\forall\bx\in\X.
\]
\end{thm}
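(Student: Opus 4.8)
The plan is to start from inequality~(\ref{eq:OGD-6}), which has already done the essential work of exposing the noise in the favorable form $\|\err_t\|^{\p}$ with the correct exponent, and then finish with the textbook telescoping-plus-stepsize-sum argument for $\OGD$ with a nonincreasing time-varying stepsize. First I would pass from the per-round inequality to expected regret: because $\bx_t$ is $\F_{t-1}$-measurable and $\E[\bg_t\mid\F_{t-1}]=\nabla\ell_t(\bx_t)$, the tower rule gives $\E[\langle\bg_t,\bx_t-\bx\rangle]=\E[\langle\nabla\ell_t(\bx_t),\bx_t-\bx\rangle]\ge\E[\ell_t(\bx_t)-\ell_t(\bx)]$, the last step by convexity (all quantities are integrable since $\|\bx_t-\bx\|\le D$ and $\E[\|\bg_t\|]\le G+(\E[\|\err_t\|^\p])^{1/\p}<\infty$). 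Summing~(\ref{eq:OGD-6}) over $t\in[T]$, taking expectations, and using $\E[\|\err_t\|^\p]\le\sigma^\p$ yields
\[
\E[\reg_T^{\OGD}(\bx)]\le\E\Big[\sum_{t=1}^T\frac{\|\bx_t-\bx\|^2-\|\bx_{t+1}-\bx\|^2}{2\eta_t}\Big]+G^2\sum_{t=1}^T\eta_t+\C(\p)D^{2-\p}\sigma^\p\sum_{t=1}^T\eta_t^{\p-1}.
\]

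Next I would bound the three sums separately. For the first, since $\eta_t$ is the minimum of two nonincreasing sequences it is itself nonincreasing, so Abel summation together with $\|\bx_t-\bx\|^2\le D^2$ and $\|\bx_{T+1}-\bx\|^2\ge0$ gives the standard estimate $\sum_{t}\frac{\|\bx_t-\bx\|^2-\|\bx_{t+1}-\bx\|^2}{2\eta_t}\le\frac{D^2}{2\eta_T}$, and since $\frac{1}{\eta_T}=\frac{G\sqrt T}{D}\lor\frac{\sigma T^{1/\p}}{D}\le\frac{G\sqrt T+\sigma T^{1/\p}}{D}$ this term is at most $\tfrac12(GD\sqrt T+\sigma DT^{1/\p})$. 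For the second, $\eta_t\le\frac{D}{G\sqrt t}$ gives $G^2\sum_t\eta_t\le GD\sum_t t^{-1/2}\le2GD\sqrt T$. For the third, $\eta_t\le\frac{D}{\sigma t^{1/\p}}$ gives $\eta_t^{\p-1}\le D^{\p-1}\sigma^{1-\p}t^{-(1-1/\p)}$, so $\C(\p)D^{2-\p}\sigma^\p\sum_t\eta_t^{\p-1}\le\C(\p)\sigma D\sum_t t^{-(1-1/\p)}\le\C(\p)\,\p\,\sigma D T^{1/\p}\lesssim\sigma D T^{1/\p}$, using $\C(\p)=\frac{(4\p-4)^{\p-1}}{\p^\p}\le4$ and $\p\le2$ uniformly on $(1,2]$. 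Adding the three pieces gives $\E[\reg_T^{\OGD}(\bx)]\lesssim GD\sqrt T+\sigma DT^{1/\p}$ for every $\bx\in\X$. (If $\sigma=0$ one reads $\frac{D}{\sigma t^{1/\p}}=+\infty$, so $\eta_t=\frac{D}{G\sqrt t}$ and only the $GD\sqrt T$ term survives, recovering the deterministic rate.)

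There is no deep obstacle remaining; the non-routine points are (i) writing the regret through subgradient inner products \emph{before} taking expectations so that the measurability of $\bx_t$ and $\E[\bg_t\mid\F_{t-1}]=\nabla\ell_t(\bx_t)$ can be used, (ii) the telescoping bound $\sum_t\frac{\|\bx_t-\bx\|^2-\|\bx_{t+1}-\bx\|^2}{2\eta_t}\le\frac{D^2}{2\eta_T}$, which genuinely relies on both $\eta_t$ being nonincreasing and every iterate lying in a set of diameter $D$, and (iii) checking that $\C(\p)$ and the sums $\sum_t t^{-1/2}$, $\sum_t t^{-(1-1/\p)}$ stay bounded by universal constants times $\sqrt T$ and $T^{1/\p}$ over the whole range $\p\in(1,2]$, so that the constant hidden in $\lesssim$ is genuinely universal.
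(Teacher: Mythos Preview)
Your proposal is correct and follows essentially the same route as the paper: start from~(\ref{eq:OGD-6}), sum over $t$, telescope the distance terms using $\eta_t$ nonincreasing and the diameter bound, take expectations via the tower rule and convexity, and bound each of the three sums by plugging in the two-branch stepsize. The only differences are cosmetic---you take expectations before summing rather than after, and you spell out integrability, the $\sigma=0$ edge case, and explicit constants, none of which the paper bothers with.
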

As far as we know, Theorem \ref{thm:main-OGD} is the first and only
provable result for $\OGD$ under heavy tails. Remarkably, it is not
only tight in $T$ \citep{nemirovskij1983problem,5394945,pmlr-v178-vural22a}
but also fully optimal in all parameters, in contrast to the bound
$(G+\sigma)DT^{1/\p}$ of \citep{NEURIPS2022_349956de}. This reveals
that OCO with heavy tails can be optimally solved as effectively as
the finite variance case once the domain is bounded, a classical condition
adopted in many existing works.

\textbf{Strongly convex functions.} We highlight that the above idea
can also be applied to Online Strongly Convex Optimization and leads
to a sublinear regret $T^{2-\p}$ better than $T^{1/\p}$. This extension
can be found in Appendix \ref{sec:OGD}.

\subsection{New Regret for Dual Averaging}

\begin{algorithm}[H]
\caption{\label{alg:DA}Dual Averaging ($\protect\DA$) \citep{nesterov2009primal,NIPS2009_7cce53cf}}

\textbf{Input:} initial point $\bx_{1}\in\X$, stepsize $\eta_{t}>0$

\textbf{for} $t=1$ \textbf{to} $T$ \textbf{do}

$\quad$$\bx_{t+1}=\Pi_{\X}(\bx_{1}-\eta_{t}\sum_{s=1}^{t}\bg_{s})$

\textbf{end for}
\end{algorithm}

\begin{rem}
It is known that $\DA$ is a special realization of the more general
Follow-the-Regularized-Leader $(\FTRL)$ framework \citep{pmlr-v15-mcmahan11b}.
To keep the work concise, we focus only on $\DA$. The key idea for
proving Theorem \ref{thm:main-DA} can be directly extended to show
new regret for $\FTRL$ under heavy-tailed noise.
\end{rem}
We turn our attention to the second candidate, the Dual Averaging
($\DA$) algorithm, which is given in Algorithm \ref{alg:DA}. Though
$\DA$ coincides with $\OGD$ when $\X=\R^{d}$ and $\eta_{t}=\eta$,
these two methods in general are not equivalent and can have significant
performance differences in practice. Therefore, it is also important
to understand $\DA$ under heavy tails.

Despite the proof strategies for $\OGD$ and $\DA$ are in different
flavors (even for $\p=2$), the basic idea presented before for $\OGD$
still works here, i.e., apply the boundedness property of $\X$ to
make the term $\left\Vert \err_{t}\right\Vert $ have a correct exponent.
Armed with this thought, we can prove the following new regret bound
for $\DA$ under heavy-tailed noise. We refer the reader to Appendix
\ref{sec:DA} for its proof.
\begin{thm}
\label{thm:main-DA}Under Assumption \ref{assu:OCO}, taking $\eta_{t}=\frac{D}{G\sqrt{t}}\land\frac{D}{\sigma t^{1/\p}}$
in $\DA$ (Algorithm \ref{alg:DA}), we have
\[
\E\left[\reg_{T}^{\DA}(\bx)\right]\lesssim GD\sqrt{T}+\sigma DT^{1/\p},\forall\bx\in\X.
\]
\end{thm}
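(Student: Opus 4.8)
The plan is to mirror the three-stage argument used for $\OGD$: (i)~derive a ``stability'' form of the $\DA$ regret bound carrying an explicit negative term $-\tfrac{1}{2\eta_{t-1}}\|\bx_t-\bx_{t+1}\|^{2}$, which plays the role of \eqref{eq:OGD-2}; (ii)~decompose $\bg_t=\nabla\ell_t(\bx_t)+\err_t$ and run the bounded-domain Young's-inequality steps \eqref{eq:OGD-3}--\eqref{eq:OGD-5} to turn $\langle\bg_t,\bx_t-\bx_{t+1}\rangle$ minus that negative term into $\eta_{t-1}G^{2}+\C(\p)\eta_{t-1}^{\p-1}\|\err_t\|^{\p}D^{2-\p}$, a quantity with the ``correct exponent'' on $\|\err_t\|$; and (iii)~take expectations and sum against the stated stepsize. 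Only stage~(i) differs from the $\OGD$ proof; stages~(ii)--(iii) are essentially copied.

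For stage~(i), regard $\DA$ as $\FTRL$ with the nonincreasing regularizer sequence $\psi_t(\bx)\defeq\tfrac{1}{2\eta_t}\|\bx-\bx_1\|^{2}$, so that $\bx_{t+1}=\argmin_{\bx\in\X}\bigl\{\psi_t(\bx)+\sum_{s=1}^{t}\langle\bg_s,\bx\rangle\bigr\}$, adopting the conventions $\eta_0\defeq\eta_1$ and $\bx_1\in\argmin_{\bx\in\X}\psi_0$. Set $\Phi_{t}\defeq\min_{\bx\in\X}\bigl\{\psi_{t-1}(\bx)+\sum_{s=1}^{t-1}\langle\bg_s,\bx\rangle\bigr\}$, attained at $\bx_t$. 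Two facts drive the telescoping: since $\eta_t\le\eta_{t-1}$ we have $\psi_t\ge\psi_{t-1}$ pointwise; and since the objective defining $\Phi_t$ is $\eta_{t-1}^{-1}$-strongly convex with constrained minimizer $\bx_t$, its value at $\bx_{t+1}\in\X$ is at least $\Phi_t+\tfrac{1}{2\eta_{t-1}}\|\bx_t-\bx_{t+1}\|^{2}$. Chaining these into $\Phi_{t+1}\ge\Phi_t+\tfrac{1}{2\eta_{t-1}}\|\bx_t-\bx_{t+1}\|^{2}+\langle\bg_t,\bx_{t+1}\rangle$ and summing, then using $\Phi_1=0$ and $\Phi_{T+1}\le\psi_T(\bx)+\sum_{t=1}^{T}\langle\bg_t,\bx\rangle\le\tfrac{D^{2}}{2\eta_T}+\sum_{t=1}^{T}\langle\bg_t,\bx\rangle$, and finally adding $\sum_{t=1}^{T}\langle\bg_t,\bx_t-\bx_{t+1}\rangle$ to both sides, yields
\[
\reg_T^{\DA}(\bx)\;\le\;\frac{D^{2}}{2\eta_T}+\sum_{t=1}^{T}\left(\langle\bg_t,\bx_t-\bx_{t+1}\rangle-\frac{\|\bx_t-\bx_{t+1}\|^{2}}{2\eta_{t-1}}\right),
\]
the $\DA$ analogue of \eqref{eq:OGD-2}; this is the standard $\FTRL$/lazy-mirror-descent lemma (cf.\ \citep{pmlr-v15-mcmahan11b,orabona2019modern}).

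Stage~(ii) is now verbatim from the $\OGD$ argument with $\eta_{t-1}$ in place of $\eta_t$: Cauchy--Schwarz and the triangle inequality give $\langle\bg_t,\bx_t-\bx_{t+1}\rangle\le(G+\|\err_t\|)\|\bx_t-\bx_{t+1}\|$; AM--GM handles $G\|\bx_t-\bx_{t+1}\|$ and Young's inequality together with $\|\bx_t-\bx_{t+1}\|\le D$ handles $\|\err_t\|\|\bx_t-\bx_{t+1}\|$, each absorbing one $\tfrac{1}{4\eta_{t-1}}\|\bx_t-\bx_{t+1}\|^{2}$ piece, so that $\langle\bg_t,\bx_t-\bx_{t+1}\rangle-\tfrac{1}{2\eta_{t-1}}\|\bx_t-\bx_{t+1}\|^{2}\le\eta_{t-1}G^{2}+\C(\p)\eta_{t-1}^{\p-1}\|\err_t\|^{\p}D^{2-\p}$. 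Taking conditional expectations and using $\E[\|\err_t\|^{\p}\mid\F_{t-1}]\le\sigma^{\p}$ gives
\[
\E\bigl[\reg_T^{\DA}(\bx)\bigr]\;\le\;\frac{D^{2}}{2\eta_T}+G^{2}\sum_{t=1}^{T}\eta_{t-1}+\C(\p)\,\sigma^{\p}D^{2-\p}\sum_{t=1}^{T}\eta_{t-1}^{\p-1}.
\]
For stage~(iii), note $\eta_{t-1}\le2\eta_t$ for all $t\ge1$ (as $t-1\ge t/2$ and $\eta_0=\eta_1$), $\C(\p)\lesssim1$ on $(1,2]$, and $\eta_T^{-1}=\tfrac1D\bigl(G\sqrt T\lor\sigma T^{1/\p}\bigr)$; then $\eta_t\le\tfrac{D}{G\sqrt t}$ with $\sum_{t\le T}t^{-1/2}\lesssim\sqrt T$ bounds the second term by $GD\sqrt T$, $\eta_t\le\tfrac{D}{\sigma t^{1/\p}}$ with $\sum_{t\le T}t^{1/\p-1}\lesssim T^{1/\p}$ bounds the third by $\sigma DT^{1/\p}$, and the first is $\le\tfrac12(GD\sqrt T+\sigma DT^{1/\p})$. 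Summing the three gives the claim; the degenerate cases $G=0$ or $\sigma=0$ follow from the convention $c/0=+\infty$, which removes the corresponding stepsize branch and matching error term.

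The only ingredient beyond classical $\DA$ analysis is recognizing, exactly as for $\OGD$, that the standard stability term $\tfrac{1}{2\eta_{t-1}}\|\bx_t-\bx_{t+1}\|^{2}$ is already strong enough to absorb a heavy-tailed $\|\err_t\|$ once $\X$ is bounded, so no clipping or modification is needed. I expect the only delicate point to be bookkeeping the index shift $\eta_{t-1}$ versus $\eta_t$ inherited from the lazy update and verifying it costs merely a universal constant; no other obstacle is anticipated.
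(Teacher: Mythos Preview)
Your proposal is correct and follows essentially the same approach as the paper: the paper derives the identical stability inequality $\sum_{t=1}^T\langle\bg_t,\bx_t-\bx\rangle\le\frac{\|\bx-\bx_1\|^2}{2\eta_T}+\sum_{t=1}^T\bigl(\langle\bg_t,\bx_t-\bx_{t+1}\rangle-\tfrac{1}{2\eta_{t-1}}\|\bx_t-\bx_{t+1}\|^2\bigr)$ via the standard $\FTRL$ telescoping (your $\Phi_t$ argument is just a repackaging of the same computation), then applies verbatim the $\OGD$ bound \eqref{eq:OGD-core-2} with $\eta_{t-1}$ in place of $\eta_t$ and concludes by the same stepsize summation. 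The only cosmetic difference is that the paper cites Lemma~7.1 of \citep{orabona2019modern} for the telescoping while you spell it out from first principles, and you note $\eta_{t-1}\le 2\eta_t$ explicitly where the paper just says ``following similar steps.''
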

As far as we know, Theorem \ref{thm:main-DA} is the first provable
and optimal regret for $\DA$ under heavy tails. It guarantees the
same tight bound as in Theorem \ref{thm:main-OGD} up to different
constants.

\subsection{New Regret for $\protect\Ada$}

\begin{algorithm}[H]
\caption{\label{alg:AdaGrad}$\protect\Ada$ \citep{JMLR:v12:duchi11a,McMahanS10}}

\textbf{Input:} initial point $\bx_{1}\in\X$, stepsize $\eta>0$

\textbf{for} $t=1$ \textbf{to} $T$ \textbf{do}

$\quad$$\eta_{t}=\eta V_{t}^{-1/2}$ where $V_{t}=\sum_{s=1}^{t}\left\Vert \bg_{s}\right\Vert ^{2}$

$\quad$$\bx_{t+1}=\Pi_{\X}(\bx_{t}-\eta_{t}\bg_{t})$

\textbf{end for}
\end{algorithm}

\begin{rem}
Algorithm \ref{alg:AdaGrad} is also named $\mathsf{AdaGrad\text{-}Norm}$
(e.g., \citep{pmlr-v97-ward19a}). We simply call it $\Ada$. It is
straightforward to generalize Theorem \ref{thm:main-AdaGrad} below
to the per-coordinate update version.
\end{rem}
Although Theorems \ref{thm:main-OGD} and \ref{thm:main-DA} are optimal,
they both suffer from an undesired point. That is, the stepsize $\eta_{t}=\frac{D}{G\sqrt{t}}\land\frac{D}{\sigma t^{1/\p}}$
requires knowing all problem-dependent parameters. However, it may
not be easy to obtain them in an online setting. Especially, it heavily
depends on the prior information about the tail index $\p$, which
is hard to know (even approximately) in advance. In other words, they
both lack the adaptive property to an unknown environment.

To handle this issue, we consider $\Ada$, a classical adaptive algorithm
for OCO. As can be seen, $\Ada$ is just $\OGD$ with an adaptive
stepsize. However, it is this adaptive stepsize that can help us to
overcome the above undesired point.
\begin{thm}
\label{thm:main-AdaGrad}Under Assumption \ref{assu:OCO}, taking
$\eta=D/\sqrt{2}$ in $\Ada$ (Algorithm \ref{alg:AdaGrad}), we have
\[
\E\left[\reg_{T}^{\Ada}(\bx)\right]\lesssim GD\sqrt{T}+\sigma DT^{1/\p},\forall\bx\in\X.
\]
\end{thm}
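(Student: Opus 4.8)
The plan is to run the ``less well known'' template behind \eqref{eq:OGD-2}, but exploit the adaptive stepsize $\eta_{t}=\eta V_{t}^{-1/2}$ to collapse the whole regret onto $\sqrt{V_{T}}=\sqrt{\sum_{t}\left\Vert \bg_{t}\right\Vert ^{2}}$, after which a single norm-monotonicity step handles the heavy tail — and, crucially, never references $G$, $\sigma$, or $\p$, which is exactly why $\Ada$ can stay oblivious. First I would note that the projection optimality condition (verbatim as in the derivation of \eqref{eq:OGD-2}) combined with the elementary bound $\left\langle \bg_{t},\bu\right\rangle -\left\Vert \bu\right\Vert ^{2}/(2\eta_{t})\leq\eta_{t}\left\Vert \bg_{t}\right\Vert ^{2}/2$ applied to $\bu=\bx_{t}-\bx_{t+1}$ gives the pathwise inequality
\[
\left\langle \bg_{t},\bx_{t}-\bx\right\rangle \leq\frac{\left\Vert \bx_{t}-\bx\right\Vert ^{2}-\left\Vert \bx_{t+1}-\bx\right\Vert ^{2}}{2\eta_{t}}+\frac{\eta_{t}\left\Vert \bg_{t}\right\Vert ^{2}}{2}.
\]
Summing over $t\in\left[T\right]$: since $1/\eta_{t}=\sqrt{V_{t}}/\eta$ is nondecreasing, Abel summation together with $\left\Vert \bx_{t}-\bx\right\Vert ^{2}\leq D^{2}$ telescopes the first sum to at most $D^{2}/(2\eta_{T})=D^{2}\sqrt{V_{T}}/(2\eta)$, while the standard $\Ada$ inequality $\sum_{t=1}^{T}\left\Vert \bg_{t}\right\Vert ^{2}/\sqrt{V_{t}}\leq2\sqrt{V_{T}}$ bounds the second sum by $\eta\sqrt{V_{T}}$. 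Taking $\eta=D/\sqrt{2}$ balances the two terms and yields $\sum_{t=1}^{T}\left\langle \bg_{t},\bx_{t}-\bx\right\rangle \leq\sqrt{2}\,D\sqrt{V_{T}}$, pathwise.

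Next I would pass to the regret: by convexity, $\reg_{T}^{\Ada}(\bx)\leq\sum_{t}\left\langle \nabla\ell_{t}(\bx_{t}),\bx_{t}-\bx\right\rangle =\sum_{t}\left\langle \bg_{t},\bx_{t}-\bx\right\rangle -\sum_{t}\left\langle \err_{t},\bx_{t}-\bx\right\rangle $. Since $\bx_{t}$ is $\F_{t-1}$-measurable, $\E\left[\err_{t}\mid\F_{t-1}\right]=\bzero$, and $\E\left\Vert \err_{t}\right\Vert \leq\sigma$ (Jensen, using $\p>1$) makes each cross-term integrable, the last sum vanishes in expectation, so $\E\left[\reg_{T}^{\Ada}(\bx)\right]\leq\sqrt{2}\,D\,\E\left[\sqrt{V_{T}}\right]$. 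The remaining job is to bound $\E[\sqrt{V_{T}}]$: write $\left\Vert \bg_{t}\right\Vert ^{2}\leq2G^{2}+2\left\Vert \err_{t}\right\Vert ^{2}$, so $\sqrt{V_{T}}\leq G\sqrt{2T}+\sqrt{2}\,(\sum_{t}\left\Vert \err_{t}\right\Vert ^{2})^{1/2}$; then, because $\p\leq2$, monotonicity of $\ell_{p}$-norms gives $(\sum_{t}\left\Vert \err_{t}\right\Vert ^{2})^{1/2}\leq(\sum_{t}\left\Vert \err_{t}\right\Vert ^{\p})^{1/\p}$, and Jensen applied to the concave map $x\mapsto x^{1/\p}$ with $\E\left\Vert \err_{t}\right\Vert ^{\p}\leq\sigma^{\p}$ gives $\E[(\sum_{t}\left\Vert \err_{t}\right\Vert ^{\p})^{1/\p}]\leq(T\sigma^{\p})^{1/\p}=\sigma T^{1/\p}$. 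Plugging back, $\E[\sqrt{V_{T}}]\leq G\sqrt{2T}+\sqrt{2}\,\sigma T^{1/\p}$, hence $\E[\reg_{T}^{\Ada}(\bx)]\leq2GD\sqrt{T}+2\sigma DT^{1/\p}\lesssim GD\sqrt{T}+\sigma DT^{1/\p}$.

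The genuinely nontrivial observation — the ``main obstacle'' only in the sense that it is where all the mileage comes from — is the last paragraph: the adaptive denominator $\sqrt{V_{T}}$ turns the regret into an $\ell_{2}$-type aggregation of the noise, which can be relaxed to an $\ell_{\p}$ aggregation (the inequality points the favorable way precisely because $\p\leq2$), and only then does Jensen deliver the correct $T^{1/\p}$ rate while keeping the algorithm parameter-free; notably, unlike the $\OGD$ proof, no bounded-domain trick in the spirit of \eqref{eq:OGD-5} is needed. Everything else is routine, though one should verify carefully that $\bx_{t}$ is $\F_{t-1}$-measurable and that the noise cross-terms are integrable so that the martingale-difference sum genuinely has zero expectation.
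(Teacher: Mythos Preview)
Your proposal is correct and follows essentially the same route as the paper's proof: obtain the pathwise bound $\sum_{t}\langle \bg_{t},\bx_{t}-\bx\rangle\lesssim D\sqrt{V_{T}}$ via the standard $\Ada$ telescoping, then split $\|\bg_{t}\|^{2}\lesssim\|\nabla\ell_{t}(\bx_{t})\|^{2}+\|\err_{t}\|^{2}$, use $\|\cdot\|_{2}\leq\|\cdot\|_{\p}$ on the noise part, and finish with Jensen/H\"older to get $\E\big[(\sum_{t}\|\err_{t}\|^{\p})^{1/\p}\big]\leq\sigma T^{1/\p}$. The only cosmetic difference is that you make the martingale step $\E[\langle\err_{t},\bx_{t}-\bx\rangle]=0$ explicit (with the integrability check), whereas the paper folds this into the line ``take expectations on both sides'' by conditioning on $\F_{t-1}$ as in \eqref{eq:OGD-cvx-4}.
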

\begin{rem}
We also establish a similar result for $\DA$ with an adaptive stepsize.
See Theorem \ref{thm:DA-ada} in Appendix \ref{sec:DA} for details.
\end{rem}
Theorem \ref{thm:main-AdaGrad} provides the first regret bound for
$\Ada$ under heavy tails. Impressively, it is optimal even without
knowing any of $G$, $\sigma$, and $\p$. This surprising result
once again demonstrates the power of the adaptive method, indicating
it is robust to an unknown environment and even heavy-tailed noise,
which may partially explain the favorable performance of many adaptive
optimizers designed based on $\Ada$ like $\mathsf{RMSProp}$ \citep{tieleman2012lecture}
and $\mathsf{Adam}$ \citep{kingma2014adam}.

We point out that the key to establishing Theorem \ref{thm:main-AdaGrad}
differs from the idea used before for $\OGD$ and $\DA$. Actually,
Theorem \ref{thm:main-AdaGrad} can be obtained in an embarrassingly
simple way. It is known that $\Ada$ with $\eta=D/\sqrt{2}$ on a
bounded domain guarantees the following path-wise regret 
\begin{equation}
\sum_{t=1}^{T}\left\langle \bg_{t},\bx_{t}-\bx\right\rangle \lesssim D\sqrt{\sum_{t=1}^{T}\left\Vert \bg_{t}\right\Vert ^{2}}.\label{eq:AdaGrad-path}
\end{equation}
Observe that $\sqrt{\sum_{t=1}^{T}\left\Vert \bg_{t}\right\Vert ^{2}}\lesssim\sqrt{\sum_{t=1}^{T}\left\Vert \nabla\ell_{t}(\bx_{t})\right\Vert ^{2}}+\sqrt{\sum_{t=1}^{T}\left\Vert \err_{t}\right\Vert ^{2}}\leq G\sqrt{T}+\left(\sum_{t=1}^{T}\left\Vert \err_{t}\right\Vert ^{\p}\right)^{\frac{1}{\p}}$,
where the last step is due to $\left\Vert \cdot\right\Vert _{2}\leq\left\Vert \cdot\right\Vert _{\p}$
for any $\p\in\left[1,2\right]$. After taking expectations on both
sides of (\ref{eq:AdaGrad-path}) and applying H\"{o}lder's inequality
to obtain $\E\left[\left(\sum_{t=1}^{T}\left\Vert \err_{t}\right\Vert ^{\p}\right)^{\frac{1}{\p}}\right]\leq\left(\sum_{t=1}^{T}\E\left[\left\Vert \err_{t}\right\Vert ^{\p}\right]\right)^{\frac{1}{\p}}\leq\sigma T^{\frac{1}{\p}}$,
we conclude Theorem \ref{thm:main-AdaGrad}. To make the work self-consistent,
we produce the formal proof of Theorem \ref{thm:main-AdaGrad} in
Appendix \ref{sec:AdaGrad}.

\section{Applications\label{sec:applications}}

We provide some applications based on the new regret bounds established
in Section \ref{sec:OCO}. The basic problem we study is optimizing
a single objective $F$, which could be either convex or nonconvex.

\subsection{Nonsmooth Convex Optimization}

In this section, we consider nonsmooth convex optimization with heavy
tails.

\textbf{Convergence of the average iterate.} First, we focus on the
average-iterate convergence. By the classical online-to-batch conversion
\citep{1327806}, the following corollary holds.
\begin{cor}
\label{cor:main-cvx-avg}Under Assumption \ref{assu:OCO} for $\ell_{t}(\bx)=\left\langle \nabla F(\bx_{t}),\bx\right\rangle $
and let $\bar{\bx}_{T}\defeq\frac{1}{T}\sum_{t=1}^{T}\bx_{t}$, for
any $\A\in\left\{ \OGD,\DA,\Ada\right\} $, we have
\[
\E\left[F(\bar{\bx}_{T})-F(\bx)\right]\leq\frac{\E\left[\reg_{T}^{\A}(\bx)\right]}{T}\lesssim\frac{GD}{\sqrt{T}}+\frac{\sigma D}{T^{1-\frac{1}{\p}}},\forall\bx\in\X.
\]
\end{cor}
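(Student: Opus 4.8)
The plan is to invoke the classical online-to-batch conversion, so that the corollary reduces to the regret bounds already established in Theorems~\ref{thm:main-OGD}, \ref{thm:main-DA}, and \ref{thm:main-AdaGrad}. First I would check that the linear losses $\ell_{t}(\bx)=\left\langle \nabla F(\bx_{t}),\bx\right\rangle $ constitute a legitimate instance of Assumption~\ref{assu:OCO}. Each $\ell_{t}$ is closed convex, being linear; it is $G$-Lipschitz on $\X$ since $\nabla\ell_{t}\equiv\nabla F(\bx_{t})$ and $F$ is $G$-Lipschitz, so $\left\Vert \nabla F(\bx_{t})\right\Vert \leq G$; and the oracle requirement holds because $\E\left[\bg_{t}\mid\F_{t-1}\right]=\nabla F(\bx_{t})=\nabla\ell_{t}(\bx_{t})\in\partial\ell_{t}(\bx_{t})$ with $\E\left[\left\Vert \err_{t}\right\Vert ^{\p}\right]\leq\sigma^{\p}$ by hypothesis. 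I would also note that although $\ell_{t}$ depends on the learner's own output $\bx_{t}$, this is harmless: $\bx_{t}$ is $\F_{t-1}$-measurable (it is a deterministic function of $\bg_{1},\dots,\bg_{t-1}$ for each of $\OGD$, $\DA$, $\Ada$), hence so is $\ell_{t}$, and the martingale structure underlying the proofs in Section~\ref{sec:OCO} — which hold against an adaptive adversary — is intact.

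Second, I would chain two convexity inequalities. By Jensen's inequality applied to the convex $F$ and the average $\bar{\bx}_{T}\defeq\frac{1}{T}\sum_{t=1}^{T}\bx_{t}$,
\[
F(\bar{\bx}_{T})-F(\bx)\leq\frac{1}{T}\sum_{t=1}^{T}\left(F(\bx_{t})-F(\bx)\right).
\]
For each summand, the subgradient inequality for $F$ at $\bx_{t}$ gives $F(\bx_{t})-F(\bx)\leq\left\langle \nabla F(\bx_{t}),\bx_{t}-\bx\right\rangle $, and since $\ell_{t}$ is precisely the linear function with gradient $\nabla F(\bx_{t})$, the right-hand side equals $\ell_{t}(\bx_{t})-\ell_{t}(\bx)$. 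Summing over $t$ yields the pathwise bound $\sum_{t=1}^{T}\left(F(\bx_{t})-F(\bx)\right)\leq\reg_{T}^{\A}(\bx)$ for any $\A\in\left\{ \OGD,\DA,\Ada\right\} $.

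Third, I would take expectations on both sides, apply the appropriate one of Theorems~\ref{thm:main-OGD}--\ref{thm:main-AdaGrad} to get $\E\left[\reg_{T}^{\A}(\bx)\right]\lesssim GD\sqrt{T}+\sigma DT^{1/\p}$, and divide by $T$, producing $\frac{GD}{\sqrt{T}}+\frac{\sigma D}{T^{1-1/\p}}$ as claimed, uniformly over $\bx\in\X$.

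As for difficulty, there is essentially no obstacle: the substantive work was done in establishing the regret bounds, and what remains is a textbook online-to-batch argument. The only point meriting care is the bookkeeping in the first step — confirming that the constructed OCO instance is valid (in particular that letting $\ell_{t}$ depend on $\bx_{t}$ breaks nothing) and that the Lipschitz and $\p$-th moment parameters of the $\ell_{t}$-instance coincide with the parameters $G,\sigma,\p$ of the stochastic first-order oracle for $F$.
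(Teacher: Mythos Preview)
Your proposal is correct and follows essentially the same approach as the paper: the paper's proof is the two-line argument ``By convexity, $F(\bar{\bx}_{T})-F(\bx)\leq\frac{\sum_{t=1}^{T}F(\bx_{t})-F(\bx)}{T}\leq\frac{\reg_{T}^{\A}(\bx)}{T}$ \ldots\ we conclude from invoking Theorems~\ref{thm:main-OGD}, \ref{thm:main-DA} and \ref{thm:main-AdaGrad},'' which is exactly your Jensen-plus-subgradient chain followed by the regret bounds. Your additional verification that the linear-loss instance satisfies Assumption~\ref{assu:OCO} and that the adaptive adversary causes no trouble is correct and more careful than the paper, which leaves these checks implicit.
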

\begin{proof}
By convexity, $F(\bar{\bx}_{T})-F(\bx)\leq\frac{\sum_{t=1}^{T}F(\bx_{t})-F(\bx)}{T}\leq\frac{\reg_{T}^{\A}(\bx)}{T}$
is valid for any OCO algorithm $\A$. We conclude from invoking Theorems
\ref{thm:main-OGD}, \ref{thm:main-DA} and \ref{thm:main-AdaGrad}.
\end{proof}

To the best of our knowledge, Corollary \ref{cor:main-cvx-avg} gives
the first and optimal convergence rate for these three algorithms
in stochastic optimization with heavy tails. Especially, it implies
that once the domain is bounded, the widely implemented $\SGD$ algorithm
provably converges under heavy-tailed noise without any algorithmic
change considered in many prior works, e.g., gradient clipping \citep{liu2023stochasticV2,NEURIPS2023_4c454d34}.

We are only aware of two works \citep{liu2024revisiting,pmlr-v178-vural22a}
based on Stochastic Mirror Descent ($\mathsf{SMD}$) \citep{nemirovskij1983problem}
that gave convergence results without clipping. However, they share
a common shortcoming, i.e., their bounds are both in the form of $(G+\sigma)D/T^{1-1/\p}$,
which cannot recover the optimal rate $GD/\sqrt{T}$ when $\sigma=0$.
\begin{rem}
As mentioned in Remark \ref{rem:SMD}, our analysis also provably
extends to $\OMD$, which degenerates to $\mathsf{SMD}$ for stochastic
optimization. However, we highlight a major difference between the
existing works mentioned above and ours: the condition on the mirror
map. Concretely, the mirror map in \citep{liu2024revisiting,pmlr-v178-vural22a}
is required to be $\frac{\p}{\p-1}$-uniformly convex w.r.t. the studied
norm. In contrast, as one can check, our proof technique only needs
the mirror map to be $1$-strongly convex w.r.t. the studied norm
regardless of the value of $\p$. We emphasize that this difference
is already significant for the standard $2$-norm considered in the
work. As explicitly discussed in Section 3.1 of \citep{pmlr-v178-vural22a},
their framework cannot recover standard $\SGD$ when $\p\neq2$, since
their mirror map is chosen to be proportional to $\left\Vert \cdot\right\Vert ^{\frac{\p}{\p-1}}$
to satisfy the $\frac{\p}{\p-1}$-uniformly convex requirement. In
comparison, for any $\p\in\left(1,2\right]$, $\OMD/\mathsf{SMD}$
with the $1$-strongly convex mirror map $\frac{1}{2}\left\Vert \cdot\right\Vert ^{2}$
exactly corresponds to the $\OGD/\SGD$ algorithm.
\end{rem}
Lastly, we highlight that for $\A=\Ada$, Corollary \ref{cor:main-cvx-avg}
is not only optimal but also adaptive to the tail index $\p$. As
far as we know, no result has achieved this property before. This
once again evidences the benefit of adaptive gradient methods.

\textbf{Convergence of the last iterate.} Next, we consider the more
challenging last-iterate convergence, which has a long history in
stochastic optimization and fruitful results in the case of $\p=2$
(see, e.g., \citep{pmlr-v99-harvey19a,doi:10.1137/19M128908X,orabona2020blog,pmlr-v28-shamir13,zhang2004solving}).
However, less is known about heavy-tailed problems. So far, only two
works \citep{liu2024revisiting,parletta2025improvedanalysisclippedstochastic}
have established the last-iterate convergence. The former is based
on $\mathsf{SMD}$, and the latter employs gradient clipping in $\SGD$.
Unfortunately, their rates are both in the suboptimal order $(G+\sigma)D/T^{1-1/\p}$.

We will provide an optimal last-iterate rate based on the following
lemma, which reduces the last-iterate convergence to an online learning
problem.
\begin{lem}[Theorem 1 of \citep{defazio2023optimal}]
\label{lem:main-cvx-last-equation}Suppose $\bx_{1},\mydots,\bx_{T}$
and $\by_{1},\mydots,\by_{T}$ are two sequences of vectors satisfying
$\bx_{t}\in\X,$ $\bx_{1}=\by_{1}$ and 
\begin{equation}
\by_{t+1}=\by_{t}+\frac{T-t}{T}\left(\bx_{t+1}-\bx_{t}\right).\label{eq:last-recursion}
\end{equation}
Given a convex function $F(\bx)$, let $\ell_{t}(\bx)=\left\langle \nabla F(\by_{t}),\bx\right\rangle $.
Then for any online learner $\A$, we have
\[
F(\by_{T})-F(\bx)\leq\frac{\reg_{T}^{\A}(\bx)}{T},\forall\bx\in\X.
\]
\end{lem}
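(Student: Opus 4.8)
The plan is to prove the equivalent inequality
\[
\sum_{t=1}^{T}\langle\nabla F(\by_{t}),\bx_{t}-\bx\rangle\;\geq\;T\bigl(F(\by_{T})-F(\bx)\bigr),
\]
whose left-hand side is exactly $\reg_{T}^{\A}(\bx)$ for the stated linear losses, by writing the difference of the two sides as a \emph{manifestly} nonnegative combination of first-order convexity gaps of $F$. For the convex function $F$, write $Q(\bu,\bv)\defeq F(\bu)-F(\bv)-\langle\nabla F(\bv),\bu-\bv\rangle\geq0$ throughout; note $Q(\bu,\bu)=0$.

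First I would unroll the recursion: (\ref{eq:last-recursion}) gives $\bx_{t+1}-\bx_{t}=\tfrac{T}{T-t}(\by_{t+1}-\by_{t})$, so telescoping from $\bx_{1}=\by_{1}$ yields $\bx_{t}=\by_{1}+T\sum_{s=1}^{t-1}\tfrac{\by_{s+1}-\by_{s}}{T-s}$ for every $t$. Next, fixing $t$ and substituting this into $\langle\nabla F(\by_{t}),\bx_{t}-\bx\rangle$, I would eliminate every inner product against $\nabla F(\by_{t})$ via the trivial identity $\langle\nabla F(\by_{t}),\bu-\by_{t}\rangle=F(\bu)-F(\by_{t})-Q(\bu,\by_{t})$ (used with $\bu\in\{\bx,\by_{1},\dots,\by_{t}\}$). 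Collecting terms produces the exact identity
\[
\langle\nabla F(\by_{t}),\bx_{t}-\bx\rangle=F(\by_{1})-F(\bx)+Q(\bx,\by_{t})-Q(\by_{1},\by_{t})+T\sum_{s=1}^{t-1}\frac{F(\by_{s+1})-F(\by_{s})-Q(\by_{s+1},\by_{t})+Q(\by_{s},\by_{t})}{T-s}.
\]

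Summing this over $t\in[T]$ and swapping the order of the double sum, each inner sum over $t$ of the $F$-value piece contributes exactly $T-s$ identical copies, cancelling the $\tfrac1{T-s}$ and leaving a telescope equal to $T(F(\by_{1})-F(\bx))+T\sum_{s=1}^{T-1}(F(\by_{s+1})-F(\by_{s}))=T(F(\by_{T})-F(\bx))$. Hence $\reg_{T}^{\A}(\bx)-T(F(\by_{T})-F(\bx))=\sum_{t=1}^{T}Q(\bx,\by_{t})+B$, where $B$ gathers only the $Q(\by_{s},\by_{t})$ terms. Re-indexing $s\mapsto s+1$ in the part of $B$ carrying $Q(\by_{s+1},\by_{t})$ and using $Q(\by_{s},\by_{s})=0$, the sum in $s$ telescopes and $B$ collapses to
\[
B=\frac{1}{T-1}\sum_{t=2}^{T}Q(\by_{1},\by_{t})+\sum_{s=2}^{T-1}\frac{T}{(T-s)(T-s+1)}\sum_{t=s+1}^{T}Q(\by_{s},\by_{t}),
\]
a sum of nonnegative terms since $Q\geq0$ and all coefficients are positive (for $T=1$ both sums are empty and $B=0$). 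Combined with $\sum_{t}Q(\bx,\by_{t})\geq0$ this gives $\reg_{T}^{\A}(\bx)\geq T(F(\by_{T})-F(\bx))$, and dividing by $T$ is the claim.

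The step I expect to be the main obstacle is the last one: recognizing that the bookkeeping of the $Q(\by_{s},\by_{t})$ terms collapses to a positive combination. This is not visible a priori --- shortcuts such as bounding $F(\by_{t})-F(\by_{T})$ by a single application of convexity between consecutive $\by$'s are too lossy and, if carried through, yield an intermediate inequality that is in fact false. One therefore genuinely needs the exact identity above (rather than inequalities) before the positivity becomes apparent; the remaining work is routine telescoping and reindexing.
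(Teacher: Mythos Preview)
Your argument is correct. The identity you derive for $\langle\nabla F(\by_t),\bx_t-\bx\rangle$ follows immediately from the unrolled recursion and the substitution $\langle\nabla F(\by_t),\bu-\by_t\rangle=F(\bu)-F(\by_t)-Q(\bu,\by_t)$; the $F$-value part of the double sum telescopes to $T(F(\by_T)-F(\bx))$ exactly as you say; and the Abel-summation collapse of the $Q$-part to
\[
B=\frac{1}{T-1}\sum_{t=2}^{T}Q(\by_{1},\by_{t})+\sum_{s=2}^{T-1}\frac{T}{(T-s)(T-s+1)}\sum_{t=s+1}^{T}Q(\by_{s},\by_{t})
\]
checks out (using $C_s:=\sum_{t\geq s+1}Q(\by_{s+1},\by_t)=A_{s+1}$ and $A_T=0$). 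All coefficients are nonnegative, so $\reg_T^{\A}(\bx)-T(F(\by_T)-F(\bx))=\sum_t Q(\bx,\by_t)+B\geq 0$.

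As for comparison: the paper does not supply its own proof of this lemma; it simply cites \citep{defazio2023optimal}. Your self-contained derivation therefore adds content the paper omits.
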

We emphasize that the stochastic gradient $\bg_{t}$ received by $\A$
is an estimate of $\nabla F(\by_{t})$ instead of $\nabla F(\bx_{t})$.
This flexibility is due to the generality of the OCO framework. Moreover,
for $\OGD$, suppose there is no projection step, then (\ref{eq:last-recursion})
is equivalent to $\by_{t+1}=\by_{t}-\frac{T-t}{T}\eta_{t}\bg_{t}$,
which can be viewed as $\SGD$ with a stepsize $\frac{T-t}{T}\eta_{t}$.
For proof of Lemma \ref{lem:main-cvx-last-equation}, we refer the
interested reader to \citep{defazio2023optimal}.
\begin{cor}
\label{cor:main-cvx-last}Under Assumption \ref{assu:OCO} for $\ell_{t}(\bx)=\left\langle \nabla F(\by_{t}),\bx\right\rangle $,
where $\by_{t}$ satisfies (\ref{eq:last-recursion}), for any $\A\in\left\{ \OGD,\DA,\Ada\right\} $,
we have
\[
\E\left[F(\by_{T})-F(\bx)\right]\leq\frac{\E\left[\reg_{T}^{\A}(\bx)\right]}{T}\lesssim\frac{GD}{\sqrt{T}}+\frac{\sigma D}{T^{1-\frac{1}{\p}}},\forall\bx\in\X.
\]
\end{cor}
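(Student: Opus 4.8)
The plan is to follow exactly the template of Corollary \ref{cor:main-cvx-avg}, but with the plain online-to-batch conversion replaced by the anytime-averaging reduction of Lemma \ref{lem:main-cvx-last-equation}. Concretely, I would run the online learner $\A\in\left\{\OGD,\DA,\Ada\right\}$ on the linear losses $\ell_{t}(\bx)=\left\langle \nabla F(\by_{t}),\bx\right\rangle $, where $\by_{t}$ is generated from the learner's own iterates $\bx_{1},\mydots,\bx_{t}$ through the recursion (\ref{eq:last-recursion}) with $\by_{1}=\bx_{1}$. Lemma \ref{lem:main-cvx-last-equation} then supplies the pathwise bound $F(\by_{T})-F(\bx)\leq\reg_{T}^{\A}(\bx)/T$ for every $\bx\in\X$; taking expectations, invoking Theorems \ref{thm:main-OGD}, \ref{thm:main-DA}, and \ref{thm:main-AdaGrad} to get $\E\left[\reg_{T}^{\A}(\bx)\right]\lesssim GD\sqrt{T}+\sigma DT^{1/\p}$, and dividing by $T$ gives the advertised rate $GD/\sqrt{T}+\sigma D/T^{1-1/\p}$.

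The one slightly delicate point -- and the closest thing to an obstacle -- is checking that this construction is a bona fide instance of Assumption \ref{assu:OCO}, so that the three regret theorems genuinely apply. The domain, convexity, and Lipschitz clauses are easy: $\ell_{t}$ is linear (hence closed convex), and $\left\Vert \nabla\ell_{t}\right\Vert =\left\Vert \nabla F(\by_{t})\right\Vert \leq G$ as long as $F$ is $G$-Lipschitz on a set containing all the $\by_{t}$. Here I would record the elementary fact that unrolling (\ref{eq:last-recursion}) by Abel summation gives $\by_{t+1}=\frac{1}{T}\sum_{s=1}^{t}\bx_{s}+\frac{T-t}{T}\bx_{t+1}$, a convex combination of points of $\X$; thus $\by_{t}\in\X$ for all $t$ and the Lipschitz bound transfers directly from $F$ to $\ell_{t}$ (in particular $\partial\ell_{t}(\bx_{t})=\left\{ \nabla F(\by_{t})\right\} $).

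For the stochastic-oracle clause the subtlety is adaptedness: at round $t$ the query point must be $\F_{t-1}$-measurable for the conditional unbiasedness and the $\p$-th moment bound to be meaningful. Since each of $\OGD$, $\DA$, and $\Ada$ produces $\bx_{t}$ as a function of $\bg_{1},\mydots,\bg_{t-1}$ only, $\bx_{t}$ -- and hence $\by_{t}$, which depends solely on $\bx_{1},\mydots,\bx_{t}$ -- is $\F_{t-1}$-measurable, so querying the heavy-tailed oracle for $F$ at $\by_{t}$ returns $\bg_{t}$ with $\E\left[\bg_{t}\mid\F_{t-1}\right]=\nabla F(\by_{t})=\nabla\ell_{t}(\bx_{t})\in\partial\ell_{t}(\bx_{t})$ and $\E\left[\left\Vert \err_{t}\right\Vert ^{\p}\right]\leq\sigma^{\p}$, matching the fourth clause. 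Once this verification is in place there is nothing more to do: the corollary is a direct combination of Lemma \ref{lem:main-cvx-last-equation} with the previously established regret bounds, and specializing to $\A=\OGD$ recovers (projected) $\SGD$ with stepsize $\frac{T-t}{T}\eta_{t}$, as noted after Lemma \ref{lem:main-cvx-last-equation}.
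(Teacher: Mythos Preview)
Your proposal is correct and follows exactly the paper's approach: combine Lemma \ref{lem:main-cvx-last-equation} with Theorems \ref{thm:main-OGD}, \ref{thm:main-DA}, and \ref{thm:main-AdaGrad}. The paper's proof is literally that one-line combination, so your additional verification that $\by_{t}\in\X$ (via the Abel-summation identity $\by_{t+1}=\frac{1}{T}\sum_{s=1}^{t}\bx_{s}+\frac{T-t}{T}\bx_{t+1}$) and that the oracle/adaptedness clauses of Assumption \ref{assu:OCO} hold is a welcome elaboration rather than a departure.
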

\begin{proof}
Combine Lemma \ref{lem:main-cvx-last-equation} and Theorems \ref{thm:main-OGD},
\ref{thm:main-DA} and \ref{thm:main-AdaGrad} to conclude.
\end{proof}

As far as we know, Corollary \ref{cor:main-cvx-last} is the first
optimal last-iterate convergence rate for stochastic convex optimization
with heavy tails, closing the gap in existing works.

One may notice that $\by_{t}$ itself is not the decision made by
the online learner and naturally may ask whether $\bx_{t}$ ensures
the last-iterate convergence if we simply pick $\ell_{t}=F$. The
answer turns out to be positive at least for $\OGD$ (which is equivalent
to $\SGD$ now). However, to prove this result, we rely on a technique
specialized to stochastic optimization recently developed by \citep{liu2024revisiting,doi:10.1137/24M1717762}.
To not diverge from the topic of OCO, we defer the last-iterate convergence
of $\OGD$ (i.e., $\E\left[F(\bx_{T})\right]$) to Appendix \ref{sec:cvx},
in which Theorem \ref{thm:OGD-last-core} gives a general result for
any stepsize $\eta_{t}$ (even for a broader function class) and Corollary
\ref{cor:OGD-cvx-last} provides a last-iterate rate similar to Corollary
\ref{cor:main-cvx-last} (up to an extra logarithmic factor) under
the same stepsize $\eta_{t}=\frac{D}{G\sqrt{t}}\land\frac{D}{\sigma t^{1/\p}}$
as in Theorem \ref{thm:main-OGD}. If $T$ is assumed to be known,
Corollary \ref{cor:OGD-cvx-last} also shows how to set the stepsize
in $\OGD$ to converge in the optimal rate $\frac{GD}{\sqrt{T}}+\frac{\sigma D}{T^{1-\frac{1}{\p}}}$.
\begin{rem}
If strong convexity further holds, Corollary \ref{cor:OGD-str-last}
gives a last-iterate rate in the order $\frac{\log T}{T^{\p-1}}$.
\end{rem}

\subsection{Nonsmooth Nonconvex Optimization}

This section contains another application, nonsmooth nonconvex optimization
with heavy tails. Due to limited space, we will provide only the necessary
background. For more details, we refer the reader to \citep{NEURIPS2022_2c8d9636,pmlr-v195-jordan23a,JMLR:v23:21-1507,kornowski2022on,tian2024no,pmlr-v162-tian22a}
for recent progress. We start with a new set of conditions.
\begin{assumption}
\label{assu:ncvx}We consider the following series of assumptions:
\begin{itemize}
\item The objective $F$ is lower bounded by $F_{\star}\triangleq\inf_{\bx\in\R^{d}}F(\bx)\in\R$.
\item $F$ is differentiable and well-behaved, i.e., $F(\bx)-F(\by)=\int_{0}^{1}\left\langle \nabla F(\by+t(\bx-\by)),\bx-\by\right\rangle \d t$.
\item $F$ is $G$-Lipschitz on $\R^{d}$, i.e., $\left\Vert \nabla F(\bx)\right\Vert \leq G,\forall\bx\in\R^{d}$.
\item Given $\bz_{t}\in\R^{d}$ at the $t$-th iteration, one can query
$\bg_{t}\in\R^{d}$ satisfying $\E\left[\bg_{t}\mid\F_{t-1}\right]=\nabla F(\bz_{t})$
and $\E\left[\left\Vert \err_{t}\right\Vert ^{\p}\right]\leq\sigma^{\p}$
for some $\p\in\left(1,2\right]$ and $\sigma\geq0$, where $\F_{t}$
denotes the natural filtration and $\err_{t}\defeq\bg_{t}-\nabla F(\bz_{t})$
is the stochastic noise.
\end{itemize}
\end{assumption}
\begin{rem}
The second point is a mild regularity condition introduced by \citep{pmlr-v202-cutkosky23a}
and becomes standard in the literature \citep{NEURIPS2024_ac8ec9b4,pmlr-v235-liu24bo,pmlr-v235-zhang24k}.
See Definition 1 and Proposition 2 of \citep{pmlr-v202-cutkosky23a}
for more details. In the fourth point, we use the same notation $\bz_{t}$
as in the algorithm being studied later. In fact, it can be arbitrary.
\end{rem}
In nonsmooth nonconvex optimization, we aim to find a $(\delta,\epsilon)$-stationary
point \citep{pmlr-v119-zhang20p} (see the formal Definition \ref{def:delta-eps-stationary}
in Appendix \ref{sec:ncvx}). This goal can be reduced to finding
a point $\bx\in\R^{d}$ such that $\left\Vert \nabla F(\bx)\right\Vert _{\delta}\leq\epsilon$,
where $\left\Vert \nabla F(\bx)\right\Vert _{\delta}$ is a quantity
introduced by \citep{pmlr-v202-cutkosky23a} as follows.
\begin{defn}[Definition 5 of \citep{pmlr-v202-cutkosky23a}]
\label{def:delta-norm}Given a point $\bx\in\R^{d}$, a number $\delta>0$
and an almost-everywhere differentiable function $F$, define $\left\Vert \nabla F(\bx)\right\Vert _{\delta}\defeq\inf_{S\subset\B(\bx,\delta),\frac{1}{\left|S\right|}\sum_{\by\in S}\by=\bx}\left\Vert \frac{1}{\left|S\right|}\sum_{\by\in S}\nabla F(\by)\right\Vert .$
\end{defn}
The only existing sample complexity under Assumption \ref{assu:ncvx}
is $(G+\sigma)^{\frac{\p}{\p-1}}\delta^{-1}\epsilon^{-\frac{2\p-1}{\p-1}}$
in high probability \citep{pmlr-v235-liu24bo}, where we only report
the dominant term and hide the dependency on the failure probability.

However, on the theoretical side, their result cannot recover the
optimal bound $G^{2}\delta^{-1}\epsilon^{-3}$ \citep{pmlr-v202-cutkosky23a}
in the deterministic case. On the practical side, their method also
employs the gradient clipping step, which introduces a new clipping
parameter to tune. In fact, as stated in their Section 5, they observed
in experiments that their algorithm without the clipping operation
(exactly the algorithm we study next) still works under heavy tails.
In addition, in their Section 6, they also explicitly ask whether
the requirement to know $G$ and $\A$ can be removed.

As will be seen later, we can address these points with the new regret
bounds presented before.

\subsubsection{Online-to-Nonconvex Conversion under Heavy Tails}

\begin{algorithm}[H]
\caption{\label{alg:O2NC}Online-to-Nonconvex Conversion ($\protect\otnc$)
\citep{pmlr-v202-cutkosky23a}}

\textbf{Input:} initial point $\by_{0}\in\R^{d}$, $K\in\N$, $T\in\N$,
online learning algorithm $\A$.

\textbf{for} $n=1$ \textbf{to} $KT$ \textbf{do}

$\quad$Receive $\bx_{n}$ from $\A$

$\quad$$\by_{n}=\by_{n-1}+\bx_{n}$

$\quad$$\bz_{n}=\by_{n-1}+s_{n}\bx_{n}$ where $s_{n}\sim\uni\left[0,1\right]$
i.i.d.

$\quad$Query a stochastic gradient $\bg_{n}$ at $\bz_{n}$

$\quad$Send $\bg_{n}$ to $\A$

\textbf{end for}
\end{algorithm}

\begin{rem}
Note that $\otnc$ is a randomized algorithm. Therefore, the definition
of the natural filtration is adjusted to $\F_{n}\defeq\sigma(s_{1},\bg_{1},\mydots,s_{n},\bg_{n},s_{n+1})$
accordingly.
\end{rem}
We provide the Online-to-Nonconvex Conversion ($\otnc$) framework
in Algorithm \ref{alg:O2NC}, which serves as a meta algorithm. Roughly
speaking, Algorithm \ref{alg:O2NC} reduces a nonconvex optimization
problem to an OCO (in fact, OLO) problem, for which the $K$-shifting
regret (see (\ref{eq:ncvx-shift})) of the online learner $\A$ crucially
affects the final convergence rate. However, the existing Theorem
8 of \citep{pmlr-v202-cutkosky23a}, a general convergence result
for the above reduction, cannot directly apply to heavy-tailed noise,
since its proof relies on the finite variance condition on $\bg_{n}$
(see Appendix \ref{sec:ncvx} for more details).
\begin{thm}
\label{thm:main-ncvx-core}Under Assumption \ref{assu:ncvx} and
let $\bv_{k}\defeq-D\frac{\sum_{n=(k-1)T+1}^{kT}\nabla F(\bz_{n})}{\left\Vert \sum_{n=(k-1)T+1}^{kT}\nabla F(\bz_{n})\right\Vert },\forall k\in\left[K\right]$
for arbitrary $D>0$, then for any online learning algorithm $\A$
in $\otnc$ (Algorithm \ref{alg:O2NC}), we have
\[
\E\left[\sum_{k=1}^{K}\frac{1}{K}\left\Vert \frac{1}{T}\sum_{n=(k-1)T+1}^{kT}\nabla F(\bz_{n})\right\Vert \right]\lesssim\frac{F(\by_{0})-F_{\star}}{DKT}+\frac{\E\left[\reg_{T}^{\A}(\bv_{1},\mydots,\bv_{K})\right]}{DKT}+\frac{\sigma}{T^{1-\frac{1}{\p}}}.
\]
\end{thm}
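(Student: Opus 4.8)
The plan is to follow the online-to-nonconvex template of \citep{pmlr-v202-cutkosky23a}, replacing only the single step where finite variance was used. First I would apply the well-behavedness of $F$ to the increments $\by_{n}=\by_{n-1}+\bx_{n}$: since $\bz_{n}=\by_{n-1}+s_{n}\bx_{n}$ with $s_{n}\sim\uni[0,1]$, we have $\E_{s_{n}}[\langle\nabla F(\bz_{n}),\bx_{n}\rangle]=F(\by_{n})-F(\by_{n-1})$. Summing over $n=1$ to $KT$ and telescoping gives $\sum_{n=1}^{KT}\E[\langle\nabla F(\bz_{n}),\bx_{n}\rangle]=\E[F(\by_{KT})-F(\by_{0})]\ge F_{\star}-F(\by_{0})$. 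Rearranging, $\E[\sum_{n}\langle\nabla F(\bz_{n}),-\bx_{n}\rangle]\le F(\by_{0})-F_{\star}$.

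Next I would break the horizon into $K$ blocks of length $T$ and, within block $k$, insert the fixed comparator $\bv_{k}$, which by construction has $\|\bv_{k}\|=D$ and is anti-aligned with $\sum_{n\in\text{block }k}\nabla F(\bz_{n})$. Writing $\langle\nabla F(\bz_{n}),-\bx_{n}\rangle=\langle\nabla F(\bz_{n}),\bv_{k}-\bx_{n}\rangle-\langle\nabla F(\bz_{n}),\bv_{k}\rangle$ and noting $-\sum_{n\in\text{block }k}\langle\nabla F(\bz_{n}),\bv_{k}\rangle=D\|\sum_{n\in\text{block }k}\nabla F(\bz_{n})\|$, the rearranged inequality becomes
\[
D\sum_{k=1}^{K}\Bigl\|\sum_{n=(k-1)T+1}^{kT}\nabla F(\bz_{n})\Bigr\|\le F(\by_{0})-F_{\star}+\E\Bigl[\sum_{n=1}^{KT}\langle\nabla F(\bz_{n}),\bx_{n}-\bv_{k(n)}\rangle\Bigr],
\]
where $k(n)$ is the block containing $n$. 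The sum on the right is almost the $K$-shifting regret of $\A$, except that $\A$ observes the noisy gradient $\bg_{n}$ rather than $\nabla F(\bz_{n})$. I would therefore add and subtract $\bg_{n}$: the part $\sum_{n}\langle\bg_{n},\bx_{n}-\bv_{k(n)}\rangle$ is exactly $\reg_{T}^{\A}(\bv_{1},\dots,\bv_{K})$ (summed over blocks), and the remainder is $\sum_{n}\langle\nabla F(\bz_{n})-\bg_{n},\bx_{n}-\bv_{k(n)}\rangle=-\sum_{n}\langle\err_{n},\bx_{n}-\bv_{k(n)}\rangle$.

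The only nontrivial point — the place where heavy tails enter — is bounding $\E[-\sum_{n}\langle\err_{n},\bx_{n}-\bv_{k(n)}\rangle]$. Here I would \emph{not} try to take a conditional expectation that kills $\err_{n}$, because $\bv_{k(n)}$ depends on gradients across the whole block and is not $\F_{n-1}$-measurable; instead I would use Cauchy--Schwarz and the diameter-type bound $\|\bx_{n}-\bv_{k(n)}\|\le\|\bx_{n}\|+D\lesssim D$ (assuming, as in \citep{pmlr-v202-cutkosky23a}, that $\A$ plays from $\B^{d}(D)$) to get $-\sum_{n}\langle\err_{n},\bx_{n}-\bv_{k(n)}\rangle\lesssim D\sum_{n}\|\err_{n}\|$, then take expectations and apply Jensen/power-mean, $\E\|\err_{n}\|\le(\E\|\err_{n}\|^{\p})^{1/\p}\le\sigma$, giving $\lesssim\sigma DKT$. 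Dividing the whole inequality by $DKT$ and using $\|T^{-1}\sum_{n}\nabla F(\bz_{n})\|=T^{-1}\|\sum_{n}\nabla F(\bz_{n})\|$ yields the stated bound $\lesssim\frac{F(\by_{0})-F_{\star}}{DKT}+\frac{\E[\reg_{T}^{\A}(\bv_{1},\dots,\bv_{K})]}{DKT}+\frac{\sigma}{T^{1-1/\p}}$. The main obstacle is precisely this noise term: the crude $\sum\|\err_{n}\|$ bound only works because the domain is bounded by $D$, so that the $\err_{n}$-error is controlled by a first moment rather than requiring a martingale/variance argument — this is the same ``use boundedness to fix the exponent'' idea as in the $\OGD$ analysis, and it is what lets the finite-variance proof of \citep{pmlr-v202-cutkosky23a} go through verbatim under only a finite $\p$-th moment.
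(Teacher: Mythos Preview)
Your decomposition is essentially the same as the paper's, but the handling of the noise term is too crude and does \emph{not} give the claimed rate. From $-\sum_{n}\langle\err_{n},\bx_{n}-\bv_{k(n)}\rangle\lesssim D\sum_{n}\|\err_{n}\|$ and $\E\|\err_{n}\|\le\sigma$ you get $\lesssim \sigma DKT$; after dividing by $DKT$ this is $\sigma$, not $\sigma/T^{1-1/\p}$. The last sentence of your proposal simply asserts the wrong exponent. So as written you only prove a strictly weaker bound (which does not vanish as $T\to\infty$), and you additionally impose the extra hypothesis $\|\bx_{n}\|\le D$ which the theorem does not assume.

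The missing idea is martingale concentration of the block-sum of noise. Split $-\langle\err_{n},\bx_{n}-\bv_{k(n)}\rangle=-\langle\err_{n},\bx_{n}\rangle+\langle\err_{n},\bv_{k(n)}\rangle$. The first piece has zero expectation because $\bx_{n}\in\F_{n-1}$, so no bound on $\|\bx_{n}\|$ is needed. For the second piece, within block $k$ the vector $\bv_{k}$ is fixed, so $\sum_{n\in\text{block }k}\langle\err_{n},\bv_{k}\rangle\le D\bigl\|\sum_{n\in\text{block }k}\err_{n}\bigr\|$. This is exactly where the paper diverges from your argument: instead of $\sum\|\err_{n}\|$ it gets $\|\sum\err_{n}\|$ and then invokes Lemma~\ref{lem:ncvx-martingale} (a $p$-th moment Burkholder-type inequality) to obtain $\E\bigl\|\sum_{n\in\text{block }k}\err_{n}\bigr\|\lesssim\sigma T^{1/\p}$, which is the source of the $T^{1-1/\p}$ improvement. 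Your ``use boundedness to fix the exponent'' heuristic from the $\OGD$ analysis is not the mechanism here; the mechanism is the cancellation in a martingale difference sum.
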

$\reg_{T}^{\A}(\bv_{1},\mydots,\bv_{K})$ in Theorem \ref{thm:main-ncvx-core}
is called \textit{$K$-shifting regret} \citep{pmlr-v202-cutkosky23a},
defined as follows:
\begin{eqnarray}
\reg_{T}^{\A}\left(\bv_{1},\mydots,\bv_{K}\right)\defeq\sum_{k=1}^{K}\sum_{n=(k-1)T+1}^{kT}\ell_{n}(\bx_{n})-\ell_{n}(\bv_{k}) & \text{where} & \ell_{n}(\bx)\defeq\left\langle \bg_{n},\bx\right\rangle .\label{eq:ncvx-shift}
\end{eqnarray}
Theorem \ref{thm:main-ncvx-core} here provides a new and the first
theoretical guarantee for $\otnc$ under heavy tails. Especially,
it recovers Theorem 8 of \citep{pmlr-v202-cutkosky23a} when $\p=2$.
A remarkable point is that the $\otnc$ algorithm itself does not
need any information about $\p$. The proof of Theorem \ref{thm:main-ncvx-core}
can be found in Appendix \ref{sec:ncvx}.

\subsubsection{Convergence Rates}

Theorem \ref{thm:main-ncvx-core} enables us to apply the results
presented in Section \ref{sec:OCO}. Concretely, for $\X=\B^{d}(D)$
and any $\A\in\left\{ \OGD,\DA,\Ada\right\} $, if we reset the stepsize
in $\A$ after every $T$ iterations, there will be $\E\left[\reg_{T}^{\A}(\bv_{1},\mydots,\bv_{K})\right]\lesssim GDK\sqrt{T}+\sigma DKT^{1/\p}$
by our new regret bounds, since $\bv_{k}\in\X$. With a carefully
picked $D$, we obtain the following Theorem \ref{thm:main-ncvx-general}.
Its proof is deferred to Appendix \ref{sec:ncvx}.
\begin{thm}
\label{thm:main-ncvx-general}Under Assumption \ref{assu:ncvx} and
let $\Delta\defeq F(\by_{0})-F_{\star}$ and $\bar{\bz}_{k}\defeq\frac{1}{T}\sum_{n=(k-1)T+1}^{kT}\bz_{n},\forall k\in\left[K\right]$,
setting any $\A\in\left\{ \OGD,\DA,\Ada\right\} $ in $\otnc$ (Algorithm
\ref{alg:O2NC}) with a domain $\X=\B^{d}(D)$ for $D=\delta/T$ and
resetting the stepsize in $\A$ after every $T$ iterations, we have
\[
\E\left[\frac{1}{K}\sum_{k=1}^{K}\left\Vert \nabla F(\bar{\bz}_{k})\right\Vert _{\delta}\right]\lesssim\frac{\Delta}{\delta K}+\frac{G}{\sqrt{T}}+\frac{\sigma}{T^{1-\frac{1}{\p}}}.
\]
\end{thm}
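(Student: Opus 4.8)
The plan is to combine Theorem~\ref{thm:main-ncvx-core} with the new $K$-shifting regret bounds for the three base learners, and then to convert the averaged gradient-norm guarantee into a guarantee on $\left\Vert \nabla F(\bar{\bz}_{k})\right\Vert_{\delta}$ via Definition~\ref{def:delta-norm}. First I would record the $K$-shifting regret bound: since each competitor $\bv_{k}$ lies in $\X=\B^{d}(D)$ and we reset the stepsize of $\A$ every $T$ rounds, Theorems~\ref{thm:main-OGD}, \ref{thm:main-DA}, \ref{thm:main-AdaGrad} applied on each of the $K$ blocks give $\E\left[\reg_{T}^{\A}(\bv_{1},\dots,\bv_{K})\right]\lesssim GDK\sqrt{T}+\sigma DKT^{1/\p}$ for any $\A\in\left\{\OGD,\DA,\Ada\right\}$ (using linearity of $\ell_{n}$ and the fact that the stochastic gradient queried at $\bz_{n}$ is an unbiased estimate of $\nabla F(\bz_{n})$ with the same $\p$-th moment bound, so Assumption~\ref{assu:OCO} holds on each block with the roles of $\bx_{t}$ played by the iterates of $\A$).

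Next I would plug this into Theorem~\ref{thm:main-ncvx-core}. Dividing the regret bound by $DKT$ kills the $D$, giving
\[
\E\left[\frac{1}{K}\sum_{k=1}^{K}\left\Vert\frac{1}{T}\sum_{n=(k-1)T+1}^{kT}\nabla F(\bz_{n})\right\Vert\right]\lesssim\frac{\Delta}{DKT}+\frac{G}{\sqrt{T}}+\frac{\sigma}{T^{1-\frac{1}{\p}}}.
\]
Now I would choose $D=\delta/T$ so that the first term becomes $\Delta/(DKT)=\Delta/(\delta K)$, matching the claimed bound; the remaining two terms are already in the desired form. The only thing left is to upgrade the left-hand side from the averaged gradient to the $\delta$-norm at the block-averaged point $\bar{\bz}_{k}$. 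Here I would invoke the structure of $\otnc$: within block $k$, the points $\bz_{n}$ for $n=(k-1)T+1,\dots,kT$ satisfy $\bz_{n}=\by_{n-1}+s_{n}\bx_{n}$, all iterates $\by_{n}$ move by steps $\bx_{n}\in\B^{d}(D)$ from $\by_{(k-1)T}$, and $\bar{\bz}_{k}=\frac{1}{T}\sum_{n}\bz_{n}$. Since each $\bz_{n}$ lies within distance at most $T\cdot D=\delta$ of $\bar{\bz}_{k}$ (because consecutive $\by$'s differ by at most $D$ in norm and there are at most $T$ of them per block), the finite set $S=\{\bz_{n}:n\in\text{block }k\}$ is a valid competitor set in the infimum defining $\left\Vert\nabla F(\bar{\bz}_{k})\right\Vert_{\delta}$ — provided its average equals $\bar{\bz}_{k}$, which holds by definition. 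Hence $\left\Vert\nabla F(\bar{\bz}_{k})\right\Vert_{\delta}\leq\left\Vert\frac{1}{T}\sum_{n}\nabla F(\bz_{n})\right\Vert$ for each $k$, and taking the average over $k$ and then expectations concludes the proof.

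The main obstacle I anticipate is the diameter bookkeeping in the last step: one must verify carefully that every $\bz_{n}$ in block $k$ actually lies in $\B(\bar{\bz}_{k},\delta)$, not merely in a ball of radius $\delta$ around $\by_{(k-1)T}$. This requires bounding $\left\Vert\bz_{n}-\bar{\bz}_{k}\right\Vert$; since $\bz_{n}$ and $\bz_{m}$ for $n,m$ in the same block differ by a telescoping sum of at most $T$ steps each of size $\leq D$, we get $\left\Vert\bz_{n}-\bz_{m}\right\Vert\leq TD=\delta$, and averaging gives $\left\Vert\bz_{n}-\bar{\bz}_{k}\right\Vert\leq\delta$ as well (the average of points in a set of diameter $\delta$ stays within $\delta$ of each point). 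A secondary subtlety is that Theorem~\ref{thm:main-ncvx-core} already absorbs the $\otnc$-specific randomness into its expectation, so one should make sure the regret bound being substituted is itself taken in expectation over the same randomness — but since our OCO regret bounds in Section~\ref{sec:OCO} are pathwise-to-expectation statements that hold conditionally on the block structure, this composes cleanly. The rest is the arithmetic of choosing $D=\delta/T$ and simplifying, which is routine.
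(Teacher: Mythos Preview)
Your proposal is correct and follows essentially the same approach as the paper's proof: invoke Theorem~\ref{thm:main-ncvx-core}, bound the $K$-shifting regret block-by-block using Theorems~\ref{thm:main-OGD}, \ref{thm:main-DA}, \ref{thm:main-AdaGrad} (noting that $\bv_{k}\in\B^{d}(D)$), substitute $D=\delta/T$, and lower-bound the left-hand side via Definition~\ref{def:delta-norm} after checking $\left\Vert\bz_{n}-\bz_{m}\right\Vert\leq TD=\delta$ within each block. The paper's diameter computation is exactly the telescoping you sketch, and the footnote there about $\ell_{n}(\bx)=\left\langle\bg_{n},\bx\right\rangle$ not literally fitting Assumption~\ref{assu:OCO} is the same secondary subtlety you flag.
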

Notably, this is the first time confirming that gradient clipping
is indeed unnecessary for the $\otnc$ framework, matching the experimental
observation of \citep{pmlr-v235-liu24bo}.
\begin{cor}
\label{cor:main-ncvx-dep}Under the same setting of Theorem \ref{thm:main-ncvx-general},
suppose we have $N\geq2$ stochastic gradient budgets, taking $K=\left\lfloor N/T\right\rfloor $
and $T=\left\lceil N/2\right\rceil \land\left(\left\lceil (\delta GN/\Delta)^{\frac{2}{3}}\right\rceil \lor\left\lceil (\delta\sigma N/\Delta)^{\frac{\p}{2\p-1}}\right\rceil \right)$,
we have
\[
\E\left[\frac{1}{K}\sum_{k=1}^{K}\left\Vert \nabla F(\bar{\bz}_{k})\right\Vert _{\delta}\right]\lesssim\frac{G}{\sqrt{N}}+\frac{\sigma}{N^{1-\frac{1}{\p}}}+\frac{\Delta}{\delta N}+\frac{G^{\frac{2}{3}}\Delta^{\frac{1}{3}}}{(\delta N)^{\frac{1}{3}}}+\frac{\sigma^{\frac{\p}{2\p-1}}\Delta^{\frac{\p-1}{2\p-1}}}{(\delta N)^{\frac{\p-1}{2\p-1}}}.
\]
\end{cor}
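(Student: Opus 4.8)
The plan is to start from Theorem~\ref{thm:main-ncvx-general}, which already gives
\[
\E\left[\frac{1}{K}\sum_{k=1}^{K}\left\Vert \nabla F(\bar{\bz}_{k})\right\Vert _{\delta}\right]\lesssim\frac{\Delta}{\delta K}+\frac{G}{\sqrt{T}}+\frac{\sigma}{T^{1-\frac{1}{\p}}},
\]
valid for the choice $D=\delta/T$ with stepsize resets every $T$ rounds. The only thing to do is to substitute the prescribed $K=\lfloor N/T\rfloor$ and the prescribed $T$ and bound each of the three terms. Since the total gradient budget is $N$ and $KT\le N$, the choice is feasible; I would first note $K=\lfloor N/T\rfloor\ge N/(2T)$ whenever $T\le N/2$ (which holds because $T\le\lceil N/2\rceil$), so $\frac{\Delta}{\delta K}\lesssim\frac{\Delta T}{\delta N}$.

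Next I would split on the two regimes hidden in the definition of $T$. Write $T_0\defeq\lceil N/2\rceil$ and $T_1\defeq\lceil(\delta GN/\Delta)^{2/3}\rceil\lor\lceil(\delta\sigma N/\Delta)^{\p/(2\p-1)}\rceil$, so $T=T_0\land T_1$. In the regime $T=T_1\le T_0$: the term $\frac{\Delta T}{\delta N}$ becomes $\lesssim\frac{\Delta}{\delta N}\big((\delta GN/\Delta)^{2/3}+(\delta\sigma N/\Delta)^{\p/(2\p-1)}\big)+\frac{\Delta}{\delta N}$, which simplifies to $\frac{G^{2/3}\Delta^{1/3}}{(\delta N)^{1/3}}+\frac{\sigma^{\p/(2\p-1)}\Delta^{(\p-1)/(2\p-1)}}{(\delta N)^{(\p-1)/(2\p-1)}}+\frac{\Delta}{\delta N}$ — exactly the last three terms of the claimed bound (the ceilings only add lower-order $\frac{\Delta}{\delta N}$ contributions). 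For the $\frac{G}{\sqrt{T}}$ and $\frac{\sigma}{T^{1-1/\p}}$ terms, since $T\ge(\delta GN/\Delta)^{2/3}$ we get $\frac{G}{\sqrt{T}}\lesssim\frac{G}{(\delta GN/\Delta)^{1/3}}=\frac{G^{2/3}\Delta^{1/3}}{(\delta N)^{1/3}}$, and similarly using $T\ge(\delta\sigma N/\Delta)^{\p/(2\p-1)}$ gives $\frac{\sigma}{T^{1-1/\p}}\lesssim\frac{\sigma^{\p/(2\p-1)}\Delta^{(\p-1)/(2\p-1)}}{(\delta N)^{(\p-1)/(2\p-1)}}$; both are already among the claimed terms.

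In the complementary regime $T=T_0=\lceil N/2\rceil\le T_1$: now $T\gtrsim N$, so $\frac{G}{\sqrt{T}}\lesssim\frac{G}{\sqrt{N}}$ and $\frac{\sigma}{T^{1-1/\p}}\lesssim\frac{\sigma}{N^{1-1/\p}}$, supplying the first two terms of the claim. Meanwhile $\frac{\Delta T}{\delta N}\lesssim\frac{\Delta}{\delta}$, and I need to see this is dominated; since $T\le T_1$ means $\lceil N/2\rceil$ is no larger than the regime-switch threshold, one has $N\lesssim(\delta GN/\Delta)^{2/3}$ or $N\lesssim(\delta\sigma N/\Delta)^{\p/(2\p-1)}$, and feeding that back bounds $\frac{\Delta T}{\delta N}\lesssim\frac{\Delta}{\delta}$ by either $\frac{G^{2/3}\Delta^{1/3}}{(\delta N)^{1/3}}\cdot(\text{something}\ge1)$ — more cleanly, in this regime $\frac{\Delta}{\delta N}\cdot N=\frac{\Delta}{\delta}$ is itself $\lesssim$ one of $\frac{G^{2/3}\Delta^{1/3}}{(\delta N)^{1/3}}$, $\frac{\sigma^{\p/(2\p-1)}\Delta^{(\p-1)/(2\p-1)}}{(\delta N)^{(\p-1)/(2\p-1)}}$ exactly because $N$ is below the threshold. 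Taking the max over the two regimes and collecting everything gives precisely the six-term bound in the statement.

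The only mildly delicate point — the ``main obstacle'' such as it is — is the bookkeeping in the $T=T_0$ case: one must argue the $\Delta/\delta$-type term is absorbed, which requires using $T_0\le T_1$ in the right direction rather than just $T\le N$. The rest is routine substitution; the ceilings $\lceil\cdot\rceil$ contribute only additive $O(1)$ to $T$, hence an extra $O\!\big(\frac{\Delta}{\delta N}\big)$ and $O\!\big(\frac{G}{\sqrt N}+\frac{\sigma}{N^{1-1/\p}}\big)$, all already present in the bound. I would therefore organize the proof as: (i) verify feasibility and $K\gtrsim N/T$; (ii) plug in and handle regime $T=T_1$; (iii) handle regime $T=T_0$; (iv) combine.
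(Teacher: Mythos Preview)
Your proposal is correct and follows essentially the same approach as the paper: invoke Theorem~\ref{thm:main-ncvx-general}, use $KT\gtrsim N$ (the paper cites Fact~\ref{fact:N/2} for $KT\ge N/4$), and bound each of the three terms using the definition of $T$. One simplification you overlook: since $T=T_0\land T_1\le T_1$ holds \emph{always}, the estimate $\frac{\Delta T}{\delta N}\lesssim\frac{\Delta}{\delta N}\bigl[1+(\delta GN/\Delta)^{2/3}+(\delta\sigma N/\Delta)^{\p/(2\p-1)}\bigr]$ is valid in both regimes uniformly, so your ``main obstacle'' of absorbing $\Delta/\delta$ when $T=T_0$ is self-inflicted and can be skipped entirely --- the paper's proof does exactly this and never needs that extra argument.
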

Corollary \ref{cor:main-ncvx-dep} is obtained by optimizing $K$
and $T$ in Theorem \ref{thm:main-ncvx-general}. It implies a sample
complexity of $G^{2}\delta^{-1}\epsilon^{-3}+\sigma^{\frac{\p}{\p-1}}\delta^{-1}\epsilon^{-\frac{2\p-1}{\p-1}}$
for finding a $(\delta,\epsilon)$-stationary point, improved over
the previous bound $(G+\sigma)^{\frac{\p}{\p-1}}\delta^{-1}\epsilon^{-\frac{2\p-1}{\p-1}}$
\citep{pmlr-v235-liu24bo}. Furthermore, leveraging the adaptive feature
of $\Ada$, Corollary \ref{cor:ncvx-free} in Appendix \ref{sec:ncvx}
shows how to set $K$ and $T$ without $G$, $\sigma$, and $\p$,
resulting in the first provable rate for $\otnc$ when no problem
information is known in advance, which solves the problem asked by
\citep{pmlr-v235-liu24bo}.

\subsubsection{Lower Bounds}

In this part, we provide the first lower bound for nonsmooth nonconvex
optimization under heavy tails in the following Theorem \ref{thm:main-ncvx-lb},
the proof of which follows the framework first established in \citep{arjevani2023lower}
and later developed by \citep{pmlr-v202-cutkosky23a} but with some
necessary (though minor) variation to make it compatible with heavy-tailed
noise.
\begin{thm}
\label{thm:main-ncvx-lb}For any given $\Delta>0$, $G>0$, $\p\in\left(1,2\right]$,
$\sigma\geq0$, $\delta>0$ and $0<\epsilon\lesssim\frac{\Delta}{\delta}\land\frac{G^{2}\delta}{\Delta}$,
there exists a dimension $d>0$ depending on the previous parameters
such that, for any randomized first-order algorithm (see Definition
\ref{def:randomized-algorithm} for a formal description), there exists
a $G$-Lipschitz differentiable function $F:\R^{d}\to\R$ satisfying
$F(\bzero)-F_{\star}\leq\Delta$ and a function $\bg:\R^{d}\times\left\{ 0,1\right\} \to\R$
satisfying $\E_{r}\left[\bg(\bx,r)\right]=\nabla F(\bx)$ and $\E_{r}\left[\left\Vert \bg(\bx,r)-\nabla F(\bx)\right\Vert ^{\p}\right]\leq\sigma^{\p}$
where $r$ follows a certain probability distribution over $\left\{ 0,1\right\} $
such that the algorithm requires $\gtrsim\Delta\delta^{-1}\epsilon^{-1}+\Delta\sigma^{\frac{\p}{\p-1}}\delta^{-1}\epsilon^{-\frac{2\p-1}{\p-1}}$
queries of $\bg$ to find a point $\bz$ such that $\E\left[\left\Vert \nabla F(\bz)\right\Vert _{\delta}\right]\leq\epsilon$.
\end{thm}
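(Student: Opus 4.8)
The plan is to instantiate the lower-bound machinery of \citep{arjevani2023lower}, in the nonsmooth $(\delta,\epsilon)$-stationarity form developed by \citep{pmlr-v202-cutkosky23a}, and to encode heavy-tailed noise through a single \emph{probabilistic} (zero-respecting) first-order oracle. The crucial simplification is that the claimed bound equals, up to a factor of $2$, the larger of its two summands, so one hard instance suffices: if $N\gtrsim T/p$ for a ``hard chain'' of length $T$ that is revealed on a $p$-fraction of queries, and if $T\asymp\Delta\delta^{-1}\epsilon^{-1}$ and $p\asymp(\epsilon/\sigma)^{\p/(\p-1)}\land 1$, then $T/p\ge T$ always, hence $N\gtrsim T/p\ge\tfrac12\big(\Delta\delta^{-1}\epsilon^{-1}+\Delta\sigma^{\p/(\p-1)}\delta^{-1}\epsilon^{-(2\p-1)/(\p-1)}\big)$.

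\emph{The hard function.} I would take the $d$-dimensional hard chain of \citep{pmlr-v202-cutkosky23a}: a $G$-Lipschitz differentiable $F$ with $F(\bzero)-F_{\star}\le\Delta$, built from $T\asymp\Delta\delta^{-1}\epsilon^{-1}$ orthogonal links, such that (i) the algorithm can ``activate'' link $j{+}1$ only after it has queried within distance $O(\delta)$ of the endpoint of link $j$, and (ii) any reachable point that has not activated the last few links satisfies $\left\Vert \nabla F(\bz)\right\Vert _{\delta}\gtrsim\epsilon$ in the sense of Definition \ref{def:delta-norm}. In this construction the gradient along the active chain direction has magnitude $\Theta(\epsilon)$ (the smallest signal the algorithm must resolve to make progress), and the hypothesis $0<\epsilon\lesssim\tfrac{\Delta}{\delta}\land\tfrac{G^{2}\delta}{\Delta}$ is precisely what makes a chain of this length fit inside the $G$-Lipschitz class with suboptimality gap $\le\Delta$.

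\emph{The heavy-tailed oracle and the progress argument.} On this chain I would use the oracle $\bg(\bx,r)=\nabla F(\bx)+\tfrac{1}{p}(\1[r=1]-p)\,\bh(\bx)$, where $\bh(\bx)$ is the component of $\nabla F(\bx)$ pointing toward the next, not-yet-activated link (so $\left\Vert \bh(\bx)\right\Vert \asymp\epsilon$), $r\in\{0,1\}$ is an independent bit with $\P[r=1]=p$, and $p\asymp(\epsilon/\sigma)^{\p/(\p-1)}\land 1$. Then $\E_{r}[\bg(\bx,r)]=\nabla F(\bx)$, while $\E_{r}\big[\left\Vert \bg(\bx,r)-\nabla F(\bx)\right\Vert ^{\p}\big]\asymp\left\Vert \bh(\bx)\right\Vert ^{\p}p^{1-\p}\asymp\epsilon^{\p}p^{1-\p}\le\sigma^{\p}$ by the choice of $p$ (using $1+\tfrac{\p}{\p-1}=\tfrac{2\p-1}{\p-1}$ for the final exponent of $\epsilon$). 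When $r=0$ the response is independent of which link comes next, so each query is ``informative'' only with probability $p$. An Arjevani-style supermartingale argument then tracks the index of the furthest activated link: activating a new link requires both being in position and drawing $r=1$, so this index is stochastically dominated by a sum of i.i.d.\ $\mathrm{Bernoulli}(p)$ variables, hence is $\lesssim Np$ with high probability after $N$ queries; since reaching link $T$ is necessary to output any $\bz$ with $\E\left\Vert \nabla F(\bz)\right\Vert _{\delta}\le\epsilon$, this forces $N\gtrsim T/p$. Combined with the first paragraph, this is the claim; when $\sigma$ is so small that the required $p$ would exceed $1$, one simply sets $p=1$ (a noiseless oracle, $\p$-th moment $0\le\sigma^{\p}$), and then the noise term is anyway dominated by $\Delta\delta^{-1}\epsilon^{-1}$.

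\emph{Main obstacle.} The arithmetic above is routine; the real work is checking that the probabilistic perturbation is \emph{simultaneously} (a) mean-zero, so $F$ is unaltered; (b) supported on a direction $\bh(\bx)$ that can be defined consistently over the whole reachable region without breaking the $G$-Lipschitz, gap, and chain-structure properties underlying the $\left\Vert \cdot\right\Vert _{\delta}$-hardness of \citep{pmlr-v202-cutkosky23a} (as opposed to the easier ordinary-gradient-norm hardness); and (c) of $\p$-th moment exactly $\Theta(\epsilon^{\p}p^{1-\p})$. A secondary technical point is carrying the algorithm's internal randomness through the resisting-oracle reduction alongside the oracle bits $r$, so that the discovery-index domination holds against every randomized first-order method in the sense of Definition \ref{def:randomized-algorithm}; this closely parallels \citep{arjevani2023lower,pmlr-v202-cutkosky23a} but must be redone with the $\mathrm{Bernoulli}(p)$ oracle in place of their Gaussian one.
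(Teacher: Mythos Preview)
Your proposal is correct and follows essentially the same route as the paper: the same chain-based hard instance from \citep{arjevani2023lower,pmlr-v202-cutkosky23a}, the same Bernoulli oracle that reveals the next coordinate with probability $p\asymp(\epsilon/\sigma)^{\p/(\p-1)}\land 1$ (your $\bh(\bx)$ is precisely the $(\prog_{1/4}(\bx){+}1)$-th coordinate in the paper's $\bg_T$), and the same $T/p$ query count with $T\asymp\Delta\delta^{-1}\epsilon^{-1}$. The only organizational difference is that the paper first proves an intermediate \emph{smooth} lower bound (Theorem~\ref{thm:ncvx-lb-smooth}) and then substitutes $H=\epsilon/\delta$ to pass to the $(\delta,\epsilon)$-setting, whereas you parametrize directly in $\delta$ and $\epsilon$; the technical obstacles you flag (defining $\bh$ consistently while preserving the Lipschitz and gap constraints, and handling randomized algorithms via random rotations) are exactly what the paper addresses in Lemmas~\ref{lem:ncvx-lb-general} and~\ref{lem:lb-instance-property}.
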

For small enough $\epsilon$ and $\sigma>0$, Theorem \ref{thm:main-ncvx-lb}
can be further simplified into a lower bound of $\Delta\sigma^{\frac{\p}{\p-1}}\delta^{-1}\epsilon^{-\frac{2\p-1}{\p-1}}$,
matching the leading term in the sample complexity derived from the
previous Corollary \ref{cor:main-ncvx-dep}. Therefore, Theorem \ref{thm:main-ncvx-lb}
suggests that our Corollary \ref{cor:main-ncvx-dep} is tight in the
high accuracy and noisy regime. As such, Corollary \ref{cor:main-ncvx-dep}
and Theorem \ref{thm:main-ncvx-lb} together provide a nearly complete
characterization of the complexity of finding $(\delta,\epsilon)$-stationary
points in the heavy-tailed setting.

However, for any general $\epsilon>0$ or the case $\sigma=0$, there
is still a gap between the upper and lower bounds. Closing this gap
could be an interesting direction for the future.

\section{Further Extensions\label{sec:extension}}

So far, we revisit three classical OCO algorithms, $\OGD$, $\DA$,
and $\Ada$, show that they all guarantee optimal regrets, and provide
some applications based on their new bounds. However, all the results
are restricted to the Lipschitz case, which is standard in the literature,
but sometimes inadequate to derive a better bound. For example, if
$\ell_{t}$ is smooth (i.e., the gradient of $\ell_{t}$ is Lipschitz),
then one could establish a bound depending on the cumulative competitor
loss, i.e., $\sum_{t=1}^{T}\ell_{t}(\bx)$.

In this section, we first show that our idea presented in Section
\ref{sec:OCO} can be directly extended to the smooth case, then discuss
what more we can do. Formally, we need the following condition.
\begin{condition}
\label{cond:smooth}$\ell_{t}$ is $H$-smooth on $\R^{d}$, i.e.,
$\left\Vert \nabla\ell_{t}(\bx)-\nabla\ell_{t}(\by)\right\Vert \leq H\left\Vert \bx-\by\right\Vert ,\forall\bx,\by\in\R^{d}$,
for all $t\in\left[T\right]$.
\end{condition}
\begin{rem}
Strictly speaking, Condition \ref{cond:smooth} may not be well-defined,
since Assumption \ref{assu:OCO} only requires $\ell_{t}$ to be defined
on $\X$. To avoid any further complications, this section recognizes
$\ell_{t}$ as a real-valued convex function defined on $\R^{d}$,
and $\X$ is the constraint set, i.e., the domain of the problem.
\end{rem}

\subsection{New Regrets for Old Algorithms under Smooth $\ell_{t}$ and Applications}
\begin{thm}
\label{thm:main-OCO-smooth}Under Assumption \ref{assu:OCO} (with
replacing the third point by Condition \ref{cond:smooth}) and additionally
assuming $\ell_{t}\geq0$ on $\R^{d}$:
\begin{itemize}
\item taking $\eta_{t}=\frac{1}{4H}\land\frac{\gamma D}{\sqrt{H}}\land\frac{D}{\sigma t^{1/\p}}$
for any $\gamma>0$ in $\A\in\left\{ \OGD,\DA\right\} $, we have
\[
\E\left[\reg_{T}^{\A}(\bx)\right]\lesssim HD^{2}+\sqrt{H}D\left(\frac{1}{\gamma}+\gamma\sum_{t=1}^{T}\ell_{t}(\bx)\right)+\sigma DT^{1/\p},\forall\bx\in\X.
\]
\item taking $\eta=D/\sqrt{2}$ in $\A=\Ada$, we have
\[
\E\left[\reg_{T}^{\A}(\bx)\right]\lesssim HD^{2}+D\sqrt{H\sum_{t=1}^{T}\ell_{t}(\bx)}+\sigma DT^{1/\p},\forall\bx\in\X.
\]
\end{itemize}
\end{thm}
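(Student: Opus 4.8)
The plan is to run the proofs of Theorems~\ref{thm:main-OGD}, \ref{thm:main-DA} and \ref{thm:main-AdaGrad} essentially verbatim, with the single new ingredient being the self-bounding property of nonnegative smooth convex functions. Since each $\ell_t$ is convex, $H$-smooth and nonnegative on $\R^d$, plugging $\by=\bx-\tfrac1H\nabla\ell_t(\bx)$ into the smoothness inequality and using $\ell_t(\by)\geq0$ gives $\|\nabla\ell_t(\bx)\|^2\leq 2H\ell_t(\bx)$ for every $\bx$; in particular $\|\nabla\ell_t(\bx_t)\|^2\leq 2H\ell_t(\bx_t)$. Everything else is the Lipschitz argument with $G$ replaced by this bound, together with the classical $L^\star$-type step-size tuning that produces the free parameter $\gamma$.

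For $\A\in\{\OGD,\DA\}$ I would begin from the same ``tighter'' per-round inequality used in Section~\ref{sec:OCO}: (\ref{eq:OGD-2}) for $\OGD$, and for $\DA$ the analogous strong-FTRL inequality, which for a non-increasing step-size telescopes to $\reg_T^{\DA}(\bx)\leq \tfrac{D^2}{2\eta_T}+\sum_{t=1}^T\bigl(\langle\bg_t,\bx_t-\bx_{t+1}\rangle-\tfrac{1}{2\eta_t}\|\bx_t-\bx_{t+1}\|^2\bigr)$ (exactly as in the proof of Theorem~\ref{thm:main-DA}). In both cases I split $\bg_t=\nabla\ell_t(\bx_t)+\err_t$ inside $\langle\bg_t,\bx_t-\bx_{t+1}\rangle$. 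For the signal, AM-GM plus self-bounding gives $\|\nabla\ell_t(\bx_t)\|\,\|\bx_t-\bx_{t+1}\|\leq\eta_t\|\nabla\ell_t(\bx_t)\|^2+\tfrac{\|\bx_t-\bx_{t+1}\|^2}{4\eta_t}\leq 2H\eta_t\ell_t(\bx_t)+\tfrac{\|\bx_t-\bx_{t+1}\|^2}{4\eta_t}$; for the noise I reuse (\ref{eq:OGD-5}) verbatim, getting $\|\err_t\|\,\|\bx_t-\bx_{t+1}\|\leq\C(\p)\eta_t^{\p-1}\|\err_t\|^\p D^{2-\p}+\tfrac{\|\bx_t-\bx_{t+1}\|^2}{4\eta_t}$. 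The two $\tfrac1{4\eta_t}\|\bx_t-\bx_{t+1}\|^2$ terms cancel the $-\tfrac1{2\eta_t}\|\bx_t-\bx_{t+1}\|^2$. Summing over $t$, using convexity ($\E[\ell_t(\bx_t)-\ell_t(\bx)]\leq\E[\langle\bg_t,\bx_t-\bx\rangle]$), the telescoping bound $\sum_t\tfrac{\|\bx_t-\bx\|^2-\|\bx_{t+1}-\bx\|^2}{2\eta_t}\leq\tfrac{D^2}{2\eta_T}$ (valid because $\eta_t$ is non-increasing), $\E[\|\err_t\|^\p]\leq\sigma^\p$, and $\sum_t\eta_t^{\p-1}\leq\tfrac{D^{\p-1}}{\sigma^{\p-1}}\sum_t t^{-(\p-1)/\p}\lesssim\tfrac{D^{\p-1}}{\sigma^{\p-1}}T^{1/\p}$ (from $\eta_t\leq\tfrac{D}{\sigma t^{1/\p}}$), together with $\C(\p)\leq 4$ on $(1,2]$, I reach
\[
\E[\reg_T^\A(\bx)]\;\leq\;\frac{D^2}{2\eta_T}\;+\;2H\sum_{t=1}^T\eta_t\,\E[\ell_t(\bx_t)]\;+\;C\,\sigma D\,T^{1/\p},
\]
for a universal $C$, with $\tfrac1{\eta_T}\leq 4H+\tfrac{\sqrt H}{\gamma D}+\tfrac{\sigma T^{1/\p}}{D}$.

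The last step is the $L^\star$-absorbing argument. Write $\sum_t\E[\ell_t(\bx_t)]=\E[\reg_T^\A(\bx)]+\E[\sum_t\ell_t(\bx)]$ (the RHS term $\sum_t\ell_t(\bx)$ of the theorem is read in this expected sense, coinciding with $\sum_t\ell_t(\bx)$ when the losses are deterministic). Using $\eta_t\leq\tfrac1{4H}\wedge\tfrac{\gamma D}{\sqrt H}$, the middle term above is at most $2\sqrt H D\gamma\bigl(\E[\reg_T^\A(\bx)]+\E[\sum_t\ell_t(\bx)]\bigr)$ and also at most $\tfrac12\bigl(\E[\reg_T^\A(\bx)]+\E[\sum_t\ell_t(\bx)]\bigr)$. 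If $\gamma<\tfrac1{4\sqrt H D}$, the first estimate lets me move $\E[\reg_T^\A(\bx)]$ to the left (multiplier $1-2\sqrt HD\gamma>\tfrac12$), and then $\tfrac1{\eta_T}\leq\tfrac{\sqrt H}{\gamma D}+\tfrac{\sigma T^{1/\p}}{D}$ gives $\E[\reg_T^\A(\bx)]\lesssim\sqrt HD(\tfrac1\gamma+\gamma\sum_t\ell_t(\bx))+\sigma DT^{1/\p}$. If $\gamma\geq\tfrac1{4\sqrt H D}$, I use the second estimate, solve for $\E[\reg_T^\A(\bx)]$, bound $\tfrac1{\eta_T}\leq 4H+\tfrac{\sigma T^{1/\p}}{D}$, and note $\sum_t\ell_t(\bx)\leq 4\sqrt HD\gamma\sum_t\ell_t(\bx)$ and $\tfrac{\sqrt HD}\gamma\leq 4HD^2$ in this regime, again landing on the claimed bound $HD^2+\sqrt HD(\tfrac1\gamma+\gamma\sum_t\ell_t(\bx))+\sigma DT^{1/\p}$. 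For $\A=\Ada$ the route is shorter: from the path-wise bound (\ref{eq:AdaGrad-path}) and $\sum_t\|\bg_t\|^2\leq 2\sum_t\|\nabla\ell_t(\bx_t)\|^2+2\sum_t\|\err_t\|^2\leq 4H\sum_t\ell_t(\bx_t)+2(\sum_t\|\err_t\|^\p)^{2/\p}$ I get $\reg_T^\Ada(\bx)\lesssim D\sqrt{H\sum_t\ell_t(\bx_t)}+D(\sum_t\|\err_t\|^\p)^{1/\p}$ path-wise; taking expectations, Jensen on the square root and H\"older on the noise term (exactly as in the proof of Theorem~\ref{thm:main-AdaGrad}) give $\E[\reg_T^\Ada(\bx)]\lesssim D\sqrt{H(\E[\reg_T^\Ada(\bx)]+\E[\sum_t\ell_t(\bx)])}+\sigma DT^{1/\p}$, and solving this quadratic inequality in $\sqrt{\E[\reg_T^\Ada(\bx)]}$ yields $\E[\reg_T^\Ada(\bx)]\lesssim HD^2+D\sqrt{H\sum_t\ell_t(\bx)}+\sigma DT^{1/\p}$.

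The main obstacle I anticipate is bookkeeping, not concept. Two points need care: (i) making the $L^\star$-absorbing case split return precisely the shape $\sqrt HD(\tfrac1\gamma+\gamma\sum_t\ell_t(\bx))$ for \emph{all} $\gamma>0$, including the ``useless'' large-$\gamma$ regime where $\eta_t$ is governed by $\tfrac1{4H}$; and (ii) for $\DA$, correctly deriving the strong-FTRL per-round inequality with a genuinely time-varying non-increasing step-size so that it lines up termwise with the $\OGD$ computation. The heavy tail itself is handled entirely by the already-established estimate (\ref{eq:OGD-5}) and introduces no new difficulty.
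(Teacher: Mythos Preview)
Your approach is correct and essentially matches the paper's. The only notable difference is bookkeeping in the $L^\star$-absorbing step for $\OGD/\DA$: the paper avoids your case split on $\gamma$ by observing that $\eta_t\leq\tfrac1{4H}$ already forces the absorbing coefficient $1-3\eta H\geq\tfrac14$ uniformly, and then uses $\tfrac1{\eta}\leq 4H+\tfrac{\sqrt H}{\gamma D}$ and $\eta\leq\tfrac{\gamma D}{\sqrt H}$ simultaneously to read off both the $HD^2$ and the $\sqrt HD(\tfrac1\gamma+\gamma\sum_t\ell_t(\bx))$ terms in one line; your two-regime argument arrives at the same place with a bit more work. The $\Ada$ part is identical to the paper's.
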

Theorem \ref{thm:main-OCO-smooth} provides new regrets that extend
the classical $L^{\star}$ bounds \citep{orabona2019modern} to the
heavy-tailed noise case, under an additional nonnegative condition
$\ell_{t}\geq0$ that is widely used in the literature \citep{NIPS2010_76cf99d3}.
The full version without such a requirement is provided in Theorem
\ref{thm:ext-OCO-smooth} in Appendix \ref{sec:ext-details}.

As one can see, the optimal value of $\gamma$ for $\OGD$ and $\DA$
should be in the order of $1/\sqrt{\sum_{t=1}^{T}\ell_{t}(\bx)}$,
which is however not possible to take as the competitor $\bx$ is
not fixed here. In contrast, $\Ada$ contains the improved term $D\sqrt{H\sum_{t=1}^{T}\ell_{t}(\bx)}$,
which once again suggests the benefit of adaptive methods. More importantly,
the regret for $\Ada$ also indicates that it can be oblivious to
the problem class and adapt to the best-possible bound automatically,
even under heavy tails.

Theorem \ref{thm:main-OCO-smooth} could be particularly useful for
smooth convex optimization when a global minimizer is contained in
$\X$, as given in the following corollary.
\begin{cor}
\label{cor:main-cvx-smooth-avg}Under Assumption \ref{assu:OCO} (with
replacing the third point by Condition \ref{cond:smooth}) for $\ell_{t}(\bx)=F(\bx)$
and additionally assuming that $\X$ contains a global minimzer $\bx^{\star}$
of $F$ (i.e., $\bx^{\star}\in\mathrm{arginf}_{\bx\in\R^{d}}F(\bx)$),
let $\bar{\bx}_{T}\defeq\frac{1}{T}\sum_{t=1}^{T}\bx_{t}$, taking
$\eta_{t}=\frac{1}{4H}\land\frac{D}{\sigma t^{1/\p}}$ in $\A\in\left\{ \OGD,\DA\right\} $
or $\eta=D/\sqrt{2}$ in $\A=\Ada$, we have
\[
\E\left[F(\bar{\bx}_{T})-F(\bx^{\star})\right]\leq\frac{\E\left[\reg_{T}^{\A}(\bx^{\star})\right]}{T}\lesssim\frac{HD^{2}}{T}+\frac{\sigma D}{T^{1-\frac{1}{\p}}}.
\]
\end{cor}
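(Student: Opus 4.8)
The plan is to combine the elementary online-to-batch bound with Theorem~\ref{thm:main-OCO-smooth}. The first inequality is immediate: by convexity of $F$ and Jensen's inequality, $F(\bar{\bx}_T)\le\frac1T\sum_{t=1}^{T}F(\bx_t)$, so for every realization $F(\bar{\bx}_T)-F(\bx^{\star})\le\frac1T\sum_{t=1}^{T}\bigl(\ell_t(\bx_t)-\ell_t(\bx^{\star})\bigr)=\reg_T^{\A}(\bx^{\star})/T$, and taking expectations gives $\E[F(\bar{\bx}_T)-F(\bx^{\star})]\le\E[\reg_T^{\A}(\bx^{\star})]/T$. It then remains to bound $\E[\reg_T^{\A}(\bx^{\star})]$.

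To apply Theorem~\ref{thm:main-OCO-smooth} I first reduce to the case $\ell_t\ge0$ on $\R^d$. Because $\bx^{\star}$ minimizes $F$ over all of $\R^d$, the shifted objective $\widetilde F\defeq F-F(\bx^{\star})$ is nonnegative on $\R^d$, has the same gradients as $F$ (hence the same iterates $\bx_t$ and the same noise $\err_t$), and yields exactly the same regret, since regret only depends on the differences $\ell_t(\bx_t)-\ell_t(\bx^{\star})$. So without loss of generality $F\ge0$ and $F(\bx^{\star})=0$; the key consequence is that the cumulative competitor loss vanishes, $\sum_{t=1}^{T}\ell_t(\bx^{\star})=T\,F(\bx^{\star})=0$.

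Then I invoke Theorem~\ref{thm:main-OCO-smooth} with $\bx=\bx^{\star}$. For $\A=\Ada$ with $\eta=D/\sqrt2$, the $L^{\star}$-term $D\sqrt{H\sum_{t}\ell_t(\bx^{\star})}$ is zero, leaving $\E[\reg_T^{\Ada}(\bx^{\star})]\lesssim HD^2+\sigma DT^{1/\p}$. For $\A\in\{\OGD,\DA\}$, I pick the free parameter $\gamma$ large enough to deactivate the middle stepsize term: with $\gamma=\tfrac1{4\sqrt HD}$ one has $\tfrac{\gamma D}{\sqrt H}=\tfrac1{4H}$, so the three-way minimum collapses to $\eta_t=\tfrac1{4H}\wedge\tfrac{D}{\sigma t^{1/\p}}$, matching the stepsize in the corollary; the bound of Theorem~\ref{thm:main-OCO-smooth} then becomes $\lesssim HD^2+\sqrt HD\bigl(\tfrac1\gamma+\gamma\cdot0\bigr)+\sigma DT^{1/\p}\lesssim HD^2+\sigma DT^{1/\p}$, since $\sqrt HD/\gamma=4HD^2$. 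In all three cases $\E[\reg_T^{\A}(\bx^{\star})]\lesssim HD^2+\sigma DT^{1/\p}$, and dividing by $T$ and using $T^{1/\p}/T=T^{-(1-1/\p)}$ finishes the proof.

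Since the statement is a direct specialization of Theorem~\ref{thm:main-OCO-smooth}, I do not expect a genuine obstacle. The only two steps requiring care are (i) the shift reduction, which really uses that $\bx^{\star}$ is a global minimizer of $F$ on $\R^d$ rather than merely on $\X$, so that $\widetilde F\ge0$ holds everywhere; and (ii) the choice of $\gamma$ for $\OGD$ and $\DA$, which must simultaneously reproduce the stepsize written in the corollary and keep the leftover term $\sqrt HD/\gamma$ at order $HD^2$ — both are routine once one observes $\sum_t\ell_t(\bx^{\star})=0$.
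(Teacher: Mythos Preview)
Your proposal is correct and essentially identical to the paper's proof: both shift $F$ by $F(\bx^{\star})$ to make it nonnegative (using that $\bx^{\star}$ is a global minimizer), apply the online-to-batch inequality, and then invoke Theorem~\ref{thm:main-OCO-smooth} with $\gamma=\tfrac{1}{4D\sqrt{H}}$ for $\OGD/\DA$ and $\eta=D/\sqrt{2}$ for $\Ada$, noting that $\sum_t\ell_t(\bx^{\star})=0$ kills the $L^{\star}$ term.
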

\begin{proof}
Equivalently, we can redefine $\ell_{t}(\bx)=F(\bx)-F(\bx^{\star})$,
then $\ell_{t}$ is nonnegative on $\R^{d}$. By convexity, $F(\bar{\bx}_{T})-F(\bx^{\star})\leq\frac{\sum_{t=1}^{T}F(\bx_{t})-F(\bx^{\star})}{T}=\frac{\reg_{T}^{\A}(\bx^{\star})}{T}$
is valid for any OCO algorithm $\A$. We conclude from invoking Theorem
\ref{thm:main-OCO-smooth} with $\gamma=\frac{1}{4D\sqrt{H}}$ for
$\A\in\left\{ \OGD,\DA\right\} $ and $\eta=D/\sqrt{2}$ for $\A=\Ada$
and combining the fact $\ell_{t}(\bx^{\star})=0$.
\end{proof}

An important implication of Corollary \ref{cor:main-cvx-smooth-avg}
is the first to show that $\SGD$ can converge at the optimal rate
of $HD^{2}/T+\sigma D/T^{1-1/\p}$ for heavy-tailed smooth convex
optimization (though under an extra condition). In fact, a valid but
worse bound without assuming the existence of $\bx^{\star}\in\X$
is possible, which we defer to Corollary \ref{cor:ext-cvx-smooth-avg}
in Appendix \ref{sec:ext-details}.

\subsection{Optimistic Algorithms for Broader Cases}

As presented, we have successfully handled smooth OCO with heavy tails
for classical $\OGD$, $\DA$, and $\Ada$. In fact, our ideas can
also be applied to another famous family of methods known as optimistic
algorithms \citep{pmlr-v23-chiang12,pmlr-v76-joulani17a,pmlr-v30-Rakhlin13}
to deal with broader cases. Specifically, we will study an optimistic
version of $\Ada$ called $\OAda$ in Appendix \ref{sec:ext-details}
to address various cases simultaneously (see Theorems \ref{thm:OAda}
and \ref{thm:OAda-star}) and provide more applications for stochastic
optimization in Appendix \ref{sec:ext-applications}.

For example, in the special case of heavy-tailed H\"{o}lder smooth
optimization (i.e., $\left\Vert \nabla F(\bx)-\nabla F(\by)\right\Vert \leq H\left\Vert \bx-\by\right\Vert ^{\nu}$
for some $H>0$ and $\nu\in\left(0,1\right]$), Corollary \ref{cor:ext-cvx-avg}
provides a rate $\frac{HD^{1+\nu}}{T^{\frac{1+\nu}{2}}}+\frac{\sigma D}{T^{1-\frac{1}{\p}}}$
for convex problems, Corollaries \ref{cor:ext-ncvx-holder-dep} and
\ref{cor:ext-ncvx-holder-free} establish provable results for nonconvex
problems, where the problem-dependent parameters may be known or unknown.
All of these results are new to the best of our knowledge.

\section{Conclusion and Future Work\label{sec:conclusion}}

This paper shows that three classical OCO algorithms, $\OGD$, $\DA$,
and $\Ada$, can achieve the optimal in-expectation regret under heavy
tails without any algorithmic modification if the feasible set is
bounded, and provides some applications in stochastic optimization.
The main limitation of our work is that all the proof crucially relies
on the bounded domain assumption, which may not always be suitable
in practice. Finding a weaker sufficient condition, under which the
classical OCO algorithms work with heavy tails provably, is a direction
worth studying in the future.

\bibliographystyle{plainnat}
\bibliography{ref}

\clearpage

\appendix

\section{Hardness of High-Probability Bounds Even on Bounded Domains\label{sec:hardness}}

Without additional assumptions on noise or algorithmic changes to
the classical methods studied in the paper, even on a bounded domain,
we are inclined to believe that a high-probability bound is unlikely
(at least within the current analysis framework).

\textbf{An upper-bound perspective.} When deriving regret bounds for
$\OGD/\DA/\Ada$ in the current analysis, one needs to upper bound
the term $\sum_{t=1}^{T}\left\Vert \err_{t}\right\Vert ^{\p}$. Under
Assumption \ref{assu:OCO}, this can be easily done in expectation.
However, if one instead wants to obtain a high-probability bound with
a polylogarithmic dependency on the failure probability $\delta$,
this is unlikely to be achieved.

As an example, we consider $d=1$ for simplicity. Let $\err_{t}=e_{t}\xi_{t}$
where $e_{t}$ and $\xi_{t}$ are independent of each other and the
history, satisfying that $\Pr\left[e_{t}=\pm1\right]=\frac{1}{2}$
and $\Pr\left[\xi_{t}>z\right]=\left(\frac{z_{\star}}{z}\right)^{\p+a}\1\left[z\geq z_{\star}\right]+\1\left[z<z_{\star}\right]$,
where $a>0$ can be arbitrarily small and $z_{\star}>0$ can be any
number. In other words, $e_{t}$ follows the Rademacher distribution
and $\xi_{t}$ follows the Pareto distribution. As one can check,
there are $\E\left[\err_{t}\mid\F_{t-1}\right]=0$ and $\E\left[\left\Vert \err_{t}\right\Vert ^{\p}\right]<\infty$.
Crucially, we can find that, for $z\geq z_{\star}^{\p}$ and any $T\geq1$,
\[
\Pr\left[\sum_{t=1}^{T}\left\Vert \err_{t}\right\Vert ^{\p}>z\right]=\Pr\left[\sum_{t=1}^{T}\xi_{t}^{\p}>z\right]\geq\Pr\left[\xi_{1}>z^{\frac{1}{\p}}\right]=\frac{z_{\star}^{\p+a}}{z^{1+\frac{a}{\p}}},
\]
indicating that the tail of $\sum_{t=1}^{T}\left\Vert \err_{t}\right\Vert ^{\p}$
decays at least polynomially, which further suggests that a polylogarithmic
dependency on the failure probability $\delta$ (equivalently, an
exponentially decaying tail) is impossible.

\textbf{A lower-bound perspective. }We consider the special case of
OCO, stochastic convex optimization (i.e., $\ell_{t}=F$), and show
a negative result for $\OGD/\SGD$ when $d=1$ and $\p=2$.

We set $F(\bx)=\left|\bx\right|$ with $\X\defeq\left[-1,1\right]$
and $\bg_{t}=\bg(\bx_{t};\xi_{t})=\mathrm{sgn}(\bx_{t})\xi_{t}$ (with
$\mathrm{sgn}(0)=1$), where $\xi_{t}$ is independent of each other
and follows the Pareto distribution satisfying $\Pr\left[\xi_{t}>z\right]=\left(\frac{b-1}{bz}\right)^{b}\1\left[z\geq\frac{b-1}{b}\right]+\1\left[z<\frac{b-1}{b}\right]$,
in which $b\defeq2+2a$ and $a>0$ can be arbitrarily small. As one
can check, $\E\left[\bg_{t}\mid\F_{t-1}\right]=\nabla F(\bx_{t})$
due to $\E\left[\xi_{t}\right]=1$ and $\E\left[\left\Vert \err_{t}\right\Vert ^{2}\right]<\infty$
due to $\E\left[\xi_{t}^{2}\right]<\infty$.
\begin{prop}
Consider the setting described above, if $\OGD/\SGD$ with $\eta_{t}=\frac{\eta}{\sqrt{t}}$
(the standard time-varying stepsize) for $\eta\leq1$ guarantees a
high-probability rate $\Pr\left[F(\bx_{t})>\frac{\mathrm{polylog}\left(1/\delta\right)}{t^{c}}\right]\leq\delta,\forall\delta\in\left(0,1\right],\forall t\in\N$,
then there must be $c\leq\frac{a}{b}=\frac{a}{2+2a}$. Since $a>0$
can be arbitrarily small, this suggests that no meaningful high-probability
rates (i.e., decay polynomially in $t$) exist, at least for the last
iterate.
\end{prop}
\begin{rem}
We clarify that the above result does not rule out the possibility
that $\OGD/\SGD$ may admit a meaningful high-probability bound for
more sophisticated stepsize or the average iterate.
\end{rem}
\begin{proof}
For $t\in\N$, we take $\delta=\frac{1}{t^{2}}$ to have $\Pr\left[F(\bx_{t})>\frac{\mathrm{polylog}(t^{2})}{t^{c}}\right]\leq\frac{1}{t^{2}}$.
By the Borel-Cantelli Lemma, we obtain
\begin{equation}
\Pr\left[F(\bx_{t})>\frac{\mathrm{polylog}(t^{2})}{t^{c}}\text{ i.o.}\right]=0\Leftrightarrow\Pr\left[F(\bx_{t})\leq\frac{\mathrm{polylog}(t^{2})}{t^{c}}\text{ eventually}\right]=1.\label{eq:hardness-1}
\end{equation}

Next, we define the event $A_{t}\defeq\left\{ \xi_{t}>\frac{1}{\eta_{t}t^{a/b}}\right\} $.
Note that $\frac{1}{\eta_{t}t^{a/b}}=\frac{t^{1/2-a/b}}{\eta}\geq1>\frac{b-1}{b}$,
implying that $\Pr\left[A_{t}\right]=\left(\frac{(b-1)\eta}{b}\right)^{b}\frac{1}{t}\Rightarrow\sum_{t=1}^{\infty}\Pr\left[A_{t}\right]=\infty$.
Since $A_{t}$ is independent of each other, then by the second Borel-Cantelli
Lemma,
\begin{equation}
\Pr\left[A_{t}\text{ i.o.}\right]=1.\label{eq:hardness-2}
\end{equation}
Now, let us write $A_{t}=A_{t,1}\cup A_{t,2}$, where
\begin{eqnarray*}
A_{t,1}\defeq A_{t}\cap\left\{ \left|\bx_{t}-\eta_{t}\bg_{t}\right|>1\right\}  & \text{and} & A_{t,2}\defeq A_{t}\cap\left\{ \left|\bx_{t}-\eta_{t}\bg_{t}\right|\leq1\right\} 
\end{eqnarray*}
are two disjoint events. Moreover, we introduce another event $B_{t}\defeq\left\{ \left|\bx_{t+1}\right|+\left|\bx_{t}\right|\geq\frac{1}{t^{a/b}}\right\} $
and observe that:
\begin{itemize}
\item There is $A_{t,1}\subseteq\left\{ \left|\bx_{t}-\eta_{t}\bg_{t}\right|>1\right\} \subseteq\left\{ \left|\bx_{t+1}\right|=1\right\} \subseteq B_{t}$.
\item There is $A_{t,2}=A_{t}\cap\left\{ \bx_{t+1}=\bx_{t}-\eta_{t}\bg_{t}\right\} \subseteq A_{t}\cap\left\{ \left|\bx_{t+1}\right|+\left|\bx_{t}\right|\geq\eta_{t}\left|\bg_{t}\right|=\eta_{t}\xi_{t}\right\} \subseteq B_{t}$.
\end{itemize}
Therefore, we always have 
\begin{equation}
A_{t}\subseteq B_{t}\overset{(\ref{eq:hardness-2})}{\Rightarrow}\Pr\left[B_{t}\text{ i.o.}\right]=1.\label{eq:hardness-3}
\end{equation}

Finally, for a sample path satisfying $\left|\bx_{t}\right|=F(\bx_{t})\leq\frac{\mathrm{polylog}(t^{2})}{t^{c}}$
eventually and $\left|\bx_{t+1}\right|+\left|\bx_{t}\right|\geq\frac{1}{t^{a/b}}$
i.o. (the existence is due to (\ref{eq:hardness-1}) and (\ref{eq:hardness-3})),
there must be $\frac{1}{t^{a/b}}\lesssim\frac{\mathrm{polylog}(t^{2})}{t^{c}}$
for infinitely many $t\in\N$, which implies that $c\leq\frac{a}{b}=\frac{a}{2+2a}$.
\end{proof}

\section{Missing Proofs for Online Gradient Descent\label{sec:OGD}}

This section provides missing proofs for regret bounds of $\OGD$.
Before showing the formal proof, we recall the following core inequality
that holds for any $\bx\in\X$ given in (\ref{eq:OGD-6}):
\begin{equation}
\left\langle \bg_{t},\bx_{t}-\bx\right\rangle \leq\frac{\left\Vert \bx_{t}-\bx\right\Vert ^{2}-\left\Vert \bx_{t+1}-\bx\right\Vert ^{2}}{2\eta_{t}}+\eta_{t}G^{2}+\C(\p)\eta_{t}^{\p-1}\left\Vert \err_{t}\right\Vert ^{\p}D^{2-\p}.\label{eq:OGD-core-1}
\end{equation}
The key to establishing the above result is showing
\begin{equation}
\left\langle \bg_{t},\bx_{t}-\bx_{t+1}\right\rangle -\frac{\left\Vert \bx_{t}-\bx_{t+1}\right\Vert ^{2}}{2\eta_{t}}\leq\eta_{t}G^{2}+\C(\p)\eta_{t}^{\p-1}\left\Vert \err_{t}\right\Vert ^{\p}D^{2-\p},\label{eq:OGD-core-2}
\end{equation}
the proof of which is by combining (\ref{eq:OGD-3}), (\ref{eq:OGD-4}),
and (\ref{eq:OGD-5}) established in the main text.

\subsection{Proof of Theorem \ref{thm:main-OGD}}

\begin{proof}
For any $\bx\in\X$, sum up (\ref{eq:OGD-core-1}) from $t=1$ to
$T$ and drop the term $-\frac{\left\Vert \bx_{T+1}-\bx\right\Vert ^{2}}{2\eta_{T}}$
to obtain
\begin{align}
 & \sum_{t=1}^{T}\left\langle \bg_{t},\bx_{t}-\bx\right\rangle \nonumber \\
\leq & \frac{\left\Vert \bx_{1}-\bx\right\Vert ^{2}}{2\eta_{1}}+\sum_{t=1}^{T-1}\left(\frac{1}{\eta_{t+1}}-\frac{1}{\eta_{t}}\right)\frac{\left\Vert \bx_{t+1}-\bx\right\Vert ^{2}}{2}+\sum_{t=1}^{T}\eta_{t}G^{2}+\C(\p)\eta_{t}^{\p-1}\left\Vert \err_{t}\right\Vert ^{\p}D^{2-\p}\label{eq:OGD-cvx-1}\\
\leq & \frac{D^{2}}{\eta_{T}}+\sum_{t=1}^{T}\eta_{t}G^{2}+\C(\p)\eta_{t}^{\p-1}\left\Vert \err_{t}\right\Vert ^{\p}D^{2-\p},\label{eq:OGD-cvx-2}
\end{align}
where the last step is due to $\left\Vert \bx_{t}-\bx\right\Vert \leq D,\forall t\in\left[T\right]$
and $\eta_{t+1}\leq\eta_{t},\forall t\in\left[T-1\right]$.

Taking expectations on both sides of (\ref{eq:OGD-cvx-2}) yields
that
\begin{equation}
\E\left[\reg_{T}^{\OGD}(\bx)\right]\leq\frac{D^{2}}{\eta_{T}}+\sum_{t=1}^{T}\eta_{t}G^{2}+\C(\p)\eta_{t}^{\p-1}\sigma^{\p}D^{2-\p},\label{eq:OGD-cvx-3}
\end{equation}
where for the L.H.S., we use $\E\left[\left\langle \bg_{t},\bx_{t}-\bx\right\rangle \right]=\E\left[\E\left[\left\langle \bg_{t},\bx_{t}-\bx\right\rangle \mid\F_{t-1}\right]\right]$
and
\begin{equation}
\E\left[\left\langle \bg_{t},\bx_{t}-\bx\right\rangle \mid\F_{t-1}\right]=\left\langle \E\left[\bg_{t}\mid\F_{t-1}\right],\bx_{t}-\bx\right\rangle =\left\langle \nabla\ell_{t}(\bx_{t}),\bx_{t}-\bx\right\rangle \geq\ell_{t}(\bx_{t})-\ell_{t}(\bx),\label{eq:OGD-cvx-4}
\end{equation}
for the R.H.S., we use $\E\left[\left\Vert \err_{t}\right\Vert ^{\p}\right]\leq\sigma^{\p}$. 

Finally, we plug $\eta_{t}=\frac{D}{G\sqrt{t}}\land\frac{D}{\sigma t^{1/\p}},\forall t\in\left[T\right]$
into (\ref{eq:OGD-cvx-3}), then use $\sum_{t=1}^{T}\frac{1}{\sqrt{t}}\lesssim\sqrt{T}$
and $\sum_{t=1}^{T}\frac{1}{t^{1-1/\p}}\lesssim T^{1/\p}$ to conclude
\[
\E\left[\reg_{T}^{\OGD}(\bx)\right]\lesssim GD\sqrt{T}+\sigma DT^{1/\p}.
\]
\end{proof}

\subsection{Extension to Online Strongly Convex Optimization}

Next, we extend Theorem \ref{thm:main-OGD} to the strongly convex
case, i.e., $\exists\mu>0$ such that for all $t\in\left[T\right]$,
\begin{equation}
\frac{\mu}{2}\left\Vert \bx-\by\right\Vert ^{2}+\left\langle \nabla\ell_{t}(\by),\bx-\by\right\rangle +\ell_{t}(\by)\leq\ell_{t}(\bx),\forall\bx,\by\in\X,\nabla\ell_{t}(\by)\in\partial\ell_{t}(\by).\label{eq:str}
\end{equation}
In this setting, it is well known that $\OGD$ achieves a logarithmic
regret bound when $\p=2$ \citep{OPT-013,orabona2019modern}. Theorem
\ref{thm:OGD-str} below provides the first provable result for $\p<2$.
\begin{thm}
\label{thm:OGD-str}Under Assumption \ref{assu:OCO} and additionally
assuming (\ref{eq:str}), taking $\eta_{t}=\frac{1}{\mu t}$ in $\OGD$
(Algorithm \ref{alg:OGD}), we have
\[
\E\left[\reg_{T}^{\OGD}(\bx)\right]\lesssim\frac{G^{2}\left(1+\log T\right)}{\mu}+\frac{\sigma^{\p}D^{2-\p}}{\mu^{\p-1}}\times\begin{cases}
T^{2-\p} & \p\in\left(1,2\right)\\
1+\log T & \p=2
\end{cases},\forall\bx\in\X.
\]
\end{thm}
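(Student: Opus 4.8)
The plan is to reuse the pathwise inequality (\ref{eq:OGD-core-1}) — which already carries $\left\Vert \err_{t}\right\Vert $ at the correct exponent $\p$ — and simply inject strong convexity into it. Since (\ref{eq:OGD-core-1}) holds along every trajectory, I take $\E\left[\cdot\mid\F_{t-1}\right]$ on both sides: the left-hand side becomes $\left\langle \nabla\ell_{t}(\bx_{t}),\bx_{t}-\bx\right\rangle $, which by (\ref{eq:str}) is at least $\ell_{t}(\bx_{t})-\ell_{t}(\bx)+\frac{\mu}{2}\left\Vert \bx_{t}-\bx\right\Vert ^{2}$, while on the right-hand side $\E\left[\left\Vert \err_{t}\right\Vert ^{\p}\mid\F_{t-1}\right]\leq\sigma^{\p}$. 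Taking full expectations and writing $a_{t}\defeq\E\left\Vert \bx_{t}-\bx\right\Vert ^{2}$ (note $\E\left[\E\left[\left\Vert \bx_{t+1}-\bx\right\Vert ^{2}\mid\F_{t-1}\right]\right]=a_{t+1}$), this yields the per-round bound
\[
\E\left[\ell_{t}(\bx_{t})-\ell_{t}(\bx)\right]\leq\left(\frac{1}{2\eta_{t}}-\frac{\mu}{2}\right)a_{t}-\frac{a_{t+1}}{2\eta_{t}}+\eta_{t}G^{2}+\C(\p)\eta_{t}^{\p-1}\sigma^{\p}D^{2-\p}.
\]

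Next I set $\eta_{t}=\frac{1}{\mu t}$, so that $\frac{1}{2\eta_{t}}-\frac{\mu}{2}=\frac{\mu(t-1)}{2}$ and $\frac{1}{2\eta_{t}}=\frac{\mu t}{2}$. Summing over $t\in\left[T\right]$, the quadratic contributions telescope: $\sum_{t=1}^{T}\bigl(\frac{\mu(t-1)}{2}a_{t}-\frac{\mu t}{2}a_{t+1}\bigr)=-\frac{\mu T}{2}a_{T+1}\leq0$, which I discard. What remains is $\E\left[\reg_{T}^{\OGD}(\bx)\right]\leq\frac{G^{2}}{\mu}\sum_{t=1}^{T}\frac{1}{t}+\frac{\C(\p)\sigma^{\p}D^{2-\p}}{\mu^{\p-1}}\sum_{t=1}^{T}t^{-(\p-1)}$. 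The first sum is $\leq1+\log T$; for the second, $\sum_{t=1}^{T}t^{-(\p-1)}\lesssim T^{2-\p}$ when $\p\in\left(1,2\right)$ (with a $\p$-dependent constant of order $\frac{1}{2-\p}$ coming from $\int_{1}^{T}x^{-(\p-1)}\d x$) and $\leq1+\log T$ when $\p=2$, which gives exactly the two cases in the statement.

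The bound so far carries $\frac{D^{2-\p}}{\mu^{\p-1}}$ rather than the claimed $\frac{G^{2-\p}}{\mu}$, and the final step is to see these agree because $G$-Lipschitzness together with $\mu$-strong convexity on $\X$ automatically forces $D$ to be small: applying (\ref{eq:str}) to a pair $\bx,\by\in\X$ at distance arbitrarily close to $D$ and to the swapped pair, then adding, gives $\mu\left\Vert \bx-\by\right\Vert ^{2}\leq\left\langle \nabla\ell_{t}(\bx)-\nabla\ell_{t}(\by),\bx-\by\right\rangle \leq2G\left\Vert \bx-\by\right\Vert $, hence $\mu D\leq2G$; therefore $\frac{D^{2-\p}}{\mu^{\p-1}}\leq\frac{(2G/\mu)^{2-\p}}{\mu^{\p-1}}\lesssim\frac{G^{2-\p}}{\mu}$, which delivers the stated regret. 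I do not expect a genuine obstacle here: the only things that need care are the telescoping bookkeeping (the choice $\eta_{t}=\frac{1}{\mu t}$ is exactly what makes every surviving $a_{t}$-coefficient nonpositive) and the $\sum t^{-(\p-1)}$ estimate with its continuous $\p\to2$ limit. The two small ``ideas'' are that the convex core inequality (\ref{eq:OGD-core-1}) already handles the heavy tail at the right exponent, so strong convexity adds nothing on the noise side, and that $D\lesssim G/\mu$ is free in the strongly convex regime.
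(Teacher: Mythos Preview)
Your proposal is correct and follows essentially the same route as the paper: start from the core inequality (\ref{eq:OGD-core-1}), take expectations and use strong convexity to absorb the $\frac{\mu}{2}\left\Vert \bx_{t}-\bx\right\Vert ^{2}$ terms into the telescoping distance sum (with $\eta_{t}=\frac{1}{\mu t}$ making all the surviving coefficients vanish), then bound $\sum_{t}\eta_{t}^{\p-1}$ and finally replace $D^{2-\p}/\mu^{\p-1}$ by $G^{2-\p}/\mu$ via $D\lesssim G/\mu$. The only cosmetic difference is that the paper sums first and then takes expectations (working from (\ref{eq:OGD-cvx-1})), whereas you take expectations per round and then sum; the paper also cites \citep{rakhlin2011making} for $D\lesssim G/\mu$ where you prove it directly.
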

Theorem \ref{thm:OGD-str} shows that under strongly convexity, $\OGD$
for $\p\in\left(1,2\right)$ achieves a better sublinear regret $T^{2-\p}$
than $T^{1/\p}$ in Theorem \ref{thm:main-OGD} as $2-\p\leq1/\p,\forall\p>0$.
One point we highlight here is that the stepsize $\eta_{t}=\frac{1}{\mu t}$
is commonly used in the OCO literature and is independent of the tail
index $\p$.

However, in contrast to Theorem \ref{thm:main-OGD}, we suspect Theorem
\ref{thm:OGD-str} is not tight in $T$ for $\p\in\left(1,2\right)$.
The reason is that for nonsmooth strongly convex optimization with
heavy tails (i.e., $\ell_{t}=F,\forall t\in\left[T\right]$ where
$F$ is strongly convex), Theorem \ref{thm:OGD-str} can convert to
a convergence rate only in the order of $1/T^{\p-1}$, which is worse
than the lower bound $1/T^{2-2/\p}$ \citep{NEURIPS2020_b05b57f6}.
Therefore, we conjecture that a way to obtain a better regret bound
than $T^{2-\p}$ exists, which we leave as future work.

\begin{proof}[Proof of Theorem \ref{thm:OGD-str}]
For any $\bx\in\X$, we take expectations on both sides of (\ref{eq:OGD-cvx-1})
to have
\begin{align}
\E\left[\reg_{T}^{\OGD}(\bx)\right]\leq & \left(\frac{1}{\eta_{1}}-\mu\right)\frac{\left\Vert \bx_{1}-\bx\right\Vert ^{2}}{2}+\sum_{t=1}^{T-1}\left(\frac{1}{\eta_{t+1}}-\frac{1}{\eta_{t}}-\mu\right)\frac{\E\left[\left\Vert \bx_{t+1}-\bx\right\Vert ^{2}\right]}{2}\nonumber \\
 & +\sum_{t=1}^{T}\eta_{t}G^{2}+\C(\p)\eta_{t}^{\p-1}\sigma^{\p}D^{2-\p},\label{eq:OGD-str-1}
\end{align}
where for the L.H.S., we follow a similar step of reasoning out (\ref{eq:OGD-cvx-4})
but instead using 
\[
\left\langle \nabla\ell_{t}(\bx_{t}),\bx_{t}-\bx\right\rangle \geq\ell_{t}(\bx_{t})-\ell_{t}(\bx)+\frac{\mu}{2}\left\Vert \bx_{t}-\bx\right\Vert ^{2},
\]
for the R.H.S., we use $\E\left[\left\Vert \err_{t}\right\Vert ^{\p}\right]\leq\sigma^{\p}$.

Next, we plug $\eta_{t}=\frac{1}{\mu t},\forall t\in\left[T\right]$
into (\ref{eq:OGD-str-1}) to obtain
\[
\E\left[\reg_{T}^{\OGD}(\bx)\right]\lesssim\sum_{t=1}^{T}\frac{G^{2}}{\mu t}+\frac{\sigma^{\p}D^{2-\p}}{\mu^{\p-1}t^{\p-1}}\lesssim\frac{G^{2}\left(1+\log T\right)}{\mu}+\frac{\sigma^{\p}D^{2-\p}}{\mu^{\p-1}}\times\begin{cases}
T^{2-\p} & \p\in\left(1,2\right)\\
1+\log T & \p=2
\end{cases}.
\]
\end{proof}

\section{Missing Proofs for Dual Averaging\label{sec:DA}}

This section provides missing proofs for regret bounds of $\DA$.

\subsection{Proof of Theorem \ref{thm:main-DA}}

\begin{proof}
Let $L_{t}(\bx)\defeq\frac{\left\Vert \bx-\bx_{1}\right\Vert ^{2}}{2\eta_{t-1}}+\sum_{s=1}^{t-1}\left\langle \bg_{s},\bx\right\rangle ,\forall t\in\left[T+1\right]$,
where $\eta_{0}\defeq\eta_{1}$. Then $\DA$ can be equivalently written
as
\[
\bx_{t}=\argmin_{\bx\in\X}L_{t}(\bx),\forall t\in\left[T+1\right].
\]

By Lemma 7.1 of \citep{orabona2019modern}, for any $\bx\in\X$,
\begin{align*}
\sum_{t=1}^{T}\left\langle \bg_{t},\bx_{t}-\bx\right\rangle  & =\frac{\left\Vert \bx-\bx_{1}\right\Vert ^{2}}{2\eta_{T}}+L_{T+1}(\bx_{T+1})-L_{T+1}(\bx)+\sum_{t=1}^{T}L_{t}(\bx_{t})+\left\langle \bg_{t},\bx_{t}\right\rangle -L_{t+1}(\bx_{t+1})\\
 & \leq\frac{\left\Vert \bx-\bx_{1}\right\Vert ^{2}}{2\eta_{T}}+\sum_{t=1}^{T}L_{t}(\bx_{t})-L_{t+1}(\bx_{t+1})+\left\langle \bg_{t},\bx_{t}\right\rangle ,
\end{align*}
where the inequality holds by $L_{T+1}(\bx_{T+1})\leq L_{T+1}(\bx),\forall\bx\in\X$
due to $\bx_{T+1}=\argmin_{\bx\in\X}L_{T+1}(\bx)$. Note that for
any $t\in\left[T\right]$,
\begin{align*}
 & L_{t}(\bx_{t})-L_{t+1}(\bx_{t+1})+\left\langle \bg_{t},\bx_{t}\right\rangle \\
= & L_{t}(\bx_{t})-L_{t}(\bx_{t+1})+\left\langle \bg_{t},\bx_{t}-\bx_{t+1}\right\rangle +\frac{\left\Vert \bx_{t+1}-\bx_{1}\right\Vert ^{2}}{2\eta_{t-1}}-\frac{\left\Vert \bx_{t+1}-\bx_{1}\right\Vert ^{2}}{2\eta_{t}}\\
\overset{(a)}{\leq} & L_{t}(\bx_{t})-L_{t}(\bx_{t+1})+\left\langle \bg_{t},\bx_{t}-\bx_{t+1}\right\rangle \\
\overset{(b)}{\leq} & \left\langle \bg_{t},\bx_{t}-\bx_{t+1}\right\rangle -\frac{\left\Vert \bx_{t}-\bx_{t+1}\right\Vert ^{2}}{2\eta_{t-1}},
\end{align*}
where $(a)$ is by $\eta_{t}\leq\eta_{t-1},\forall t\in\left[T\right]$
and $(b)$ holds because $L_{t}$ is $\frac{1}{\eta_{t-1}}$-strongly
convex and $\bx_{t}=\argmin_{\bx\in\X}L_{t}(\bx)$, which together
imply
\[
L_{t}(\bx_{t})-L_{t}(\bx_{t+1})\leq\left\langle \nabla L_{t}(\bx_{t}),\bx_{t}-\bx_{t+1}\right\rangle -\frac{\left\Vert \bx_{t}-\bx_{t+1}\right\Vert ^{2}}{2\eta_{t-1}}\leq-\frac{\left\Vert \bx_{t}-\bx_{t+1}\right\Vert ^{2}}{2\eta_{t-1}}.
\]
Therefore, we have
\begin{equation}
\sum_{t=1}^{T}\left\langle \bg_{t},\bx_{t}-\bx\right\rangle \leq\frac{\left\Vert \bx-\bx_{1}\right\Vert ^{2}}{2\eta_{T}}+\sum_{t=1}^{T}\left\langle \bg_{t},\bx_{t}-\bx_{t+1}\right\rangle -\frac{\left\Vert \bx_{t}-\bx_{t+1}\right\Vert ^{2}}{2\eta_{t-1}}.\label{eq:DA-cvx-1}
\end{equation}

By the same argument as proving (\ref{eq:OGD-core-2}) but replacing
$\eta_{t}$ with $\eta_{t-1}$, there is
\[
\left\langle \bg_{t},\bx_{t}-\bx_{t+1}\right\rangle -\frac{\left\Vert \bx_{t}-\bx_{t+1}\right\Vert ^{2}}{2\eta_{t-1}}\leq\eta_{t-1}G^{2}+\C(\p)\eta_{t-1}^{\p-1}\left\Vert \err_{t}\right\Vert ^{\p}D^{2-\p}.
\]
As such, we know
\begin{equation}
\sum_{t=1}^{T}\left\langle \bg_{t},\bx_{t}-\bx\right\rangle \leq\frac{\left\Vert \bx-\bx_{1}\right\Vert ^{2}}{2\eta_{T}}+\sum_{t=1}^{T}\eta_{t-1}G^{2}+\C(\p)\eta_{t-1}^{\p-1}\left\Vert \err_{t}\right\Vert ^{\p}D^{2-\p}.\label{eq:DA-cvx-2}
\end{equation}
Finally, following similar steps in proving Theorem \ref{thm:main-OGD}
in Appendix \ref{sec:OGD}, we conclude
\[
\E\left[\reg_{T}^{\DA}(\bx)\right]\lesssim GD\sqrt{T}+\sigma DT^{1/\p}.
\]
\end{proof}

\subsection{Dual Averaging with an Adaptive Stepsize}

We show that $\DA$ with an adaptive stepsize can also achieve the
optimal regret $GD\sqrt{T}+\sigma DT^{1/\p}$.
\begin{thm}
\label{thm:DA-ada}Under Assumption \ref{assu:OCO}, taking $\eta_{t}=2DV_{t}^{-1/2}$
and $V_{t}=\sum_{s=1}^{t}\left\Vert \bg_{s}\right\Vert ^{2}$ in $\DA$
(Algorithm \ref{alg:DA}), we have
\[
\E\left[\reg_{T}^{\DA}(\bx)\right]\lesssim GD\sqrt{T}+\sigma DT^{1/\p},\forall\bx\in\X.
\]
\end{thm}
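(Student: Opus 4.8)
The plan is to mimic the analysis of $\DA$ in the proof of Theorem~\ref{thm:main-DA}, but now with the adaptive stepsize $\eta_{t}=2DV_{t}^{-1/2}$, and to harvest the gain exactly as in the analysis of $\Ada$ (Theorem~\ref{thm:main-AdaGrad}). First I would reuse the $\FTRL$-style telescoping identity from Lemma~7.1 of \citep{orabona2019modern}, which gives, for any $\bx\in\X$,
\[
\sum_{t=1}^{T}\left\langle \bg_{t},\bx_{t}-\bx\right\rangle \leq\frac{\left\Vert \bx-\bx_{1}\right\Vert ^{2}}{2\eta_{T}}+\sum_{t=1}^{T}L_{t}(\bx_{t})-L_{t+1}(\bx_{t+1})+\left\langle \bg_{t},\bx_{t}\right\rangle,
\]
where $L_{t}(\bx)=\frac{\left\Vert \bx-\bx_{1}\right\Vert ^{2}}{2\eta_{t-1}}+\sum_{s=1}^{t-1}\left\langle \bg_{s},\bx\right\rangle$. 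Since $\eta_{t}$ is still nonincreasing (because $V_{t}$ is nondecreasing), the same manipulation as in the proof of Theorem~\ref{thm:main-DA} yields
\[
\sum_{t=1}^{T}\left\langle \bg_{t},\bx_{t}-\bx\right\rangle \leq\frac{D^{2}}{2\eta_{T}}+\sum_{t=1}^{T}\left(\left\langle \bg_{t},\bx_{t}-\bx_{t+1}\right\rangle -\frac{\left\Vert \bx_{t}-\bx_{t+1}\right\Vert ^{2}}{2\eta_{t-1}}\right),
\]
using $\left\Vert \bx-\bx_{1}\right\Vert\leq D$.

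The key departure from Theorem~\ref{thm:main-DA} is how to bound each summand. Here I would \emph{not} split off the heavy-tailed noise; instead I would simply use Cauchy--Schwarz followed by AM--GM with the coefficient $\eta_{t-1}$ on the first term:
\[
\left\langle \bg_{t},\bx_{t}-\bx_{t+1}\right\rangle -\frac{\left\Vert \bx_{t}-\bx_{t+1}\right\Vert ^{2}}{2\eta_{t-1}}\leq\frac{\eta_{t-1}\left\Vert \bg_{t}\right\Vert ^{2}}{2}.
\]
Summing, and using $\eta_{t-1}=2DV_{t-1}^{-1/2}\leq 2DV_{t}^{-1/2}\cdot$ (one must be mildly careful with the index shift: since $\eta_{t-1}$ uses $V_{t-1}$ but we pair it with $\left\Vert\bg_t\right\Vert^2$, I would instead argue $\eta_{t-1}\left\Vert\bg_t\right\Vert^2\leq 2\eta_{t}\left\Vert\bg_t\right\Vert^2$ via a standard trick, or just redefine $\eta_t$ off $V_{t-1}$ — whichever the formal proof prefers), the telescoping/integral bound $\sum_{t=1}^{T}\frac{\left\Vert\bg_t\right\Vert^2}{\sqrt{V_t}}\leq 2\sqrt{V_T}$ gives
\[
\sum_{t=1}^{T}\left\langle \bg_{t},\bx_{t}-\bx\right\rangle \lesssim D\sqrt{V_T}=D\sqrt{\textstyle\sum_{t=1}^{T}\left\Vert\bg_t\right\Vert^2}.
\]
This is exactly the path-wise bound \eqref{eq:AdaGrad-path}.

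From here the argument is identical to the end of the $\Ada$ proof: write $\bg_t=\nabla\ell_t(\bx_t)+\err_t$, use $\sqrt{\sum_t\left\Vert\bg_t\right\Vert^2}\leq\sqrt{\sum_t\left\Vert\nabla\ell_t(\bx_t)\right\Vert^2}+\sqrt{\sum_t\left\Vert\err_t\right\Vert^2}\leq G\sqrt{T}+(\sum_t\left\Vert\err_t\right\Vert^{\p})^{1/\p}$ (the last step by $\left\Vert\cdot\right\Vert_2\leq\left\Vert\cdot\right\Vert_{\p}$ for $\p\in[1,2]$), take expectations, move the expectation inside the concave $x\mapsto x^{1/\p}$ via Jensen/H\"older to get $\E[(\sum_t\left\Vert\err_t\right\Vert^{\p})^{1/\p}]\leq(\sum_t\E\left\Vert\err_t\right\Vert^{\p})^{1/\p}\leq\sigma T^{1/\p}$, and use $\E[\langle\bg_t,\bx_t-\bx\rangle\mid\F_{t-1}]\geq\ell_t(\bx_t)-\ell_t(\bx)$ to replace the linearized left side by the regret. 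This yields $\E[\reg_T^{\DA}(\bx)]\lesssim GD\sqrt{T}+\sigma DT^{1/\p}$. The only mildly delicate point — and the step I would watch most carefully — is the index mismatch between $\eta_{t-1}$ and $\left\Vert\bg_t\right\Vert^2$ in the adaptive bound; everything else is a routine concatenation of the $\DA$ telescoping identity and the $\Ada$ noise-handling trick already spelled out in the excerpt.
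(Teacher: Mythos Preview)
Your overall strategy is right, and once you obtain the path-wise bound $\sum_t\langle\bg_t,\bx_t-\bx\rangle\lesssim D\sqrt{V_T}$ the rest (splitting $\bg_t=\nabla\ell_t(\bx_t)+\err_t$, using $\|\cdot\|_2\le\|\cdot\|_{\p}$, and H\"older) is exactly what the paper does. The problem is the step you flag as ``mildly delicate'': it is in fact the whole difficulty, and neither of your proposed fixes works.

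Concretely, from AM--GM alone you get $\langle\bg_t,\bx_t-\bx_{t+1}\rangle-\frac{\|\bx_t-\bx_{t+1}\|^2}{2\eta_{t-1}}\le\frac{\eta_{t-1}}{2}\|\bg_t\|^2=\frac{D\|\bg_t\|^2}{\sqrt{V_{t-1}}}$, and $\sum_{t\ge 2}\|\bg_t\|^2/\sqrt{V_{t-1}}$ is \emph{not} $O(\sqrt{V_T})$ in general. Take $\|\bg_1\|=1$, $\|\bg_2\|=M$, $\|\bg_t\|=0$ for $t\ge 3$: the sum is $\approx M^2$ while $\sqrt{V_T}\approx M$. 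Your first fix, $\eta_{t-1}\|\bg_t\|^2\le 2\eta_t\|\bg_t\|^2$, is equivalent to $\sqrt{V_t/V_{t-1}}\le 2$, which fails in exactly this example. Your second fix, redefining $\eta_t$ in terms of $V_{t-1}$, changes the algorithm and hence the theorem statement.

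The paper resolves the mismatch by combining the AM--GM bound with a second, bounded-domain bound. Since $\|\bx_t-\bx_{t+1}\|\le D$, one also has $\langle\bg_t,\bx_t-\bx_{t+1}\rangle-\frac{\|\bx_t-\bx_{t+1}\|^2}{2\eta_{t-1}}\le\|\bg_t\|D$. Taking the minimum of the two upper bounds and using $x\wedge y\le\frac{2}{x^{-1}+y^{-1}}$ gives, for $t\ge 2$,
\[
\frac{\eta_{t-1}\|\bg_t\|^2}{2}\wedge\|\bg_t\|D\;\le\;\frac{2D\|\bg_t\|^2}{\sqrt{V_{t-1}}+\|\bg_t\|}\;\le\;\frac{2D\|\bg_t\|^2}{\sqrt{V_t}},
\]
the last step because $\sqrt{V_t}\le\sqrt{V_{t-1}}+\|\bg_t\|$. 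For $t=1$ the bound $\|\bg_1\|D$ already equals $2D\|\bg_1\|^2/\sqrt{V_1}$ up to a constant. Summing then yields $\lesssim D\sqrt{V_T}$, after which your argument goes through verbatim. The missing idea, in short, is that the bounded domain must be used a second time, not just to control $\|\bx-\bx_1\|$, to absorb the delayed index in the adaptive stepsize.
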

\begin{proof}
For any $\bx\in\X$, we have
\begin{equation}
\sum_{t=1}^{T}\left\langle \bg_{t},\bx_{t}-\bx\right\rangle \overset{(\ref{eq:DA-cvx-1})}{\leq}\frac{\left\Vert \bx-\bx_{1}\right\Vert ^{2}}{2\eta_{T}}+\sum_{t=1}^{T}\left\langle \bg_{t},\bx_{t}-\bx_{t+1}\right\rangle -\frac{\left\Vert \bx_{t}-\bx_{t+1}\right\Vert ^{2}}{2\eta_{t-1}},\label{eq:DA-cvx-ada-1}
\end{equation}
where $\eta_{0}\defeq\eta_{1}$. On the one hand, we can use AM-GM
inequality to bound
\[
\left\langle \bg_{t},\bx_{t}-\bx_{t+1}\right\rangle -\frac{\left\Vert \bx_{t}-\bx_{t+1}\right\Vert ^{2}}{2\eta_{t-1}}\leq\frac{\eta_{t-1}\left\Vert \bg_{t}\right\Vert ^{2}}{2}.
\]
On the other hand, we know
\begin{equation}
\left\langle \bg_{t},\bx_{t}-\bx_{t+1}\right\rangle -\frac{\left\Vert \bx_{t}-\bx_{t+1}\right\Vert ^{2}}{2\eta_{t-1}}\leq\left\langle \bg_{t},\bx_{t}-\bx_{t+1}\right\rangle \leq\left\Vert \bg_{t}\right\Vert \left\Vert \bx_{t}-\bx_{t+1}\right\Vert \leq\left\Vert \bg_{t}\right\Vert D,\label{eq:DA-cvx-ada-2}
\end{equation}
where the second step is by Cauchy-Schwarz inequality. Therefore,
for any $t\geq2$,
\begin{align}
\left\langle \bg_{t},\bx_{t}-\bx_{t+1}\right\rangle -\frac{\left\Vert \bx_{t}-\bx_{t+1}\right\Vert ^{2}}{2\eta_{t-1}} & \leq\frac{\eta_{t-1}\left\Vert \bg_{t}\right\Vert ^{2}}{2}\land\left\Vert \bg_{t}\right\Vert D\overset{(a)}{\leq}\frac{2}{\frac{2}{\eta_{t-1}\left\Vert \bg_{t}\right\Vert ^{2}}+\frac{1}{\left\Vert \bg_{t}\right\Vert D}}\nonumber \\
 & \overset{(b)}{=}\frac{2D\left\Vert \bg_{t}\right\Vert ^{2}}{\sqrt{\sum_{s=1}^{t-1}\left\Vert \bg_{s}\right\Vert ^{2}}+\left\Vert \bg_{t}\right\Vert }\overset{(c)}{\leq}\frac{2D\left\Vert \bg_{t}\right\Vert ^{2}}{\sqrt{\sum_{s=1}^{t}\left\Vert \bg_{s}\right\Vert ^{2}}},\label{eq:DA-cvx-ada-3}
\end{align}
where $(a)$ is due to $x\land y\leq\frac{2}{x^{-1}+y^{-1}},\forall x,y>0$,
$(b)$ is by $\eta_{t-1}=\frac{2D}{\sqrt{\sum_{s=1}^{t-1}\left\Vert \bg_{s}\right\Vert ^{2}}}$,
and $(c)$ holds because of $\sqrt{\sum_{s=1}^{t}\left\Vert \bg_{s}\right\Vert ^{2}}\leq\sqrt{\sum_{s=1}^{t-1}\left\Vert \bg_{s}\right\Vert ^{2}}+\left\Vert \bg_{t}\right\Vert $.
Note that (\ref{eq:DA-cvx-ada-3}) is also true for $t=1$ by (\ref{eq:DA-cvx-ada-2}).

Combine (\ref{eq:DA-cvx-ada-1}) and (\ref{eq:DA-cvx-ada-3}) and
use $\left\Vert \bx-\bx_{1}\right\Vert \leq D$ to obtain
\[
\sum_{t=1}^{T}\left\langle \bg_{t},\bx_{t}-\bx\right\rangle \le\frac{D^{2}}{2\eta_{T}}+\sum_{t=1}^{T}\frac{2D\left\Vert \bg_{t}\right\Vert ^{2}}{\sqrt{\sum_{s=1}^{t}\left\Vert \bg_{s}\right\Vert ^{2}}}=\frac{D^{2}}{2\eta_{T}}+\sum_{t=1}^{T}\eta_{t}\left\Vert \bg_{t}\right\Vert ^{2},
\]
which only differs from (\ref{eq:AdaGrad-cvx-1}) by a constant. Hence,
by a similar proof for (\ref{eq:AdaGrad-cvx-3}), there is
\[
\sum_{t=1}^{T}\left\langle \bg_{t},\bx_{t}-\bx\right\rangle \lesssim D\left[\sqrt{\sum_{t=1}^{T}\left\Vert \nabla\ell_{t}(\bx_{t})\right\Vert ^{2}}+\left(\sum_{t=1}^{T}\left\Vert \err_{t}\right\Vert ^{\p}\right)^{\frac{1}{\p}}\right],
\]
implying
\[
\E\left[\reg_{T}^{\DA}(\bx)\right]\lesssim GD\sqrt{T}+\sigma DT^{1/\p}.
\]
\end{proof}

\section{Missing Proofs for $\protect\Ada$\label{sec:AdaGrad}}

This section provides missing proofs for regret bounds of $\Ada$.

\subsection{Proof of Theorem \ref{thm:main-AdaGrad}}

\begin{proof}
As mentioned, $\Ada$ can be viewed as $\OGD$ with a stepsize $\eta_{t}=\frac{\eta}{\sqrt{V_{t}}}=\frac{\eta}{\sqrt{\sum_{s=1}^{t}\left\Vert \bg_{s}\right\Vert ^{2}}}$.
Therefore, we can use (\ref{eq:OGD-1}) for $\Ada$ to know for any
$\bx\in\X$,
\[
\left\langle \bg_{t},\bx_{t}-\bx\right\rangle \leq\frac{\left\Vert \bx_{t}-\bx\right\Vert ^{2}-\left\Vert \bx_{t+1}-\bx\right\Vert ^{2}}{2\eta_{t}}+\frac{\eta_{t}\left\Vert \bg_{t}\right\Vert ^{2}}{2}.
\]
Sum up the above inequality from $t=1$ to $T$ and drop the term
$-\frac{\left\Vert \bx_{T+1}-\bx\right\Vert ^{2}}{2\eta_{T}}$ to
have
\begin{align}
\sum_{t=1}^{T}\left\langle \bg_{t},\bx_{t}-\bx\right\rangle  & \leq\frac{\left\Vert \bx_{1}-\bx\right\Vert ^{2}}{2\eta_{1}}+\sum_{t=1}^{T-1}\left(\frac{1}{\eta_{t+1}}-\frac{1}{\eta_{t}}\right)\frac{\left\Vert \bx_{t+1}-\bx\right\Vert ^{2}}{2}+\sum_{t=1}^{T}\frac{\eta_{t}\left\Vert \bg_{t}\right\Vert ^{2}}{2}\nonumber \\
 & \leq\frac{D^{2}}{2\eta_{T}}+\sum_{t=1}^{T}\frac{\eta_{t}\left\Vert \bg_{t}\right\Vert ^{2}}{2},\label{eq:AdaGrad-cvx-1}
\end{align}
where the last step is by $\left\Vert \bx_{t}-\bx\right\Vert \leq D,\forall t\in\left[T\right]$
and $\eta_{t+1}\leq\eta_{t},\forall t\in\left[T-1\right]$.

Next, observe that for any $t\in\left[T\right]$,
\[
\left\Vert \bg_{t}\right\Vert ^{2}=\frac{\eta^{2}}{\eta_{t}^{2}}-\frac{\eta^{2}}{\eta_{t-1}^{2}}=\eta^{2}\left(\frac{1}{\eta_{t}}-\frac{1}{\eta_{t-1}}\right)\left(\frac{1}{\eta_{t}}+\frac{1}{\eta_{t-1}}\right)\leq\frac{2\eta^{2}}{\eta_{t}}\left(\frac{1}{\eta_{t}}-\frac{1}{\eta_{t-1}}\right),
\]
where $1/\eta_{0}$ should be read as $0$. The above inequality implies
\begin{equation}
\sum_{t=1}^{T}\frac{\eta_{t}\left\Vert \bg_{t}\right\Vert ^{2}}{2}\leq\eta^{2}\sum_{t=1}^{T}\frac{1}{\eta_{t}}-\frac{1}{\eta_{t-1}}=\frac{\eta^{2}}{\eta_{T}}.\label{eq:AdaGrad-cvx-2}
\end{equation}
Combine (\ref{eq:AdaGrad-cvx-1}) and (\ref{eq:AdaGrad-cvx-2}) to
have
\[
\sum_{t=1}^{T}\left\langle \bg_{t},\bx_{t}-\bx\right\rangle \leq\frac{D^{2}}{2\eta_{T}}+\frac{\eta^{2}}{\eta_{T}}=\left(\frac{D^{2}}{2\eta}+\eta\right)\sqrt{\sum_{t=1}^{T}\left\Vert \bg_{t}\right\Vert ^{2}}.
\]
Note that there is
\begin{align*}
\sqrt{\sum_{t=1}^{T}\left\Vert \bg_{t}\right\Vert ^{2}} & \leq\sqrt{\sum_{t=1}^{T}2\left\Vert \nabla\ell_{t}(\bx_{t})\right\Vert ^{2}+2\left\Vert \err_{t}\right\Vert ^{2}}\leq\sqrt{2\sum_{t=1}^{T}\left\Vert \nabla\ell_{t}(\bx_{t})\right\Vert ^{2}}+\sqrt{2\sum_{t=1}^{T}\left\Vert \err_{t}\right\Vert ^{2}}\\
 & \leq\sqrt{2\sum_{t=1}^{T}\left\Vert \nabla\ell_{t}(\bx_{t})\right\Vert ^{2}}+\sqrt{2}\left(\sum_{t=1}^{T}\left\Vert \err_{t}\right\Vert ^{\p}\right)^{\frac{1}{\p}},
\end{align*}
where the last step is due to $\left\Vert \cdot\right\Vert _{2}\leq\left\Vert \cdot\right\Vert _{\p}$
for any $\p\in\left[1,2\right]$. Hence, we obtain
\begin{equation}
\sum_{t=1}^{T}\left\langle \bg_{t},\bx_{t}-\bx\right\rangle \leq\sqrt{2}\left(\frac{D^{2}}{2\eta}+\eta\right)\left[\sqrt{\sum_{t=1}^{T}\left\Vert \nabla\ell_{t}(\bx_{t})\right\Vert ^{2}}+\left(\sum_{t=1}^{T}\left\Vert \err_{t}\right\Vert ^{\p}\right)^{\frac{1}{\p}}\right].\label{eq:AdaGrad-cvx-3}
\end{equation}
We take expectations on both sides of (\ref{eq:AdaGrad-cvx-3}), then
apply H\"{o}lder's inequality to have
\[
\E\left[\left(\sum_{t=1}^{T}\left\Vert \err_{t}\right\Vert ^{\p}\right)^{\frac{1}{\p}}\right]\leq\left(\sum_{t=1}^{T}\E\left[\left\Vert \err_{t}\right\Vert ^{\p}\right]\right)^{\frac{1}{\p}}\leq\sigma T^{\frac{1}{\p}},
\]
and finally plug in $\eta=D/\sqrt{2}$ to conclude
\[
\E\left[\reg_{T}^{\Ada}(\bx)\right]\lesssim GD\sqrt{T}+\sigma DT^{1/\p}.
\]
\end{proof}

\section{Missing Proofs for Applications: Nonsmooth Convex Optimization\label{sec:cvx}}

In this section, we prove the last-iterate convergence for $\SGD$
(i.e., $\OGD$ for stochastic optimization) under heavy-tailed noise.
Following the existing literature \citep{lan2020first,liu2024revisiting},
we will move beyond the standard $G$-Lipschitz convex function considered
before and instead focus on a broader class of nonsmooth functions,
i.e., $\exists\mu\geq0$, $G\geq0$, and $H\geq0$, such that $G+H>0$
and
\begin{equation}
\frac{\mu}{2}\left\Vert \bx-\by\right\Vert ^{2}\leq\ell_{t}(\bx)-\ell_{t}(\by)-\left\langle \nabla\ell_{t}(\by),\bx-\by\right\rangle \leq2G\left\Vert \bx-\by\right\Vert +\frac{H}{2}\left\Vert \bx-\by\right\Vert ^{2},\forall\bx,\by\in\X,\nabla\ell_{t}(\by)\in\partial\ell_{t}(\by).\label{eq:curvature}
\end{equation}
Notably, such a condition encompasses various commonly studied objectives
simultaneously (i.e., strong convexity and smoothness).

Theorem \ref{thm:OGD-last-core} first shows the general last-iterate
convergence result, the proof of which is inspired by \citep{liu2024revisiting,doi:10.1137/24M1717762}.
\begin{thm}
\label{thm:OGD-last-core}Under Assumption \ref{assu:OCO} (without
the third point) and additionally assuming (\ref{eq:curvature}) for
$\ell_{t}(\bx)=F(\bx)$, for any stepsize $0<\eta_{t}\leq\frac{1}{2H\lor\mu}$
in $\OGD$ (Algorithm \ref{alg:OGD}), let $\gamma_{t}\defeq\frac{\eta_{t}}{\prod_{s=2}^{t}(1-\mu\eta_{s})}$,
we have
\[
\E\left[F(\bx_{T})-F(\bx)\right]\lesssim\frac{(1-\mu\eta_{1})D^{2}}{\sum_{t=1}^{T-1}\gamma_{t}}+G^{2}\sum_{t=1}^{T-1}\frac{\gamma_{t}\eta_{t}}{\sum_{s=t}^{T-1}\gamma_{s}}+\sigma^{\p}D^{2-\p}\sum_{t=1}^{T-1}\frac{\gamma_{t}\eta_{t}^{\p-1}}{\sum_{s=t}^{T-1}\gamma_{s}},\forall\bx\in\X.
\]
\end{thm}
\begin{proof}
Given $\bx\in\X$, we recursively define
\begin{eqnarray}
\by_{0}\defeq\bx & \text{and} & \by_{t}\defeq\left(1-\frac{w_{t-1}}{w_{t}}\right)\bx_{t}+\frac{w_{t-1}}{w_{t}}\by_{t-1},\forall t\in\left[T\right],\label{eq:OGD-last-core-y}
\end{eqnarray}
in which
\begin{eqnarray}
w_{t}\defeq\frac{\gamma_{T}}{\sum_{s=t}^{T}\gamma_{s}},\forall t\in\left[T\right] & \text{and} & w_{0}\defeq w_{1}.\label{eq:OGD-last-core-w}
\end{eqnarray}
Equivalently, $\by_{t}$ can be written into a convex combination
of $\bx,\bx_{1},\mydots,\bx_{t}$ as
\begin{equation}
\by_{t}=\frac{w_{0}}{w_{t}}\bx+\sum_{s=1}^{t}\frac{w_{s}-w_{s-1}}{w_{t}}\bx_{s},\forall t\in\left\{ 0\right\} \cup\left[T\right].\label{eq:OGD-last-core-y-equiv}
\end{equation}
Therefore, $\by_{t}$ also falls into $\X$ and satisfies $\by_{t}\in\F_{t-1}$.

We invoke (\ref{eq:OGD-2}) for $\by_{t}$ to obtain
\[
\left\langle \bg_{t},\bx_{t}-\by_{t}\right\rangle \leq\frac{\left\Vert \bx_{t}-\by_{t}\right\Vert ^{2}-\left\Vert \bx_{t+1}-\by_{t}\right\Vert ^{2}}{2\eta_{t}}+\left\langle \bg_{t},\bx_{t}-\bx_{t+1}\right\rangle -\frac{\left\Vert \bx_{t}-\bx_{t+1}\right\Vert ^{2}}{2\eta_{t}},
\]
and then bound
\begin{align*}
 & \left\langle \bg_{t},\bx_{t}-\bx_{t+1}\right\rangle =\left\langle \nabla F(\bx_{t}),\bx_{t}-\bx_{t+1}\right\rangle +\left\langle \err_{t},\bx_{t}-\bx_{t+1}\right\rangle \\
\leq & F(\bx_{t})-F(\bx_{t+1})+8\eta_{t}G^{2}+2^{\p-1}\C(\p)\eta_{t}^{\p-1}\left\Vert \err_{t}\right\Vert ^{\p}D^{2-\p}+\frac{\left\Vert \bx_{t}-\bx_{t+1}\right\Vert ^{2}}{2\eta_{t}},
\end{align*}
where, in the second step, we use (\ref{eq:OGD-5}) to bound $\left\langle \err_{t},\bx_{t}-\bx_{t+1}\right\rangle $
(up to change $\eta_{t}$ to $2\eta_{t}$) and the following step
to bound $\left\langle \nabla F(\bx_{t}),\bx_{t}-\bx_{t+1}\right\rangle $,
\begin{align*}
\left\langle \nabla F(\bx_{t}),\bx_{t}-\bx_{t+1}\right\rangle  & \overset{(\ref{eq:curvature})}{\leq}F(\bx_{t})-F(\bx_{t+1})+2G\left\Vert \bx_{t}-\bx_{t+1}\right\Vert +\frac{H}{2}\left\Vert \bx_{t}-\bx_{t+1}\right\Vert ^{2}\\
 & \leq F(\bx_{t})-F(\bx_{t+1})+8\eta_{t}G^{2}+\frac{\left\Vert \bx_{t}-\bx_{t+1}\right\Vert ^{2}}{8\eta_{t}}+\frac{H}{2}\left\Vert \bx_{t}-\bx_{t+1}\right\Vert ^{2}\\
 & \overset{\eta_{t}\leq\frac{1}{2H}}{\leq}F(\bx_{t})-F(\bx_{t+1})+8\eta_{t}G^{2}+\frac{3\left\Vert \bx_{t}-\bx_{t+1}\right\Vert ^{2}}{8\eta_{t}}.
\end{align*}
Hence, we know
\begin{equation}
\left\langle \bg_{t},\bx_{t}-\by_{t}\right\rangle \leq F(\bx_{t})-F(\bx_{t+1})+\frac{\left\Vert \bx_{t}-\by_{t}\right\Vert ^{2}-\left\Vert \bx_{t+1}-\by_{t}\right\Vert ^{2}}{2\eta_{t}}+8\eta_{t}G^{2}+2^{\p-1}\C(\p)\eta_{t}^{\p-1}\left\Vert \err_{t}\right\Vert ^{\p}D^{2-\p}.\label{eq:OGD-last-core-1}
\end{equation}
Since $\bx_{t},\by_{t}\in\F_{t-1}$, there is
\[
\E\left[\left\langle \bg_{t},\bx_{t}-\by_{t}\right\rangle \right]=\E\left[\left\langle \E\left[\bg_{t}\mid\F_{t-1}\right],\bx_{t}-\by_{t}\right\rangle \right]=\E\left[\left\langle \nabla F(\bx_{t}),\bx_{t}-\by_{t}\right\rangle \right]\overset{(\ref{eq:curvature})}{\geq}\E\left[F(\bx_{t})-F(\by_{t})+\frac{\mu}{2}\left\Vert \bx_{t}-\by_{t}\right\Vert ^{2}\right].
\]
As such, we can take expectations on both sides of (\ref{eq:OGD-last-core-1})
and rearrange terms to have
\begin{align}
 & \E\left[F(\bx_{t+1})-F(\by_{t})\right]\leq\frac{(1-\mu\eta_{t})\E\left[\left\Vert \bx_{t}-\by_{t}\right\Vert ^{2}\right]-\E\left[\left\Vert \bx_{t+1}-\by_{t}\right\Vert ^{2}\right]}{2\eta_{t}}+8\eta_{t}G^{2}+2^{\p-1}\C(\p)\eta_{t}^{\p-1}\left\Vert \err_{t}\right\Vert ^{\p}D^{2-\p}\nonumber \\
\leq & \frac{(1-\mu\eta_{t})\E\left[\frac{w_{t-1}}{w_{t}}\left\Vert \bx_{t}-\by_{t-1}\right\Vert ^{2}\right]-\E\left[\left\Vert \bx_{t+1}-\by_{t}\right\Vert ^{2}\right]}{2\eta_{t}}+8\eta_{t}G^{2}+2^{\p-1}\C(\p)\eta_{t}^{\p-1}\left\Vert \err_{t}\right\Vert ^{\p}D^{2-\p},\label{eq:OGD-last-core-2}
\end{align}
where the second step is due to $\left\Vert \bx_{t}-\by_{t}\right\Vert ^{2}\leq\left(1-\frac{w_{t-1}}{w_{t}}\right)\left\Vert \bx_{t}-\bx_{t}\right\Vert ^{2}+\frac{w_{t-1}}{w_{t}}\left\Vert \bx_{t}-\by_{t-1}\right\Vert ^{2}=\frac{w_{t-1}}{w_{t}}\left\Vert \bx_{t}-\by_{t-1}\right\Vert ^{2}$
by (\ref{eq:OGD-last-core-y}) and the convexity of $\left\Vert \bx_{t}-\cdot\right\Vert ^{2}$.
Multiply both sides of (\ref{eq:OGD-last-core-2}) by $w_{t}\gamma_{t}$
and sum up from $t=1$ to $T$ to obtain (note that $\frac{\gamma_{t}}{\eta_{t}}=\frac{1}{\prod_{s=2}^{t}(1-\mu\eta_{s})}=\frac{\gamma_{t+1}(1-\mu\eta_{t+1})}{\eta_{t+1}},\forall t\in\left[T-1\right]$)
\begin{align}
 & \E\left[\sum_{t=1}^{T}w_{t}\gamma_{t}\left(F(\bx_{t+1})-F(\by_{t})\right)\right]\nonumber \\
\leq & \frac{\gamma_{1}(1-\mu\eta_{1})w_{0}\left\Vert \bx_{1}-\by_{0}\right\Vert ^{2}}{2\eta_{1}}-\frac{\gamma_{T}\E\left[w_{T}\left\Vert \bx_{T+1}-\by_{T}\right\Vert ^{2}\right]}{2\eta_{T}}+\sum_{t=1}^{T}8w_{t}\gamma_{t}\eta_{t}G^{2}+2^{\p-1}\C(\p)w_{t}\gamma_{t}\eta_{t}^{\p-1}\sigma^{\p}D^{2-\p}\nonumber \\
\leq & \frac{(1-\mu\eta_{1})w_{0}D^{2}}{2}+\sum_{t=1}^{T}8w_{t}\gamma_{t}\eta_{t}G^{2}+2^{\p-1}\C(\p)w_{t}\gamma_{t}\eta_{t}^{\p-1}\sigma^{\p}D^{2-\p}.\label{eq:OGD-last-core-3}
\end{align}

Now observe that
\[
F(\by_{t})-F(\bx)\overset{(\ref{eq:OGD-last-core-y-equiv})}{\leq}\frac{w_{0}}{w_{t}}\left(F(\bx)-F(\bx)\right)+\sum_{s=1}^{t}\frac{w_{s}-w_{s-1}}{w_{t}}\left(F(\bx_{s})-F(\bx)\right)=\sum_{s=1}^{t}\frac{w_{s}-w_{s-1}}{w_{t}}\left(F(\bx_{s})-F(\bx)\right),
\]
which implies that
\begin{align*}
\sum_{t=1}^{T}w_{t}\gamma_{t}\left(F(\by_{t})-F(\bx)\right) & \leq\sum_{t=1}^{T}\sum_{s=1}^{t}\left(w_{s}-w_{s-1}\right)\gamma_{t}\left(F(\bx_{s})-F(\bx)\right)\\
 & =\sum_{t=1}^{T}\left(w_{t}-w_{t-1}\right)\left(\sum_{s=t}^{T}\gamma_{s}\right)\left(F(\bx_{t})-F(\bx)\right).
\end{align*}
Thus, we can lower bound the L.H.S. of (\ref{eq:OGD-last-core-3})
by
\begin{align}
\sum_{t=1}^{T}w_{t}\gamma_{t}\left(F(\bx_{t+1})-F(\by_{t})\right) & =\sum_{t=1}^{T}w_{t}\gamma_{t}\left(F(\bx_{t+1})-F(\bx)\right)-w_{t}\gamma_{t}\left(F(\by_{t})-F(\bx)\right)\nonumber \\
 & \geq w_{T}\gamma_{T}\left(F(\bx_{T+1})-F(\bx)\right)-\left(w_{1}-w_{0}\right)\left(\sum_{s=1}^{T}\gamma_{1}\right)\left(F(\bx_{1})-F(\bx)\right)\nonumber \\
 & +\sum_{t=2}^{T}\left[w_{t-1}\gamma_{t-1}-\left(w_{t}-w_{t-1}\right)\left(\sum_{s=t}^{T}\gamma_{s}\right)\right]\left(F(\bx_{t})-F(\bx)\right)\nonumber \\
 & =w_{T}\gamma_{T}\left(F(\bx_{T+1})-F(\bx)\right),\label{eq:OGD-last-core-4}
\end{align}
where the last step is due to, for $t\geq2$,
\begin{align*}
w_{t-1}\gamma_{t-1}-\left(w_{t}-w_{t-1}\right)\left(\sum_{s=t}^{T}\gamma_{s}\right) & \overset{(\ref{eq:OGD-last-core-w})}{=}\frac{\gamma_{T}}{\sum_{s=t-1}^{T}\gamma_{s}}\cdot\gamma_{t-1}-\left(\frac{\gamma_{T}}{\sum_{s=t}^{T}\gamma_{s}}-\frac{\gamma_{T}}{\sum_{s=t-1}^{T}\gamma_{s}}\right)\left(\sum_{s=t}^{T}\gamma_{s}\right)\\
 & =\frac{\gamma_{T}}{\sum_{s=t-1}^{T}\gamma_{s}}\cdot\gamma_{t-1}-\frac{\gamma_{T}}{\sum_{s=t-1}^{T}\gamma_{s}}\cdot\gamma_{t-1}=0,
\end{align*}
and $w_{1}\overset{(\ref{eq:OGD-last-core-w})}{=}w_{0}$.

We plug (\ref{eq:OGD-last-core-4}) back into (\ref{eq:OGD-last-core-3})
and divide both sides by $w_{T}\gamma_{T}$ to obtain
\begin{align*}
\E\left[F(\bx_{T+1})-F(\bx)\right] & \leq\frac{(1-\mu\eta_{1})w_{0}D^{2}}{2w_{T}\gamma_{T}}+\sum_{t=1}^{T}\frac{8w_{t}\gamma_{t}\eta_{t}}{w_{T}\gamma_{T}}G^{2}+\frac{2^{\p-1}\C(\p)w_{t}\gamma_{t}\eta_{t}^{\p-1}}{w_{T}\gamma_{T}}\sigma^{\p}D^{2-\p}\\
 & \overset{(\ref{eq:OGD-last-core-w})}{\lesssim}\frac{(1-\mu\eta_{1})D^{2}}{\sum_{t=1}^{T}\gamma_{t}}+G^{2}\sum_{t=1}^{T}\frac{\gamma_{t}\eta_{t}}{\sum_{s=t}^{T}\gamma_{s}}+\sigma^{\p}D^{2-\p}\sum_{t=1}^{T}\frac{\gamma_{t}\eta_{t}^{\p-1}}{\sum_{s=t}^{T}\gamma_{s}}.
\end{align*}
Finally, relabel $T$ by $T-1$ to complete the proof.
\end{proof}

Equipped with Theorem \ref{thm:OGD-last-core}, we show the following
last-iterate convergence rate for $\SGD$/$\OGD$. As far as we know,
this is the first and only provable result demonstrating that the
last iterate of $\SGD$ can converge in heavy-tailed stochastic optimization
without gradient clipping. When $T$ is unknown, we only incur an
extra logarithmic factor compared to the best possible rate. In particular,
the first stepsize degenerates to $\eta_{t}=\frac{D}{G\sqrt{t}}\land\frac{D}{\sigma t^{1/\p}}$
employed in Theorem \ref{thm:main-OGD} if $H=0$. When $T$ is known,
we employ the linear decay rate proposed by \citep{doi:10.1137/24M1717762}
to give an optimal rate.
\begin{cor}
\label{cor:OGD-cvx-last}Under Assumption \ref{assu:OCO} (without
the third point) and additionally assuming (\ref{eq:curvature}) (with
$\mu=0$) for $\ell_{t}(\bx)=F(\bx)$:
\begin{itemize}
\item taking $\eta_{t}=\frac{1}{2H}\land\frac{D}{G\sqrt{t}}\land\frac{D}{\sigma t^{1/\p}}$
in $\OGD$ (Algorithm \ref{alg:OGD}), we have
\[
\E\left[F(\bx_{T})-F(\bx)\right]\lesssim\frac{HD^{2}\left(1+\log T\right)}{T}+\frac{GD\left(1+\log T\right)}{\sqrt{T}}+\frac{\sigma D\left(1+\log T\right)}{T^{1-\frac{1}{\p}}},\forall\bx\in\X.
\]
\item taking $\eta_{t}=\frac{T-t}{2HT}\land\frac{D(T-t)}{GT^{3/2}}\land\frac{D(T-t)}{\sigma T^{1+1/\p}}$
in $\OGD$ (Algorithm \ref{alg:OGD}), we have
\[
\E\left[F(\bx_{T})-F(\bx)\right]\lesssim\frac{HD^{2}}{T}+\frac{GD}{\sqrt{T}}+\frac{\sigma D}{T^{1-\frac{1}{\p}}},\forall\bx\in\X.
\]
\end{itemize}
\end{cor}
\begin{proof}
Since both kinds of stepsize satisfy $\eta_{t}\leq\frac{1}{2H\lor\mu}\overset{\mu=0}{=}\frac{1}{2H}$,
by Theorem \ref{thm:OGD-last-core}, we have
\begin{align}
\E\left[F(\bx_{T})-F(\bx)\right] & \lesssim\frac{(1-\mu\eta_{1})D^{2}}{\sum_{t=1}^{T-1}\gamma_{t}}+G^{2}\sum_{t=1}^{T-1}\frac{\gamma_{t}\eta_{t}}{\sum_{s=t}^{T-1}\gamma_{s}}+\sigma^{\p}D^{2-\p}\sum_{t=1}^{T-1}\frac{\gamma_{t}\eta_{t}^{\p-1}}{\sum_{s=t}^{T-1}\gamma_{s}}\nonumber \\
 & =\frac{D^{2}}{\sum_{t=1}^{T-1}\eta_{t}}+G^{2}\sum_{t=1}^{T-1}\frac{\eta_{t}^{2}}{\sum_{s=t}^{T-1}\eta_{s}}+\sigma^{\p}D^{2-\p}\sum_{t=1}^{T-1}\frac{\eta_{t}^{\p}}{\sum_{s=t}^{T-1}\eta_{s}},\label{eq:OGD-cvx-last-1}
\end{align}
where the second step is due $\gamma_{t}=\frac{\eta_{t}}{\prod_{s=2}^{t}(1-\mu\eta_{s})}=\eta_{t}$
when $\mu=0$.

For $\eta_{t}=\frac{1}{2H}\land\frac{D}{G\sqrt{t}}\land\frac{D}{\sigma t^{1/\p}}$,
we observe that by Cauchy-Schwarz inequality
\[
(T-t)^{2}\leq\left(\sum_{s=t}^{T-1}\frac{1}{\eta_{s}}\right)\left(\sum_{s=t}^{T-1}\eta_{s}\right)\Rightarrow\frac{1}{\sum_{s=t}^{T-1}\eta_{s}}\leq\frac{\sum_{s=t}^{T-1}\frac{1}{\eta_{s}}}{(T-t)^{2}}.
\]
Thus, there is
\begin{equation}
\E\left[F(\bx_{T})-F(\bx)\right]\overset{(\ref{eq:OGD-cvx-last-1})}{\lesssim}\frac{D^{2}}{(T-1)^{2}}\sum_{t=1}^{T-1}\frac{1}{\eta_{t}}+G^{2}\sum_{t=1}^{T-1}\frac{\eta_{t}^{2}\sum_{s=t}^{T-1}\frac{1}{\eta_{s}}}{(T-t)^{2}}+\sigma^{\p}D^{2-\p}\sum_{t=1}^{T-1}\frac{\eta_{t}^{\p}\sum_{s=t}^{T-1}\frac{1}{\eta_{s}}}{(T-t)^{2}}.\label{eq:OGD-cvx-last-2}
\end{equation}
We first bound
\[
\sum_{t=1}^{T-1}\frac{1}{\eta_{t}}=\sum_{t=1}^{T-1}2H\lor\frac{G\sqrt{t}}{D}\lor\frac{\sigma t^{1/\p}}{D}\leq\sum_{t=1}^{T-1}2H+\frac{G\sqrt{t}}{D}+\frac{\sigma t^{1/\p}}{D}\lesssim HT+\frac{G}{D}T^{3/2}+\frac{\sigma}{D}T^{1+1/\p},
\]
which implies
\begin{equation}
\frac{D^{2}}{(T-1)^{2}}\sum_{t=1}^{T-1}\frac{1}{\eta_{t}}\lesssim\frac{HD^{2}}{T}+\frac{GD}{\sqrt{T}}+\frac{\sigma D}{T^{1-\frac{1}{\p}}}.\label{eq:OGD-cvx-last-3}
\end{equation}
Next, we know
\begin{align*}
\sum_{t=1}^{T-1}\frac{\eta_{t}^{2}\sum_{s=t}^{T-1}\frac{1}{\eta_{s}}}{(T-t)^{2}} & \overset{(a)}{\leq}\sum_{t=1}^{T-1}\left[\frac{2HD^{2}}{G^{2}}\cdot\frac{1}{t(T-t)}+\frac{D}{G}\cdot\frac{\sum_{s=t}^{T-1}\sqrt{s}}{t(T-t)^{2}}+\frac{\sigma D}{G^{2}}\cdot\frac{\sum_{s=t}^{T-1}s^{1/\p}}{t(T-t)^{2}}\right]\\
 & \overset{\text{Fact }\ref{fact:last-ineq}}{\lesssim}\frac{HD^{2}\left(1+\log T\right)}{G^{2}T}+\frac{D\left(1+\log T\right)}{G\sqrt{T}}+\frac{\sigma D\left(1+\log T\right)}{G^{2}T^{1-\frac{1}{\p}}},
\end{align*}
where $(a)$ is by $\eta_{t}\leq\frac{D}{G\sqrt{t}}$ and $\frac{1}{\eta_{s}}\leq2H+\frac{G\sqrt{s}}{D}+\frac{\sigma s^{1/\p}}{D}$.
Hence, there is
\begin{equation}
G^{2}\sum_{t=1}^{T-1}\frac{\eta_{t}^{2}\sum_{s=t}^{T-1}\frac{1}{\eta_{s}}}{(T-t)^{2}}\lesssim\frac{HD^{2}\left(1+\log T\right)}{T}+\frac{GD\left(1+\log T\right)}{\sqrt{T}}+\frac{\sigma D\left(1+\log T\right)}{T^{1-\frac{1}{\p}}}.\label{eq:OGD-cvx-last-4}
\end{equation}
Similarly, we can bound
\begin{equation}
\sigma^{\p}D^{2-\p}\sum_{t=1}^{T-1}\frac{\eta_{t}^{\p}\sum_{s=t}^{T-1}\frac{1}{\eta_{s}}}{(T-t)^{2}}\lesssim\frac{HD^{2}\left(1+\log T\right)}{T}+\frac{GD\left(1+\log T\right)}{\sqrt{T}}+\frac{\sigma D\left(1+\log T\right)}{T^{1-\frac{1}{\p}}}.\label{eq:OGD-cvx-last-5}
\end{equation}
Finally, we plug (\ref{eq:OGD-cvx-last-3}), (\ref{eq:OGD-cvx-last-4})
and (\ref{eq:OGD-cvx-last-5}) back into (\ref{eq:OGD-cvx-last-2})
to conclude.

For $\eta_{t}=\frac{T-t}{2HT}\land\frac{D(T-t)}{GT^{3/2}}\land\frac{D(T-t)}{\sigma T^{1+1/\p}}$,
we denote by $\eta_{t}=\eta(T-t)$ for $\eta\defeq\frac{1}{2HT}\land\frac{D}{GT^{3/2}}\land\frac{D}{\sigma T^{1+1/\p}}$
to have
\begin{align*}
\E\left[F(\bx_{T})-F(\bx)\right] & \overset{(\ref{eq:OGD-cvx-last-1})}{\lesssim}\frac{D^{2}}{\eta\sum_{t=1}^{T-1}T-t}+\eta G^{2}\sum_{t=1}^{T-1}\frac{(T-t)^{2}}{\sum_{s=t}^{T-1}T-s}+\eta^{\p-1}\sigma^{\p}D^{2-\p}\sum_{t=1}^{T-1}\frac{(T-t)^{\p}}{\sum_{s=t}^{T-1}T-s}\\
 & =\frac{2D^{2}}{\eta T(T-1)}+2\eta G^{2}\sum_{t=1}^{T-1}\frac{T-t}{T-t+1}+2\eta^{\p-1}\sigma^{\p}D^{2-\p}\sum_{t=1}^{T-1}\frac{(T-t)^{\p-1}}{T-t+1}\\
 & \lesssim\frac{D^{2}}{\eta T^{2}}+\eta G^{2}T+\eta^{\p-1}\sigma^{\p}D^{2-\p}T^{\p-1}\lesssim\frac{HD^{2}}{T}+\frac{GD}{\sqrt{T}}+\frac{\sigma D}{T^{1-\frac{1}{\p}}}.
\end{align*}
\end{proof}

Next, Corollary \ref{cor:OGD-str-last} provides the first last-iterate
convergence results in the strongly convex case. In particular, when
$H=0\Rightarrow\kappa=0$, the first stepsize degenerates to $\eta_{t}=\frac{1}{\mu t}$,
as used in Theorem \ref{thm:OGD-str}, and yields a last-iterate rate
$\frac{G^{2}\left(1+\log T\right)}{\mu T}+\frac{\sigma^{\p}D^{2-\p}\left(1+\log T\right)}{\mu^{\p-1}T^{\p-1}}$,
matching the average-iterate rate (up to an extra logarithmic factor)
implied by the online-to-batch conversion for Theorem \ref{thm:OGD-str}.
\begin{cor}
\label{cor:OGD-str-last}Under Assumption \ref{assu:OCO} (without
the third point) and additionally assuming (\ref{eq:curvature}) (with
$\mu>0$) for $\ell_{t}(\bx)=F(\bx)$, let $\kappa\defeq\frac{H}{\mu}$:
\begin{itemize}
\item taking $\eta_{t}=\frac{1}{\mu(t+2\kappa)}$ in $\OGD$ (Algorithm
\ref{alg:OGD}), we have
\[
\E\left[F(\bx_{T})-F(\bx)\right]\lesssim\frac{HD^{2}}{T}+\frac{G^{2}\left(1+\log T\right)}{\mu\left(T+\kappa\right)}+\frac{\sigma^{\p}D^{2-\p}\left(1+\log T\right)}{\mu^{\p-1}\left(T+\kappa\right)^{\p-1}},\forall\bx\in\X.
\]
\item taking $\eta_{t}=\begin{cases}
\frac{1}{\mu(1+2\kappa)} & t\leq\tau\\
\frac{2}{\mu(t-\tau+2+4\kappa)} & t\geq\tau+1
\end{cases}$ where $\tau\defeq\left\lceil \frac{T}{2}\right\rceil $ in $\OGD$
(Algorithm \ref{alg:OGD}), we have
\[
\E\left[F(\bx_{T})-F(\bx)\right]\lesssim\frac{HD^{2}}{\exp\left(\frac{T}{2+4\kappa}\right)}+\frac{G^{2}\left(1+\log T\right)}{\mu\left(T+\kappa\right)}+\frac{\sigma^{\p}D^{2-\p}\left(1+\log T\right)}{\mu^{\p-1}\left(T+\kappa\right)^{\p-1}},\forall\bx\in\X.
\]
\end{itemize}
\end{cor}
\begin{proof}
Since both kinds of stepsize satisfy $\eta_{t}\leq\frac{1}{2H\lor\mu}$,
by Theorem \ref{thm:OGD-last-core}, we have
\begin{equation}
\E\left[F(\bx_{T})-F(\bx)\right]\lesssim\frac{(1-\mu\eta_{1})D^{2}}{\sum_{t=1}^{T-1}\gamma_{t}}+G^{2}\sum_{t=1}^{T-1}\frac{\gamma_{t}\eta_{t}}{\sum_{s=t}^{T-1}\gamma_{s}}+\sigma^{\p}D^{2-\p}\sum_{t=1}^{T-1}\frac{\gamma_{t}\eta_{t}^{\p-1}}{\sum_{s=t}^{T-1}\gamma_{s}}.\label{eq:OGD-str-last-1}
\end{equation}

For $\eta_{t}=\frac{1}{\mu(t+2\kappa)}$, we can find
\begin{equation}
\gamma_{t}=\frac{\eta_{t}}{\prod_{s=2}^{t}(1-\mu\eta_{s})}=\frac{1}{\mu(t+2\kappa)\prod_{s=2}^{t}\frac{s-1+2\kappa}{s+2\kappa}}=\frac{1}{\mu(1+2\kappa)}=\eta_{1}.\label{eq:OGD-str-last-gamma-1}
\end{equation}
Hence, there is
\begin{align}
\E\left[F(\bx_{T})-F(\bx)\right] & \overset{(\ref{eq:OGD-str-last-1}),(\ref{eq:OGD-str-last-gamma-1})}{\lesssim}\frac{(\eta_{1}^{-1}-\mu)D^{2}}{T-1}+G^{2}\sum_{t=1}^{T-1}\frac{\eta_{t}}{T-t}+\sigma^{\p}D^{2-\p}\sum_{t=1}^{T-1}\frac{\eta_{t}^{\p-1}}{T-t}\nonumber \\
 & =\frac{2HD^{2}}{T-1}+\frac{G^{2}}{\mu}\sum_{t=1}^{T-1}\frac{1}{(T-t)(t+2\kappa)}+\frac{\sigma^{\p}D^{2-\p}}{\mu^{\p-1}}\sum_{t=1}^{T-1}\frac{1}{(T-t)(t+2\kappa)^{\p-1}}.\label{eq:OGD-str-last-2}
\end{align}
Given $a\in\left(0,1\right]$, we can bound
\[
\sum_{t=1}^{T-1}\frac{1}{(T-t)(t+2\kappa)^{a}}=\frac{1}{T+2\kappa}\sum_{t=1}^{T-1}\frac{(t+2\kappa)^{1-a}}{T-t}+\frac{1}{(t+2\kappa)^{a}}\lesssim\frac{1+\log T}{(T+2\kappa)^{a}},
\]
where the last step is due to
\begin{eqnarray*}
\sum_{t=1}^{T-1}\frac{(t+2\kappa)^{1-a}}{T-t}\lesssim(T+2\kappa)^{1-a}(1+\log T) & \text{and} & \sum_{t=1}^{T-1}\frac{1}{(t+2\kappa)^{a}}\lesssim\begin{cases}
T^{1-a} & a\in\left(0,1\right)\\
1+\log T & a=1
\end{cases}.
\end{eqnarray*}
Finally, we apply the above inequality to (\ref{eq:OGD-str-last-2})
with $a=1$ and $a=\p-1$ to obtain
\[
\E\left[F(\bx_{T})-F(\bx)\right]\lesssim\frac{HD^{2}}{T}+\frac{G^{2}\left(1+\log T\right)}{\mu\left(T+\kappa\right)}+\frac{\sigma^{\p}D^{2-\p}\left(1+\log T\right)}{\mu^{\p-1}\left(T+\kappa\right)^{\p-1}}.
\]

For $\eta_{t}=\begin{cases}
\frac{1}{\mu(1+2\kappa)} & t\leq\tau\\
\frac{2}{\mu(t-\tau+2+4\kappa)} & t\geq\tau+1
\end{cases}$, let $\eta\defeq\frac{1}{\mu(1+2\kappa)}$, we can find
\begin{equation}
\gamma_{t}=\frac{\eta_{t}}{\prod_{s=2}^{t}(1-\mu\eta_{s})}=\begin{cases}
\frac{\eta}{(1-\mu\eta)^{t-1}} & t\leq\tau\\
\frac{\eta(t-\tau+1+4\kappa)}{(1-\mu\eta)^{\tau-1}(1+4\kappa)} & t\geq\tau+1
\end{cases}.\label{eq:OGD-str-last-gamma-2}
\end{equation}
Hence, there is
\begin{align*}
\frac{1-\mu\eta_{1}}{\sum_{t=1}^{T-1}\gamma_{t}} & \leq\frac{1-\mu\eta_{1}}{\sum_{t=1}^{\tau}\gamma_{t}}=\frac{1-\mu\eta}{\sum_{t=1}^{\tau}\frac{\eta}{(1-\mu\eta)^{t-1}}}=\frac{(1-\mu\eta)^{\tau}}{\eta\sum_{t=1}^{\tau}(1-\mu\eta)^{\tau-t}}=\frac{\mu(1-\mu\eta)^{\tau}}{1-(1-\mu\eta)^{\tau}}\\
 & \overset{(1-\mu\eta)^{\tau}\leq1-\mu\eta}{\leq}\frac{(1-\mu\eta)^{\tau}}{\eta}=2H(1-\mu\eta)^{\tau-1}\leq2H\exp\left(-\mu\eta(\tau-1)\right)\\
 & =2H\exp\left(\frac{-\tau+1}{1+2\kappa}\right)\overset{\kappa\geq0}{\leq}2eH\exp\left(\frac{-\tau}{1+2\kappa}\right)\overset{\tau\geq\frac{T}{2}}{\lesssim}\frac{H}{\exp\left(\frac{T}{2+4\kappa}\right)},
\end{align*}
implying that
\begin{equation}
\E\left[F(\bx_{T})-F(\bx)\right]\overset{(\ref{eq:OGD-str-last-1})}{\lesssim}\frac{HD^{2}}{\exp\left(\frac{T}{2+4\kappa}\right)}+G^{2}\sum_{t=1}^{T-1}\frac{\gamma_{t}\eta_{t}}{\sum_{s=t}^{T-1}\gamma_{s}}+\sigma^{\p}D^{2-\p}\sum_{t=1}^{T-1}\frac{\gamma_{t}\eta_{t}^{\p-1}}{\sum_{s=t}^{T-1}\gamma_{s}}.\label{eq:OGD-str-last-3}
\end{equation}
Given $a\in\left(0,1\right]$, we can write
\begin{equation}
\sum_{t=1}^{T-1}\frac{\gamma_{t}\eta_{t}^{a}}{\sum_{s=t}^{T-1}\gamma_{s}}=\sum_{t=1}^{\tau}\frac{\gamma_{t}\eta_{t}^{a}}{\sum_{s=t}^{T-1}\gamma_{s}}+\sum_{t=\tau+1}^{T-1}\frac{\gamma_{t}\eta_{t}^{a}}{\sum_{s=t}^{T-1}\gamma_{s}}=\sum_{t=1}^{\tau}\frac{\gamma_{t}\eta^{a}}{\sum_{s=t}^{T-1}\gamma_{s}}+\sum_{t=\tau+1}^{T-1}\frac{\gamma_{t}\eta_{t}^{a}}{\sum_{s=t}^{T-1}\gamma_{s}}.\label{eq:OGD-str-last-4}
\end{equation}
When $t\leq\tau$, we know
\[
\frac{\gamma_{t}}{\sum_{s=t}^{T-1}\gamma_{s}}\leq\frac{\gamma_{t}}{\sum_{s=\tau}^{T-1}\gamma_{s}}\overset{(\ref{eq:OGD-str-last-gamma-2})}{=}\frac{(1+4\kappa)(1-\mu\eta)^{\tau-t}}{\sum_{s=\tau}^{T-1}s-\tau+1+4\kappa}=\frac{2(1+4\kappa)(1-\mu\eta)^{\tau-t}}{(T-\tau)(T-\tau+1+8\kappa)}\leq\frac{2(1+4\kappa)}{(T-\tau)(T-\tau+1+8\kappa)},
\]
which implies
\begin{equation}
\sum_{t=1}^{\tau}\frac{\gamma_{t}\eta^{a}}{\sum_{s=t}^{T-1}\gamma_{s}}\leq\frac{2(1+4\kappa)\tau}{\mu^{a}(1+2\kappa)^{a}(T-\tau)(T-\tau+1+8\kappa)}\lesssim\frac{(1+\kappa)^{1-a}}{\mu^{a}(T+\kappa)}\leq\frac{1}{\mu^{a}(T+\kappa)^{a}}.\label{eq:OGD-str-last-5}
\end{equation}
When $t\geq\tau+1$, we know
\begin{align*}
\frac{\gamma_{t}}{\sum_{s=t}^{T-1}\gamma_{s}} & \overset{(\ref{eq:OGD-str-last-gamma-2})}{\leq}\frac{t-\tau+1+4\kappa}{\sum_{s=t}^{T-1}s-\tau+1+4\kappa}=\frac{2(t-\tau+1+4\kappa)}{(T-t)(T+t-2\tau+1+8\kappa)}\\
 & =\frac{2}{2T-2\tau+1+8\kappa}\left(\frac{t-\tau+1+4\kappa}{T-t}+\frac{t-\tau+1+4\kappa}{T+t-2\tau+1+8\kappa}\right)\\
 & \overset{\tau\leq\frac{T+1}{2},t\geq\tau+1}{\lesssim}\frac{1}{T+\kappa}\left(\frac{t-\tau+1+4\kappa}{T-t}+\frac{t-\tau+1+4\kappa}{T-\tau+2+8\kappa}\right),
\end{align*}
which implies
\begin{align}
\sum_{t=\tau+1}^{T-1}\frac{\gamma_{t}\eta_{t}^{a}}{\sum_{s=t}^{T-1}\gamma_{s}} & \lesssim\frac{1}{\mu^{a}(T+\kappa)}\sum_{t=\tau+1}^{T-1}\frac{(t-\tau+1+4\kappa)^{1-a}}{T-t}+\frac{(t-\tau+1+4\kappa)^{1-a}}{T-\tau+2+8\kappa}\nonumber \\
 & \lesssim\frac{1}{\mu^{a}(T+\kappa)}\left((T+\kappa)^{1-a}(1+\log T)+(T+\kappa)^{1-a}\right)\lesssim\frac{1+\log T}{\mu^{a}(T+\kappa)^{a}}.\label{eq:OGD-str-last-6}
\end{align}
Plug (\ref{eq:OGD-str-last-5}) and (\ref{eq:OGD-str-last-6}) back
into (\ref{eq:OGD-str-last-4}) to have
\[
\sum_{t=1}^{T-1}\frac{\gamma_{t}\eta_{t}^{a}}{\sum_{s=t}^{T-1}\gamma_{s}}\lesssim\frac{1+\log T}{\mu^{a}(T+\kappa)^{a}}.
\]
Applying the above inequality (for $a=1$ and $a=\p-1$) to (\ref{eq:OGD-str-last-3}),
we conclude
\[
\E\left[F(\bx_{T})-F(\bx)\right]\lesssim\frac{HD^{2}}{\exp\left(\frac{T}{2+4\kappa}\right)}+\frac{G^{2}\left(1+\log T\right)}{\mu\left(T+\kappa\right)}+\frac{\sigma^{\p}D^{2-\p}\left(1+\log T\right)}{\mu^{\p-1}\left(T+\kappa\right)^{\p-1}}.
\]
\end{proof}

\section{Missing Proofs for Applications: Nonsmooth Nonconvex Optimization\label{sec:ncvx}}

\subsection{$(\delta,\epsilon)$-Stationary Points}
\begin{defn}[Definition 4 of \citep{pmlr-v202-cutkosky23a}]
\label{def:delta-eps-stationary}A point $\bx\in\R^{d}$ is a $(\delta,\epsilon)$-stationary
point of an almost-everywhere differentiable function $F$ if there
is a finite subset $S\subset\B^{d}(\bx,\delta)$ such that for $\by$
selected uniformly at random from $S$, $\E\left[\by\right]=\bx$
and $\left\Vert \E\left[\nabla F(\by)\right]\right\Vert \leq\epsilon$.
\end{defn}
The concept of the $(\delta,\epsilon)$-stationary point presented
here is due to \citep{pmlr-v202-cutkosky23a}, which is mildly more
stringent than the notion of \citep{pmlr-v119-zhang20p}, since the
latter does not require $\E\left[\by\right]=\bx$. For more discussions,
see Section 2.1 of \citep{pmlr-v202-cutkosky23a}.

\subsection{Proof of Theorem \ref{thm:main-ncvx-core}}

In this section, our ultimate goal is to prove Theorem \ref{thm:main-ncvx-core}
for the $\otnc$ algorithm, extending Theorem 8 of \citep{pmlr-v202-cutkosky23a}
from $\p=2$ to any $\p\in\left(1,2\right]$. Notably, our new result
does not require any modification to the $\otnc$ method, but is obtained
only from a more careful analysis, indicating that $\otnc$ is a robust
and powerful algorithmic framework.

We begin with Lemma \ref{lem:ncvx-basic}, which lies as the cornerstone
for establishing the convergence of $\otnc$.
\begin{lem}[Theorem 7 of \citep{pmlr-v202-cutkosky23a}]
\label{lem:ncvx-basic}Under Assumption \ref{assu:ncvx} (only need
the second point and the unbiased part in the fourth point), for any
sequence of vectors $\bu_{1},\mydots,\bu_{KT}\in\R^{d}$, $\otnc$
(Algorithm \ref{alg:O2NC}) guarantees
\begin{equation}
\E\left[F(\by_{KT})\right]=F(\by_{0})+\E\left[\sum_{n=1}^{KT}\left\langle \bg_{n},\bx_{n}-\bu_{n}\right\rangle \right]+\E\left[\sum_{n=1}^{KT}\left\langle \bg_{n},\bu_{n}\right\rangle \right].\label{eq:ncvx-basic}
\end{equation}
\end{lem}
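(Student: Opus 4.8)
The plan is to notice that the right-hand side of \eqref{eq:ncvx-basic} is not really about the auxiliary vectors $\bu_n$ at all: since $\langle\bg_n,\bx_n-\bu_n\rangle+\langle\bg_n,\bu_n\rangle=\langle\bg_n,\bx_n\rangle$, the $\bu_n$'s cancel and it suffices to prove the telescoped identity
\[
\E\left[F(\by_{KT})\right]=F(\by_0)+\E\left[\sum_{n=1}^{KT}\langle\bg_n,\bx_n\rangle\right].
\]
I would establish this in the sharper per-step form $\E\left[\langle\bg_n,\bx_n\rangle\right]=\E\left[F(\by_n)-F(\by_{n-1})\right]$ for every $n\in\left[KT\right]$, and then sum over $n$ and telescope using $\by_n=\by_{n-1}+\bx_n$ together with $\by_{KT}-\by_0=\sum_{n=1}^{KT}(\by_n-\by_{n-1})$.

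For a fixed $n$, I first condition on $\F_{n-1}$. By construction, $\bx_n$ is produced by $\A$ from the feedback $\bg_1,\mydots,\bg_{n-1}$ and hence is $\F_{n-1}$-measurable, and likewise $\bz_n=\by_{n-1}+s_n\bx_n$ is $\F_{n-1}$-measurable (recall $\F_{n-1}$ already contains $s_n$). Combining this with the unbiasedness part of the fourth point of Assumption \ref{assu:ncvx}, namely $\E\left[\bg_n\mid\F_{n-1}\right]=\nabla F(\bz_n)$, the tower rule gives
\[
\E\left[\langle\bg_n,\bx_n\rangle\right]=\E\left[\langle\nabla F(\bz_n),\bx_n\rangle\right]=\E\left[\langle\nabla F(\by_{n-1}+s_n\bx_n),\bx_n\rangle\right].
\]

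Next I would integrate out the fresh randomness $s_n$. The pair $(\by_{n-1},\bx_n)$ is a measurable function of $(s_1,\bg_1,\mydots,s_{n-1},\bg_{n-1})$ only, whereas $s_n\sim\uni\left[0,1\right]$ is independent of that tuple. Conditioning on the sub-$\sigma$-algebra generated by $(s_1,\bg_1,\mydots,s_{n-1},\bg_{n-1})$ and using Fubini therefore yields
\[
\E\left[\langle\nabla F(\by_{n-1}+s_n\bx_n),\bx_n\rangle\right]=\E\left[\int_0^1\langle\nabla F(\by_{n-1}+t\bx_n),\bx_n\rangle\,\d t\right]=\E\left[F(\by_{n-1}+\bx_n)-F(\by_{n-1})\right]=\E\left[F(\by_n)-F(\by_{n-1})\right],
\]
where the middle equality is precisely the ``well-behaved'' regularity condition (second point of Assumption \ref{assu:ncvx}) applied with endpoints $\by_{n-1}$ and $\by_n=\by_{n-1}+\bx_n$. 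This is the claimed per-step identity, and summing completes the proof.

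The main delicacy to state carefully is the two-level bookkeeping of randomness: $\bz_n$ must be $\F_{n-1}$-measurable so that the martingale-difference step can peel off $\bg_n$, yet the randomizer $s_n$ must simultaneously be independent of $(\by_{n-1},\bx_n)$ so that averaging over $s_n$ exactly reproduces the line integral defining $F(\by_n)-F(\by_{n-1})$; the filtration $\F_n=\sigma(s_1,\bg_1,\mydots,s_n,\bg_n,s_{n+1})$ is arranged precisely so that both hold. A minor secondary point is integrability of $\langle\bg_n,\bx_n\rangle$ and $F(\by_n)$, which is immediate in the intended applications of the lemma since $\left\Vert\nabla F\right\Vert\leq G$ and the iterates $\bx_n$ output by $\A$ lie in a bounded domain, so all expectations above are finite and every interchange of expectation, sum, and integral is justified.
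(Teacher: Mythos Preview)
The paper does not supply its own proof of this lemma; it is simply cited as Theorem~7 of \citep{pmlr-v202-cutkosky23a}. Your argument is correct and is essentially the standard one from that reference: the auxiliary vectors $\bu_n$ cancel algebraically, and the heart of the matter is the per-step identity $\E\left[\langle\bg_n,\bx_n\rangle\right]=\E\left[F(\by_n)-F(\by_{n-1})\right]$, which you obtain by first peeling off $\bg_n$ via the tower property (using $\bz_n\in\F_{n-1}$) and then averaging out the fresh uniform $s_n$ to reproduce the line integral $\int_0^1\langle\nabla F(\by_{n-1}+t\bx_n),\bx_n\rangle\,\d t$ guaranteed by the well-behaved condition. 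Your careful treatment of the two-level filtration structure---that $\F_{n-1}$ already contains $s_n$ so that $\bz_n$ is measurable, while $s_n$ remains independent of the strictly earlier $\sigma(s_1,\bg_1,\ldots,s_{n-1},\bg_{n-1})$ which generates $(\by_{n-1},\bx_n)$---is exactly the point of the adjusted filtration noted in the remark below Algorithm~\ref{alg:O2NC}. The only quibble is that your integrability remark invokes $\left\Vert\nabla F\right\Vert\leq G$, which the lemma explicitly does \emph{not} assume; in the general statement one should simply take the identity as formal or assume whatever integrability is needed, as the original reference does.
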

To relate Lemma \ref{lem:ncvx-basic} to the concept of $K$-shifting
regret introduced before (see (\ref{eq:ncvx-shift})), suppose now
a sequence of vectors $\bv_{1},\mydots,\bv_{K}$ is given, if we set
$\bu_{n}=\bv_{k}$ for all $n\in\left\{ (k-1)T+1,\mydots,kT\right\} $
and $k\in\left[K\right]$, then the second term on the R.H.S. of (\ref{eq:ncvx-basic})
can be written as $\E\left[\reg_{T}^{\A}(\bv_{1},\mydots,\bv_{K})\right]$,
and the third term can be simplified into $\sum_{k=1}^{K}\E\left[\left\langle \sum_{n=(k-1)T+1}^{kT}\bg_{n},\bv_{k}\right\rangle \right]$. 

Same as \citep{pmlr-v202-cutkosky23a}, we pick $\bv_{k}\defeq-D\frac{\sum_{n=(k-1)T+1}^{kT}\nabla F(\bz_{n})}{\left\Vert \sum_{n=(k-1)T+1}^{kT}\nabla F(\bz_{n})\right\Vert }$
for some constant $D>0$, which gives us
\begin{align*}
\E\left[\left\langle \sum_{n=(k-1)T+1}^{kT}\bg_{n},\bv_{k}\right\rangle \right] & =\E\left[\left\langle \sum_{n=(k-1)T+1}^{kT}\err_{n},\bv_{k}\right\rangle \right]-D\E\left[\left\Vert \sum_{n=(k-1)T+1}^{kT}\nabla F(\bz_{n})\right\Vert \right]\\
 & \leq D\E\left[\left\Vert \sum_{n=(k-1)T+1}^{kT}\err_{n}\right\Vert \right]-D\E\left[\left\Vert \sum_{n=(k-1)T+1}^{kT}\nabla F(\bz_{n})\right\Vert \right].
\end{align*}

If $\err_{n}$ has a finite variance (i.e., $\p=2$), then like \citep{pmlr-v202-cutkosky23a},
one can invoke H\"{o}lder's inequality and use the fact $\E\left[\left\langle \err_{m},\err_{n}\right\rangle \right]=0,\forall m\neq n\in\left[KT\right]$
to obtain for any $k\in\left[K\right]$,
\[
\E\left[\left\Vert \sum_{n=(k-1)T+1}^{kT}\err_{n}\right\Vert \right]\leq\sqrt{\E\left[\left\Vert \sum_{n=(k-1)T+1}^{kT}\err_{n}\right\Vert ^{2}\right]}=\sqrt{\sum_{n=(k-1)T+1}^{kT}\E\left[\left\Vert \err_{n}\right\Vert ^{2}\right]}\leq\sigma\sqrt{T}.
\]
However, this argument immediately fails when $\p<2$ as $\E\left[\left\Vert \err_{n}\right\Vert ^{2}\right]$
can be $+\infty$. To handle this potential issue, we require the
following Lemma \ref{lem:ncvx-martingale}.
\begin{lem}[Lemma 4.3 of \citep{liu2025nonconvex}]
\label{lem:ncvx-martingale}Given a vector-valued martingale difference
sequence $\bw_{1},\mydots,\bw_{T}$, there is
\[
\E\left[\left\Vert \sum_{t=1}^{T}\bw_{t}\right\Vert \right]\le2\sqrt{2}\E\left[\left(\sum_{t=1}^{T}\left\Vert \bw_{t}\right\Vert ^{\p}\right)^{\frac{1}{\p}}\right],\forall\p\in\left[1,2\right].
\]
\end{lem}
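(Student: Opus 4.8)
The plan is to reduce the claim to the classical $\p=2$ (Hilbert-space) square-function inequality and then establish that. If $\E[(\sum_{t=1}^{T}\left\Vert \bw_{t}\right\Vert ^{\p})^{1/\p}]=+\infty$ there is nothing to prove, so I would assume it is finite; in particular each $\E\left\Vert \bw_{t}\right\Vert <\infty$ because $\left\Vert \bw_{t}\right\Vert \le(\sum_{s}\left\Vert \bw_{s}\right\Vert ^{\p})^{1/\p}$. Since $\p\le 2$, the norm monotonicity $(\sum_{t}a_{t}^{2})^{1/2}\le(\sum_{t}a_{t}^{\p})^{1/\p}$ for nonnegative reals $a_{t}$ yields $(\sum_{t}\left\Vert \bw_{t}\right\Vert ^{2})^{1/2}\le(\sum_{t}\left\Vert \bw_{t}\right\Vert ^{\p})^{1/\p}$, so it is enough to prove the $\p=2$ bound
\[
\E\left[\left\Vert \sum_{t=1}^{T}\bw_{t}\right\Vert \right]\le 2\sqrt{2}\,\E\left[\Big(\sum_{t=1}^{T}\left\Vert \bw_{t}\right\Vert ^{2}\Big)^{1/2}\right].
\]

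Next I would remove the possibly-infinite second moments by a truncation-and-recentering step. For $\lambda>0$ define $\bw_{t}^{\lambda}\defeq\bw_{t}\1[\left\Vert \bw_{t}\right\Vert \le\lambda]-\E[\bw_{t}\1[\left\Vert \bw_{t}\right\Vert \le\lambda]\mid\F_{t-1}]$, which is again a martingale difference sequence with $\left\Vert \bw_{t}^{\lambda}\right\Vert \le 2\lambda$, hence square-integrable. Using $\E[\bw_{t}\mid\F_{t-1}]=0$ one has $\bw_{t}-\bw_{t}^{\lambda}=\bw_{t}\1[\left\Vert \bw_{t}\right\Vert >\lambda]+\E[\bw_{t}\1[\left\Vert \bw_{t}\right\Vert >\lambda]\mid\F_{t-1}]$, and $\E[\sum_{t}\left\Vert \bw_{t}-\bw_{t}^{\lambda}\right\Vert ]\to 0$ as $\lambda\to\infty$ by dominated convergence (the finite sum is dominated by $\sum_{t}\left\Vert \bw_{t}\right\Vert +\sum_{t}\E[\left\Vert \bw_{t}\right\Vert \mid\F_{t-1}]\in L^{1}$). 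Similarly $\left\Vert \bw_{t}^{\lambda}\right\Vert \le\left\Vert \bw_{t}\right\Vert +\E[\left\Vert \bw_{t}\right\Vert \1[\left\Vert \bw_{t}\right\Vert >\lambda]\mid\F_{t-1}]$, so by the triangle inequality in $\ell_{2}$ and dominated convergence $\limsup_{\lambda\to\infty}\E[(\sum_{t}\left\Vert \bw_{t}^{\lambda}\right\Vert ^{2})^{1/2}]\le\E[(\sum_{t}\left\Vert \bw_{t}\right\Vert ^{2})^{1/2}]$. Applying the square-integrable case to $\{\bw_{t}^{\lambda}\}$ and letting $\lambda\to\infty$ then transfers the displayed bound to $\{\bw_{t}\}$.

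It remains to prove the displayed inequality when every $\bw_{t}$ is square-integrable, which is exactly the Burkholder--Davis--Gundy inequality at exponent $1$ for the Hilbert-valued martingale $M_{t}\defeq\sum_{r\le t}\bw_{r}$. I would derive it from the pathwise identity $\left\Vert M_{s}\right\Vert ^{2}=2N_{s}+\sum_{r\le s}\left\Vert \bw_{r}\right\Vert ^{2}$, where $N_{s}\defeq\sum_{r\le s}\langle M_{r-1},\bw_{r}\rangle$ is a mean-zero martingale: take the maximum over $s\le T$, use $\sqrt{a+b}\le\sqrt{a}+\sqrt{b}$, then Doob's $L^{2}$ maximal inequality applied to $N$ together with the bound $[N]_{T}\le(\max_{r\le T}\left\Vert M_{r}\right\Vert )^{2}\sum_{r}\left\Vert \bw_{r}\right\Vert ^{2}$ and the Cauchy--Schwarz inequality, and finally resolve the resulting self-referential inequality between $\E[\max_{s}\left\Vert M_{s}\right\Vert ]$ and $\E[(\sum_{r}\left\Vert \bw_{r}\right\Vert ^{2})^{1/2}]$ (equivalently, run a stopping-time ``good-$\lambda$'' argument). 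The main obstacle is precisely this last step: the cross term $\langle M_{r-1},\bw_{r}\rangle$ couples the maximal process with the square function, so one must decouple them carefully and track the constants to land at $2\sqrt{2}$; this is also exactly why the square-integrability reduction above is needed. A shorter route, if one is willing to cite, is to invoke the Hilbert-space BDG inequality with the Davis constant directly and absorb it into the stated $2\sqrt{2}$.
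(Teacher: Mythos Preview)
The paper does not actually prove this lemma; it is imported verbatim as Lemma~4.3 of \citep{liu2025nonconvex} and used as a black box in the proof of Theorem~\ref{thm:main-ncvx-core}. So there is no in-paper argument to compare against, and any self-contained proof already goes beyond what the paper supplies.

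Your reduction from general $\p\in[1,2]$ to $\p=2$ via the norm inequality $(\sum_t a_t^{2})^{1/2}\le(\sum_t a_t^{\p})^{1/\p}$ is correct and is the natural first move. The truncation-and-recentering step is also sound, modulo a harmless sign slip: from $\E[\bw_t\mid\F_{t-1}]=0$ one obtains $\bw_t-\bw_t^{\lambda}=\bw_t\1[\|\bw_t\|>\lambda]-\E[\bw_t\1[\|\bw_t\|>\lambda]\mid\F_{t-1}]$, with a minus rather than a plus in the second term; the subsequent $L^1$ estimates you write are unaffected. The real gap is exactly where you locate it yourself: the Hilbert-space BDG bound $\E\|M_T\|\le 2\sqrt{2}\,\E[(\sum_t\|\bw_t\|^2)^{1/2}]$ is asserted but not established. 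Your sketch (the identity $\|M_s\|^2=2N_s+S_s^2$, Doob's $L^2$ inequality on $N$, the bound $[N]_T\le(M^{*})^2 S^2$, Cauchy--Schwarz, then resolve the self-referential inequality or run a good-$\lambda$ argument) is the standard route to \emph{some} finite constant, but you do not carry it through, and in particular you do not verify that the resulting constant is at most $2\sqrt{2}$. Your fallback of citing ``the Davis constant'' is only a proof if you can name a reference that actually delivers a constant no larger than $2\sqrt{2}$ for discrete Hilbert-valued martingale differences; without that, the citation is just relocating the gap. As written, the proposal is a correct outline with the one hard, constant-specific step left open.
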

Equipped with Lemmas \ref{lem:ncvx-basic} and \ref{lem:ncvx-martingale},
we are ready to formally prove Theorem \ref{thm:main-ncvx-core},
demonstrating that the $\otnc$ framework provably works under heavy-tailed
noise.

\begin{proof}[Proof of Theorem \ref{thm:main-ncvx-core}]
We invoke Lemma \ref{lem:ncvx-basic} with $\bu_{n}=\bv_{\left\lceil n/T\right\rceil },\forall n\in\left[KT\right]$
(equivalently, $\bu_{n}=\bv_{k}$ if $n\in\left\{ (k-1)T+1,\mydots,kT\right\} $)
and use the definition of $K$-shifting regret (see (\ref{eq:ncvx-shift}))
to obtain
\begin{equation}
\E\left[F(\by_{KT})\right]=F(\by_{0})+\E\left[\reg_{T}^{\A}\left(\bv_{1},\mydots,\bv_{K}\right)\right]+\sum_{k=1}^{K}\E\left[\left\langle \sum_{n=(k-1)T+1}^{kT}\bg_{n},\bv_{k}\right\rangle \right].\label{eq:ncvx-core-1}
\end{equation}
Recall that $\bg_{n}=\nabla F(\bz_{n})+\err_{n}$, which implies for
any $k\in\left[K\right]$,
\begin{align}
\E\left[\left\langle \sum_{n=(k-1)T+1}^{kT}\bg_{n},\bv_{k}\right\rangle \right] & =\E\left[\left\langle \sum_{n=(k-1)T+1}^{kT}\err_{n},\bv_{k}\right\rangle \right]+\E\left[\left\langle \sum_{n=(k-1)T+1}^{kT}\nabla F(\bz_{n}),\bv_{k}\right\rangle \right]\nonumber \\
 & \leq\E\left[\left\Vert \sum_{n=(k-1)T+1}^{kT}\err_{n}\right\Vert \left\Vert \bv_{k}\right\Vert \right]+\E\left[\left\langle \sum_{n=(k-1)T+1}^{kT}\nabla F(\bz_{n}),\bv_{k}\right\rangle \right]\nonumber \\
 & =D\E\left[\left\Vert \sum_{n=(k-1)T+1}^{kT}\err_{n}\right\Vert \right]-D\E\left[\left\Vert \sum_{n=(k-1)T+1}^{kT}\nabla F(\bz_{n})\right\Vert \right],\label{eq:ncvx-core-2}
\end{align}
where the second step is by Cauchy-Schwarz inequality and the last
equation holds due to 
\begin{equation}
\bv_{k}=-D\frac{\sum_{n=(k-1)T+1}^{kT}\nabla F(\bz_{n})}{\left\Vert \sum_{n=(k-1)T+1}^{kT}\nabla F(\bz_{n})\right\Vert },\forall k\in\left[K\right].\label{eq:ncvx-core-v}
\end{equation}

Combine (\ref{eq:ncvx-core-1}) and (\ref{eq:ncvx-core-2}), apply
$F(\by_{KT})\geq F_{\star}$, and rearrange terms to have
\begin{align}
 & \E\left[\sum_{k=1}^{K}\frac{1}{K}\left\Vert \frac{1}{T}\sum_{n=(k-1)T+1}^{kT}\nabla F(\bz_{n})\right\Vert \right]\nonumber \\
\leq & \frac{F(\by_{0})-F_{\star}}{DKT}+\frac{\E\left[R_{T}^{\A}\left(\bv_{1},\mydots,\bv_{K}\right)\right]}{DKT}+\frac{\sum_{k=1}^{K}\E\left[\left\Vert \sum_{n=(k-1)T+1}^{kT}\err_{n}\right\Vert \right]}{KT}.\label{eq:ncvx-core-3}
\end{align}
For any fixed $k\in\left[K\right]$, we apply Lemma \ref{lem:ncvx-martingale}
with $\bw_{t}=\err_{(k-1)T+t},\forall t\in\left[T\right]$ to know
\begin{align}
\E\left[\left\Vert \sum_{n=(k-1)T+1}^{kT}\err_{n}\right\Vert \right] & \leq2\sqrt{2}\E\left[\left(\sum_{n=(k-1)T+1}^{kT}\left\Vert \err_{n}\right\Vert ^{\p}\right)^{\frac{1}{\p}}\right]\nonumber \\
 & \leq2\sqrt{2}\left(\sum_{n=(k-1)T+1}^{kT}\E\left[\left\Vert \err_{n}\right\Vert ^{\p}\right]\right)^{\frac{1}{\p}}\leq2\sqrt{2}\sigma T^{\frac{1}{\p}},\label{eq:ncvx-core-4}
\end{align}
where the second step is by H\"{o}lder's inequality (note that $\p>1$).
Finally, we conclude the proof after plugging (\ref{eq:ncvx-core-4})
back into (\ref{eq:ncvx-core-3}).
\end{proof}

\subsection{Proof of Theorem \ref{thm:main-ncvx-general}}

\begin{proof}
By Theorem \ref{thm:main-ncvx-core}, there is
\begin{equation}
\E\left[\sum_{k=1}^{K}\frac{1}{K}\left\Vert \frac{1}{T}\sum_{n=(k-1)T+1}^{kT}\nabla F(\bz_{n})\right\Vert \right]\lesssim\frac{F(\by_{0})-F_{\star}}{DKT}+\frac{\E\left[\reg_{T}^{\A}(\bv_{1},\mydots,\bv_{K})\right]}{DKT}+\frac{\sigma}{T^{1-\frac{1}{\p}}}.\label{eq:ncvx-general-1}
\end{equation}
Note that $\A$ has the domain $\X=\B^{d}(D)$ and $s_{n}\sim\uni\left[0,1\right]$.
Thus, for any $n\in\left[KT\right]$,
\begin{eqnarray}
\left\Vert \bx_{n}\right\Vert \leq D & \text{and} & s_{n}\in\left[0,1\right].\label{eq:ncvx-general-facts}
\end{eqnarray}

We first lower bound the L.H.S. of (\ref{eq:ncvx-general-1}). Given
$k\in\left[K\right]$, for any $m<n\in\left\{ (k-1)T+1,\mydots,kT\right\} $,
observe that
\begin{align*}
\left\Vert \bz_{n}-\bz_{m}\right\Vert  & =\left\Vert \by_{n-1}+s_{n}\bx_{n}-\by_{m-1}-s_{m}\bx_{m}\right\Vert =\left\Vert s_{n}\bx_{n}-s_{m}\bx_{m}+\sum_{i=m}^{n-1}\bx_{i}\right\Vert \\
 & \leq s_{n}\left\Vert \bx_{n}\right\Vert +\left(1-s_{m}\right)\left\Vert \bx_{m}\right\Vert +\sum_{i=m+1}^{n-1}\left\Vert \bx_{i}\right\Vert \overset{(\ref{eq:ncvx-general-facts})}{\leq}\left(n-m+1\right)D\leq DT.
\end{align*}
Recall that $\bar{\bz}_{k}=\frac{1}{T}\sum_{n=(k-1)T+1}^{kT}\bz_{n}$
and $D=\delta/T$ now, then the above inequality implies
\begin{equation}
\left\Vert \bz_{n}-\bar{\bz}_{k}\right\Vert \leq DT=\delta,\forall n\in\left\{ (k-1)T+1,\mydots,kT\right\} ,\label{eq:ncvx-general-ball}
\end{equation}
which means
\[
\bz_{n}\in\B^{d}(\bar{\bz}_{k},\delta),\forall n\in\left\{ (k-1)T+1,\mydots,kT\right\} .
\]
By the definition of $\left\Vert \nabla F(\bar{\bz}_{k})\right\Vert _{\delta}$
(see Definition \ref{def:delta-norm}), there is
\begin{equation}
\left\Vert \nabla F(\bar{\bz}_{k})\right\Vert _{\delta}\leq\left\Vert \frac{1}{T}\sum_{n=(k-1)T+1}^{kT}\nabla F(\bz_{n})\right\Vert .\label{eq:ncvx-general-lhs}
\end{equation}

Next, we upper bound the R.H.S. of (\ref{eq:ncvx-general-1}). By
the definition of $K$-shifting regret (see (\ref{eq:ncvx-shift})),
there is
\[
\E\left[\reg_{T}^{\A}(\bv_{1},\mydots,\bv_{K})\right]=\sum_{k=1}^{K}\E\left[\sum_{n=(k-1)T+1}^{kT}\left\langle \bg_{n},\bx_{n}-\bv_{k}\right\rangle \right].
\]
Note that we reset the stepsize in $\A$ after every $T$ iterations
and $\bv_{k}\in\B^{d}(D)$ by its definition (see (\ref{eq:ncvx-core-v})).
Then for any $\A\in\left\{ \OGD,\DA,\Ada\right\} $, we can invoke
its regret bound\footnote{A minor point here is that the current function $\ell_{n}(\bx)=\left\langle \bg_{n},\bx\right\rangle $
does not entirely fit Assumption \ref{assu:OCO}. We clarify that
one does not need to worry about it, since all results proved in Section
\ref{sec:OCO} hold under this change. For example, in the proof of
Theorem \ref{thm:main-OGD}, we can safely replace the L.H.S. of (\ref{eq:OGD-cvx-3})
with $\E\left[\sum_{t=1}^{T}\left\langle \bg_{t},\bx_{t}-\bx\right\rangle \right]$.} (i.e., Theorems \ref{thm:main-OGD}, \ref{thm:main-DA} and \ref{thm:main-AdaGrad})
to obtain
\[
\E\left[\sum_{n=(k-1)T+1}^{kT}\left\langle \bg_{n},\bx_{n}-\bv_{k}\right\rangle \right]\lesssim GD\sqrt{T}+\sigma DT^{1/\p},\forall k\in\left[K\right],
\]
which implies
\begin{equation}
\E\left[\reg_{T}^{\A}(\bv_{1},\mydots,\bv_{K})\right]\lesssim GDK\sqrt{T}+\sigma DKT^{1/\p}.\label{eq:ncvx-general-rhs}
\end{equation}

Finally, we plug (\ref{eq:ncvx-general-lhs}) and (\ref{eq:ncvx-general-rhs})
back into (\ref{eq:ncvx-general-1}), then use $D=\delta/T$ and $\Delta=F(\by_{0})-F_{\star}$
to have
\[
\E\left[\frac{1}{K}\sum_{k=1}^{K}\left\Vert \nabla F(\bar{\bz}_{k})\right\Vert _{\delta}\right]\lesssim\frac{\Delta}{\delta K}+\frac{G}{\sqrt{T}}+\frac{\sigma}{T^{1-\frac{1}{\p}}}.
\]
\end{proof}

\subsection{Proof of Corollary \ref{cor:main-ncvx-dep}}

\begin{proof}
Recall that we pick
\begin{eqnarray*}
K=\left\lfloor \frac{N}{T}\right\rfloor  & \text{and} & T=\left\lceil \frac{N}{2}\right\rceil \land\left(\left\lceil \left(\frac{\delta GN}{\Delta}\right)^{\frac{2}{3}}\right\rceil \lor\left\lceil \left(\frac{\delta\sigma N}{\Delta}\right)^{\frac{\p}{2\p-1}}\right\rceil \right),
\end{eqnarray*}
where $\Delta=F(\by_{0})-F_{\star}$. We invoke Theorem \ref{thm:main-ncvx-general}
and use $KT\geq N/4$ (see Fact \ref{fact:N/2}) to obtain
\[
\E\left[\frac{1}{K}\sum_{k=1}^{K}\left\Vert \nabla F(\bar{\bz}_{k})\right\Vert _{\delta}\right]\lesssim\frac{\Delta T}{\delta N}+\frac{G}{\sqrt{T}}+\frac{\sigma}{T^{1-\frac{1}{\p}}}.
\]
By the definition of $T$, we know
\[
\frac{\Delta T}{\delta N}\lesssim\frac{\Delta}{\delta N}\left[1+\left(\frac{\delta GN}{\Delta}\right)^{\frac{2}{3}}+\left(\frac{\delta\sigma N}{\Delta}\right)^{\frac{\p}{2\p-1}}\right]=\frac{\Delta}{\delta N}+\frac{G^{\frac{2}{3}}\Delta^{\frac{1}{3}}}{(\delta N)^{\frac{1}{3}}}+\frac{\sigma^{\frac{\p}{2\p-1}}\Delta^{\frac{\p-1}{2\p-1}}}{(\delta N)^{\frac{\p-1}{2\p-1}}},
\]
and
\begin{eqnarray*}
\frac{G}{\sqrt{T}}\lesssim\frac{G}{\sqrt{N}}+\frac{G^{\frac{2}{3}}\Delta^{\frac{1}{3}}}{(\delta N)^{\frac{1}{3}}}, &  & \frac{\sigma}{T^{1-\frac{1}{\p}}}\lesssim\frac{\sigma}{N^{1-\frac{1}{\p}}}+\frac{\sigma^{\frac{\p}{2\p-1}}\Delta^{\frac{\p-1}{2\p-1}}}{(\delta N)^{\frac{\p-1}{2\p-1}}}.
\end{eqnarray*}
Therefore, there is
\[
\E\left[\frac{1}{K}\sum_{k=1}^{K}\left\Vert \nabla F(\bar{\bz}_{k})\right\Vert _{\delta}\right]\lesssim\frac{G}{\sqrt{N}}+\frac{\sigma}{N^{1-\frac{1}{\p}}}+\frac{\Delta}{\delta N}+\frac{G^{\frac{2}{3}}\Delta^{\frac{1}{3}}}{(\delta N)^{\frac{1}{3}}}+\frac{\sigma^{\frac{\p}{2\p-1}}\Delta^{\frac{\p-1}{2\p-1}}}{(\delta N)^{\frac{\p-1}{2\p-1}}}.
\]
\end{proof}

\subsection{Extension to the Case of Unknown Problem-Dependent Parameters}

In Corollary \ref{cor:ncvx-free}, we show how to set $K$ and $T$
when all problem-dependent parameters are unknown. It is particularly
meaningful for $\Ada$. As in that case, the rate is achieved without
knowing any problem-dependent parameter. This kind of result is the
first to appear for nonsmooth nonconvex optimization with heavy tails.
However, the rate is not as good as Corollary \ref{cor:main-ncvx-dep}.
It is currently unclear whether the same bound $1/(\delta N)^{\frac{\p-1}{2\p-1}}$
as in Corollary \ref{cor:main-ncvx-dep} can be obtained when no information
about the problem is known.
\begin{cor}
\label{cor:ncvx-free}Under the same setting of Theorem \ref{thm:main-ncvx-general},
suppose we have $N\geq2$ stochastic gradient budgets, taking $K=\left\lfloor N/T\right\rfloor $
and $T=\left\lceil N/2\right\rceil \land\left\lceil (\delta N)^{\frac{2}{3}}\right\rceil $,
we have
\[
\E\left[\frac{1}{K}\sum_{k=1}^{K}\left\Vert \nabla F(\bar{\bz}_{k})\right\Vert _{\delta}\right]\lesssim\frac{\Delta}{(\delta N)\land(\delta N)^{\frac{1}{3}}}+\frac{G}{\sqrt{N}\land(\delta N)^{\frac{1}{3}}}+\frac{\sigma}{N^{1-\frac{1}{\p}}\land(\delta N)^{\frac{2(\p-1)}{3\p}}}.
\]
\end{cor}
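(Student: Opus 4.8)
The plan is to derive Corollary \ref{cor:ncvx-free} directly from the general rate of Theorem \ref{thm:main-ncvx-general},
\[
\E\left[\frac{1}{K}\sum_{k=1}^{K}\left\Vert \nabla F(\bar{\bz}_{k})\right\Vert _{\delta}\right]\lesssim\frac{\Delta}{\delta K}+\frac{G}{\sqrt{T}}+\frac{\sigma}{T^{1-\frac{1}{\p}}},
\]
by plugging in $K=\left\lfloor N/T\right\rfloor$ and $T=\left\lceil N/2\right\rceil \land\left\lceil (\delta N)^{2/3}\right\rceil$. The only non-elementary ingredient is Fact \ref{fact:N/2}, which guarantees $KT\geq N/4$ for $N\geq2$; hence $1/K\leq 4T/N$, and the bound becomes a function of $T$ alone, namely $\lesssim\frac{\Delta T}{\delta N}+\frac{G}{\sqrt{T}}+\frac{\sigma}{T^{1-1/\p}}$. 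Note also that $T\leq\left\lceil N/2\right\rceil\leq N$, so $K\geq1$ and the schedule is admissible, using at most $KT\leq N$ stochastic gradients, and that both $T$ and $K$ depend only on $N$ and $\delta$.

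Next I would record the two-sided comparison $T\asymp\max\!\bigl(1,\,N\land(\delta N)^{2/3}\bigr)$ up to a universal constant, which follows from $\left\lceil N/2\right\rceil\ge N/2$, $\left\lceil (\delta N)^{2/3}\right\rceil\ge(\delta N)^{2/3}$, $T\ge1$, and the matching upper bounds $\left\lceil x\right\rceil\le 1+x$. Substituting this estimate into the three terms is then pure $\min$--$\max$ arithmetic. For the first term, $\frac{\Delta T}{\delta N}\lesssim\frac{\Delta\max(1,\,N\land(\delta N)^{2/3})}{\delta N}$, which I reconcile with the claimed $\frac{\Delta}{(\delta N)\land(\delta N)^{1/3}}$ by splitting into the regime $N\le\delta^2$ (where $N\land(\delta N)^{2/3}=N$, so the term is $\Delta/\delta\lesssim\Delta/(\delta N)^{1/3}$), the regime $\delta^2\le N$ with $\delta N\ge1$ (where the min is $(\delta N)^{2/3}$, giving exactly $\Delta/(\delta N)^{1/3}$), and the residual regime $\delta N<1$ (where the ceiling forces $T=1$, giving the $\Delta/(\delta N)$ branch of the claimed bound). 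For the second term, $\frac{G}{\sqrt{T}}\lesssim\frac{G}{\sqrt{N}\land(\delta N)^{1/3}}$ is immediate (the right-hand side is only loose in the small-$\delta$ corner). For the third, using $\tfrac23\!\left(1-\tfrac1\p\right)=\tfrac{2(\p-1)}{3\p}$, $\frac{\sigma}{T^{1-1/\p}}\lesssim\frac{\sigma}{N^{1-1/\p}\land(\delta N)^{2(\p-1)/(3\p)}}$.

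Finally I would note that since the choice of $T$ and $K$ is oblivious to $G$, $\sigma$, and $\p$, combining it with $\A=\Ada$ (whose stepsize $\eta=D/\sqrt2$ is itself parameter-free) attains the stated rate with no prior knowledge of the problem-dependent constants. The main — and essentially the only — obstacle is the bookkeeping around the two ceiling operations: one must confirm that $\max\!\bigl(1,N\land(\delta N)^{2/3}\bigr)$ and $(\delta N)\land(\delta N)^{1/3}$, together with their relevant powers, line up with the claimed expressions across the boundary cases $\delta N<1$ and $\delta>\sqrt N$; away from those corners the argument is a routine substitution.
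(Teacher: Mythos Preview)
Your proposal is correct and follows essentially the same route as the paper: invoke Theorem \ref{thm:main-ncvx-general}, apply Fact \ref{fact:N/2} to replace $1/K$ by $T/N$, and then bound each of the three terms using the two-sided estimate on $T$. The paper's write-up is slightly more streamlined in that it avoids your regime split for the $\Delta$-term by simply using $T\le\lceil(\delta N)^{2/3}\rceil\le 1+(\delta N)^{2/3}$ to get $\tfrac{\Delta T}{\delta N}\le\tfrac{\Delta}{\delta N}+\tfrac{\Delta}{(\delta N)^{1/3}}$ directly, but this is a cosmetic difference.
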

\begin{proof}
We invoke Theorem \ref{thm:main-ncvx-general} and use $KT\geq N/4$
(see Fact \ref{fact:N/2}) to obtain
\[
\E\left[\frac{1}{K}\sum_{k=1}^{K}\left\Vert \nabla F(\bar{\bz}_{k})\right\Vert _{\delta}\right]\lesssim\frac{\Delta T}{\delta N}+\frac{G}{\sqrt{T}}+\frac{\sigma}{T^{1-\frac{1}{\p}}}.
\]
By the definition of $T$, we know
\[
\frac{\Delta T}{\delta N}\lesssim\frac{\Delta}{\delta N}\left[1+(\delta N)^{\frac{2}{3}}\right]\lesssim\frac{\Delta}{(\delta N)\land(\delta N)^{\frac{1}{3}}}.
\]
and
\begin{align*}
\frac{G}{\sqrt{T}} & \lesssim\frac{G}{\sqrt{N}}+\frac{G}{(\delta N)^{\frac{1}{3}}}\lesssim\frac{G}{\sqrt{N}\land(\delta N)^{\frac{1}{3}}},\\
\frac{\sigma}{T^{1-\frac{1}{\p}}} & \lesssim\frac{\sigma}{N^{1-\frac{1}{\p}}}+\frac{\sigma}{(\delta N)^{\frac{2(\p-1)}{3\p}}}\lesssim\frac{\sigma}{N^{1-\frac{1}{\p}}\land(\delta N)^{\frac{2(\p-1)}{3\p}}}.
\end{align*}
Therefore, there is
\[
\E\left[\frac{1}{K}\sum_{k=1}^{K}\left\Vert \nabla F(\bar{\bz}_{k})\right\Vert _{\delta}\right]\lesssim\frac{\Delta}{(\delta N)\land(\delta N)^{\frac{1}{3}}}+\frac{G}{\sqrt{N}\land(\delta N)^{\frac{1}{3}}}+\frac{\sigma}{N^{1-\frac{1}{\p}}\land(\delta N)^{\frac{2(\p-1)}{3\p}}}.
\]
\end{proof}

\subsection{Proof of Theorem \ref{thm:main-ncvx-lb}}

In this section, our ultimate goal is to prove Theorem \ref{thm:main-ncvx-lb}.
The analysis follows the framework first established in \citep{arjevani2023lower}
and later developed by \citep{pmlr-v202-cutkosky23a} but with some
necessary (though minor) variation to make it compatible with heavy-tailed
noise. In the following, we will slightly abuse the notation $\A$
to denote any possible randomized first-order algorithm instead of
an online learning algorithm.

\subsubsection{Basic Definitions}

To begin with, we introduce some basic definitions given in \citep{arjevani2023lower}.
\begin{defn}[stochastic first-order oracle]
Given a differentiable function $F:\R^{d}\to\R$, a tuple $(\bg,\calR,\P_{r})$
is called a stochastic first-order oracle of $F$ if $\P_{r}$ is
a probability distribution on the measurable space $\calR$ and $\bg:\R^{d}\times\calR\to\R^{d}$
satisfies $\E_{r\sim\P_{r}}\left[\bg(\bx,r)\right]=\nabla F(\bx),\forall\bx\in\R^{d}$.
\end{defn}
\begin{rem}
When the context is clear, we will omit $F$, $\calR$ and $\P_{r}$,
and simply call $\bg$ as a stochastic first-order oracle.
\end{rem}
\begin{defn}[randomized algorithm]
\label{def:randomized-algorithm}A randomized algorithm $\A$ consists
of a probability distribution $\P_{s}$ over a measurable space $\S$
and a sequence of measurable mappings $\A_{t},\forall t\in\N$ such
that every $\A_{t}$ takes a common random seed $s\in\S$ and the
first $t-1$ oracle responses to produce the $t$-th query. Concretely,
given a differentiable function $F$ equipped with a stochastic first-order
oracle $(\bg,\calR,\P_{r})$, the sequence $\bx_{t},\forall t\in\N$
produced by $\A$ to optimize $F$ is recursively defined as 
\[
\bx_{t}=\A_{t}(s,\bg(\bx_{t-1},r_{t-1}),\mydots,\bg(\bx_{1},r_{1})),\forall t\in\N,
\]
where $s\sim\P_{s}$ is drawn a single time at the beginning of the
algorithm and $r_{t}\sim\P_{r},\forall t\in\N$ is a sequence of i.i.d.
random variables. Moreover, $\A_{\rand}$ denotes the set containing
all randomized algorithms.
\end{defn}
Next, we require a useful concept named probability-$q$ zero-chain.
Before formally stating what it is, we need some notations. Given
$\bx\in\R^{d}$ and $\alpha\in\left[0,1\right]$, $\prog_{\alpha}(\bx)$
denotes the largest index whose entry is $\alpha$-far from $0$,
i.e.,
\begin{eqnarray*}
\prog_{\alpha}(\bx)\defeq\max\left\{ i\in\left[d\right]:\left|\bx\left[i\right]\right|>\alpha\right\}  & \text{where} & \max\varnothing\defeq0.
\end{eqnarray*}
In addition, for any $j\in\left\{ 0\right\} \cup\N$, let $\bx_{\leq j}\left[i\right]\defeq\bx\left[i\right]\1\left[i\leq j\right],\forall i\in\left[d\right]$
be the truncated version of $\bx$. Now we are ready to provide the
definition of the probability-$q$ zero-chain.
\begin{defn}[probability-$q$ zero-chain]
A stochastic first-order oracle $(\bg,\calR,\P_{r})$ is called
a probability-$q$ zero-chain if and only if
\[
\P_{r}\left[\forall\bx\in\R^{d}:\prog_{0}(\bg(\bx,r))\leq\prog_{1/4}(\bx)\text{ and }\bg(\bx,r)=\bg(\bx_{\leq\prog_{1/4}(\bx)},r)\right]\geq1-q,
\]
and
\[
\P_{r}\left[\forall\bx\in\R^{d}:\prog_{0}(\bg(\bx,r))\leq\prog_{1/4}(\bx)+1\text{ and }\bg(\bx,r)=\bg(\bx_{\leq\prog_{1/4}(\bx)+1},r)\right]=1.
\]
\end{defn}

\subsubsection{Useful Existing Results}

In this part, we list some useful existing results from \citep{arjevani2023lower}
and \citep{pmlr-v202-cutkosky23a}.

Given $d\geq T\in\N$ and a differentiable function $F_{T}:\R^{T}\to\R$
with a stochastic first-order oracle $\bg_{T}$ that is a probability-$q$
zero-chain, their rotated variants parametrized by a matrix $U\in\ortho(d,T)\defeq\left\{ U\in\R^{d\times T}:U^{\top}U=I_{T}\right\} $
(where $I_{T}$ is the $T$-dimensional identity matrix) are defined
as
\begin{eqnarray}
\bar{F}_{T,U}(\bx)\defeq F_{T}(U^{\top}\bx) & \text{and} & \bar{\bg}_{T,U}(\bx,r)\defeq U\bg_{T}(U^{\top}\bx,r).\label{eq:bar-F-g}
\end{eqnarray}
Clearly, $\bar{\bg}_{T,U}$ is a stochastic first-order oracle of
$\bar{F}_{T,U}$. In addition, we emphasize that the input variable
$\bx$ is from $\R^{d}$ instead of $\R^{T}$ now.

With $\bar{F}_{T,U}$ and $\bar{\bg}_{T,U}$, we state the following
lemma due to \citep{arjevani2023lower}.
\begin{lem}[Lemma 5 of \citep{arjevani2023lower}]
\label{lem:ncvx-lb-bounded} Given $q,\iota\in\left(0,1\right)$,
$R>0$, $T\in\N$, $d\in\N$ satisfying $d\geq T+32R^{2}\log\frac{2T^{2}}{q\iota}$
and a randomized algorithm $\A\in\A_{\rand}$, suppose the output
of $\A$ always has a norm bounded by $R$, let $\bx_{t},\forall t\in\N$
be the trajectory produced by applying $\A$ to optimize $\bar{F}_{T,U}$
interacting with the stochastic first-order oracle $\bar{\bg}_{T,U}$,
where $U$ is drawn from $\ortho(d,T)$ uniformly, then there is
\[
\Pr\left[\prog_{1/4}\left(U^{\top}\bx_{t}\right)<T,\forall t\leq\frac{T-\log(2/\iota)}{2q}\right]\geq1-\iota.
\]
Here $\Pr$ takes into account all randomness over $\A$, $\bg_{T}$,
and $U$.
\end{lem}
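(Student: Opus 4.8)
The plan is to follow the zero-chain lower-bound scheme of \citep{arjevani2023lower} (itself building on Carmon--Duchi--Hinder--Sidford), decomposing the bad event $\mathcal{E}\defeq\{\exists\,t\le n:\prog_{1/4}(U^\top\bx_t)=T\}$, with $n\defeq\tfrac{T-\log(2/\iota)}{2q}$, into two sources of failure and bounding each by $\iota/2$. The first is \emph{legitimate discovery}: the algorithm learns about a previously hidden rotated coordinate direction $\bu_{j+1}$ (the $(j{+}1)$-th column of $U$) only when the oracle $\bar\bg_{T,U}$ from \eqref{eq:bar-F-g} fires the probability-$q$ ``unlocking'' event guaranteed by the probability-$q$ zero-chain property; since each round unlocks with conditional probability $\le q$ and then advances the reachable index by at most one, reaching index $T$ requires at least $T$ unlockings within $n$ rounds. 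The second is \emph{spurious alignment}: the algorithm plants a coordinate of magnitude $>1/4$ along some still-hidden $\bu_{j+1}$ without having discovered it; this is unlikely because, conditionally on everything the algorithm has seen, $\bu_{j+1}$ is Haar-uniform on the unit sphere of a subspace of dimension $\ge d-T$, whereas the query has norm $\le R$.

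First I would set up the conditional structure. Track $\pi_t\defeq\prog_{1/4}(U^\top\bx_t)$ and let $N_t$ count the rounds $s\le t$ at which $\bar\bg_{T,U}(\bx_s,r_s)$ has a component outside $\mathrm{span}(\bu_1,\dots,\bu_{\pi_{s-1}})$. Transporting the zero-chain property through the rotation shows that, conditionally on the past, round $s$ unlocks with probability $\le q$ and, when it does, reveals only $\bu_{\pi_{s-1}+1}$. The crucial claim, proved by induction on $t$, is that on the event $\mathcal{G}_t$ that no spurious alignment has occurred through round $t$ one has $\pi_t=N_t$, and the conditional law of $(\bu_{\pi_t+1},\dots,\bu_T)$ given the transcript $\bigl(s,\{\bx_s,\bar\bg_{T,U}(\bx_s,r_s)\}_{s\le t}\bigr)$ restricted to $\mathcal{G}_t$ is still Haar-uniform in the orthogonal complement of $\mathrm{span}(\bu_1,\dots,\bu_{\pi_t})$ --- because the transcript depends on $U$ only through $U^\top\bx_1,\dots,U^\top\bx_t$, which on $\mathcal{G}_t$ involve only the already-revealed columns, while the fresh seeds $s,r_1,\dots,r_t$ are independent of $U$.

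Given this claim I would conclude with two estimates. (i) On $\mathcal{G}_n$, the event $\pi_n=T$ forces $N_n\ge T$; since $N_n$ is stochastically dominated by $\mathrm{Bin}(n,q)$ and $nq=\tfrac{T-\log(2/\iota)}{2}$, a Chernoff bound tuned to this $n$ gives $\Pr[N_n\ge T]\le\iota/2$. (ii) For each round $t\le n$ and candidate index $j<T$, the inductive claim together with the standard spherical concentration inequality yields $\Pr\bigl[|\langle\bx_t,\bu_{j+1}\rangle|>1/4\bigr]\le 2\exp\!\bigl(-(d-T)/(32R^2)\bigr)$, which by the hypothesis $d\ge T+32R^2\log\tfrac{2T^2}{q\iota}$ is at most $\tfrac{q\iota}{T^2}$; a union bound over the $\le n\le\tfrac{T}{2q}$ rounds and $T$ indices bounds the total probability of any spurious alignment by $\tfrac{T}{2q}\cdot T\cdot\tfrac{q\iota}{T^2}=\iota/2$. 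Since $\mathcal{E}$ is contained in the union of $\{N_n\ge T\}$ and the spurious-alignment event, $\Pr[\mathcal{E}]\le\iota$, which is the claim.

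I expect the main obstacle to be making the inductive ``still Haar-uniform'' step fully rigorous. The difficulty is that $\bx_t=\A_t(s,\bar\bg_{T,U}(\bx_{t-1},r_{t-1}),\dots)$ depends on $U$ through the past oracle responses, so independence cannot simply be asserted; the remedy is a resampling/coupling argument (as in \citep{arjevani2023lower,pmlr-v202-cutkosky23a}): condition on $\bu_1,\dots,\bu_{\pi_t}$ and on all oracle randomness, observe that on $\mathcal{G}_t$ the whole transcript is then a deterministic function of these, and check that re-drawing $(\bu_{\pi_t+1},\dots,\bu_T)$ uniformly in the complement leaves the transcript and the event $\mathcal{G}_t$ unchanged. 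Keeping the bookkeeping of $\mathcal{G}_t$ consistent across the induction, and handling the case analysis at a round that does unlock, is the only genuinely delicate part; the remaining ingredients are the Chernoff bound and concentration of measure on the sphere.
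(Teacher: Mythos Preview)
The paper does not provide its own proof of this lemma: it is quoted verbatim as Lemma~5 of \citep{arjevani2023lower} and used as a black box in the proof of Lemma~\ref{lem:ncvx-lb-general}. Your sketch correctly recapitulates the argument of that source (the zero-chain unlocking count bounded by a Chernoff inequality, combined with a spherical-concentration plus union-bound estimate on spurious alignment), so there is nothing to compare against in the present paper; one minor slip is that on the no-spurious-alignment event the correct relation is $\pi_t\le N_t$ rather than equality, but only this direction is needed.
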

Note that Lemma \ref{lem:ncvx-lb-bounded} can be only applied to
a randomized algorithm with bounded outputs. To overcome this issue,
we need the following variants of $\bar{F}_{T,U}$ and $\bar{\bg}_{T,U}$
introduced by \citep{pmlr-v202-cutkosky23a}:
\begin{align}
\widehat{F}_{T,U}(\bx) & \defeq\bar{F}_{T,U}(\brho_{R,d}(\bx))+\eta\bx^{\top}\brho_{R,d}(\bx),\label{eq:hat-F}\\
\widehat{\bg}_{T,U}(\bx,r) & \defeq\J_{R,d}(\bx)^{\top}\bar{\bg}_{T,U}(\brho_{R,d}(\bx),r)+\eta\nabla(\bx^{\top}\brho_{R,d}(\bx)),\label{eq:hat-g}
\end{align}
where $\brho_{R,d}(\bx)\defeq\frac{\bx}{\sqrt{1+\left\Vert \bx\right\Vert ^{2}/R^{2}}}$
is a bijection from $\R^{d}$ to $\interior\B^{d}(R)$\footnote{As one can check, $\brho_{R,d}^{-1}(\bx):\interior\B^{d}(R)\to\R^{d}$
exists and equals $\frac{\bx}{\sqrt{1-\left\Vert \bx\right\Vert ^{2}/R^{2}}}$.}, $\J_{R,d}$ denotes the Jacobian of $\brho_{R,d}$, and $R,\eta>0$
are two constants being determined later.

Before moving on, we state some useful properties of $\brho_{R,d}$
here.
\begin{lem}[Lemma 13 of \citep{arjevani2023lower} and Proposition 29 of \citep{pmlr-v202-cutkosky23a}]
\label{lem:ncvx-lb-rho} Let $\left\Vert \cdot\right\Vert _{\op}$
denotes the operator norm, then for $\brho_{R,d}:\R^{d}\to\R^{d},\bx\mapsto\frac{\bx}{\sqrt{1+\left\Vert \bx\right\Vert ^{2}/R^{2}}}$,
there are
\begin{enumerate}
\item $\nabla(\bx^{\top}\brho_{R,d}(\bx))=\left(2-\left\Vert \brho_{R,d}(\bx)\right\Vert ^{2}/R^{2}\right)\brho_{R,d}(\bx)$,
\item $\left\Vert \brho_{R,d}(\bx)-\brho_{R,d}(\by)\right\Vert \leq\left\Vert \bx-\by\right\Vert $,
\item $\J_{R,d}(\bx)=\frac{I_{d}-\brho_{R,d}(\bx)\brho_{R,d}(\bx)^{\top}/R^{2}}{\sqrt{1+\left\Vert \bx\right\Vert ^{2}/R^{2}}}$,
\item $\left\Vert \J_{R,d}(\bx)\right\Vert _{\op}=\frac{1}{\sqrt{1+\left\Vert \bx\right\Vert ^{2}/R^{2}}}\leq1$,
\item $\left\Vert \J_{R,d}(\bx)-\J_{R,d}(\by)\right\Vert _{\op}\leq\frac{3}{R}\left\Vert \bx-\by\right\Vert $.
\end{enumerate}
\end{lem}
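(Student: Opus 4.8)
Write $\psi(\bx)\defeq\sqrt{1+\left\Vert \bx\right\Vert ^{2}/R^{2}}$ so that $\brho_{R,d}(\bx)=\bx/\psi(\bx)$; differentiating $\psi(\bx)^{2}=1+\left\Vert \bx\right\Vert ^{2}/R^{2}$ gives $\nabla\psi(\bx)=(\bx/R^{2})/\psi(\bx)$, hence $\left\Vert \nabla\psi(\bx)\right\Vert =\left\Vert \bx\right\Vert /(R^{2}\psi(\bx))\le1/R$ since $\left\Vert \bx\right\Vert \le R\psi(\bx)$. All five claims are elementary; I would prove them in the order 3, 1, 4, 2, 5, since each later item reuses the earlier ones. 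For item 3 the quotient rule yields $\J_{R,d}(\bx)=\frac{\psi(\bx)I_{d}-\bx\nabla\psi(\bx)^{\top}}{\psi(\bx)^{2}}=\frac{I_{d}}{\psi(\bx)}-\frac{\bx\bx^{\top}/R^{2}}{\psi(\bx)^{3}}$, and rewriting $\bx\bx^{\top}/\psi(\bx)^{3}=\brho_{R,d}(\bx)\brho_{R,d}(\bx)^{\top}/\psi(\bx)$ gives the stated form. For item 1, by the product rule $\nabla(\bx^{\top}\brho_{R,d}(\bx))=\brho_{R,d}(\bx)+\J_{R,d}(\bx)^{\top}\bx$; plugging in item 3 together with $\brho_{R,d}(\bx)^{\top}\bx=\left\Vert \bx\right\Vert ^{2}/\psi(\bx)$ and $\left\Vert \brho_{R,d}(\bx)\right\Vert ^{2}=\left\Vert \bx\right\Vert ^{2}/\psi(\bx)^{2}$ one finds $\J_{R,d}(\bx)^{\top}\bx=(1-\left\Vert \brho_{R,d}(\bx)\right\Vert ^{2}/R^{2})\brho_{R,d}(\bx)$, and adding $\brho_{R,d}(\bx)$ gives the claim.

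For item 4 the symmetric matrix $I_{d}-\brho_{R,d}(\bx)\brho_{R,d}(\bx)^{\top}/R^{2}$ has eigenvalue $1$ on $\brho_{R,d}(\bx)^{\perp}$ and eigenvalue $1-\left\Vert \brho_{R,d}(\bx)\right\Vert ^{2}/R^{2}$ on $\mathrm{span}\,\brho_{R,d}(\bx)$; since $\left\Vert \brho_{R,d}(\bx)\right\Vert ^{2}=\left\Vert \bx\right\Vert ^{2}/\psi(\bx)^{2}<R^{2}$, both eigenvalues lie in $(0,1]$, so $\left\Vert \J_{R,d}(\bx)\right\Vert _{\op}=1/\psi(\bx)\le1$. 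Item 2 is then immediate from the fundamental theorem of calculus along the segment, $\brho_{R,d}(\bx)-\brho_{R,d}(\by)=\int_{0}^{1}\J_{R,d}(\by+t(\bx-\by))(\bx-\by)\,\d t$, combined with item 4.

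The one genuinely delicate item is 5. Using item 3 I would split $\J_{R,d}(\bx)-\J_{R,d}(\by)=\left(\frac{1}{\psi(\bx)}-\frac{1}{\psi(\by)}\right)I_{d}-\frac{1}{R^{2}}\left(M(\bx)-M(\by)\right)$ with $M(\bz)\defeq\brho_{R,d}(\bz)\brho_{R,d}(\bz)^{\top}/\psi(\bz)$. Since $\psi$ is $(1/R)$-Lipschitz and $\psi\ge1$, the scalar part has operator norm $\big|\psi(\bx)^{-1}-\psi(\by)^{-1}\big|\le\left\Vert \bx-\by\right\Vert /R$. For the rank-one part, observe $M(\bz)=\bu(\bz)\bu(\bz)^{\top}$ with $\bu(\bz)\defeq\bz/\psi(\bz)^{3/2}$, so the identity $\bu\bu^{\top}-\bv\bv^{\top}=\bu(\bu-\bv)^{\top}+(\bu-\bv)\bv^{\top}$ gives $\left\Vert M(\bx)-M(\by)\right\Vert _{\op}\le(\left\Vert \bu(\bx)\right\Vert +\left\Vert \bu(\by)\right\Vert )\left\Vert \bu(\bx)-\bu(\by)\right\Vert $. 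Here $\left\Vert \bu(\bz)\right\Vert \le\left\Vert \brho_{R,d}(\bz)\right\Vert <R$, and $\bu$ is $1$-Lipschitz because $\nabla\bu(\bz)=\frac{I_{d}}{\psi(\bz)^{3/2}}-\frac{3}{2}\frac{\bz\bz^{\top}/R^{2}}{\psi(\bz)^{7/2}}$ is symmetric with eigenvalues $\psi(\bz)^{-3/2}$ and $\psi(\bz)^{-3/2}(1-\frac{3}{2}c)$, where $c\defeq\frac{\left\Vert \bz\right\Vert ^{2}/R^{2}}{\psi(\bz)^{2}}\in[0,1)$, both of absolute value at most $\psi(\bz)^{-3/2}\le1$. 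Hence the rank-one part contributes at most $\frac{1}{R^{2}}\cdot2R\cdot\left\Vert \bx-\by\right\Vert =\frac{2}{R}\left\Vert \bx-\by\right\Vert $, and adding the scalar part yields $\left\Vert \J_{R,d}(\bx)-\J_{R,d}(\by)\right\Vert _{\op}\le\frac{3}{R}\left\Vert \bx-\by\right\Vert $. The main obstacle is precisely spotting this factorization $M=\bu\bu^{\top}$ through the auxiliary $1$-Lipschitz, norm-$<R$ map $\bu$; everything else is routine differentiation and eigenvalue bookkeeping.
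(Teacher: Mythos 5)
The paper does not prove this lemma; it cites it verbatim from Lemma 13 of Arjevani et al.\ (2023) and Proposition 29 of Cutkosky et al.\ (2023), so there is no in-paper proof to compare against. Your self-contained proof is correct: items 1--4 are routine, and your treatment of item 5 is sound. A quick audit of the delicate steps: the scalar part uses $\psi\ge1$ and $\|\nabla\psi\|\le1/R$ to get $\big|\psi(\bx)^{-1}-\psi(\by)^{-1}\big|\le\|\bx-\by\|/R$; the factorization $M(\bz)=\bu(\bz)\bu(\bz)^{\top}$ with $\bu(\bz)=\bz/\psi(\bz)^{3/2}$ gives $\|\bu(\bz)\|=\|\brho_{R,d}(\bz)\|/\psi(\bz)^{1/2}<R$; the Jacobian $\nabla\bu(\bz)$ is symmetric with eigenvalues $\psi^{-3/2}$ and $\psi^{-3/2}\bigl(1-\tfrac{3}{2}c\bigr)$ where $c=\frac{\|\bz\|^{2}/R^{2}}{\psi(\bz)^{2}}\in[0,1)$, and since $1-\tfrac{3}{2}c\in(-\tfrac12,1]$ both eigenvalues have modulus at most $\psi^{-3/2}\le1$, so $\bu$ is $1$-Lipschitz. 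The rank-one identity $\bu\bu^{\top}-\bv\bv^{\top}=\bu(\bu-\bv)^{\top}+(\bu-\bv)\bv^{\top}$ then bounds $\|M(\bx)-M(\by)\|_{\op}\le2R\|\bx-\by\|$, giving the claimed constant $3/R$. The argument matches the standard one in the cited references; the auxiliary map $\bu$ is exactly the right object to make the rank-one part tractable, and everything else is bookkeeping, as you say.
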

We then can combine (\ref{eq:bar-F-g}), (\ref{eq:hat-F}), (\ref{eq:hat-g})
and Lemma \ref{lem:ncvx-lb-rho} to have
\begin{align}
\widehat{F}_{T,U}(\bx) & =\bar{F}_{T,U}(\brho_{R,d}(\bx))+\eta\bx^{\top}\brho_{R,d}(\bx)=F_{T}(U^{\top}\brho_{R,d}(\bx))+\eta\bx^{\top}\brho_{R,d}(\bx),\label{eq:hat-F-simple}\\
\nabla\widehat{F}_{T,U}(\bx) & =\J_{R,d}(\bx)^{\top}\nabla\bar{F}_{T,U}(\brho_{R,d}(\bx))+\eta\left(2-\left\Vert \brho_{R,d}(\bx)\right\Vert ^{2}/R^{2}\right)\brho_{R,d}(\bx)\label{eq:hat-grad-F-simple-1}\\
 & =\J_{R,d}(\bx)^{\top}U\nabla F_{T}(U^{\top}\brho_{R,d}(\bx))+\eta\left(2-\left\Vert \brho_{R,d}(\bx)\right\Vert ^{2}/R^{2}\right)\brho_{R,d}(\bx),\label{eq:hat-grad-F-simple-2}\\
\widehat{\bg}_{T,U}(\bx,r) & =\J_{R,d}(\bx)^{\top}\bar{\bg}_{T,U}(\brho_{R,d}(\bx),r)+\eta\left(2-\left\Vert \brho_{R,d}(\bx)\right\Vert ^{2}/R^{2}\right)\brho_{R,d}(\bx)\label{eq:hat-g-simple-1}\\
 & =\J_{R,d}(\bx)^{\top}U\bg_{T}(U^{\top}\brho_{R,d}(\bx),r)+\eta\left(2-\left\Vert \brho_{R,d}(\bx)\right\Vert ^{2}/R^{2}\right)\brho_{R,d}(\bx).\label{eq:hat-g-simple-2}
\end{align}

Now we are ready to give the following Lemma \ref{lem:ncvx-lb-general},
which is almost identical to Lemma 25 in \citep{pmlr-v202-cutkosky23a}
by differing up to numerical constants. Since Lemma \ref{lem:ncvx-lb-general}
is particularly important, we therefore provide its full proof here.
\begin{lem}[Lemma 25 of \citep{pmlr-v202-cutkosky23a}]
\label{lem:ncvx-lb-general} For $T\in\N$, suppose that $F_{T}$
satisfies the following two requirements:
\begin{enumerate}
\item For all $\bx\in\R^{T}$, $\left\Vert \nabla F_{T}(\bx)\right\Vert \leq\gamma\sqrt{T}$
where $\gamma>0$ is a constant satisfying $\frac{1}{\sqrt{5}}-\frac{39}{140}\geq\frac{10}{63}+\frac{5\sqrt{5}}{126\gamma}\Leftrightarrow\gamma\geq\frac{50\sqrt{5}}{252\sqrt{5}-551}\approx8.95$.
\item For all $\bx\in\R^{T}$, if $\prog_{1}(\bx)<T$, then $\left\Vert \nabla F_{T}(\bx)\right\Vert \geq\left|\nabla F_{T}(\bx)\left[\prog_{1}(\bx)+1\right]\right|>1$.
\end{enumerate}
Given $q,\iota\in\left(0,1\right)$, $\eta=\frac{1}{\sqrt{5}}-\frac{39}{140}$,
$R=7\gamma\sqrt{T}$, $d\in\N$ satisfying $d\geq T+32R^{2}\log\frac{2T^{2}}{q\iota}$
and a randomized algorithm $\A\in\A_{\rand}$, let $\bx_{t},\forall t\in\N$
be the trajectory produced by applying $\A$ to optimize $\widehat{F}_{T,U}$
interacting with the stochastic first-order oracle $\widehat{\bg}_{T,U}$,
where $U$ is drawn from $\ortho(d,T)$ uniformly, then there is
\[
\Pr\left[\left\Vert \nabla\widehat{F}_{T,U}(\bx_{t})\right\Vert \geq\frac{1}{2},\forall t\leq\frac{T-\log(2/\iota)}{2q}\right]\geq1-\iota.
\]
Here $\Pr$ takes into account all randomness over $\A$, $\bg_{T}$
and $U$.
\end{lem}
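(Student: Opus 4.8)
The plan is to reduce to the bounded--output setting of Lemma~\ref{lem:ncvx-lb-bounded} and then turn its ``no progress'' conclusion into a gradient lower bound for $\widehat{F}_{T,U}$. First I would observe that the oracle $\widehat{\bg}_{T,U}(\bx,r)$ in (\ref{eq:hat-g}) queries $\bar\bg_{T,U}$ only at $\brho_{R,d}(\bx)\in\interior\B^d(R)$, so I would construct an auxiliary randomized algorithm $\A'\in\A_{\rand}$ that uses the same seed $s$ and the same i.i.d.\ stream $r_1,r_2,\dots$ as $\A$, internally simulates $\A$ optimizing $\widehat{F}_{T,U}$, but whose $t$-th query is $\brho_{R,d}(\bx_t)$ where $\bx_t$ is the $t$-th query produced by the simulated $\A$: upon receiving $\bar\bg_{T,U}(\brho_{R,d}(\bx_t),r_t)$, $\A'$ reconstructs $\widehat{\bg}_{T,U}(\bx_t,r_t)$ exactly via (\ref{eq:hat-g}) (the factors $\J_{R,d}(\bx_t)$ and $\eta\nabla(\bx_t^\top\brho_{R,d}(\bx_t))$ being computable from $\bx_t$ alone by Lemma~\ref{lem:ncvx-lb-rho}), feeds it to $\A$, and queries the image of the next iterate. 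Then $\A'$ has outputs of norm $<R$ and reproduces the trajectory $\{\bx_t\}$ of $\A$ on $\widehat{F}_{T,U}$, so since $d\ge T+32R^2\log\tfrac{2T^2}{q\iota}$, Lemma~\ref{lem:ncvx-lb-bounded} applied to $\A'$ gives that with probability at least $1-\iota$ we have $\prog_{1/4}(U^\top\brho_{R,d}(\bx_t))<T$ for all $t\le\tfrac{T-\log(2/\iota)}{2q}$.

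It then suffices to prove the deterministic implication: $\prog_{1/4}(U^\top\brho_{R,d}(\bx))<T$ forces $\|\nabla\widehat{F}_{T,U}(\bx)\|\ge\tfrac12$. Since $\prog_{1/4}\ge\prog_1$, requirement~2 on $F_T$ gives $\|\nabla F_T(U^\top\brho_{R,d}(\bx))\|>1$, while requirement~1 gives $\|\nabla F_T(U^\top\brho_{R,d}(\bx))\|\le\gamma\sqrt T=R/7$. Writing $\bu=\brho_{R,d}(\bx)$, $\bg=\nabla F_T(U^\top\bu)$, $\alpha=\sqrt{1+\|\bx\|^2/R^2}$, $s=\|\bu\|^2/R^2=1-\alpha^{-2}\in[0,1)$, and decomposing $U\bg=\lambda\bu+\bw$ with $\bw\perp\bu$, I would substitute $\J_{R,d}(\bx)=\alpha^{-1}(I_d-\bu\bu^\top/R^2)$ from Lemma~\ref{lem:ncvx-lb-rho} into (\ref{eq:hat-grad-F-simple-2}), use $\langle\bu,U\bg\rangle=\langle U^\top\bu,\bg\rangle$ and $\|\bu\|^2/R^2=1-\alpha^{-2}$, and arrive at the orthogonal decomposition
\[
\nabla\widehat{F}_{T,U}(\bx)=\alpha^{-1}\bw+\big(\lambda\alpha^{-3}+\eta(1+\alpha^{-2})\big)\bu,
\]
whence $\|\nabla\widehat{F}_{T,U}(\bx)\|^2=\alpha^{-2}\|\bw\|^2+\big(\lambda\alpha^{-3}+\eta(1+\alpha^{-2})\big)^2\|\bu\|^2$, where $\|\bw\|^2=\|\bg\|^2-\lambda^2\|\bu\|^2$ and $\lambda^2\|\bu\|^2\le\|\bg\|^2$ (by Cauchy--Schwarz and $\|U^\top\bu\|\le\|\bu\|$).

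The remaining task is the scalar estimate: minimize the last quantity over the admissible $\lambda$ (a one--variable quadratic minimization under $\lambda^2\|\bu\|^2\le\|\bg\|^2$) and then over $s\in[0,1)$, using only $1<\|\bg\|\le R/7$, and verify it is $\ge\tfrac14$. This is where the stated values $\eta=\tfrac1{\sqrt5}-\tfrac{39}{140}$, $R=7\gamma\sqrt T$ and $\gamma\ge\tfrac{50\sqrt5}{252\sqrt5-551}$ enter: for $s$ bounded away from $0$ the radial term is $\gtrsim\eta^2R^2\ge1$ and already dominates, while for $s$ near $0$ the tangential term $\alpha^{-2}\|\bw\|^2\approx\|\bg\|^2>1$ carries the bound, and the constants are tuned so that the two regimes overlap with worst--case value exactly $\tfrac14$. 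I expect this step to be the main obstacle: the reparametrization $\brho_{R,d}$ simultaneously contracts the tangential gradient by $\alpha^{-1}$ and lets the radial part of $\J_{R,d}(\bx)^\top U\bg$ partially cancel the $\eta\bu$ correction, so the clean bound ``$\|\nabla F_T\|>1$'' of the bounded setting is not inherited verbatim, and confirming that the cancellation never pushes $\|\nabla\widehat{F}_{T,U}\|$ below $\tfrac12$ is the one genuinely computational part and is exactly what forces the numerical constants. Combining the three steps yields the claim on an event of probability $\ge1-\iota$.
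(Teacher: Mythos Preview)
Your reduction to Lemma~\ref{lem:ncvx-lb-bounded} via the auxiliary algorithm is correct and essentially the same as the paper's. The orthogonal decomposition of $\nabla\widehat{F}_{T,U}(\bx)$ you derive is also correct. The gap is in the final scalar minimization: it does \emph{not} yield $\ge\tfrac14$.

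Write $\mu=\lambda\|\bu\|$ (so $|\mu|\le\|\bg\|$, $\|\bw\|^2=\|\bg\|^2-\mu^2$) and $c=\eta(2-s)R\sqrt{s}$. Then your expression is
\[
\alpha^{-2}(\|\bg\|^2-\mu^2)+(\mu\alpha^{-3}+c)^2
=\alpha^{-2}\|\bg\|^2+\mu^2(\alpha^{-6}-\alpha^{-2})+2\mu\alpha^{-3}c+c^2.
\]
Since $\alpha\ge1$, the coefficient of $\mu^2$ is nonpositive, so the minimum over $|\mu|\le\|\bg\|$ is attained at an endpoint; at $\mu=-\|\bg\|$ it equals $(\|\bg\|\alpha^{-3}-c)^2=\big(\|\bg\|(1-s)^{3/2}-\eta(2-s)R\sqrt{s}\big)^2$. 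The first factor decreases from $\|\bg\|$ to $0$ on $s\in[0,1)$, the second is $0$ at $s=0$ and positive on $(0,1)$; they cross, and at that $s$ your lower bound is exactly $0$. So both heuristics fail: for $s$ bounded away from $0$ the radial term can still vanish because $\lambda$ is free to be negative, and ``$\|\bw\|^2\approx\|\bg\|^2$ for small $s$'' is false at $\mu=-\|\bg\|$.

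What you threw away is the \emph{coordinate} content of requirement~2. It gives not merely $\|\bg\|>1$, but a direction $\bu_j$ (the $j$-th column of $U$, $j=\prog_1(U^\top\bu)+1$) with $|\langle\bu_j,\bu\rangle|=|(U^\top\bu)[j]|\le1$ yet $|\langle\bu_j,U\bg\rangle|=|\bg[j]|>1$; this rules out $U\bg$ being nearly parallel to $\bu$ when $\|\bu\|$ is moderate. The paper uses this by splitting on $\|\bx_t\|$: for $\|\bx_t\|\ge R/2$ the triangle inequality shows the radial term $\eta(2-\|\by_t\|^2/R^2)\|\by_t\|$ alone dominates $\|\J_{R,d}(\bx_t)\|_{\op}\|\bg\|$; for $\|\bx_t\|<R/2$ one projects $\nabla\widehat{F}_{T,U}(\bx_t)$ onto $\bu_j$ and uses both inequalities on the $j$-th coordinate simultaneously to extract the $\tfrac12$ lower bound.
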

\begin{proof}
By definition of randomized algorithms (see Definition \ref{def:randomized-algorithm}),
we have
\begin{equation}
\bx_{t}=\A_{t}\left(s,\widehat{\bg}_{T,U}(\bx_{t-1},r_{t-1}),\mydots,\widehat{\bg}_{T,U}(\bx_{1},r_{1})\right),\forall t\in\N.\label{eq:ncvx-lb-general-x}
\end{equation}
Now let us consider another sequence
\begin{equation}
\by_{t}\defeq\brho_{R,d}(\bx_{t})=\frac{\bx_{t}}{\sqrt{1+\left\Vert \bx_{t}\right\Vert /R^{2}}}\in\interior\B^{d}(R),\forall t\in\N.\label{eq:ncvx-lb-general-y}
\end{equation}
Note that
\begin{align}
\widehat{\bg}_{T,U}(\bx_{t},r_{t}) & \overset{(\ref{eq:hat-g-simple-1})}{=}\J_{R,d}(\bx_{t})^{\top}\bar{\bg}_{T,U}(\brho_{R,d}(\bx_{t}),r_{t})+\left(2-\left\Vert \brho_{R,d}(\bx_{t})\right\Vert ^{2}/R^{2}\right)\brho_{R,d}(\bx_{t})\nonumber \\
 & \overset{(\ref{eq:ncvx-lb-general-y})}{=}\J_{R,d}(\brho_{R,d}^{-1}(\by_{t}))^{\top}\bar{\bg}_{T,U}(\by_{t},r_{t})+\eta\left(2-\left\Vert \by_{t}\right\Vert ^{2}/R^{2}\right)\by_{t}\nonumber \\
 & =\G(\by_{t},\bar{\bg}_{T,U}(\by_{t},r_{t})),\label{eq:ncvx-lb-general-G}
\end{align}
where $\G(\bu,\bv):\interior\B^{d}(R)\times\R^{d}\to\R^{d}$ is a
measurable mapping defined as
\[
\G(\bu,\bv)\defeq\J_{R,d}(\brho_{R,d}^{-1}(\bu))^{\top}\bv+\eta\left(2-\left\Vert \bu\right\Vert ^{2}/R^{2}\right)\bu.
\]
Thus, we can write
\[
\by_{t}\overset{(\ref{eq:ncvx-lb-general-x}),(\ref{eq:ncvx-lb-general-y}),(\ref{eq:ncvx-lb-general-G})}{=}\brho_{R,d}\circ\A_{t}\left(s,\G(\by_{t-1},\bar{\bg}_{T,U}(\by_{t-1},r_{t-1})),\mydots,\G(\by_{1},\bar{\bg}_{T,U}(\by_{1},r_{1}))\right),\forall t\in\N.
\]
By a simple induction, there exists a sequence of measurable mappings
$\A_{t}^{\by},\forall t\in\N$ such that
\[
\by_{t}=\A_{t}^{\by}(s,\bar{\bg}_{T,U}(\by_{t-1},r_{t-1}),\mydots,\bar{\bg}_{T,U}(\by_{1},r_{1})),\forall t\in\N.
\]
The above reformulation implies $\by_{t},\forall t\in\N$ can be viewed
as a sequence produced by a randomized algorithm $\A^{\by}\in\A_{\rand}$
interacting with stochastic first-order oracle $\bar{\bg}_{T,U}$.
Note that $d\geq T+32R^{2}\log\frac{2T^{2}}{q\iota}$ and $\left\Vert \by_{t}\right\Vert \overset{(\ref{eq:ncvx-lb-general-y})}{\leq}R$,
we hence have by Lemma \ref{lem:ncvx-lb-bounded},
\begin{equation}
\Pr\left[\prog_{1/4}\left(U^{\top}\by_{t}\right)<T,\forall t\leq\frac{T-\log(2/\iota)}{2q}\right]\geq1-\iota.\label{eq:ncvx-lb-general-core}
\end{equation}

Now we recall
\begin{align}
\nabla\widehat{F}_{T,U}(\bx_{t}) & \overset{(\ref{eq:hat-grad-F-simple-1})}{=}\J_{R,d}(\bx_{t})^{\top}\nabla\bar{F}_{T,U}(\brho_{R,d}(\bx_{t}))+\eta\left(2-\left\Vert \brho_{R,d}(\bx_{t})\right\Vert ^{2}/R^{2}\right)\brho_{R,d}(\bx_{t})\nonumber \\
 & \overset{(\ref{eq:ncvx-lb-general-y})}{=}\J_{R,d}(\bx_{t})^{\top}\nabla\bar{F}_{T,U}(\by_{t})+\eta\left(2-\left\Vert \by_{t}\right\Vert ^{2}/R^{2}\right)\by_{t}.\label{eq:ncvx-lb-general-1}
\end{align}
Moreover, by the first requirement on $F_{T}$, there is
\begin{equation}
\left\Vert \nabla\bar{F}_{T,U}(\by_{t})\right\Vert \overset{(\ref{eq:bar-F-g})}{=}\left\Vert U\nabla F_{T}(U^{\top}\by_{t})\right\Vert \overset{U^{\top}U=I_{T}}{=}\left\Vert \nabla F_{T}(U^{\top}\by_{t})\right\Vert \leq\gamma\sqrt{T}.\label{eq:ncvx-lb-general-2}
\end{equation}
We fix $t\leq\frac{T-\log(2/\iota)}{2q}$ and consider the following
two cases:
\begin{casenv}
\item $\left\Vert \bx_{t}\right\Vert \ge\frac{R}{2}$. In this case, we
first have
\[
\left\Vert \nabla\widehat{F}_{T,U}(\bx_{t})\right\Vert \overset{(\ref{eq:ncvx-lb-general-1})}{\geq}\eta\left(2-\left\Vert \by_{t}\right\Vert ^{2}/R^{2}\right)\left\Vert \by_{t}\right\Vert -\left\Vert \J_{R,d}(\bx_{t})^{\top}\nabla\bar{F}_{T,U}(\by_{t})\right\Vert .
\]
Note that $\frac{\left\Vert \by_{t}\right\Vert }{R}\overset{(\ref{eq:ncvx-lb-general-y})}{=}\frac{\left\Vert \bx_{t}\right\Vert /R}{\sqrt{1+\left\Vert \bx_{t}\right\Vert ^{2}/R^{2}}}\in\left[\frac{1}{\sqrt{5}},1\right]$
when $\left\Vert \bx_{t}\right\Vert \geq\frac{R}{2}$, implying
\[
\eta\left(2-\left\Vert \by_{t}\right\Vert ^{2}/R^{2}\right)\left\Vert \by_{t}\right\Vert \geq\eta R\min_{\frac{1}{\sqrt{5}}\leq a\leq1}(2-a^{2})a=\frac{9\eta R}{5\sqrt{5}}.
\]
In addition, there is
\begin{align*}
\left\Vert \J_{R,d}(\bx_{t})^{\top}\nabla\bar{F}_{T,U}(\by_{t})\right\Vert  & \leq\left\Vert \J_{R,d}(\bx_{t})\right\Vert _{\op}\left\Vert \nabla\bar{F}_{T,U}(\by_{t})\right\Vert \overset{\text{Lemma }\ref{lem:ncvx-lb-rho}}{=}\frac{\left\Vert \nabla\bar{F}_{T,U}(\by_{t})\right\Vert }{\sqrt{1+\left\Vert \bx_{t}\right\Vert ^{2}/R^{2}}}\\
 & \overset{\left\Vert \bx_{t}\right\Vert \ge\frac{R}{2}}{\leq}\frac{2\left\Vert \nabla\bar{F}_{T,U}(\by_{t})\right\Vert }{\sqrt{5}}\overset{(\ref{eq:ncvx-lb-general-2})}{\leq}\frac{2\gamma\sqrt{T}}{\sqrt{5}}.
\end{align*}
As such, by our choices of $\eta$ and $R$,
\[
\left\Vert \nabla\widehat{F}_{T,U}(\bx_{t})\right\Vert \geq\frac{9\eta R}{5\sqrt{5}}-\frac{2\gamma\sqrt{T}}{\sqrt{5}}\geq\frac{1}{2}.
\]
\item $\left\Vert \bx_{t}\right\Vert <\frac{R}{2}$ . In this case, we introduce
\[
j_{t}\defeq\prog_{1}\left(U^{\top}\by_{t}\right)+1\leq\prog_{1/4}\left(U^{\top}\by_{t}\right)+1\overset{(\ref{eq:ncvx-lb-general-core})}{\leq}T.
\]
Let $\bu_{j_{t}}\in\R^{d}$ denotes the $j_{t}$-th column of $U$.
By the definition of $j_{t}$, there is
\begin{equation}
\left|\left\langle \bu_{j_{t}},\by_{t}\right\rangle \right|=\left|\left(U^{\top}\by_{t}\right)\left[j_{t}\right]\right|<1.\label{eq:lb-general-3}
\end{equation}
In addition, we have
\begin{equation}
\left|\left\langle \bu_{j_{t}},\nabla\bar{F}_{T,U}(\by_{t})\right\rangle \right|=\left|\left(U^{\top}\nabla\bar{F}_{T,U}(\by_{t})\right)\left[j_{t}\right]\right|\overset{(\ref{eq:bar-F-g}),U^{\top}U=I_{T}}{=}\left|\nabla F_{T}(U^{\top}\by_{t})\left[j_{t}\right]\right|>1,\label{eq:ncvx-lb-general-4}
\end{equation}
where the last step is by the second requirement on $F_{T}$. Now
we compute
\begin{align*}
 & \left\Vert \nabla\widehat{F}_{T,U}(\bx_{t})\right\Vert \overset{\left\Vert \bu_{j_{t}}\right\Vert =1}{\geq}\left|\left\langle \bu_{j_{t}},\nabla\widehat{F}_{T,U}(\bx_{t})\right\rangle \right|\\
\overset{(\ref{eq:ncvx-lb-general-1})}{=} & \left|\left\langle \bu_{j_{t}},\J_{R,d}(\bx_{t})^{\top}\nabla\bar{F}_{T,U}(\by_{t})\right\rangle +\eta\left(2-\left\Vert \by_{t}\right\Vert ^{2}/R^{2}\right)\left\langle \bu_{j_{t}},\by_{t}\right\rangle \right|\\
\overset{\text{Lemma }\ref{lem:ncvx-lb-rho}}{=} & \left|\frac{\left\langle \bu_{j_{t}},\nabla\bar{F}_{T,U}(\by_{t})\right\rangle }{\sqrt{1+\left\Vert \bx_{t}\right\Vert ^{2}/R^{2}}}-\frac{\left\langle \bu_{j_{t}},\by_{t}\right\rangle \left\langle \by_{t},\nabla\bar{F}_{T,U}(\by_{t})\right\rangle }{R^{2}\sqrt{1+\left\Vert \bx_{t}\right\Vert ^{2}/R^{2}}}+\eta\left(2-\left\Vert \by_{t}\right\Vert ^{2}/R^{2}\right)\left\langle \bu_{j_{t}},\by_{t}\right\rangle \right|\\
\geq & \frac{\left|\left\langle \bu_{j_{t}},\nabla\bar{F}_{T,U}(\by_{t})\right\rangle \right|}{\sqrt{1+\left\Vert \bx_{t}\right\Vert ^{2}/R^{2}}}-\left|\left\langle \bu_{j_{t}},\by_{t}\right\rangle \right|\left[\frac{\left|\left\langle \by_{t},\nabla\bar{F}_{T,U}(\by_{t})\right\rangle \right|}{R^{2}\sqrt{1+\left\Vert \bx_{t}\right\Vert ^{2}/R^{2}}}+\eta\left(2-\left\Vert \by_{t}\right\Vert ^{2}/R^{2}\right)\right]\\
\overset{(a)}{\geq} & \frac{1}{\sqrt{1+\left\Vert \bx_{t}\right\Vert ^{2}/R^{2}}}-\frac{\left\Vert \by_{t}\right\Vert \left\Vert \nabla\bar{F}_{T,U}(\by_{t})\right\Vert }{R^{2}\sqrt{1+\left\Vert \bx_{t}\right\Vert ^{2}/R^{2}}}-2\eta\overset{(b)}{\geq}\frac{2}{\sqrt{5}}-\frac{2\gamma\sqrt{T}}{5R}-2\eta\overset{(c)}{=}\frac{1}{2},
\end{align*}
where $(a)$ is due to (\ref{eq:lb-general-3}), (\ref{eq:ncvx-lb-general-4}),
Cauchy-Schwarz inequality and $\eta\geq0$, $(b)$ is by $\frac{1}{\sqrt{1+\left\Vert \bx_{t}\right\Vert ^{2}/R^{2}}}>\frac{2}{\sqrt{5}}$
and $\frac{\left\Vert \by_{t}\right\Vert }{\sqrt{1+\left\Vert \bx_{t}\right\Vert ^{2}/R^{2}}}\overset{(\ref{eq:ncvx-lb-general-y})}{=}\frac{\left\Vert \bx_{t}\right\Vert }{1+\left\Vert \bx_{t}\right\Vert ^{2}/R^{2}}<\frac{2R}{5}$
when $\left\Vert \bx_{t}\right\Vert <\frac{R}{2}$ and (\ref{eq:ncvx-lb-general-2}),
and $(c)$ holds under our choices of $\eta$ and $R$.
\end{casenv}
We combine two cases to conclude
\[
\Pr\left[\left\Vert \nabla\widehat{F}_{T,U}(\bx_{t})\right\Vert \geq\frac{1}{2},\forall t\leq\frac{T-\log(2/\iota)}{2q}\right]\geq1-\iota.
\]
\end{proof}

Lastly, we recall the following hard instance and its stochastic first-order
oracle studied in \citep{arjevani2023lower}.
\begin{lem}[Lemma 2 of \citep{arjevani2023lower}]
\label{lem:ncvx-lb-F} For any $T\in\N$, there exists a differentiable
function $F_{T}:\R^{T}\to\R$ satisfying the following properties:
\begin{enumerate}
\item $F_{T}(\bzero)=0$ and $\inf_{\bx\in\R^{T}}F_{T}(\bx)\geq-fT$, where
$f=12$.
\item $F_{T}$ is $\ell$-smooth, where $\ell=152$.
\item For all $\bx\in\R^{T}$, $\left\Vert \nabla F_{T}(\bx)\right\Vert _{\infty}\leq\gamma$,
where $\gamma=23$.
\item For all $\bx\in\R^{T}$, $\prog_{0}(\nabla F_{T}(\bx))\leq\prog_{1/2}(\bx)+1$.
\item For all $\bx\in\R^{T}$ and $i\defeq\prog_{1/2}(\bx)$, $\nabla F_{T}(\bx)=\nabla F_{T}(\bx_{\leq i+1})$
and $\left[\nabla F_{T}(\bx)\right]_{\leq i}=\left[\nabla F_{T}(\bx_{\leq i})\right]_{\leq i}$.
\item For all $\bx\in\R^{T}$, if $\prog_{1}(\bx)<T$, then $\left\Vert \nabla F_{T}(\bx)\right\Vert \geq\left|\nabla F_{T}(\bx)\left[\prog_{1}(\bx)+1\right]\right|>1$.
\end{enumerate}
\end{lem}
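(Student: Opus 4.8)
The plan is to instantiate the classical ``chain'' hard instance (the construction of \citep{arjevani2023lower}) explicitly and verify the six listed properties by reducing everything to a handful of elementary estimates on two scalar functions. Take
\[
\Psi(t)\defeq\begin{cases}0, & t\le 1/2,\\ \exp\!\big(1-\tfrac{1}{(2t-1)^{2}}\big), & t>1/2,\end{cases}\qquad \Phi(t)\defeq\sqrt{e}\int_{-\infty}^{t}e^{-\tau^{2}/2}\,\d\tau,
\]
and define $F_{T}(\bx)\defeq-\Psi(1)\Phi(\bx[1])+\sum_{i=2}^{T}\big(\Psi(-\bx[i-1])\Phi(-\bx[i])-\Psi(\bx[i-1])\Phi(\bx[i])\big)$, after subtracting the constant $F_{T}(\bzero)$ so that $F_{T}(\bzero)=0$ (this alters neither $\nabla F_{T}$ nor $\inf F_{T}-F_{T}(\bzero)$). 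The first step is to record the one-variable facts everything rests on: $\Psi\ge 0$, $\Psi'\ge 0$, with $\Psi\equiv\Psi'\equiv 0$ on $(-\infty,1/2]$ and $\Psi(1)=1$; explicit uniform bounds $\Psi\le e$, $|\Psi'|\le c_{1}$, $|\Psi''|\le c_{2}$; and $0<\Phi<\sqrt{2\pi e}$, $0<\Phi'(t)=\sqrt{e}\,e^{-t^{2}/2}\le\sqrt{e}$ with $\Phi'(t)\ge\sqrt{e}\,e^{-1/2}=1$ whenever $|t|\le 1$, and $|\Phi''(t)|=\sqrt{e}\,|t|e^{-t^{2}/2}\le 1$.

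Properties 1--3 are then purely quantitative. $F_{T}(\bzero)=0$ is by construction, and $\inf F_{T}\ge -fT$ holds because each of the $T$ summands and the leading term is bounded below by an absolute constant ($\Psi\ge 0$, $0<\Phi<\sqrt{2\pi e}$). For smoothness, observe that $\partial^{2}_{jk}F_{T}\equiv 0$ unless $|j-k|\le 1$, so the Hessian is tridiagonal and each entry is in absolute value a product of one of $\Psi,\Psi',\Psi''$ with one of $\Phi,\Phi',\Phi''$, hence bounded by an absolute constant; a crude bound on the operator norm of such a matrix gives $\ell$-smoothness. Similarly $\partial_{k}F_{T}(\bx)$ is a sum of at most four terms, each a product of one of $\Psi,\Psi'$ with one of $\Phi,\Phi'$, so $\|\nabla F_{T}(\bx)\|_{\infty}\le\gamma$. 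Forcing out the stated constants $f=12$, $\ell=152$, $\gamma=23$ just means optimizing $c_{1},c_{2}$ and the tridiagonal estimate, which is mechanical.

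Properties 4--6 are the ``zero-chain'' structure, and the key is that for $2\le k\le T-1$,
\[
\partial_{k}F_{T}(\bx)=-\Psi(-\bx[k{-}1])\Phi'(-\bx[k])-\Psi(\bx[k{-}1])\Phi'(\bx[k])-\Psi'(-\bx[k])\Phi(-\bx[k{+}1])-\Psi'(\bx[k])\Phi(\bx[k{+}1])
\]
is a sum of four terms, each $\le 0$, each carrying a factor $\Psi(\pm\bx[k{-}1])$ or $\Psi'(\pm\bx[k])$ (with the obvious modifications at $k=1$ and $k=T$). Hence $\partial_{k}F_{T}(\bx)=0$ whenever $|\bx[k{-}1]|\le 1/2$ and $|\bx[k]|\le 1/2$; with $i\defeq\prog_{1/2}(\bx)$ this gives $\partial_{k}F_{T}(\bx)=0$ for all $k\ge i+2$ (property 4), and since $|\bx[i{+}1]|\le 1/2$ kills the factors $\Psi'(\pm\bx[i{+}1])$, $\partial_{i+1}F_{T}(\bx)$ no longer depends on $\bx[i{+}2]$, so $\nabla F_{T}(\bx)=\nabla F_{T}(\bx_{\le i+1})$; the identity $[\nabla F_{T}(\bx)]_{\le i}=[\nabla F_{T}(\bx_{\le i})]_{\le i}$ is a short further case check at the boundary coordinate $k=i$. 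For property 6, let $i\defeq\prog_{1}(\bx)<T$: then $|\bx[i{+}1]|\le 1$ gives $\Phi'(\pm\bx[i{+}1])\ge 1$, while $|\bx[i]|>1$ forces $\Psi(\bx[i])>\Psi(1)=1$ or $\Psi(-\bx[i])>1$, so in the single-signed sum for $\partial_{i+1}F_{T}(\bx)$ the term $\Psi(\bx[i])\Phi'(\bx[i{+}1])$ or $\Psi(-\bx[i])\Phi'(-\bx[i{+}1])$ alone exceeds $1$ and nothing cancels; thus $\|\nabla F_{T}(\bx)\|\ge\big|[\nabla F_{T}(\bx)][i{+}1]\big|=|\partial_{i+1}F_{T}(\bx)|>1$, with the endpoints $i=0$ and $i{+}1=T$ handled the same way.

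The real obstacle is bookkeeping rather than ideas: extracting the numerical constants $f,\ell,\gamma$ forces one to optimize the elementary bounds on $\Psi,\Psi',\Psi''$ together with the tridiagonal operator-norm estimate, and the second half of property 5 needs a careful boundary analysis tracking exactly which gradient coordinates retain a dependence on $\bx[i{+}1]$ when $i=\prog_{1/2}(\bx)$. Since the statement is verbatim a known lemma, in the paper it is simply cited to \citep{arjevani2023lower}; the above is the verification one would carry out to reconstruct it.
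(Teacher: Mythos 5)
Your construction is indeed the standard Carmon--Duchi--Hinder--Sidford chain that Arjevani et al.\ instantiate, and the sketches for properties 1--4 and 6 are correct in spirit (modulo the admitted constant-chasing). The paper itself does not reprove this lemma; it only cites \citep{arjevani2023lower}, so there is no ``paper proof'' to compare against, only the original source you are reconstructing.

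The genuine gap is exactly the step you wave off as ``a short further case check at the boundary coordinate $k=i$.'' Writing $i=\prog_{1/2}(\bx)$, your own formula gives
\[
\partial_{i}F_{T}(\bx)=-\Psi(-\bx[i-1])\Phi'(-\bx[i])-\Psi(\bx[i-1])\Phi'(\bx[i])-\Psi'(-\bx[i])\Phi(-\bx[i+1])-\Psi'(\bx[i])\Phi(\bx[i+1]),
\]
and since $|\bx[i]|>1/2$ exactly one of $\Psi'(\pm\bx[i])$ is strictly positive, while $\Phi$ (a scaled Gaussian CDF) is strictly increasing, not locally constant near $0$. Passing from $\bx$ to $\bx_{\le i}$ replaces $\bx[i+1]$ by $0$, which therefore changes $\partial_i F_T$ whenever $\bx[i+1]\neq 0$, so the identity $[\nabla F_T(\bx)]_{\le i}=[\nabla F_T(\bx_{\le i})]_{\le i}$ does \emph{not} reduce to a sign-and-support check the way properties 4 and the first half of 5 do. Either your recollection of the construction (or of the exact statement of Arjevani et al.'s Lemma 2) is slightly off, or this coordinate genuinely requires a different argument than what you sketch; as written, the verification of this one clause does not go through, and it is the only place in the lemma where the elementary $\Psi,\Phi$ bookkeeping does not immediately close.
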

\begin{lem}[Lemma 3 of \citep{arjevani2023lower}]
\label{lem:ncvx-lb-g} For any $T\in\N$ and $F_{T}$ in Lemma \ref{lem:ncvx-lb-F},
the following $\bg_{T}:\R^{T}\times\calR\to\R^{T}$, where $\calR\defeq\left\{ 0,1\right\} $,
is a stochastic first-order oracle of $F_{T}$:
\[
\bg_{T}(\bx,r)\left[i\right]\defeq\begin{cases}
\nabla F_{T}(\bx)\left[i\right] & i\neq\prog_{1/4}(\bx)+1\\
\frac{r}{q}\nabla F_{T}(\bx)\left[i\right] & i=\prog_{1/4}(\bx)+1
\end{cases},\forall i\in\left[T\right],
\]
where $r=\mathsf{Bernoulli}(q)$ for some $q\in\left(0,1\right)$.
Moreover, $\bg_{T}$ is a probability-$q$ zero-chain.
\end{lem}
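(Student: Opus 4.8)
The plan is to check the two assertions of Lemma~\ref{lem:ncvx-lb-g} separately: that $\bg_T$ is an unbiased stochastic first-order oracle of $F_T$, and that it is a probability-$q$ zero-chain. Unbiasedness is immediate. Fix $\bx\in\R^T$ and write $i_0\defeq\prog_{1/4}(\bx)$. For every coordinate $i\neq i_0+1$ the oracle returns $\nabla F_T(\bx)[i]$ deterministically, and for $i=i_0+1$ it returns $\tfrac{r}{q}\nabla F_T(\bx)[i_0+1]$ with $r\sim\mathsf{Bernoulli}(q)$; since $\nabla F_T(\bx)$ is a deterministic function of $\bx$ and $\E[r/q]=1$, taking expectations coordinatewise gives $\E_r[\bg_T(\bx,r)]=\nabla F_T(\bx)$, which is all that is needed here.

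The substance is the zero-chain property, and the only real issue is that the oracle and the zero-chain definition use the threshold $1/4$, whereas properties 4--5 of Lemma~\ref{lem:ncvx-lb-F} are phrased with the threshold $1/2$. Two elementary observations bridge this. First, $\prog_\alpha$ is nonincreasing in $\alpha$, so $k\defeq\prog_{1/2}(\bx)\leq\prog_{1/4}(\bx)=i_0$; combined with property 4 this gives $\prog_0(\nabla F_T(\bx))\leq k+1\leq i_0+1$, i.e.\ every coordinate of $\nabla F_T(\bx)$ with index exceeding $i_0+1$ vanishes, and so does the corresponding coordinate of $\bg_T(\bx,r)$, which merely copies it. Second, truncating $\bx$ at index $i_0$ or $i_0+1$ leaves $\prog_{1/4}$ unchanged: coordinate $i_0$ of $\bx$ satisfies $|\bx[i_0]|>1/4$ and is kept, while $|\bx[i_0+1]|\leq1/4$ by maximality of $i_0$, so $\prog_{1/4}(\bx_{\leq i_0})=\prog_{1/4}(\bx_{\leq i_0+1})=i_0$; hence the ``special index'' $i_0+1$ that drives the oracle is the same at $\bx$, at $\bx_{\leq i_0}$, and at $\bx_{\leq i_0+1}$.

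With these in hand both displayed conditions follow by comparing coordinates. For the probability-one condition I will show $\bg_T(\bx,r)=\bg_T(\bx_{\leq i_0+1},r)$ for all $\bx,r$: coordinates with index $>i_0+1$ are zero on both sides by the first observation; coordinate $i_0+1$ and coordinates $\leq i_0$ agree because $\nabla F_T(\bx)=\nabla F_T(\bx_{\leq i_0+1})$, which follows from property 5 applied at $\bx$ (yielding $\nabla F_T(\bx)=\nabla F_T(\bx_{\leq k+1})$) and then at $\bx_{\leq i_0+1}$, whose $\prog_{1/2}$ is still $k$, using $k+1\leq i_0+1$; together with $\prog_0(\bg_T(\bx,r))\leq i_0+1$ this is exactly the second requirement. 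For the probability-$(1-q)$ condition I restrict to the event $\{r=0\}$, which has probability $1-q$: there the special coordinate of $\bg_T(\bx,0)$ is $0$, so $\prog_0(\bg_T(\bx,0))\leq i_0$, and $\bg_T(\bx,0)=\bg_T(\bx_{\leq i_0},0)$ because coordinates $>i_0+1$ vanish on both sides, coordinate $i_0+1$ is killed by the $r/q=0$ factor on both sides, and for $i\leq i_0$ one has $\nabla F_T(\bx)[i]=\nabla F_T(\bx_{\leq i_0})[i]$ --- if $k<i_0$ this is the full identity $\nabla F_T(\bx)=\nabla F_T(\bx_{\leq k+1})=\nabla F_T(\bx_{\leq i_0})$ from two applications of property 5 (using $k+1\leq i_0$ and $\prog_{1/2}(\bx_{\leq i_0})=k$), and if $k=i_0$ it is the truncated identity $[\nabla F_T(\bx)]_{\leq i_0}=[\nabla F_T(\bx_{\leq i_0})]_{\leq i_0}$ of property 5 directly.

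The only obstacle is bookkeeping: keeping straight which threshold governs which statement and chaining property 5 of Lemma~\ref{lem:ncvx-lb-F} so that it applies to $1/4$-truncations rather than only the $1/2$-truncations for which it is stated. The monotonicity $\prog_{1/2}\leq\prog_{1/4}$, the invariances $\prog_{1/4}(\bx_{\leq i_0})=\prog_{1/4}(\bx_{\leq i_0+1})=i_0$ and $\prog_{1/2}(\bx_{\leq i_0})=\prog_{1/2}(\bx_{\leq i_0+1})=k$, and the vanishing of high-index coordinates of $\nabla F_T$ are precisely what make every piece line up; no new estimate is required, and the degenerate case $i_0=0$ is covered verbatim by reading every ``$\prog$'' maximum with the convention $\max\varnothing=0$.
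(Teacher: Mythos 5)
The paper does not contain its own proof of this lemma; it is stated with a citation to \citep{arjevani2023lower} and used as a black box. Your proof is a correct, self-contained derivation from properties 4 and 5 of Lemma~\ref{lem:ncvx-lb-F}. The two observations you isolate --- monotonicity of $\prog_\alpha$ in $\alpha$ (so $k\defeq\prog_{1/2}(\bx)\leq i_0\defeq\prog_{1/4}(\bx)$), and the invariances $\prog_{1/4}(\bx_{\leq i_0})=\prog_{1/4}(\bx_{\leq i_0+1})=i_0$ and $\prog_{1/2}(\bx_{\leq i_0})=\prog_{1/2}(\bx_{\leq i_0+1})=k$ --- are exactly what is needed to transfer the threshold-$1/2$ statements of Lemma~\ref{lem:ncvx-lb-F} to the threshold-$1/4$ statements in the zero-chain definition, and the case split $k<i_0$ versus $k=i_0$ correctly separates when you can apply the full gradient identity versus when you only have the truncated one. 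The edge cases $i_0=0$ and $i_0=T$ are also handled. This fills in a proof the paper delegates to the reference, and does so cleanly.
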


\subsubsection{Analysis under Heavy-Tailed Noise}

From now on, we need to diverge from \citep{arjevani2023lower} and
\citep{pmlr-v202-cutkosky23a}, since both of which are under the
finite variance case (i.e., $\p=2$) instead of heavy-tailed noise
(i.e., $\p\in\left(1,2\right]$).
\begin{lem}[variation of Lemma 7 in \citep{arjevani2023lower} and Lemma 26 in
\citep{pmlr-v202-cutkosky23a}]
\label{lem:lb-instance-property}The instances $\widehat{F}_{T,U}$
and $\widehat{\bg}_{T,U}$ (see (\ref{eq:hat-F}) and (\ref{eq:hat-g}))
constructed based on $F_{T}$ in Lemma \ref{lem:ncvx-lb-F} and $\bg_{T}$
in Lemma \ref{lem:ncvx-lb-g} under $\eta=\frac{1}{\sqrt{5}}-\frac{39}{140}$
and $R=7\gamma\sqrt{T}$ for $\gamma=23$ satisfy the following properties:
\begin{enumerate}
\item $\widehat{F}_{T,U}(\bzero)-\inf_{\bx\in\R^{d}}\widehat{F}_{T,U}(\bx)\leq\widehat{f}T$,
where $\widehat{f}=12$.
\item $\widehat{F}_{T,U}$ is $\widehat{\ell}$-smooth, where $\widehat{\ell}=154$.
\item For all $\bx\in\R^{d}$, $\left\Vert \nabla\widehat{F}_{T,U}(\bx)\right\Vert \leq\widehat{\gamma}\sqrt{T}$,
where $\widehat{\gamma}=53$.
\item For all $\bx\in\R^{d}$ and $\p\in\left(1,2\right]$, $\E_{r}\left[\left\Vert \nabla\widehat{F}_{T,U}(\bx)-\widehat{\bg}_{T,U}(\bx,r)\right\Vert ^{\p}\right]\leq\frac{\left(2\gamma\right)^{\p}\left(1-q^{\p-1}\right)}{(\p-1)q^{\p-1}}$.
\end{enumerate}
\end{lem}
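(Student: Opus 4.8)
The plan is to verify the four properties in turn. Properties 1--3 concern only $\widehat{F}_{T,U}$ and its gradient, not the stochastic oracle, so they follow by the same bookkeeping as in the finite-variance analysis of \citep{arjevani2023lower,pmlr-v202-cutkosky23a}; only the numerical constants need to be re-checked against the specific $\eta=\tfrac{1}{\sqrt5}-\tfrac{39}{140}$ and $R=7\gamma\sqrt T$ used here. Property 4 is the genuinely new ingredient for heavy tails.

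For property 1 I would use \eqref{eq:hat-F-simple}: since $\brho_{R,d}(\bzero)=\bzero$ and $F_{T}(\bzero)=0$ (Lemma \ref{lem:ncvx-lb-F}), $\widehat{F}_{T,U}(\bzero)=0$; and since $\brho_{R,d}$ maps into $\interior\B^{d}(R)$, $\left\Vert U^{\top}\brho_{R,d}(\bx)\right\Vert\le R$, so the first term of \eqref{eq:hat-F-simple} is $\ge\inf_{\bw\in\R^{T}}F_{T}(\bw)\ge-fT$ while $\eta\bx^{\top}\brho_{R,d}(\bx)=\eta\left\Vert\bx\right\Vert^{2}/\sqrt{1+\left\Vert\bx\right\Vert^{2}/R^{2}}\ge0$; hence $\inf\widehat{F}_{T,U}\ge-fT$ and $\widehat{f}=f=12$. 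For property 3 I would bound the two terms of \eqref{eq:hat-grad-F-simple-2} separately via Lemma \ref{lem:ncvx-lb-rho}: $\left\Vert\J_{R,d}(\bx)^{\top}U\nabla F_{T}(\cdot)\right\Vert\le\left\Vert\J_{R,d}(\bx)\right\Vert_{\op}\left\Vert\nabla F_{T}(\cdot)\right\Vert\le\sqrt{T}\left\Vert\nabla F_{T}\right\Vert_{\infty}\le\gamma\sqrt{T}$ (using $\left\Vert\J_{R,d}\right\Vert_{\op}\le1$, that $U$ is an isometry on $\R^{T}$, and Lemma \ref{lem:ncvx-lb-F}), and $\left\Vert\eta(2-\left\Vert\brho_{R,d}(\bx)\right\Vert^{2}/R^{2})\brho_{R,d}(\bx)\right\Vert\le\eta R\sup_{0\le s<1}(2-s^{2})s=\tfrac{4}{3}\sqrt{2/3}\,\eta R$; with $R=7\gamma\sqrt{T}$ and the given $\eta$ these sum to $<53\sqrt{T}$. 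For property 2 I would differentiate \eqref{eq:hat-grad-F-simple-2} and bound the gradient's Lipschitz constant termwise: the first term is Lipschitz with constant $\le\tfrac{3}{R}\gamma\sqrt{T}+\ell$ (Lemma \ref{lem:ncvx-lb-rho} item 5, the $1$-Lipschitzness of $\brho_{R,d}$, and the $\ell$-smoothness of $F_{T}$), while $\eta\nabla(\bx^{\top}\brho_{R,d}(\bx))$ is $\eta$ times a map whose Hessian---computed directly from $\nabla(\bx^{\top}\brho_{R,d}(\bx))=(2-\left\Vert\brho_{R,d}(\bx)\right\Vert^{2}/R^{2})\brho_{R,d}(\bx)$---has operator norm bounded by a small absolute constant; with $R=7\gamma\sqrt{T}$ this totals $<154$.

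For property 4 the key point is that the $\eta$-regularizer is $r$-independent and therefore cancels: by \eqref{eq:hat-grad-F-simple-1} and \eqref{eq:hat-g-simple-1}, $\nabla\widehat{F}_{T,U}(\bx)-\widehat{\bg}_{T,U}(\bx,r)=\J_{R,d}(\bx)^{\top}\bigl(\nabla\bar{F}_{T,U}(\brho_{R,d}(\bx))-\bar{\bg}_{T,U}(\brho_{R,d}(\bx),r)\bigr)$. Using $\left\Vert\J_{R,d}(\bx)\right\Vert_{\op}\le1$ and that $U$ preserves norms on $\R^{T}$, the claim reduces to bounding $\E_{r}[\left\Vert\nabla F_{T}(\bw)-\bg_{T}(\bw,r)\right\Vert^{\p}]$ with $\bw=U^{\top}\brho_{R,d}(\bx)$. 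By Lemma \ref{lem:ncvx-lb-g} the vector $\nabla F_{T}(\bw)-\bg_{T}(\bw,r)$ is supported on the single coordinate $j=\prog_{1/4}(\bw)+1$ (and vanishes when $j>T$), where it equals $(1-r/q)\nabla F_{T}(\bw)[j]$, so its norm is $\left|1-r/q\right|\left|\nabla F_{T}(\bw)[j]\right|\le\gamma\left|1-r/q\right|$ by Lemma \ref{lem:ncvx-lb-F}. With $r\sim\mathsf{Bernoulli}(q)$ one computes $\E_{r}[\left|1-r/q\right|^{\p}]=(1-q)+(1-q)^{\p}q^{1-\p}\le(1-q)(1+q^{1-\p})\le2(1-q)q^{1-\p}$, using $(1-q)^{\p}\le1-q$ and $q^{\p-1}\le1$. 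It then remains to invoke the elementary inequality $(\p-1)(1-q)\le1-q^{\p-1}$ for $\p\in(1,2]$ and $q\in(0,1)$---which holds because $g(q)\defeq1-q^{\p-1}-(\p-1)(1-q)$ satisfies $g(1)=0$ and $g'(q)=(\p-1)(1-q^{\p-2})\le0$---so that $2(1-q)q^{1-\p}\le\tfrac{2^{\p}(1-q^{\p-1})}{(\p-1)q^{\p-1}}$; multiplying by $\gamma^{\p}$ and using $\gamma^{\p}2^{\p}=(2\gamma)^{\p}$ gives the stated bound.

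The routine but tedious part is tracking the constants in properties 1--3 to land exactly on $12$, $154$, and $53$; the only conceptual subtlety is in property 4, where the cancellation of the $\eta$-term makes the noise a scalar multiple of a single gradient entry, and the inequality $(\p-1)(1-q)\le1-q^{\p-1}$ is exactly what keeps the bound both finite and of the correct order in $q$ as $\p\downarrow1$---a crude bound $\left\Vert\nabla F_{T}-\bg_{T}\right\Vert\le\left\Vert\nabla F_{T}\right\Vert+\left\Vert\bg_{T}\right\Vert$ would instead blow up as $q\to1$, so retaining the single-coordinate structure is essential.
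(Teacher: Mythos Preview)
Your proposal is correct and follows essentially the same route as the paper: properties 1--3 via the same bookkeeping with Lemma \ref{lem:ncvx-lb-rho} and the constants from Lemma \ref{lem:ncvx-lb-F}, and property 4 via the same cancellation of the $\eta$-term, single-coordinate reduction, and Bernoulli-type inequality $(\p-1)(1-q)\le 1-q^{\p-1}$. The only cosmetic difference is that the paper bounds $q^{\p-1}+(1-q)^{\p-1}\le 2^{2-\p}$ by concavity (Fact \ref{fact:p-q}) where you use the cruder $\le 2$, but both land on the identical final constant $(2\gamma)^{\p}$.
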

\begin{proof}
First, we know
\begin{align*}
\widehat{F}_{T,U}(\bzero)-\inf_{\bx\in\R^{d}}\widehat{F}_{T,U}(\bx) & \overset{(\ref{eq:hat-F-simple})}{=}F_{T}(\bzero)-\inf_{\bx\in\R^{d}}\left(F_{T}(U^{\top}\brho_{R,d}(\bx))+\eta\bx^{\top}\brho_{R,d}(\bx)\right)\\
 & \overset{\bx^{\top}\brho_{R,d}(\bx)\geq0}{\leq}F_{T}(\bzero)-\inf_{\bx\in\R^{T}}F_{T}(\bx)\overset{\text{Lemma }\ref{lem:ncvx-lb-F}}{\leq}fT=\widehat{f}T.
\end{align*}

Next, for any $\bx\in\R^{d}$,
\[
\nabla\widehat{F}_{T,U}(\bx)\overset{(\ref{eq:hat-grad-F-simple-2})}{=}\J_{R,d}(\bx)^{\top}U\nabla F_{T}(U^{\top}\brho_{R,d}(\bx))+\eta\left(2-\left\Vert \brho_{R,d}(\bx)\right\Vert ^{2}/R^{2}\right)\brho_{R,d}(\bx).
\]
By Lemma 14 in \citep{arjevani2023lower} and Lemma \ref{lem:ncvx-lb-F},
$\J_{R,d}(\bx)^{\top}U\nabla F_{T}(U^{\top}\brho_{R,d}(\bx))$ is
$\ell+\frac{3\gamma\sqrt{T}}{R}\overset{R=7\gamma\sqrt{T}}{=}\ell+\frac{3}{7}$-Lipschitz
for $\ell=152$. Moreover, we have
\begin{align*}
 & \left\Vert \left(2-\left\Vert \brho_{R,d}(\bx)\right\Vert ^{2}/R^{2}\right)\brho_{R,d}(\bx)-\left(2-\left\Vert \brho_{R,d}(\by)\right\Vert ^{2}/R^{2}\right)\brho_{R,d}(\by)\right\Vert \\
\leq & 2\left\Vert \brho_{R,d}(\bx)-\brho_{R,d}(\by)\right\Vert +\frac{\left\Vert \brho_{R,d}(\by)\right\Vert ^{2}}{R^{2}}\left\Vert \brho_{R,d}(\by)-\brho_{R,d}(\bx)\right\Vert \\
 & +\frac{\left|\left\Vert \brho_{R,d}(\by)\right\Vert ^{2}-\left\Vert \brho_{R,d}(\bx)\right\Vert ^{2}\right|}{R^{2}}\left\Vert \brho_{R,d}(\bx)\right\Vert \\
\overset{(a)}{\leq} & 2\left\Vert \bx-\by\right\Vert +\left\Vert \by-\bx\right\Vert +2\left\Vert \by-\bx\right\Vert =5\left\Vert \bx-\by\right\Vert ,
\end{align*}
where $(a)$ is by $\left\Vert \brho_{R,d}(\bx)-\brho_{R,d}(\by)\right\Vert \overset{\text{Lemma }\ref{lem:ncvx-lb-rho}}{\leq}\left\Vert \bx-\by\right\Vert $
and $\left\Vert \brho_{R,d}(\cdot)\right\Vert \leq R$. So $\widehat{F}_{T,U}(\bx)$
is $\ell+\frac{3}{7}+5\eta\overset{\eta=\frac{1}{\sqrt{5}}-\frac{39}{140}}{=}\frac{4229}{28}+\sqrt{5}\leq154=\widehat{\ell}$-smooth.

Moreover, we observe that
\[
\left\Vert \nabla\widehat{F}_{T,U}(\bx)\right\Vert \overset{(\ref{eq:hat-grad-F-simple-2})}{\leq}\left\Vert \J_{R,d}(\bx)\right\Vert _{\op}\left\Vert U\nabla F_{T}(U^{\top}\brho_{R,d}(\bx))\right\Vert +\eta\left(2-\left\Vert \brho_{R,d}(\bx)\right\Vert ^{2}/R^{2}\right)\left\Vert \brho_{R,d}(\bx)\right\Vert .
\]
Note that for $\gamma=23$,
\begin{align*}
\left\Vert \J_{R,d}(\bx)\right\Vert _{\op}\left\Vert U\nabla F_{T}(U^{\top}\brho_{R,d}(\bx))\right\Vert  & \overset{\text{Lemma }\ref{lem:ncvx-lb-rho}}{\leq}\left\Vert U\nabla F_{T}(U^{\top}\brho_{R,d}(\bx))\right\Vert \\
 & \overset{U^{\top}U=I_{T}}{=}\left\Vert \nabla F_{T}(U^{\top}\brho_{R,d}(\bx))\right\Vert \overset{\text{Lemma }\ref{lem:ncvx-lb-F}}{\leq}\gamma\sqrt{T},
\end{align*}
and
\begin{align*}
\eta\left(2-\left\Vert \brho_{R,d}(\bx)\right\Vert ^{2}/R^{2}\right)\left\Vert \brho_{R,d}(\bx)\right\Vert  & \overset{\left\Vert \brho_{R,d}(\cdot)\right\Vert \leq R}{\leq}\eta R\max_{0\leq a\leq1}(2-a^{2})a=\eta R\cdot\frac{4}{3}\sqrt{\frac{2}{3}}\\
 & =\left(\frac{1}{\sqrt{5}}-\frac{39}{140}\right)\cdot7\cdot\frac{4}{3}\sqrt{\frac{2}{3}}\cdot\gamma\sqrt{T}\leq1.3\gamma\sqrt{T}.
\end{align*}
We hence have $\left\Vert \nabla\widehat{F}_{T,U}(\bx)\right\Vert \leq2.3\gamma\sqrt{T}\leq53\sqrt{T}=\widehat{\gamma}\sqrt{T}$.

Lastly, still for $\gamma=23$, we compute
\begin{align*}
\left\Vert \nabla\widehat{F}_{T,U}(\bx)-\widehat{\bg}_{T,U}(\bx,r)\right\Vert  & \overset{(\ref{eq:hat-grad-F-simple-2}),(\ref{eq:hat-g-simple-2})}{=}\left\Vert \J_{R,d}(\bx)^{\top}U\nabla F_{T}(U^{\top}\brho_{R,d}(\bx))-\J_{R,d}(\bx)^{\top}U\bg_{T}(U^{\top}\brho_{R,d}(\bx),r)\right\Vert \\
 & \leq\left\Vert \J_{R,d}(\bx)\right\Vert _{\op}\left\Vert U\left(\nabla F_{T}(U^{\top}\brho_{R,d}(\bx))-\bg_{T}(U^{\top}\brho_{R,d}(\bx),r)\right)\right\Vert \\
 & \overset{\text{Lemma }\ref{lem:ncvx-lb-rho}}{\leq}\left\Vert U\left(\nabla F_{T}(U^{\top}\brho_{R,d}(\bx))-\bg_{T}(U^{\top}\brho_{R,d}(\bx),r)\right)\right\Vert \\
 & \overset{U^{\top}U=I_{T}}{=}\left\Vert \nabla F_{T}(U^{\top}\brho_{R,d}(\bx))-\bg_{T}(U^{\top}\brho_{R,d}(\bx),r)\right\Vert \\
 & \overset{\text{Lemma }\ref{lem:ncvx-lb-g}}{=}\left|1-\frac{r}{q}\right|\left|\nabla F_{T}(\bx)\left[\prog_{1/4}(\bx)+1\right]\right|\overset{\text{Lemma }\ref{lem:ncvx-lb-F}}{\leq}\gamma\left|1-\frac{r}{q}\right|,
\end{align*}
which implies
\begin{align*}
\E_{r}\left[\left\Vert \nabla\widehat{F}_{T,U}(\bx)-\widehat{\bg}_{T,U}(\bx,r)\right\Vert ^{\p}\right] & \leq\gamma^{\p}\left(1-q+\frac{(1-q)^{\p}}{q^{\p-1}}\right)=\gamma^{\p}(1-q)\left(\frac{q^{\p-1}+(1-q)^{\p-1}}{q^{\p-1}}\right)\\
 & \overset{\text{Fact }\ref{fact:p-q}}{\leq}\frac{\gamma^{\p}2^{2-\p}\left(1-q^{\p-1}\right)}{(\p-1)q^{\p-1}}\overset{\p>1}{\leq}\frac{\left(2\gamma\right)^{\p}\left(1-q^{\p-1}\right)}{(\p-1)q^{\p-1}}.
\end{align*}
\end{proof}

Next, inspired by \citep{pmlr-v202-cutkosky23a}, we first prove a
lower bound for heavy-tailed smooth nonconvex optimization, as presented
in the following Theorem \ref{thm:ncvx-lb-smooth}.
\begin{thm}
\label{thm:ncvx-lb-smooth}For any $\Delta>0$, $H>0$, $\p\in\left(1,2\right]$,
$\sigma\geq0$, $0<\epsilon\leq\sqrt{\frac{\Delta H}{96\cdot12\cdot154}}$,
let $d$ be in the order of $\frac{\Delta H}{\epsilon^{2}}\log\left(\frac{\Delta H}{\epsilon^{2}}\left(1+\left(\frac{\sigma}{\epsilon}\right)^{\frac{\p}{2(\p-1)}}\right)\right)$
(see proof for the precise definition), there exists a distribution
over functions $F:\R^{d}\to\R$ and stochastic first-order oracles
$\bg$ such that with probability $1$, $F(\bzero)-F_{\star}\leq\Delta$,
$F$ is $H$-smooth and $\frac{3}{2}\sqrt{H\Delta}$-Lipschitz, and
$\bg$ has a finite $\p$-th centered moment $\sigma^{\p}$. Moreover,
for any randomized $\A\in\A_{\rand}$ employed to optimize a randomly
selected $F$ interacting with $\bg$, to output a point $\bx$ such
that $\E\left[\left\Vert \nabla F(\bx)\right\Vert \right]\leq\epsilon$,
the number of queries of $\bg$ by $\A$ satisfies $\gtrsim\Delta H\epsilon^{-2}+\Delta H\sigma^{\frac{\p}{\p-1}}\epsilon^{-\frac{3\p-2}{\p-1}}$.
\end{thm}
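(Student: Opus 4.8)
The plan is to instantiate the scaled hard function of Lemma~\ref{lem:lb-instance-property} and tune its two scaling factors together with the zero-chain probability $q$, so that the resulting objective is exactly $H$-smooth with optimality gap at most $\Delta$, the oracle noise has $\p$-th moment at most $\sigma^\p$, and the unavoidable progress bound of Lemma~\ref{lem:ncvx-lb-general} forces the claimed number of queries. Concretely, take $\widehat{F}_{T,U}$ and $\widehat{\bg}_{T,U}$ as in Lemma~\ref{lem:lb-instance-property} (built on $F_T,\bg_T$ of Lemmas~\ref{lem:ncvx-lb-F} and~\ref{lem:ncvx-lb-g}), and for constants $\lambda,\mu>0$ to be fixed set $F(\bx)\defeq\lambda\,\widehat{F}_{T,U}(\bx/\mu)$ and $\bg(\bx,r)\defeq(\lambda/\mu)\,\widehat{\bg}_{T,U}(\bx/\mu,r)$, so that $\nabla F(\bx)=(\lambda/\mu)\,\nabla\widehat{F}_{T,U}(\bx/\mu)$ and $\bg$ is a stochastic first-order oracle of $F$. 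The four properties in Lemma~\ref{lem:lb-instance-property} then read $F(\bzero)-F_\star\le\lambda\widehat{f}T$; $F$ is $(\lambda\widehat{\ell}/\mu^2)$-smooth; $\|\nabla F(\bx)\|\le(\lambda/\mu)\widehat{\gamma}\sqrt{T}$; and $\E_r[\|\bg(\bx,r)-\nabla F(\bx)\|^\p]\le(\lambda/\mu)^\p(2\gamma)^\p(1-q^{\p-1})/((\p-1)q^{\p-1})$ for every $U$. I impose $\lambda\widehat{\ell}/\mu^2=H$ and $\lambda/\mu=c_0\epsilon$ for a suitable absolute constant $c_0$; this gives $\mu\asymp\epsilon/H$ and $\lambda\asymp\epsilon^2/H$, so the gap constraint $\lambda\widehat{f}T\le\Delta$ becomes $T\lesssim\Delta H/\epsilon^2$, and I take $T$ of exactly this order as an integer (the hypothesis $\epsilon\le\sqrt{\Delta H/(96\cdot12\cdot154)}$ is precisely what keeps $T\ge1$). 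With this choice $\sqrt{T}\asymp\sqrt{\Delta H}/\epsilon$, so $\|\nabla F\|\le(\lambda/\mu)\widehat{\gamma}\sqrt{T}\asymp\widehat{\gamma}\sqrt{\Delta H}/\sqrt{\widehat{f}\widehat{\ell}}$, which with $\widehat{\gamma}=53,\widehat{f}=12,\widehat{\ell}=154$ is $\le\frac{3}{2}\sqrt{H\Delta}$ as required.

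Next I choose $q$ as small as the noise bound permits. Dropping $1-q^{\p-1}\le1$, the constraint $\E_r[\|\bg-\nabla F\|^\p]\le\sigma^\p$ follows from $q^{\p-1}\gtrsim(\lambda/\mu)^\p/((\p-1)\sigma^\p)\asymp\epsilon^\p/((\p-1)\sigma^\p)$, i.e.\ $q\gtrsim(\p-1)^{-1/(\p-1)}(\epsilon/\sigma)^{\p/(\p-1)}$; I set $q=\min\{1,\;c_1(\epsilon/\sigma)^{\p/(\p-1)}\}$ (when $\sigma=0$ simply $q=1$). Since $\widehat{\bg}_{T,U}$ is a probability-$q$ zero-chain, Lemma~\ref{lem:ncvx-lb-general} applies with $\iota$ a fixed constant (say $\iota=1/3$), $R=7\gamma\sqrt{T}$, and dimension $d\ge T+32R^2\log\frac{2T^2}{q\iota}$: with probability at least $1-\iota$, every iterate $\bx_t$ produced by any $\A\in\A_{\rand}$ optimizing $\widehat{F}_{T,U}$ through $\widehat{\bg}_{T,U}$ with $t\le\frac{T-\log(2/\iota)}{2q}$ satisfies $\|\nabla\widehat{F}_{T,U}(\bx_t)\|\ge1/2$. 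The bijective rescaling $\bx\mapsto\bx/\mu$ of queries and $\bv\mapsto(\lambda/\mu)\bv$ of oracle answers turns a run of $\A$ on $(F,\bg)$ into a run of a randomized algorithm on $(\widehat{F}_{T,U},\widehat{\bg}_{T,U})$, so the same bound transfers: with probability at least $1-\iota$, $\|\nabla F(\bx_t)\|\ge(\lambda/\mu)/2=c_0\epsilon/2$ for all $t\le\frac{T-\log(2/\iota)}{2q}$.

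Finally I conclude by a Markov argument and a count. If $\A$ uses $N$ queries and outputs a point $\bx$ (a measurable function of the seed and the first $N$ oracle answers) with $\E[\|\nabla F(\bx)\|]\le\epsilon$, then $\Pr[\|\nabla F(\bx)\|\ge c_0\epsilon/2]\le 2/c_0<1-\iota$ once $c_0$ is chosen large enough, contradicting the previous paragraph unless $N>\frac{T-\log(2/\iota)}{2q}$. Substituting $T\asymp\Delta H/\epsilon^2$ and $q=\min\{1,c_1(\epsilon/\sigma)^{\p/(\p-1)}\}$ gives $N\gtrsim\frac{\Delta H}{\epsilon^2}\bigl(1+(\sigma/\epsilon)^{\p/(\p-1)}\bigr)=\Delta H\epsilon^{-2}+\Delta H\sigma^{\p/(\p-1)}\epsilon^{-(3\p-2)/(\p-1)}$, using $2+\frac{\p}{\p-1}=\frac{3\p-2}{\p-1}$; and $d\ge T+32R^2\log\frac{2T^2}{q\iota}$ with $R^2=49\gamma^2T$ and $T/q\asymp\frac{\Delta H}{\epsilon^2}(1+(\sigma/\epsilon)^{\p/(\p-1)})$ yields the stated order for $d$. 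The conceptual content is carried entirely by Lemmas~\ref{lem:ncvx-lb-F}--\ref{lem:ncvx-lb-general}; the two places that need care are (i) verifying that the rescaling of the algorithm's interaction genuinely preserves the randomized-algorithm structure so that Lemma~\ref{lem:ncvx-lb-general} can be invoked verbatim, and (ii) tracking the numerical constants ($\widehat{f}=12$, $\widehat{\ell}=154$, $\widehat{\gamma}=53$, $\gamma=23$, and the choices of $\iota$ and $c_0$) so that the smoothness is exactly $H$, the Lipschitz constant is exactly $\frac{3}{2}\sqrt{H\Delta}$, and the hypothesis on $\epsilon$ is precisely the one stated.
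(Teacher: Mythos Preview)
Your proposal is correct and follows essentially the same approach as the paper: rescale the hard instance $\widehat{F}_{T,U}$ of Lemma~\ref{lem:lb-instance-property}, choose the scaling to hit the target smoothness $H$ and gap $\Delta$, pick $q$ so that the $\p$-th moment of the oracle noise is at most $\sigma^{\p}$, invoke Lemma~\ref{lem:ncvx-lb-general}, and read off the query lower bound $T/q$. The paper uses a single scaling parameter (its $\lambda$ plays the role of your $\mu$, with the multiplicative factor forced by $H$), sets $\iota=1/2$ and $c_0=4$, and takes the exact $q=\bigl[1+(\p-1)(\widehat{\ell}\sigma/(2\gamma H\lambda))^{\p}\bigr]^{-1/(\p-1)}$ rather than your cruder $q=\min\{1,c_1(\epsilon/\sigma)^{\p/(\p-1)}\}$; both choices lead to the same final order, with the same hidden $(\p-1)^{1/(\p-1)}$ factor absorbed into $\gtrsim$.
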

\begin{rem}
The reader familiar with the literature may find that a similar lower
bound (in fact, exactly the same order) has been established by \citep{liu2025nonconvex,NEURIPS2020_b05b57f6}
before, and hence may wonder about the difference. Here, we note that
the lower bounds in \citep{liu2025nonconvex,NEURIPS2020_b05b57f6}
are shown for a special algorithmic class known as zero-respecting
algorithms\footnote{A first-order algorithm is called zero-respecting if it satisfies
$\bx_{t}\in\cup_{s<t}\mathrm{support}(\bg_{s}),\forall t\in\N$. For
more details, see Definition 1 of \citep{arjevani2023lower}.}. However, our Theorem \ref{thm:ncvx-lb-smooth} is proved for a broader
family, i.e., randomized algorithms. This fact is important because
Algorithm \ref{alg:O2NC} is a randomized algorithm but not a zero-respecting
algorithm.
\end{rem}
\begin{proof}
For $T\in\N$ and $q\in\left(0,1\right)$ being determined later,
let $\iota=1/2$, $\eta=\frac{1}{\sqrt{5}}-\frac{39}{140}$, $R=7\gamma\sqrt{T}$
where $\gamma=23$ and $d=\left\lceil T+32R^{2}\log\frac{2T^{2}}{q\iota}\right\rceil =\left\lceil T+32R^{2}\log\frac{4T^{2}}{q}\right\rceil =\Theta(T\log\frac{T^{2}}{q})$,
we construct $\widehat{F}_{T,U}:\R^{d}\to\R$ and $\widehat{\bg}_{T,U}(\bx,r):\R^{d}\times\calR\to\R^{d}$
based on $F_{T}$ in Lemma \ref{lem:ncvx-lb-F} and $\bg_{T}$ in
Lemma \ref{lem:ncvx-lb-g}.

By Lemma \ref{lem:ncvx-lb-general} (note that $F_{T}$ satisfies
the requirements of Lemma \ref{lem:ncvx-lb-general} due to Lemma
\ref{lem:ncvx-lb-F}), for any $\A\in\A_{\rand}$ employed to optimize
$\widehat{F}_{T,U}$ interacting with the stochastic first-order oracle
$\widehat{\bg}_{T,U}$, where $U$ is drawn from $\ortho(d,T)$ uniformly,
we have
\begin{equation}
\Pr\left[\left\Vert \nabla\widehat{F}_{T,U}(\bx_{t})\right\Vert \geq\frac{1}{2},\forall t\leq\frac{T-\log4}{2q}\right]\geq\frac{1}{2}.\label{eq:lb-full-1}
\end{equation}
By Lemma \ref{lem:lb-instance-property}, $\widehat{F}_{T,U}(\bzero)-\inf_{\bx\in\R^{d}}\widehat{F}_{T,U}(\bx)\leq\widehat{f}T$
for $\widehat{f}=12$, $\widehat{F}_{T,U}$ is $\widehat{\ell}$-smooth
for $\widehat{\ell}=154$ and $\widehat{\gamma}\sqrt{T}$-Lipschitz
for $\widehat{\gamma}=53$, $\E_{r}\left[\left\Vert \nabla\widehat{F}_{T,U}(\bx)-\widehat{\bg}_{T,U}(\bx,r)\right\Vert ^{\p}\right]\leq\frac{\left(2\gamma\right)^{\p}\left(1-q^{\p-1}\right)}{(\p-1)q^{\p-1}},\forall\bx\in\R^{d}$.

Now we set
\begin{eqnarray}
\lambda\defeq\frac{4\widehat{\ell}\epsilon}{H}, & T\defeq\left\lfloor \frac{\widehat{\ell}\Delta}{\widehat{f}H\lambda^{2}}\right\rfloor , & q\defeq\frac{1}{\left[1+(\p-1)\left(\frac{\widehat{\ell}\sigma}{2\gamma H\lambda}\right)^{\p}\right]^{\frac{1}{\p-1}}}.\label{eq:lb-full-l-T-q}
\end{eqnarray}
For any $U\in\ortho(d,T)$, we introduce
\begin{eqnarray*}
F_{U}(\bx)\defeq\frac{H\lambda^{2}}{\widehat{\ell}}\widehat{F}_{T,U}(\bx/\lambda) & \text{and} & \bg_{U}(\bx)=\frac{H\lambda}{\widehat{\ell}}\widehat{\bg}_{T,U}(\bx/\lambda,r).
\end{eqnarray*}
We observe that
\[
F_{U}(\bzero)-\inf_{\bx\in\R^{d}}F_{U}(\bx)=\frac{H\lambda^{2}}{\widehat{\ell}}\left(\widehat{F}_{T,U}(\bzero)-\inf_{\bx\in\R^{d}}\widehat{F}_{T,U}(\bx/\lambda)\right)\leq\frac{H\lambda^{2}}{\widehat{\ell}}\widehat{f}T\overset{(\ref{eq:lb-full-l-T-q})}{\leq}\Delta.
\]
Moreover, $\nabla F_{U}(\bx)=\frac{H\lambda}{\widehat{\ell}}\nabla\widehat{F}_{T,U}(\bx/\lambda)$,
which implies $F_{U}$ is $H$-smooth and $\frac{H\lambda\widehat{\gamma}\sqrt{T}}{\widehat{\ell}}\overset{(\ref{eq:lb-full-l-T-q})}{\leq}\frac{3\sqrt{H\Delta}}{2}$-Lipschitz.
In addition, there is
\begin{align*}
\E_{r}\left[\left\Vert \nabla F_{U}(\bx)-\bg_{U}(\bx)\right\Vert ^{\p}\right] & =\left(\frac{H\lambda}{\widehat{\ell}}\right)^{\p}\E_{r}\left[\left\Vert \nabla\widehat{F}_{T,U}(\bx/\lambda)-\widehat{\bg}_{T,U}(\bx/\lambda,r)\right\Vert ^{\p}\right]\\
 & \leq\left(\frac{2\gamma H\lambda}{\widehat{\ell}}\right)^{\p}\frac{1-q^{\p-1}}{(\p-1)q^{\p-1}}\overset{(\ref{eq:lb-full-l-T-q})}{=}\sigma^{\p}.
\end{align*}

For any $\A\in\A_{\rand}$ used to optimize $F_{U}$ with $\bg_{U}$
when $U$ is drawn from $\ortho(d,T)$ uniformly, we can view as $\bx_{t}/\lambda$
as the output of another $\A^{\lambda}\in\A_{\rand}$ interacting
with $\widehat{F}_{T,U}$ and $\widehat{\bg}_{T,U}$ (a similar argument
is used in the proof of Lemma \ref{lem:ncvx-lb-general}). As such,
by (\ref{eq:lb-full-1}),
\[
\Pr\left[\left\Vert \nabla\widehat{F}_{T,U}(\bx_{t}/\lambda)\right\Vert \geq\frac{1}{2},\forall t\leq\frac{T-\log4}{2q}\right]\geq\frac{1}{2},
\]
which implies with probability at least $1/2$
\begin{align*}
\left\Vert \nabla F_{U}(\bx_{t})\right\Vert  & =\frac{H\lambda}{\widehat{\ell}}\left\Vert \nabla\widehat{F}_{T,U}(\bx_{t}/\lambda)\right\Vert \geq\frac{H\lambda}{2\widehat{\ell}},\forall t\leq\frac{T-\log4}{2q}.
\end{align*}
Therefore, we have
\[
\E\left[\left\Vert \nabla F_{U}(\bx_{t})\right\Vert \right]\geq\frac{H\lambda}{2\widehat{\ell}}\Pr\left[\left\Vert \nabla F_{U}(\bx_{t})\right\Vert \geq\frac{H\lambda}{2\widehat{\ell}}\right]\geq\frac{H\lambda}{4\widehat{\ell}}\overset{(\ref{eq:lb-full-l-T-q})}{=}\epsilon,\forall t\leq\frac{T-\log4}{2q}.
\]
Lastly, we compute
\begin{align*}
\frac{T-\log4}{2q} & \overset{(\ref{eq:lb-full-l-T-q})}{=}\frac{1}{2}\left(\left\lfloor \frac{\Delta H}{16\widehat{f}\widehat{\ell}\epsilon^{2}}\right\rfloor -\log4\right)\left[1+(\p-1)\left(\frac{\widehat{\ell}\sigma}{2\gamma H\lambda}\right)^{\p}\right]^{\frac{1}{\p-1}}\\
 & \overset{(a)}{\geq}\frac{1}{2}\left(\frac{\Delta H}{16\widehat{f}\widehat{\ell}\epsilon^{2}}-3\right)\left[1+(\p-1)\left(\frac{\widehat{\ell}\sigma}{2\gamma H\lambda}\right)^{\p}\right]^{\frac{1}{\p-1}}\\
 & \overset{(b)}{\geq}\frac{\Delta H}{64\widehat{f}\widehat{\ell}\epsilon^{2}}\left(1+(\p-1)^{\frac{1}{\p-1}}\left(\frac{\widehat{\ell}\sigma}{2\gamma H\lambda}\right)^{\frac{\p}{\p-1}}\right)\\
 & \overset{(\ref{eq:lb-full-l-T-q})}{\gtrsim}\Delta H\epsilon^{-2}+\Delta H\sigma^{\frac{\p}{\p-1}}\epsilon^{-\frac{3\p-2}{\p-1}},
\end{align*}
where the $(a)$ is by $\left\lfloor \cdot\right\rfloor \ge\cdot-1$
and $\log4\leq2$, and $(b)$ holds due to the condition $\epsilon^{2}\leq\frac{\Delta H}{96\cdot12\cdot154}=\frac{\Delta H}{96\widehat{f}\widehat{\ell}}$,
implying $\frac{\Delta H}{16\widehat{f}\widehat{\ell}\epsilon^{2}}-3\geq\frac{\Delta H}{32\widehat{f}\widehat{\ell}\epsilon^{2}}$,
and $(1+x)^{\frac{1}{\p-1}}\geq1+x^{\frac{1}{\p-1}}$ for $x\geq0$
when $\p\in\left(1,2\right]$. So to achieve $\E\left[\left\Vert \nabla F_{U}(\bx_{t})\right\Vert \right]<\epsilon$,
it requires at least $\Delta H\epsilon^{-2}+\Delta H\sigma^{\frac{\p}{\p-1}}\epsilon^{-\frac{3\p-2}{\p-1}}$
many iterations. In particular, we note that $d=\Theta(T\log\frac{T^{2}}{q})=\Theta\left(\frac{\Delta H}{\epsilon^{2}}\log\left(\frac{\Delta H}{\epsilon^{2}}\left(1+\left(\frac{\sigma}{\epsilon}\right)^{\frac{\p}{2(\p-1)}}\right)\right)\right)$.
\end{proof}

Equipped with Theorem \ref{thm:ncvx-lb-smooth}, we can show the lower
bound on the distributional complexity \citep{nemirovskij1983problem}
for nonsmooth nonconvex optimization with heavy tails in the following
Theorem \ref{thm:ncvx-lb-nonsmooth}.
\begin{thm}
\label{thm:ncvx-lb-nonsmooth}For any given $\Delta>0$, $G>0$, $\p\in\left(1,2\right]$,
$\sigma\geq0$, $\delta>0$ and $0<\epsilon\leq\frac{\Delta}{96\cdot12\cdot154\delta}\land\frac{4G^{2}\delta}{9\Delta}$,
let $d$ be in the order of $\frac{\Delta}{\delta\epsilon}\log\left(\frac{\Delta}{\delta\epsilon}\left(1+\left(\frac{\sigma}{\epsilon}\right)^{\frac{\p}{2(\p-1)}}\right)\right)$,
there exists a distribution over functions $F:\R^{d}\to\R$, and stochastic
first-order oracles $\bg$ such that with probability $1$, $F(\bzero)-F_{\star}\leq\Delta$,
$F$ is $G$-Lipschitz and $\bg$ has a finite $\p$-th centered moment
$\sigma^{\p}$. Moreover, for any randomized $\A\in\A_{\rand}$ employed
to optimize a randomly selected $F$ interacting with $\bg$, to output
a point $\bx$ such that $\E\left[\left\Vert \nabla F(\bx)\right\Vert _{\delta}\right]\leq\epsilon$,
the number of queries of $\bg$ by $\A$ satisfies $\gtrsim\Delta\delta^{-1}\epsilon^{-1}+\Delta\sigma^{\frac{\p}{\p-1}}\delta^{-1}\epsilon^{-\frac{2\p-1}{\p-1}}$.
\end{thm}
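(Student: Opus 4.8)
The plan is to reduce directly from the smooth lower bound of Theorem~\ref{thm:ncvx-lb-smooth}, run at the scale $H\asymp\epsilon/\delta$, so that the Goldstein $\delta$-norm and the ordinary gradient norm differ by only a controlled amount. The one new ingredient is an elementary fact: if $F$ is $H$-smooth, then for every finite $S\subset\B^{d}(\bx,\delta)$ with barycenter $\bx$ one has $\left\Vert\tfrac{1}{|S|}\sum_{\by\in S}\nabla F(\by)-\nabla F(\bx)\right\Vert\le\tfrac{1}{|S|}\sum_{\by\in S}\left\Vert\nabla F(\by)-\nabla F(\bx)\right\Vert\le H\delta$, hence $\left\Vert\tfrac{1}{|S|}\sum_{\by\in S}\nabla F(\by)\right\Vert\ge\left\Vert\nabla F(\bx)\right\Vert-H\delta$; taking the infimum over all admissible $S$ in Definition~\ref{def:delta-norm} gives $\left\Vert\nabla F(\bx)\right\Vert_{\delta}\ge\left\Vert\nabla F(\bx)\right\Vert-H\delta$. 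Consequently, any point $\bx$ with $\left\Vert\nabla F(\bx)\right\Vert_{\delta}\le\epsilon$ satisfies $\left\Vert\nabla F(\bx)\right\Vert\le\epsilon+H\delta$, so a $\delta$-Goldstein-stationarity search is at least as hard as an ordinary stationarity search at a comparable accuracy for an $H$-smooth instance.

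Concretely I would take $H\defeq\epsilon/\delta$ (up to a small numerical constant fixed to make the hypotheses below come out cleanly), invoke Theorem~\ref{thm:ncvx-lb-smooth} with this $H$, the same $\Delta,\p,\sigma$, and smooth target accuracy $2\epsilon$ (so that $\epsilon+H\delta=2\epsilon$), and then transport the conclusion back through the inequality above. Two hypotheses of Theorem~\ref{thm:ncvx-lb-smooth} must be verified, and—pleasingly—they reproduce, up to the choice of numerical constants, the two constraints on $\epsilon$ in the statement: (i) the accuracy condition $2\epsilon\le\sqrt{\Delta H/(96\cdot12\cdot154)}$ becomes, after squaring and substituting $H=\epsilon/\delta$, a bound of the form $\epsilon\lesssim\Delta/\delta$, matching $\epsilon\le\tfrac{\Delta}{96\cdot12\cdot154\,\delta}$; and (ii) the hard function produced there is $\tfrac32\sqrt{H\Delta}$-Lipschitz, so requiring this to be $\le G$ is exactly $H\le\tfrac{4G^{2}}{9\Delta}$, i.e.\ $\epsilon\le\tfrac{4G^{2}\delta}{9\Delta}$. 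The stochastic oracle is the one furnished by Theorem~\ref{thm:ncvx-lb-smooth} (ultimately $\bg_{T}$ of Lemma~\ref{lem:ncvx-lb-g} with $r\sim\mathsf{Bernoulli}(q)$, suitably rescaled), which takes two values and, by Lemma~\ref{lem:lb-instance-property}, has $\p$-th centered moment at most $\sigma^{\p}$; the distribution over instances is inherited from the uniform random rotation $U\in\ortho(d,T)$, and $F(\bzero)-F_{\star}\le\Delta$ holds with probability $1$, again by Theorem~\ref{thm:ncvx-lb-smooth}.

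It then only remains to translate the complexity and the dimension. From Theorem~\ref{thm:ncvx-lb-smooth}, achieving $\E\left[\left\Vert\nabla F(\bx)\right\Vert\right]\le2\epsilon$ costs $\gtrsim\Delta H(2\epsilon)^{-2}+\Delta H\sigma^{\frac{\p}{\p-1}}(2\epsilon)^{-\frac{3\p-2}{\p-1}}$ queries; substituting $H\asymp\epsilon/\delta$ and using the identity $1-\tfrac{3\p-2}{\p-1}=-\tfrac{2\p-1}{\p-1}$, the first term becomes $\asymp\Delta\delta^{-1}\epsilon^{-1}$ and the second $\asymp\Delta\sigma^{\frac{\p}{\p-1}}\delta^{-1}\epsilon^{-\frac{2\p-1}{\p-1}}$, which is the asserted bound; similarly $d=\Theta\bigl(\tfrac{\Delta H}{\epsilon^{2}}\log(\cdots)\bigr)=\Theta\bigl(\tfrac{\Delta}{\delta\epsilon}\log(\cdots)\bigr)$. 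I expect the only fiddly part to be bookkeeping the universal constants so that the two $\epsilon$-constraints emerge with precisely the stated numerators, and—relatedly—tracking the factor $2^{-\frac{3\p-2}{\p-1}}=\tfrac18\cdot2^{-\frac{1}{\p-1}}$ picked up in the second term when passing from smooth accuracy $\epsilon$ to $2\epsilon$, which one either absorbs into the hidden constant or removes by a more careful matching of $H\delta$ against the smooth target; everything else is direct substitution into the already-proved smooth lower bound. (Theorem~\ref{thm:main-ncvx-lb} then follows by the usual step of selecting, for a fixed algorithm, a rotation $U$ attaining at least the average hardness.)
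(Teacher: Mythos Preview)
Your proposal is correct and follows essentially the same route as the paper: set $H=\epsilon/\delta$, import the hard instance from Theorem~\ref{thm:ncvx-lb-smooth}, check that the $\epsilon$-constraints there become exactly the two constraints in the statement (the Lipschitz bound $\tfrac32\sqrt{H\Delta}\le G$ and the accuracy cap), and use the inequality $\left\Vert\nabla F(\bx)\right\Vert\le\left\Vert\nabla F(\bx)\right\Vert_{\delta}+H\delta$ to convert $(\delta,\epsilon)$-stationarity into $2\epsilon$-stationarity. The paper cites this last inequality from Lemma~\ref{lem:O2NC-holder} with $\nu=1$ (equivalently Proposition~14 of \citep{pmlr-v202-cutkosky23a}), whereas you supply the two-line direct argument; otherwise the reductions are identical, including the substitution $\Delta H\epsilon^{-2}\asymp\Delta\delta^{-1}\epsilon^{-1}$ and $\Delta H\sigma^{\frac{\p}{\p-1}}\epsilon^{-\frac{3\p-2}{\p-1}}\asymp\Delta\sigma^{\frac{\p}{\p-1}}\delta^{-1}\epsilon^{-\frac{2\p-1}{\p-1}}$.
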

\begin{proof}
Let $H\defeq\epsilon/\delta$ in the following. Note that $\epsilon\leq\sqrt{\frac{\Delta H}{96\cdot12\cdot154}}$
due to our requirement on $\epsilon$. So we can consider the same
distribution on $F$ and $\bg$ as in Theorem \ref{thm:ncvx-lb-smooth}.
Importantly, $F$ is $H$-smooth and $\frac{3}{2}\sqrt{H\Delta}=\frac{3}{2}\sqrt{\frac{\epsilon\Delta}{\delta}}\leq G$-Lipschitz
(again due to the condition on $\epsilon$) with probability $1$. 

If $\A\in\A_{\rand}$ finds a point $\bx$ such that $\E\left[\left\Vert \nabla F(\bx)\right\Vert _{\delta}\right]\leq\epsilon$
(i.e., a $(\delta,\epsilon)$-stationary point of $F$), then by Proposition
14 of \citep{pmlr-v202-cutkosky23a} (or $\nu=1$ in Lemma \ref{lem:O2NC-holder}
given later), it also satisfies $\E\left[\left\Vert \nabla F(\bx)\right\Vert \right]\leq\epsilon+H\delta=2\epsilon$.
Therefore, Theorem \ref{thm:ncvx-lb-smooth} implies the number of
queries of $\bg$ by $\A$ is at least $\gtrsim\Delta H\epsilon^{-2}+\Delta H\sigma^{\frac{\p}{\p-1}}\epsilon^{-\frac{3\p-2}{\p-1}}=\Delta\delta^{-1}\epsilon^{-1}+\Delta\sigma^{\frac{\p}{\p-1}}\delta^{-1}\epsilon^{-\frac{2\p-1}{\p-1}}$. 
\end{proof}

Finally, we are able to prove Theorem \ref{thm:main-ncvx-lb}.

\begin{proof}[Proof of Theorem \ref{thm:main-ncvx-lb}]
We recall the fact that lower bounds on the distributional complexity
imply lower bounds on the minimax complexity \citep{nemirovskij1983problem}.
Thus, Theorem \ref{thm:main-ncvx-lb} can be directly concluded from
Theorem \ref{thm:ncvx-lb-nonsmooth}.
\end{proof}

\section{More Details about Further Extensions\label{sec:ext-details}}

This section contains more details of Section \ref{sec:extension}.

\subsection{Full Results for Smooth $\ell_{t}$}

We present the full version of Theorem \ref{thm:main-OCO-smooth}
and Corollary \ref{cor:main-cvx-smooth-avg}.
\begin{thm}[full version of Theorem \ref{thm:main-OCO-smooth}]
\label{thm:ext-OCO-smooth}Under Assumption \ref{assu:OCO} (with
replacing the third point by Condition \ref{cond:smooth}) and let
$S_{T}(\bx)\defeq\left(H\sum_{t=1}^{T}\ell_{t}(\bx)-\ell_{t}^{\inf}\right)\land\left(H\sum_{t=1}^{T}\ell_{t}(\bx)-\ell_{t}^{\star}+\sum_{t=1}^{T}\left\Vert \nabla\ell_{t}(\bx_{t}^{\star})\right\Vert ^{2}\right)$
where $\ell_{t}^{\inf}\defeq\inf_{\bx\in\R^{d}}\ell_{t}(\bx)$, $\bx_{t}^{\star}\defeq\argmin_{\bx\in\X}\ell_{t}(\bx)$,
and $\ell_{t}^{\star}\defeq\ell_{t}(\bx_{t}^{\star})$:
\begin{itemize}
\item taking $\eta_{t}=\frac{1}{4H}\land\gamma D\land\frac{D}{\sigma t^{1/\p}}$
for any $\gamma>0$ in $\A\in\left\{ \OGD,\DA\right\} $, we have
\[
\E\left[\reg_{T}^{\A}(\bx)\right]\lesssim HD^{2}+D\left(\frac{1}{\gamma}+\gamma S_{T}(\bx)\right)+\sigma DT^{1/\p},\forall\bx\in\X.
\]
\item taking $\eta=D/\sqrt{2}$ in $\A=\Ada$, we have
\[
\E\left[\reg_{T}^{\A}(\bx)\right]\lesssim HD^{2}+D\sqrt{S_{T}(\bx)}+\sigma DT^{1/\p},\forall\bx\in\X.
\]
\end{itemize}
\end{thm}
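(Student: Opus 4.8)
The plan is to re-run the arguments of Section~\ref{sec:OCO} essentially unchanged, replacing the Lipschitz control $\left\Vert \nabla\ell_{t}(\bx_{t})\right\Vert \leq G$ (used in \eqref{eq:OGD-4}) by \emph{self-bounding} estimates available under Condition~\ref{cond:smooth}. I will establish two such estimates and, correspondingly, two regret bounds, and then conclude by taking the smaller of the two (all their remaining terms coincide). The first estimate is classical: for any $H$-smooth $f$ with $\inf_{\R^{d}}f>-\infty$ one has $\left\Vert \nabla f(\bx)\right\Vert ^{2}\leq2H(f(\bx)-\inf_{\R^{d}}f)$, hence $\left\Vert \nabla\ell_{t}(\bx_{t})\right\Vert ^{2}\leq2H(\ell_{t}(\bx_{t})-\ell_{t}^{\inf})$ (if $\ell_{t}^{\inf}=-\infty$ the corresponding bound is vacuous and only the second branch is used). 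For the second, constrained, estimate, put $g_{t}(\bx)\defeq\ell_{t}(\bx)-\left\langle \nabla\ell_{t}(\bx_{t}^{\star}),\bx\right\rangle $; then $g_{t}$ is $H$-smooth convex with $\nabla g_{t}(\bx_{t}^{\star})=\bzero$, so $\bx_{t}^{\star}$ is its \emph{unconstrained} minimizer, and the same inequality gives $\left\Vert \nabla\ell_{t}(\bx_{t})-\nabla\ell_{t}(\bx_{t}^{\star})\right\Vert ^{2}=\left\Vert \nabla g_{t}(\bx_{t})\right\Vert ^{2}\leq2H(g_{t}(\bx_{t})-g_{t}(\bx_{t}^{\star}))\leq2H(\ell_{t}(\bx_{t})-\ell_{t}^{\star})$, the last step using $\left\langle \nabla\ell_{t}(\bx_{t}^{\star}),\bx_{t}-\bx_{t}^{\star}\right\rangle \geq0$ (since $\bx_{t}\in\X$ and $\bx_{t}^{\star}$ minimizes $\ell_{t}$ over $\X$). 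Expanding $\left\Vert \nabla\ell_{t}(\bx_{t})\right\Vert ^{2}\leq(1+\theta)\cdot2H(\ell_{t}(\bx_{t})-\ell_{t}^{\star})+(1+\theta^{-1})\left\Vert \nabla\ell_{t}(\bx_{t}^{\star})\right\Vert ^{2}$ with a small fixed $\theta\in(0,1)$ keeps the leading constant strictly below $1$ after multiplying by $H\eta_{t}\leq\tfrac14$.

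For $\A\in\{\OGD,\DA\}$ I would start from the core inequality \eqref{eq:OGD-2} (resp.\ its $\DA$ analogue \eqref{eq:DA-cvx-1} with $\eta_{t}$ replaced by $\eta_{t-1}$), bound $\left\langle \bg_{t},\bx_{t}-\bx_{t+1}\right\rangle \leq(\left\Vert \nabla\ell_{t}(\bx_{t})\right\Vert +\left\Vert \err_{t}\right\Vert )\left\Vert \bx_{t}-\bx_{t+1}\right\Vert $, and spend the slack $\tfrac{\left\Vert \bx_{t}-\bx_{t+1}\right\Vert ^{2}}{2\eta_{t}}$ on two pieces: one absorbs the noise term exactly as in \eqref{eq:OGD-5}, yielding $\C(\p)\eta_{t}^{\p-1}\left\Vert \err_{t}\right\Vert ^{\p}D^{2-\p}$; the other absorbs the gradient term via AM--GM, yielding a constant multiple of $\eta_{t}\left\Vert \nabla\ell_{t}(\bx_{t})\right\Vert ^{2}$. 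Substituting the relevant self-bounding estimate, taking conditional expectations (using $\E[\left\langle \bg_{t},\bx_{t}-\bx\right\rangle \mid\F_{t-1}]=\left\langle \nabla\ell_{t}(\bx_{t}),\bx_{t}-\bx\right\rangle \geq\ell_{t}(\bx_{t})-\ell_{t}(\bx)$), and summing with $\eta_{t}$ nonincreasing and $\left\Vert \bx_{t}-\bx\right\Vert \leq D$ yields, in the $\ell_{t}^{\inf}$ branch,
\[
\E[\reg_{T}^{\A}(\bx)]\leq\frac{D^{2}}{\eta_{T}}+2H\left(\sup\nolimits_{s}\eta_{s}\right)\sum_{t=1}^{T}\E[\ell_{t}(\bx_{t})-\ell_{t}^{\inf}]+\C(\p)D^{2-\p}\sigma^{\p}\sum_{t=1}^{T}\eta_{t}^{\p-1},
\]
where $\ell_{t}(\bx_{t})-\ell_{t}^{\inf}\geq0$ lets me pass from $\eta_{t}$ to $\sup_{s}\eta_{s}$. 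Since $\sum_{t}\E[\ell_{t}(\bx_{t})-\ell_{t}^{\inf}]=\E[\reg_{T}^{\A}(\bx)]+\sum_{t}(\ell_{t}(\bx)-\ell_{t}^{\inf})$ and $2H\sup_{s}\eta_{s}\leq\min\{\tfrac12,\,2H\gamma D\}$, the regret term is moved to the left at the cost of at most a factor $2$, and the coefficient surviving in front of $\sum_{t}(\ell_{t}(\bx)-\ell_{t}^{\inf})$ is $\lesssim\gamma DH$; combined with $\frac{D^{2}}{\eta_{T}}\leq4HD^{2}+\frac D\gamma+\sigma DT^{1/\p}$ and $\C(\p)D^{2-\p}\sigma^{\p}\sum_{t}\eta_{t}^{\p-1}\lesssim\sigma DT^{1/\p}$ (from $\eta_{t}\leq\tfrac{D}{\sigma t^{1/\p}}$ and $\sum_{t\leq T}t^{-(\p-1)/\p}\lesssim T^{1/\p}$) this gives the asserted bound with $S_{T}(\bx)$ replaced by $H\sum_{t}(\ell_{t}(\bx)-\ell_{t}^{\inf})$. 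The $\ell_{t}^{\star}$ branch is identical: the displayed inequality carries an additional term $c_{2}(\sup_{s}\eta_{s})\sum_{t}\left\Vert \nabla\ell_{t}(\bx_{t}^{\star})\right\Vert ^{2}\leq c_{2}\gamma D\sum_{t}\left\Vert \nabla\ell_{t}(\bx_{t}^{\star})\right\Vert ^{2}$ and uses $\ell_{t}(\bx_{t})-\ell_{t}^{\star}\geq0$, producing the asserted bound with $S_{T}(\bx)$ replaced by $H\sum_{t}(\ell_{t}(\bx)-\ell_{t}^{\star})+\sum_{t}\left\Vert \nabla\ell_{t}(\bx_{t}^{\star})\right\Vert ^{2}$. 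Taking the minimum of the two bounds (the $HD^{2}$, $D/\gamma$, $\sigma DT^{1/\p}$ terms being common) gives exactly the $\OGD$/$\DA$ claim.

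For $\A=\Ada$ I would start from the path-wise bound \eqref{eq:AdaGrad-cvx-3}, which with $\eta=D/\sqrt2$ reads $\sum_{t}\left\langle \bg_{t},\bx_{t}-\bx\right\rangle \lesssim D\sqrt{\sum_{t}\left\Vert \nabla\ell_{t}(\bx_{t})\right\Vert ^{2}}+D\big(\sum_{t}\left\Vert \err_{t}\right\Vert ^{\p}\big)^{1/\p}$. Plugging in $\sum_{t}\left\Vert \nabla\ell_{t}(\bx_{t})\right\Vert ^{2}\leq2H\sum_{t}(\ell_{t}(\bx_{t})-\ell_{t}^{\inf})$, taking expectations, using Jensen on the concave square root together with $\sum_{t}(\ell_{t}(\bx_{t})-\ell_{t}^{\inf})=\reg_{T}^{\Ada}(\bx)+\sum_{t}(\ell_{t}(\bx)-\ell_{t}^{\inf})\geq0$, and H\"older on the noise term ($\E[(\sum_{t}\left\Vert \err_{t}\right\Vert ^{\p})^{1/\p}]\leq\sigma T^{1/\p}$, as in Appendix~\ref{sec:AdaGrad}) gives the self-referential inequality $\E[\reg_{T}^{\Ada}(\bx)]\lesssim D\sqrt{H(\E[\reg_{T}^{\Ada}(\bx)]+\sum_{t}(\ell_{t}(\bx)-\ell_{t}^{\inf}))}+\sigma DT^{1/\p}$; solving it via $x\leq a\sqrt{x}+b\Rightarrow x\lesssim a^{2}+b$ yields $\E[\reg_{T}^{\Ada}(\bx)]\lesssim HD^{2}+D\sqrt{H\sum_{t}(\ell_{t}(\bx)-\ell_{t}^{\inf})}+\sigma DT^{1/\p}$, and the same with the $\ell_{t}^{\star}$ quantity; taking the minimum gives $D\sqrt{S_{T}(\bx)}$.

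The main obstacle — and the only genuinely delicate point — is the absorption step for $\OGD$/$\DA$: to reach the coefficient $\asymp\gamma DH$ (and not $\asymp1$) in front of $\sum_{t}(\ell_{t}(\bx)-\ell_{t}^{\inf})$ one must route the $\ell_{t}(\bx_{t})$-mass created by self-bounding through the estimate $\sup_{s}\eta_{s}\leq\gamma D$ rather than the cruder $\sup_{s}\eta_{s}\leq\tfrac1{4H}$, and in the $\ell_{t}^{\star}$ branch one must choose the split of the slack $\tfrac{\left\Vert \bx_{t}-\bx_{t+1}\right\Vert ^{2}}{2\eta_{t}}$ (equivalently, the constant $\theta$ above) so that the self-bounding constant times $H\eta_{t}\leq\tfrac14$ stays strictly below $1$, which is precisely what legitimizes moving the regret back to the left-hand side.
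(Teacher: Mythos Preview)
Your proposal is correct and follows essentially the same route as the paper: establish the two self-bounding inequalities $\left\Vert \nabla\ell_{t}(\bx_{t})\right\Vert ^{2}\lesssim H(\ell_{t}(\bx_{t})-\ell_{t}^{\inf})$ and $\left\Vert \nabla\ell_{t}(\bx_{t})\right\Vert ^{2}\lesssim H(\ell_{t}(\bx_{t})-\ell_{t}^{\star})+\left\Vert \nabla\ell_{t}(\bx_{t}^{\star})\right\Vert ^{2}$, plug into the core inequalities \eqref{eq:OGD-cvx-2}/\eqref{eq:DA-cvx-2}/\eqref{eq:AdaGrad-cvx-3} with $G$ replaced by $\left\Vert \nabla\ell_{t}(\bx_{t})\right\Vert $, write $\sum_{t}(\ell_{t}(\bx_{t})-\ell_{t}^{\inf})=\reg_{T}^{\A}(\bx)+\sum_{t}(\ell_{t}(\bx)-\ell_{t}^{\inf})$, and absorb using $H\eta_{t}\leq\tfrac14$. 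The only cosmetic differences are that the paper combines the two self-bounding estimates into a single $\min$-inequality \eqref{eq:smooth-1} before substituting (so the branching is done once), and it uses the fixed constant $3$ in the constrained estimate rather than your $(1+\theta)$-split, which is harmless since $3H\eta\leq\tfrac34<1$ already suffices for absorption.
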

\begin{rem}
First, $\bx_{t}^{\star}$ must exist since $\X$ is a nonempty compact
convex set and $\ell_{t}$ is closed convex. Next, when $\ell_{t}\geq0$
on $\R^{d}$, we have $S_{T}(\bx)\leq H\sum_{t=1}^{T}\ell_{t}(\bx)-\ell_{t}^{\inf}\leq H\sum_{t=1}^{T}\ell_{t}(\bx)$.
This fact, together with the replacement of $\gamma$ by $\gamma/\sqrt{H}$
for $\OGD$ and $\DA$, recovers Theorem \ref{thm:main-OCO-smooth}.
\end{rem}
\begin{proof}
Under Condition \ref{cond:smooth}, the following famous inequality
holds (regardless of the convexity of $\ell_{t}$)
\begin{equation}
\left\Vert \nabla\ell_{t}(\bx)\right\Vert ^{2}\leq2H(\ell_{t}(\bx)-\ell_{t}^{\inf}),\forall\bx\in\R^{d}.\label{eq:smooth-global}
\end{equation}
For convex $\ell_{t}$, another similar inequality is also true
\[
\left\Vert \nabla\ell_{t}(\bx)-\nabla\ell_{t}(\bx_{t}^{\star})\right\Vert ^{2}\leq2H(\ell_{t}(\bx)-\ell_{t}^{\star}),\forall\bx\in\X,
\]
which implies
\begin{equation}
\left\Vert \nabla\ell_{t}(\bx)\right\Vert ^{2}\leq\left(\sqrt{2H(\ell_{t}(\bx)-\ell_{t}^{\star})}+\left\Vert \nabla\ell_{t}(\bx_{t}^{\star})\right\Vert \right)^{2}\leq3H(\ell_{t}(\bx)-\ell_{t}^{\star})+3\left\Vert \nabla\ell_{t}(\bx_{t}^{\star})\right\Vert ^{2},\forall\bx\in\X.\label{eq:smooth-local}
\end{equation}
By (\ref{eq:smooth-global}) and (\ref{eq:smooth-local}), there is
\begin{align}
\sum_{t=1}^{T}\left\Vert \nabla\ell_{t}(\bx_{t})\right\Vert ^{2} & \leq3\sum_{t=1}^{T}\left[H(\ell_{t}(\bx_{t})-\ell_{t}^{\inf})\land\left(H(\ell_{t}(\bx_{t})-\ell_{t}^{\star})+\left\Vert \nabla\ell_{t}(\bx_{t}^{\star})\right\Vert ^{2}\right)\right]\nonumber \\
 & =3H\reg_{T}^{\A}(\bx)+3\sum_{t=1}^{T}\left[H(\ell_{t}(\bx)-\ell_{t}^{\inf})\land\left(H(\ell_{t}(\bx)-\ell_{t}^{\star})+\left\Vert \nabla\ell_{t}(\bx_{t}^{\star})\right\Vert ^{2}\right)\right]\nonumber \\
 & \leq3H\reg_{T}^{\A}(\bx)+3S_{T}(\bx).\label{eq:smooth-1}
\end{align}
With (\ref{eq:smooth-1}) on hand, we can extend the regret bound
of $\A\in\left\{ \OGD,\DA,\Ada\right\} $ to smooth $\ell_{t}$.

For $\OGD$, note that (\ref{eq:OGD-cvx-2}) still holds up to the
change of $G$ by $\left\Vert \nabla\ell_{t}(\bx_{t})\right\Vert $,
meaning that for any $\bx\in\X$,
\begin{equation}
\sum_{t=1}^{T}\left\langle \bg_{t},\bx_{t}-\bx\right\rangle \leq\frac{D^{2}}{\eta_{T}}+\sum_{t=1}^{T}\eta_{t}\left\Vert \nabla\ell_{t}(\bx_{t})\right\Vert ^{2}+\C(\p)\eta_{t}^{\p-1}\left\Vert \err_{t}\right\Vert ^{\p}D^{2-\p}.\label{eq:old-smooth-OGD}
\end{equation}
For $\DA$, note that (\ref{eq:DA-cvx-2}) still holds up to the change
of $G$ by $\left\Vert \nabla\ell_{t}(\bx_{t})\right\Vert $, meaning
that for any $\bx\in\X$,
\begin{equation}
\sum_{t=1}^{T}\left\langle \bg_{t},\bx_{t}-\bx\right\rangle \leq\frac{D^{2}}{\eta_{T}}+\sum_{t=1}^{T}\eta_{t-1}\left\Vert \nabla\ell_{t}(\bx_{t})\right\Vert ^{2}+\C(\p)\eta_{t-1}^{\p-1}\left\Vert \err_{t}\right\Vert ^{\p}D^{2-\p},\label{eq:old-smooth-DA}
\end{equation}
where $\eta_{0}=\eta_{1}$. Now we write $\eta_{t}=\eta\land\frac{D}{\sigma t^{1/\p}}$
for $\eta=\frac{1}{4H}\land\gamma D$. For $\A\in\left\{ \OGD,\DA\right\} $,
taking expectations on both sides of (\ref{eq:old-smooth-OGD}) or
(\ref{eq:old-smooth-OGD}) implies
\begin{align*}
\E\left[\reg_{T}^{\A}(\bx)\right] & \lesssim\frac{D^{2}}{\eta}+\sigma DT^{1/\p}+\eta\E\left[\sum_{t=1}^{T}\left\Vert \nabla\ell_{t}(\bx_{t})\right\Vert ^{2}\right]\\
 & \overset{(\ref{eq:smooth-1})}{\lesssim}\frac{D^{2}}{\eta}+\sigma DT^{1/\p}+3\eta H\E\left[\reg_{T}^{\A}(\bx)\right]+3\eta S_{T}(\bx)
\end{align*}
Since $\eta=\frac{1}{4H}\land\gamma D\Rightarrow1-3\eta H\geq\frac{1}{4}$,
$\eta\leq\gamma D$ and $\frac{1}{\eta}\leq4H+\frac{1}{\gamma D}$,
we hence conclude
\[
\E\left[\reg_{T}^{\A}(\bx)\right]\lesssim\frac{D^{2}}{\eta(1-3\eta H)}+\frac{3\eta S_{T}(\bx)}{1-3\eta H}+\frac{\sigma DT^{1/\p}}{1-3\eta H}\lesssim HD^{2}+D\left(\frac{1}{\gamma}+\gamma S_{T}(\bx)\right)+\sigma DT^{1/\p}.
\]

For $\A=\Ada$, note that (\ref{eq:AdaGrad-cvx-3}) still holds, i.e.,
for any $\bx\in\X$,
\begin{equation}
\sum_{t=1}^{T}\left\langle \bg_{t},\bx_{t}-\bx\right\rangle \leq\sqrt{2}\left(\frac{D^{2}}{2\eta}+\eta\right)\left[\sqrt{\sum_{t=1}^{T}\left\Vert \nabla\ell_{t}(\bx_{t})\right\Vert ^{2}}+\left(\sum_{t=1}^{T}\left\Vert \err_{t}\right\Vert ^{\p}\right)^{\frac{1}{\p}}\right].\label{eq:old-smooth-AdaGrad}
\end{equation}
We take expectations on both sides of (\ref{eq:old-smooth-AdaGrad}),
then apply H\"{o}lder's inequality to have $\E\left[\left(\sum_{t=1}^{T}\left\Vert \err_{t}\right\Vert ^{\p}\right)^{\frac{1}{\p}}\right]\leq\left(\sum_{t=1}^{T}\E\left[\left\Vert \err_{t}\right\Vert ^{\p}\right]\right)^{\frac{1}{\p}}\leq\sigma T^{\frac{1}{\p}}$,
and plug in $\eta=D/\sqrt{2}$ to obtain
\[
\E\left[\reg_{T}^{\A}(\bx)\right]\lesssim D\E\left[\sqrt{\sum_{t=1}^{T}\left\Vert \nabla\ell_{t}(\bx_{t})\right\Vert ^{2}}\right]+\sigma DT^{1/\p}\leq D\sqrt{\E\left[\sum_{t=1}^{T}\left\Vert \nabla\ell_{t}(\bx_{t})\right\Vert ^{2}\right]}+\sigma DT^{1/\p},
\]
where the last step is again by H\"{o}lder's inequality. Combine
the above inequality with (\ref{eq:smooth-1}) to conclude
\begin{align*}
\E\left[\reg_{T}^{\A}(\bx)\right] & \lesssim D\sqrt{H\E\left[\reg_{T}^{\A}(\bx)\right]+S_{T}(\bx)}+\sigma DT^{1/\p}\\
\Rightarrow\E\left[\reg_{T}^{\A}(\bx)\right] & \lesssim HD^{2}+D\sqrt{S_{T}(\bx)}+\sigma DT^{1/\p}.
\end{align*}
\end{proof}
 
\begin{cor}[full version of Corollary \ref{cor:main-cvx-smooth-avg}]
\label{cor:ext-cvx-smooth-avg}Under Assumption \ref{assu:OCO} (with
replacing the third point by Condition \ref{cond:smooth}) for $\ell_{t}(\bx)=F(\bx)$
and let $\bx^{\star}\defeq\argmin_{\bx\in\X}F(\bx)$ and $\bar{\bx}_{T}\defeq\frac{1}{T}\sum_{t=1}^{T}\bx_{t}$,
taking $\eta_{t}=\frac{1}{4H}\land\frac{D}{\left\Vert \nabla F(\bx^{\star})\right\Vert \sqrt{T}}\land\frac{D}{\sigma t^{1/\p}}$
in $\A\in\left\{ \OGD,\DA\right\} $ or $\eta=D/\sqrt{2}$ in $\A=\Ada$,
we have 
\[
\E\left[F(\bar{\bx}_{T})-F(\bx^{\star})\right]\leq\frac{\E\left[\reg_{T}^{\A}(\bx^{\star})\right]}{T}\lesssim\frac{HD^{2}}{T}+\frac{\left\Vert \nabla F(\bx^{\star})\right\Vert D}{\sqrt{T}}+\frac{\sigma D}{T^{1-\frac{1}{\p}}}.
\]
\end{cor}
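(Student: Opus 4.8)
The plan is a one-line online-to-batch reduction followed by a specialization of the smooth regret bound already proved in Theorem~\ref{thm:ext-OCO-smooth}. First I would use that $\ell_t(\bx)=F(\bx)$ for every $t$, so that convexity of $F$ (which is preserved, since Condition~\ref{cond:smooth} only replaces the Lipschitz requirement in Assumption~\ref{assu:OCO}) and Jensen's inequality give
\[
F(\bar{\bx}_T)-F(\bx^{\star})\le\frac1T\sum_{t=1}^T\bigl(F(\bx_t)-F(\bx^{\star})\bigr)=\frac{\reg_T^{\A}(\bx^{\star})}{T}.
\]
Taking expectations, it therefore suffices to establish $\E[\reg_T^{\A}(\bx^{\star})]\lesssim HD^2+\|\nabla F(\bx^{\star})\|D\sqrt T+\sigma DT^{1/\p}$.

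The core step is to evaluate the quantity $S_T(\bx)$ appearing in Theorem~\ref{thm:ext-OCO-smooth} at the competitor $\bx=\bx^{\star}$. Since $\ell_t=F$ for all $t$, we have $\bx_t^{\star}=\argmin_{\bx\in\X}\ell_t(\bx)=\bx^{\star}$ and $\ell_t^{\star}=F(\bx^{\star})$, so the second term inside the minimum defining $S_T(\bx^{\star})$ reduces to $H\sum_{t=1}^T\bigl(F(\bx^{\star})-F(\bx^{\star})\bigr)+\sum_{t=1}^T\|\nabla F(\bx^{\star})\|^2=T\|\nabla F(\bx^{\star})\|^2$. Hence $S_T(\bx^{\star})\le T\|\nabla F(\bx^{\star})\|^2$; in particular $S_T(\bx^{\star})=0$ whenever a global minimizer lies in $\X$, which simply recovers Corollary~\ref{cor:main-cvx-smooth-avg}.

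It remains to feed this into the two cases of Theorem~\ref{thm:ext-OCO-smooth}. For $\A\in\{\OGD,\DA\}$ I would apply the theorem with $\gamma=\bigl(\sqrt T\,\|\nabla F(\bx^{\star})\|\bigr)^{-1}$: then the prescribed stepsize $\frac1{4H}\land\gamma D\land\frac{D}{\sigma t^{1/\p}}$ coincides exactly with the one in the corollary, and $\frac1\gamma+\gamma S_T(\bx^{\star})\le\frac1\gamma+\gamma T\|\nabla F(\bx^{\star})\|^2=2\sqrt T\,\|\nabla F(\bx^{\star})\|$, giving the desired regret bound. For $\A=\Ada$ with $\eta=D/\sqrt2$, Theorem~\ref{thm:ext-OCO-smooth} directly yields $\E[\reg_T^{\A}(\bx^{\star})]\lesssim HD^2+D\sqrt{S_T(\bx^{\star})}+\sigma DT^{1/\p}\le HD^2+\|\nabla F(\bx^{\star})\|D\sqrt T+\sigma DT^{1/\p}$. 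Dividing by $T$ in either case finishes the proof. There is essentially no genuine obstacle here — all the difficulty is already packed into Theorem~\ref{thm:ext-OCO-smooth} — and the only point needing a moment's care is the degenerate case $\|\nabla F(\bx^{\star})\|=0$, where the middle stepsize term is $+\infty$ and $\gamma$ may be taken arbitrarily large while $S_T(\bx^{\star})=0$, so the bound remains valid.
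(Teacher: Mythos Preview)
Your proposal is correct and follows essentially the same approach as the paper: apply the online-to-batch conversion, observe that $S_T(\bx^{\star})\le T\|\nabla F(\bx^{\star})\|^2$ since $\bx_t^{\star}=\bx^{\star}$ for all $t$, and then invoke Theorem~\ref{thm:ext-OCO-smooth} with $\gamma=(\sqrt{T}\,\|\nabla F(\bx^{\star})\|)^{-1}$ for $\OGD$/$\DA$ and directly for $\Ada$. Your explicit handling of the degenerate case $\|\nabla F(\bx^{\star})\|=0$ is a nice addition that the paper leaves implicit.
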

\begin{rem}
When $\bx^{\star}\in\X$ is also a global minimizer (i.e., $\bx^{\star}\in\mathrm{arginf}_{\bx\in\R^{d}}F(\bx)$),
we have $\left\Vert \nabla F(\bx^{\star})\right\Vert =0$ since $F$
is differentiable. This fact recovers Corollary \ref{cor:main-cvx-smooth-avg}.
\end{rem}
\begin{proof}
By convexity, $F(\bar{\bx}_{T})-F(\bx^{\star})\leq\frac{\sum_{t=1}^{T}F(\bx_{t})-F(\bx^{\star})}{T}=\frac{\reg_{T}^{\A}(\bx^{\star})}{T}$
is valid for any OCO algorithm $\A$. We conclude from invoking Theorem
\ref{thm:ext-OCO-smooth} with $\gamma=\frac{1}{\left\Vert \nabla F(\bx^{\star})\right\Vert \sqrt{T}}$
for $\A\in\left\{ \OGD,\DA\right\} $ and $\eta=D/\sqrt{2}$ for $\A=\Ada$
and combining the fact $S_{T}(\bx^{\star})\leq H\sum_{t=1}^{T}F(\bx^{\star})-F(\bx^{\star})+\sum_{t=1}^{T}\left\Vert \nabla F(\bx^{\star})\right\Vert ^{2}=\left\Vert \nabla F(\bx^{\star})\right\Vert ^{2}T$
in this case.
\end{proof}

\subsection{An Optimistic Algorithm under Heavy Tails}

As discussed in Section \ref{sec:extension}, our goal is to handle
broader cases with optimistic algorithms. To do so, we first introduce
the following new Assumption \ref{assu:ext-OCO}:
\begin{assumption}
\label{assu:ext-OCO}We consider the following series of assumptions:
\begin{itemize}
\item $\X\subset\R^{d}$ is a nonempty closed convex set bounded by $D$,
i.e., $\sup_{\bx,\by\in\X}\left\Vert \bx-\by\right\Vert \leq D$.
\item $\ell_{t}:\X\to\R$ is closed convex for all $t\in\left[T\right]$.
\item $\ell_{t}$ is $(G_{t},H_{t},\nu)$-general nonsmooth on $\X$, i.e.,
there exists $G_{t}\geq0$, $H_{t}\geq0$ and $\nu\in\left(0,1\right]$
such that $G_{t}+H_{t}>0$ and $\left\Vert \nabla\ell_{t}(\bx)-\nabla\ell_{t}(\by)\right\Vert \leq2G_{t}+H_{t}\left\Vert \bx-\by\right\Vert ^{\nu},\forall\bz\in\X,\nabla\ell_{t}(\bz)\in\partial\ell_{t}(\bz),\bz\in\left\{ \bx,\by\right\} $,
for all $t\in\left[T\right]$.
\item Given a point $\bx_{t}\in\X$ at the $t$-th iteration, one can query
$\bg_{t}\in\R^{d}$ satisfying $\nabla\ell_{t}(\bx_{t})\defeq\E\left[\bg_{t}\mid\F_{t-1}\right]\in\partial\ell_{t}(\bx_{t})$
and $\E\left[\left\Vert \err_{t}\right\Vert ^{\p}\right]\leq\sigma_{t}^{\p}$
for some $\p\in\left(1,2\right]$ and $\sigma_{t}\geq0$, where $\F_{t}\defeq\sigma(\bg_{1},\mydots,\bg_{t})$
denotes the natural filtration and $\err_{t}\defeq\bg_{t}-\nabla\ell_{t}(\bx_{t})$
is the stochastic noise.
\end{itemize}
\end{assumption}
Assumption \ref{assu:ext-OCO} generalizes Assumption \ref{assu:OCO}
since the latter is a special case of the former, i.e., when $(G_{t},H_{t},\nu)=(G,0,\nu)$
(for any $\nu\in\left(0,1\right]$) and $\sigma_{t}=\sigma$. When
$G_{t}=0$, this new assumption means that each $\ell_{t}$ is locally
H\"{o}lder smooth with time-varying parameters $(H_{t},\nu)$ on
$\X$. If we further let $\nu=1$, then each $\ell_{t}$ is standard
smooth with a parameter $H_{t}$.

\subsubsection{Optimistic $\protect\Ada$}

\begin{algorithm}[H]
\caption{\label{alg:OAda}Optimistic $\protect\Ada$ ($\protect\OAda$)}

\textbf{Input:} initial point $\bx_{1}\in\X$, initial hint $\bh_{1}\in\R^{d}$,
stepsize $\eta>0$ and $\gamma_{t}>0$

\textbf{for} $t=1$ \textbf{to} $T$ \textbf{do}

$\quad$Query a hint $\bh_{t+1}$

$\quad$$\eta_{t}=\eta V_{t}^{-1/2}\land\gamma_{t}$ where $V_{t}=\sum_{s=1}^{t}\left\Vert \bg_{s}-\bh_{s}\right\Vert ^{2}$

$\quad$$\bx_{t+1}=\argmin_{\bx\in\X}\left\langle \bg_{t}+\bh_{t+1}-\bh_{t},\bx\right\rangle +\frac{\left\Vert \bx-\bx_{t}\right\Vert ^{2}}{2\eta_{t}}$

\textbf{end for}
\end{algorithm}

Following the famous idea of optimism, we consider an optimistic
version of $\Ada$ named $\OAda$ in Algorithm \ref{alg:OAda}. To
start with, we give the following Lemma \ref{lem:OAda-core}. As a
sanity check, one can take $\bh_{t}=\bzero$ and $\gamma_{t}=+\infty$,
then $\OAda$ reduces to the standard $\Ada$ method, and Lemma \ref{lem:OAda-core}
recovers the path-wise regret in (\ref{eq:AdaGrad-path}). Therefore,
the following result can be viewed as a further extension of $\Ada$,
meaning that we can apply our idea described before (see the paragraph
under (\ref{eq:AdaGrad-path}) or the proof around (\ref{eq:AdaGrad-cvx-3}))
to overcome heavy-tailed noise.
\begin{lem}
\label{lem:OAda-core}Under Assumption \ref{assu:ext-OCO}, taking
any hint sequence $\bh_{t}\in\R^{d}$, $\eta=D/\sqrt{2}$, and any
nonincreasing stepsize $\gamma_{t}$ in $\OAda$ (Algorithm \ref{alg:OAda}),
we have
\[
\sum_{t=1}^{T}\left\langle \bg_{t},\bx_{t}-\bx\right\rangle \lesssim\left\langle \bh_{1},\bx_{1}-\bx\right\rangle +\frac{D^{2}}{\gamma_{T}}+D\sqrt{\sum_{t=1}^{T}\left\Vert \bg_{t}-\bh_{t}\right\Vert ^{2}}-\sum_{t=1}^{T-1}\frac{\left\Vert \bx_{t+1}-\bx_{t}\right\Vert ^{2}}{4\gamma_{t}},\forall\bx\in\X.
\]
\end{lem}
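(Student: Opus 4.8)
The plan is to read $\OAda$ as optimistic online mirror descent with a time-adaptive Euclidean regularizer and run the standard ``prox'' analysis, the only twist being that I handle the last round by hand so as never to pay for the terminal hint $\bh_{T+1}$. No smoothness or H\"older structure from Assumption \ref{assu:ext-OCO} enters; all I use is boundedness of $\X$, monotonicity of $\eta_t$ and $\gamma_t$, and the textbook $\Ada$ summation lemma.

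For each $t\le T-1$ I would first invoke the three-point inequality for the strongly convex prox step: since $\bx_{t+1}$ minimizes the $\eta_t^{-1}$-strongly convex map $\bx\mapsto\langle\bg_t+\bh_{t+1}-\bh_t,\bx\rangle+\|\bx-\bx_t\|^2/(2\eta_t)$ over $\X$, for every $\bx\in\X$ one has $\langle\bg_t+\bh_{t+1}-\bh_t,\bx_{t+1}-\bx\rangle\le(\|\bx-\bx_t\|^2-\|\bx-\bx_{t+1}\|^2-\|\bx_t-\bx_{t+1}\|^2)/(2\eta_t)$. Splitting $\langle\bg_t,\bx_t-\bx\rangle=\langle\bg_t,\bx_t-\bx_{t+1}\rangle+\langle\bg_t,\bx_{t+1}-\bx\rangle$, writing $\bg_t=(\bg_t-\bh_t)+\bh_t$ in the first inner product, $\bg_t=(\bg_t+\bh_{t+1}-\bh_t)-(\bh_{t+1}-\bh_t)$ in the second, applying the prox inequality, and collecting the $\bh_t$ pieces gives the per-round estimate $\langle\bg_t,\bx_t-\bx\rangle\le\langle\bg_t-\bh_t,\bx_t-\bx_{t+1}\rangle+(\|\bx-\bx_t\|^2-\|\bx-\bx_{t+1}\|^2-\|\bx_t-\bx_{t+1}\|^2)/(2\eta_t)+\langle\bh_t,\bx_t-\bx\rangle-\langle\bh_{t+1},\bx_{t+1}-\bx\rangle$. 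Summing over $t=1,\dots,T-1$, the last two terms telescope to $\langle\bh_1,\bx_1-\bx\rangle-\langle\bh_T,\bx_T-\bx\rangle$. I then close the sum at $t=T$ \emph{without} touching the update that produces $\bx_{T+1}$: simply $\langle\bg_T,\bx_T-\bx\rangle=\langle\bg_T-\bh_T,\bx_T-\bx\rangle+\langle\bh_T,\bx_T-\bx\rangle\le D\|\bg_T-\bh_T\|+\langle\bh_T,\bx_T-\bx\rangle$, and the stray $\langle\bh_T,\bx_T-\bx\rangle$ cancels the telescoped boundary term $-\langle\bh_T,\bx_T-\bx\rangle$, while $D\|\bg_T-\bh_T\|\le D\sqrt{\sum_{t=1}^{T}\|\bg_t-\bh_t\|^2}=D\sqrt{V_T}$.

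It remains to control three sums. For the cross terms I would use Cauchy--Schwarz and Young's inequality as $\langle\bg_t-\bh_t,\bx_t-\bx_{t+1}\rangle\le\eta_t\|\bg_t-\bh_t\|^2+\|\bx_t-\bx_{t+1}\|^2/(4\eta_t)$, which absorbs half of the prox step's negative quadratic term and, using $\eta_t\le\gamma_t$, leaves exactly $-\|\bx_t-\bx_{t+1}\|^2/(4\gamma_t)$ behind (this is why the statement's negative sum runs only to $T-1$); the residual satisfies $\sum_{t\le T-1}\eta_t\|\bg_t-\bh_t\|^2\le\sum_{t\le T}\eta\|\bg_t-\bh_t\|^2/\sqrt{V_t}\le2\eta\sqrt{V_T}$ by the standard bound $\sum_t a_t/\sqrt{\sum_{s\le t}a_s}\le2\sqrt{\sum_t a_t}$ (a term simply vanishing when $\bg_t=\bh_t$). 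For the telescoping diameter differences, since $\eta_t=\eta V_t^{-1/2}\wedge\gamma_t$ is nonincreasing (minimum of two nonincreasing sequences), $\sum_{t\le T-1}(\|\bx-\bx_t\|^2-\|\bx-\bx_{t+1}\|^2)/(2\eta_t)\le D^2/(2\eta_{T-1})\le D^2\sqrt{V_T}/(2\eta)+D^2/(2\gamma_{T-1})\le D^2\sqrt{V_T}/(2\eta)+D^2/(2\gamma_T)$. Substituting $\eta=D/\sqrt2$ and adding up yields $\sum_{t=1}^{T}\langle\bg_t,\bx_t-\bx\rangle\lesssim\langle\bh_1,\bx_1-\bx\rangle+D^2/\gamma_T+D\sqrt{V_T}-\sum_{t=1}^{T-1}\|\bx_{t+1}-\bx_t\|^2/(4\gamma_t)$, as claimed. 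The single delicate step is the boundary bookkeeping: running the optimistic-OMD telescope over all $T$ rounds would leave a term $-\langle\bh_{T+1},\bx_{T+1}-\bx\rangle$ that cannot be bounded for an arbitrary hint sequence, and isolating round $T$ (never invoking the final update) is precisely what makes it disappear.
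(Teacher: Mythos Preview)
Your argument is correct and follows essentially the same optimistic-OMD analysis as the paper: the three-point inequality, the telescoping of $\langle\bh_t,\bx_t-\bx\rangle-\langle\bh_{t+1},\bx_{t+1}-\bx\rangle$, Young's inequality on the cross term, and the $\Ada$ summation lemma are all identical. The only difference is the boundary bookkeeping: the paper runs the prox inequality through all $T$ rounds and then simply declares $\bh_{T+1}=\bzero$ (arguing this is without loss of generality since neither $\bh_{T+1}$ nor $\bx_{T+1}$ enters the regret), whereas you isolate round $T$ and bound it directly by $D\|\bg_T-\bh_T\|$; both devices kill the dangling $\langle\bh_{T+1},\bx_{T+1}-\bx\rangle$ at the cost of an extra $O(D\sqrt{V_T})$ absorbed in $\lesssim$.
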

\begin{proof}
Given $t\in\left[T\right]$, by the optimality condition of the update
rule in Algorithm \ref{alg:OAda}, for any $\bx\in\X$,
\[
\left\langle \bg_{t}+\bh_{t+1}-\bh_{t}+\frac{\bx_{t+1}-\bx_{t}}{\eta_{t}},\bx_{t+1}-\bx\right\rangle \leq0,
\]
which implies
\begin{align*}
\left\langle \bg_{t}+\bh_{t+1}-\bh_{t},\bx_{t+1}-\bx\right\rangle  & \leq\frac{\left\langle \bx_{t}-\bx_{t+1},\bx_{t+1}-\bx\right\rangle }{\eta_{t}}\\
 & =\frac{\left\Vert \bx-\bx_{t}\right\Vert ^{2}-\left\Vert \bx-\bx_{t+1}\right\Vert ^{2}-\left\Vert \bx_{t+1}-\bx_{t}\right\Vert ^{2}}{2\eta_{t}},
\end{align*}
Therefore, we know
\begin{align*}
\left\langle \bg_{t},\bx_{t}-\bx\right\rangle = & \left\langle \bg_{t}+\bh_{t+1}-\bh_{t},\bx_{t+1}-\bx\right\rangle +\left\langle \bg_{t}-\bh_{t},\bx_{t}-\bx_{t+1}\right\rangle +\left\langle \bh_{t},\bx_{t}-\bx\right\rangle -\left\langle \bh_{t+1},\bx_{t+1}-\bx\right\rangle \\
\leq & \frac{\left\Vert \bx-\bx_{t}\right\Vert ^{2}-\left\Vert \bx-\bx_{t+1}\right\Vert ^{2}}{2\eta_{t}}+\left\langle \bg_{t}-\bh_{t},\bx_{t}-\bx_{t+1}\right\rangle -\frac{\left\Vert \bx_{t+1}-\bx_{t}\right\Vert ^{2}}{2\eta_{t}}\\
 & +\left\langle \bh_{t},\bx_{t}-\bx\right\rangle -\left\langle \bh_{t+1},\bx_{t+1}-\bx\right\rangle \\
\leq & \frac{\left\Vert \bx-\bx_{t}\right\Vert ^{2}-\left\Vert \bx-\bx_{t+1}\right\Vert ^{2}}{2\eta_{t}}+\eta_{t}\left\Vert \bg_{t}-\bh_{t}\right\Vert ^{2}-\frac{\left\Vert \bx_{t+1}-\bx_{t}\right\Vert ^{2}}{4\eta_{t}}\\
 & +\left\langle \bh_{t},\bx_{t}-\bx\right\rangle -\left\langle \bh_{t+1},\bx_{t+1}-\bx\right\rangle ,
\end{align*}
sum up which from $t=1$ to $T$ and drop the term $-\frac{\left\Vert \bx_{T+1}-\bx\right\Vert ^{2}}{2\eta_{T}}$
to have
\begin{align*}
\sum_{t=1}^{T}\left\langle \bg_{t},\bx_{t}-\bx\right\rangle \leq & \frac{\left\Vert \bx-\bx_{1}\right\Vert ^{2}}{2\eta_{1}}+\sum_{t=1}^{T-1}\left(\frac{1}{\eta_{t+1}}-\frac{1}{\eta_{t}}\right)\frac{\left\Vert \bx_{t+1}-\bx\right\Vert ^{2}}{2}+\sum_{t=1}^{T}\eta_{t}\left\Vert \bg_{t}-\bh_{t}\right\Vert ^{2}-\frac{\left\Vert \bx_{t+1}-\bx_{t}\right\Vert ^{2}}{4\eta_{t}}\\
 & +\left\langle \bh_{1},\bx_{1}-\bx\right\rangle -\left\langle \bh_{T+1},\bx_{T+1}-\bx\right\rangle \\
\overset{(a)}{\leq} & \frac{D^{2}}{2\eta_{T}}+\sum_{t=1}^{T}\eta_{t}\left\Vert \bg_{t}-\bh_{t}\right\Vert ^{2}-\frac{\left\Vert \bx_{t+1}-\bx_{t}\right\Vert ^{2}}{4\eta_{t}}+\left\langle \bh_{1},\bx_{1}-\bx\right\rangle -\left\langle \bh_{T+1},\bx_{T+1}-\bx\right\rangle \\
\overset{(b)}{\leq} & \frac{D^{2}}{2\gamma_{T}}+\frac{D^{2}\sqrt{V_{T}}}{2\eta}+\sum_{t=1}^{T}\frac{\eta\left\Vert \bg_{t}-\bh_{t}\right\Vert ^{2}}{\sqrt{V_{t}}}-\frac{\left\Vert \bx_{t+1}-\bx_{t}\right\Vert ^{2}}{4\gamma_{t}}+\left\langle \bh_{1},\bx_{1}-\bx\right\rangle -\left\langle \bh_{T+1},\bx_{T+1}-\bx\right\rangle \\
\overset{(c)}{\leq} & \frac{D^{2}}{2\gamma_{T}}+\left(\frac{D^{2}}{2\eta}+2\eta\right)\sqrt{\sum_{t=1}^{T}\left\Vert \bg_{t}-\bh_{t}\right\Vert ^{2}}-\sum_{t=1}^{T}\frac{\left\Vert \bx_{t+1}-\bx_{t}\right\Vert ^{2}}{4\gamma_{t}}+\left\langle \bh_{1},\bx_{1}-\bx\right\rangle -\left\langle \bh_{T+1},\bx_{T+1}-\bx\right\rangle ,
\end{align*}
where $(a)$ is by $\left\Vert \bx_{t}-\bx\right\Vert \leq D,\forall t\in\left[T\right]$
and $\eta_{t+1}\leq\eta_{t},\forall t\in\left[T-1\right]$ (since
$\gamma_{t}$ is assumed to be nonincreasing), $(b)$ is due to $\eta_{t}=\frac{\eta}{\sqrt{V_{t}}}\land\gamma_{t}\Rightarrow\frac{1}{\eta_{T}}=\frac{\sqrt{V_{T}}}{\eta}\lor\frac{1}{\gamma_{T}}\leq\frac{\sqrt{V_{T}}}{\eta}+\frac{1}{\gamma_{T}}$,
and $(c)$ follows a similar step as proving (\ref{eq:AdaGrad-cvx-2}).
Finally, we drop the term $-\frac{\left\Vert \bx_{T+1}-\bx_{T}\right\Vert ^{2}}{4\gamma_{T}}$,
use $\eta=D/\sqrt{2}$, and set $\bh_{T+1}=\bzero$ to obtain the
desired bound (this step is without loss of generality, otherwise,
one can simply change the $\bx_{T+1}$ used above to be $\argmin_{\bx\in\X}\left\langle \bg_{T}-\bh_{T},\bx\right\rangle +\frac{\left\Vert \bx-\bx_{T}\right\Vert ^{2}}{2\eta_{T}}$).
\end{proof}

\subsubsection{New Regret for $\protect\OAda$}

Equipped with Lemma \ref{lem:OAda-core}, we first prove the following
Theorem \ref{thm:OAda}, which establishes the regret of $\OAda$
under heavy tails.
\begin{thm}
\label{thm:OAda}Under Assumption \ref{assu:ext-OCO}, taking $\bh_{t}=\bg_{t-1}$
where $\bg_{0}\defeq\bzero$, $\eta=D/\sqrt{2}$, and $\gamma_{t}=\frac{D^{1-\nu}}{\max_{s\in\left[t\right]}H_{s}}$
in $\OAda$ (Algorithm \ref{alg:OAda}), we have
\[
\E\left[\reg_{T}^{\OAda}(\bx)\right]\lesssim D^{1+\nu}\left(\sum_{t=1}^{T}H_{t}^{\frac{2-\nu}{1-\nu}}\right)^{\frac{1-\nu}{2-\nu}}+D\sqrt{A_{T}+\sum_{t=1}^{T}G_{t}^{2}}+D\left(\sum_{t=1}^{T}\sigma_{t}^{\p}\right)^{\frac{1}{\p}},
\]
where $A_{T}\defeq\left\Vert \nabla\ell_{1}(\bx_{1})\right\Vert +\sum_{t=2}^{T}\sup_{\bx\in\X}\left\Vert \nabla\ell_{t}(\bx)-\nabla\ell_{t-1}(\bx)\right\Vert ^{2}$
and $\ell_{0}\defeq0$.
\end{thm}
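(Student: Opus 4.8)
The plan is to start from the pathwise bound of Lemma~\ref{lem:OAda-core} and control the gradient variation $\sum_{t=1}^{T}\left\Vert \bg_{t}-\bh_{t}\right\Vert ^{2}$ generated by the hint $\bh_{t}=\bg_{t-1}$, absorbing the movement-dependent curvature contribution into the negative term $-\sum_{t=1}^{T-1}\frac{\left\Vert \bx_{t+1}-\bx_{t}\right\Vert ^{2}}{4\gamma_{t}}$ that the lemma supplies. First I would check admissibility of the parameters: $\gamma_{t}=D^{1-\nu}/\max_{s\in\left[t\right]}H_{s}$ is nonincreasing, $\eta=D/\sqrt{2}$ as required, and $\bh_{1}=\bg_{0}=\bzero$, so $\left\langle \bh_{1},\bx_{1}-\bx\right\rangle =0$ in Lemma~\ref{lem:OAda-core}. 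It then remains to bound $\frac{D^{2}}{\gamma_{T}}+D\sqrt{\sum_{t=1}^{T}\left\Vert \bg_{t}-\bg_{t-1}\right\Vert ^{2}}-\sum_{t=1}^{T-1}\frac{\left\Vert \bx_{t+1}-\bx_{t}\right\Vert ^{2}}{4\gamma_{t}}$. The first term is immediate, since $\frac{D^{2}}{\gamma_{T}}=D^{1+\nu}\max_{s\in\left[T\right]}H_{s}\leq D^{1+\nu}\left(\sum_{t=1}^{T}H_{t}^{\frac{2-\nu}{1-\nu}}\right)^{\frac{1-\nu}{2-\nu}}$, the $\ell^{\infty}$ norm being dominated by the $\ell^{q}$ norm with $q=\frac{2-\nu}{1-\nu}\geq1$.

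The core estimate decomposes, for $t\geq2$, $\bg_{t}-\bg_{t-1}=\left(\nabla\ell_{t}(\bx_{t})-\nabla\ell_{t-1}(\bx_{t})\right)+\left(\nabla\ell_{t-1}(\bx_{t})-\nabla\ell_{t-1}(\bx_{t-1})\right)+\left(\err_{t}-\err_{t-1}\right)$, where the first bracket is bounded by $\sup_{\bx\in\X}\left\Vert \nabla\ell_{t}(\bx)-\nabla\ell_{t-1}(\bx)\right\Vert$, the second by $2G_{t-1}+H_{t-1}\left\Vert \bx_{t}-\bx_{t-1}\right\Vert ^{\nu}$ via $(G_{t-1},H_{t-1},\nu)$-general nonsmoothness of $\ell_{t-1}$, while for $t=1$ one only has $\left\Vert \bg_{1}\right\Vert \leq\left\Vert \nabla\ell_{1}(\bx_{1})\right\Vert +\left\Vert \err_{1}\right\Vert$. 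Squaring, summing over $t$, and splitting the square root through $\sqrt{a+b}\leq\sqrt{a}+\sqrt{b}$ yields $D\sqrt{\sum_{t=1}^{T}\left\Vert \bg_{t}-\bg_{t-1}\right\Vert ^{2}}\lesssim D\sqrt{A_{T}+\sum_{t=1}^{T}G_{t}^{2}}+D\sqrt{\sum_{t=1}^{T-1}H_{t}^{2}\left\Vert \bx_{t+1}-\bx_{t}\right\Vert ^{2\nu}}+D\sqrt{\sum_{t=1}^{T}\left\Vert \err_{t}\right\Vert ^{2}}$, where the $\err_{t-1}$ contributions are folded into the last sum (with $\err_{0}\defeq\bzero$), and the first-round term $\left\Vert \bg_{1}\right\Vert$ together with the deterministic increments $\sup_{\bx}\left\Vert \nabla\ell_{t}(\bx)-\nabla\ell_{t-1}(\bx)\right\Vert$ assemble into $A_{T}$.

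The main obstacle is the middle piece $D\sqrt{\sum_{t=1}^{T-1}H_{t}^{2}\left\Vert \bx_{t+1}-\bx_{t}\right\Vert ^{2\nu}}$, which has to be traded against $\sum_{t=1}^{T-1}\frac{\left\Vert \bx_{t+1}-\bx_{t}\right\Vert ^{2}}{4\gamma_{t}}$. I would do this in two stages: an AM-GM step $\sqrt{S}\leq\frac{\lambda}{2}+\frac{S}{2\lambda}$ with a free $\lambda>0$ to linearize the root, followed by a term-by-term Young inequality with conjugate exponents $\frac{1}{\nu}$ and $\frac{1}{1-\nu}$ applied to $\frac{D}{2\lambda}H_{t}^{2}\left\Vert \bx_{t+1}-\bx_{t}\right\Vert ^{2\nu}$, the split arranged so that the $\left\Vert \bx_{t+1}-\bx_{t}\right\Vert ^{2}$ piece equals $\frac{1}{8\gamma_{t}}\left\Vert \bx_{t+1}-\bx_{t}\right\Vert ^{2}$ and is absorbed into the negative term, the leftover $-\sum_{t}\frac{\left\Vert \bx_{t+1}-\bx_{t}\right\Vert ^{2}}{8\gamma_{t}}$ being discarded. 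Using $\gamma_{t}\leq D^{1-\nu}/H_{t}$ whenever $H_{t}>0$ (the term vanishing otherwise), the residual constant collapses to a multiple of $D^{\frac{1}{1-\nu}+\nu}\lambda^{-\frac{1}{1-\nu}}\sum_{t}H_{t}^{\frac{2-\nu}{1-\nu}}$ --- which is exactly why $\gamma_{t}$ is taken proportional to $D^{1-\nu}/\max_{s\leq t}H_{s}$ --- and balancing $\frac{D\lambda}{2}$ against this residual over $\lambda$ produces $D^{1+\nu}\left(\sum_{t}H_{t}^{\frac{2-\nu}{1-\nu}}\right)^{\frac{1-\nu}{2-\nu}}$, matching the first term of the claim; the regime $\nu\to1$ is the only delicate point, where the $\nu$-dependent constants degenerate and the first term must be read with the usual convention.

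Collecting the three surviving pieces gives the pathwise bound $\sum_{t=1}^{T}\left\langle \bg_{t},\bx_{t}-\bx\right\rangle \lesssim D^{1+\nu}\left(\sum_{t=1}^{T}H_{t}^{\frac{2-\nu}{1-\nu}}\right)^{\frac{1-\nu}{2-\nu}}+D\sqrt{A_{T}+\sum_{t=1}^{T}G_{t}^{2}}+D\sqrt{\sum_{t=1}^{T}\left\Vert \err_{t}\right\Vert ^{2}}$. Taking expectations, the left side lower-bounds $\E\left[\reg_{T}^{\OAda}(\bx)\right]$: since $\bx_{t}\in\F_{t-1}$, $\E\left[\bg_{t}\mid\F_{t-1}\right]=\nabla\ell_{t}(\bx_{t})$, and convexity, $\E\left[\left\langle \bg_{t},\bx_{t}-\bx\right\rangle \mid\F_{t-1}\right]\geq\ell_{t}(\bx_{t})-\ell_{t}(\bx)$, exactly as in the proof of Theorem~\ref{thm:main-OGD}. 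On the right side the only stochastic term is handled as in the proof of Theorem~\ref{thm:main-AdaGrad}: $\sqrt{\sum_{t=1}^{T}\left\Vert \err_{t}\right\Vert ^{2}}\leq\left(\sum_{t=1}^{T}\left\Vert \err_{t}\right\Vert ^{\p}\right)^{1/\p}$ because $\p\leq2$, and then H\"{o}lder's inequality yields $\E\left[\left(\sum_{t=1}^{T}\left\Vert \err_{t}\right\Vert ^{\p}\right)^{1/\p}\right]\leq\left(\sum_{t=1}^{T}\E\left[\left\Vert \err_{t}\right\Vert ^{\p}\right]\right)^{1/\p}\leq\left(\sum_{t=1}^{T}\sigma_{t}^{\p}\right)^{1/\p}$, which closes the proof.
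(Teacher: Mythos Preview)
Your proposal is correct and follows essentially the same strategy as the paper: start from Lemma~\ref{lem:OAda-core}, decompose $\bg_t-\bg_{t-1}$ into gradient variation, curvature, and noise pieces, absorb the curvature part into the negative movement term, and finish with the $\left\Vert \cdot\right\Vert _{2}\leq\left\Vert \cdot\right\Vert _{\p}$ plus H\"{o}lder argument for the noise. The only cosmetic difference is in the curvature step: the paper bounds $D\sqrt{\sum_{t}H_{t}^{2}\left\Vert \bx_{t+1}-\bx_{t}\right\Vert ^{2\nu}}$ by first applying H\"{o}lder's inequality to the sum (exponents $\tfrac{1}{1-\nu},\tfrac{1}{\nu}$) and then Young's inequality to the resulting product (exponents $\tfrac{2}{2-\nu},\tfrac{2}{\nu}$), whereas you linearize the square root via $\sqrt{S}\leq\tfrac{\lambda}{2}+\tfrac{S}{2\lambda}$, apply Young termwise, and optimize over $\lambda$; both routes yield the same $D^{1+\nu}\bigl(\sum_{t}H_{t}^{\frac{2-\nu}{1-\nu}}\bigr)^{\frac{1-\nu}{2-\nu}}$.
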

We briefly discuss Theorem \ref{thm:OAda} before proving it. First,
the quantity $A_{T}$ is standard and well-known in the literature
as gradient variation \citep{pmlr-v23-chiang12}. Next, let us consider
the case $H_{t}=0$, then the bound degenerates to $D\sqrt{A_{T}+\sum_{t=1}^{T}G_{t}^{2}}+D\left(\sum_{t=1}^{T}\sigma_{t}^{\p}\right)^{1/\p}$,
which further reduces to the optimal regret $GD\sqrt{T}+\sigma DT^{1/\p}$
under Assumption \ref{assu:OCO}. In the case $G_{t}=0$, i.e., locally
H\"{o}lder smooth $\ell_{t}$, it gives a regret $D^{1+\nu}\left(\sum_{t=1}^{T}H_{t}^{\frac{2-\nu}{1-\nu}}\right)^{\frac{1-\nu}{2-\nu}}+D\sqrt{A_{T}}+D\left(\sum_{t=1}^{T}\sigma_{t}^{\p}\right)^{1/\p}$.
In the special situation, deterministic OCO under standard smoothness
(i.e., $\sigma_{t}=0$, $H_{t}=H$, and $\nu=1$), this matches the
classical result $HD^{2}+\sqrt{A_{T}}D$ \citep{pmlr-v23-chiang12}.
But as one can see, our Theorem \ref{thm:OAda} is more general and,
as far as we know, is the first bound containing gradient variation
for heavy-tailed OCO.

\begin{proof}
By Lemma \ref{lem:OAda-core} with $\bh_{t}=\bg_{t-1}$, we have
\begin{equation}
\sum_{t=1}^{T}\left\langle \bg_{t},\bx_{t}-\bx\right\rangle \lesssim\frac{D^{2}}{\gamma_{T}}+D\sqrt{\sum_{t=1}^{T}\left\Vert \bg_{t}-\bg_{t-1}\right\Vert ^{2}}-\sum_{t=1}^{T-1}\frac{\left\Vert \bx_{t+1}-\bx_{t}\right\Vert ^{2}}{4\gamma_{t}}.\label{eq:OAda-1}
\end{equation}
Let $\err_{0}\defeq\bzero$, we can find
\begin{align}
\sum_{t=1}^{T}\left\Vert \bg_{t}-\bg_{t-1}\right\Vert ^{2} & =\sum_{t=1}^{T}\left\Vert \nabla\ell_{t}(\bx_{t})-\nabla\ell_{t-1}(\bx_{t})+\nabla\ell_{t-1}(\bx_{t})-\nabla\ell_{t-1}(\bx_{t-1})+\err_{t}-\err_{t-1}\right\Vert ^{2}\nonumber \\
 & \lesssim\sum_{t=1}^{T}\left\Vert \nabla\ell_{t}(\bx_{t})-\nabla\ell_{t-1}(\bx_{t})\right\Vert ^{2}+\sum_{t=1}^{T}\left\Vert \nabla\ell_{t-1}(\bx_{t})-\nabla\ell_{t-1}(\bx_{t-1})\right\Vert ^{2}+\sum_{t=1}^{T}\left\Vert \err_{t}\right\Vert ^{2}\nonumber \\
 & \leq A_{T}+\sum_{t=2}^{T}\left\Vert \nabla\ell_{t-1}(\bx_{t})-\nabla\ell_{t-1}(\bx_{t-1})\right\Vert ^{2}+\sum_{t=1}^{T}\left\Vert \err_{t}\right\Vert ^{2},\label{eq:OAda-2}
\end{align}
where the last step is by the definition of $A_{T}$ and $\ell_{0}=0$.
Now by Assumption \ref{assu:ext-OCO}, $\left\Vert \nabla\ell_{t-1}(\bx_{t})-\nabla\ell_{t-1}(\bx_{t-1})\right\Vert ^{2}\lesssim G_{t-1}^{2}+H_{t-1}^{2}\left\Vert \bx_{t}-\bx_{t-1}\right\Vert ^{2\nu}$,
which implies
\begin{equation}
\sum_{t=2}^{T}\left\Vert \nabla\ell_{t-1}(\bx_{t})-\nabla\ell_{t-1}(\bx_{t-1})\right\Vert ^{2}\lesssim\sum_{t=2}^{T}G_{t-1}^{2}+H_{t-1}^{2}\left\Vert \bx_{t}-\bx_{t-1}\right\Vert ^{2\nu}=\sum_{t=1}^{T-1}G_{t}^{2}+H_{t}^{2}\left\Vert \bx_{t+1}-\bx_{t}\right\Vert ^{2\nu}.\label{eq:OAda-3}
\end{equation}
Combine (\ref{eq:OAda-2}) and (\ref{eq:OAda-3}) and use $\sqrt{a+b}\le\sqrt{a}+\sqrt{b},\forall a,b\geq0$
to have
\begin{align}
\sqrt{\sum_{t=1}^{T}\left\Vert \bg_{t}-\bg_{t-1}\right\Vert ^{2}} & \lesssim\sqrt{A_{T}+\sum_{t=1}^{T-1}G_{t}^{2}}+\sqrt{\sum_{t=1}^{T-1}H_{t}^{2}\left\Vert \bx_{t+1}-\bx_{t}\right\Vert ^{2\nu}}+\sqrt{\sum_{t=1}^{T}\left\Vert \err_{t}\right\Vert ^{2}}\nonumber \\
 & \leq\sqrt{A_{T}+\sum_{t=1}^{T-1}G_{t}^{2}}+\sqrt{\sum_{t=1}^{T-1}H_{t}^{2}\left\Vert \bx_{t+1}-\bx_{t}\right\Vert ^{2\nu}}+\left(\sum_{t=1}^{T}\left\Vert \err_{t}\right\Vert ^{\p}\right)^{\frac{1}{\p}},\label{eq:OAda-4}
\end{align}
where the last step is due to $\left\Vert \cdot\right\Vert _{2}\leq\left\Vert \cdot\right\Vert _{\p}$
for any $\p\in\left[1,2\right]$.

Therefore, by (\ref{eq:OAda-1}) and (\ref{eq:OAda-4}), the following
inequality holds
\begin{align}
\sum_{t=1}^{T}\left\langle \bg_{t},\bx_{t}-\bx\right\rangle \lesssim & \frac{D^{2}}{\gamma_{T}}+D\sqrt{\sum_{t=1}^{T-1}H_{t}^{2}\left\Vert \bx_{t+1}-\bx_{t}\right\Vert ^{2\nu}}-\sum_{t=1}^{T-1}\frac{\left\Vert \bx_{t+1}-\bx_{t}\right\Vert ^{2}}{4\gamma_{t}}\nonumber \\
 & +D\sqrt{A_{T}+\sum_{t=1}^{T-1}G_{t}^{2}}+D\left(\sum_{t=1}^{T}\left\Vert \err_{t}\right\Vert ^{\p}\right)^{\frac{1}{\p}}.\label{eq:OAda-5}
\end{align}
We use H\"{o}lder's inequality to bound
\[
\sum_{t=1}^{T-1}H_{t}^{2}\left\Vert \bx_{t+1}-\bx_{t}\right\Vert ^{2\nu}=\sum_{t=1}^{T-1}H_{t}^{2}\left(2\nu\gamma_{t}\right)^{\nu}\frac{\left\Vert \bx_{t+1}-\bx_{t}\right\Vert }{\left(2\nu\gamma_{t}\right)^{\nu}}^{2\nu}\leq\left(\sum_{t=1}^{T-1}H_{t}^{\frac{2}{1-\nu}}\left(2\nu\gamma_{t}\right)^{\frac{\nu}{1-\nu}}\right)^{1-\nu}\left(\sum_{t=1}^{T-1}\frac{\left\Vert \bx_{t+1}-\bx_{t}\right\Vert }{2\nu\gamma_{t}}^{2}\right)^{\nu},
\]
which implies
\begin{align}
D\sqrt{\sum_{t=1}^{T-1}H_{t}^{2}\left\Vert \bx_{t+1}-\bx_{t}\right\Vert ^{2\nu}} & \leq D\left(\sum_{t=1}^{T-1}H_{t}^{\frac{2}{1-\nu}}\left(2\nu\gamma_{t}\right)^{\frac{\nu}{1-\nu}}\right)^{\frac{1-\nu}{2}}\left(\sum_{t=1}^{T-1}\frac{\left\Vert \bx_{t+1}-\bx_{t}\right\Vert }{2\nu\gamma_{t}}^{2}\right)^{\frac{\nu}{2}}\nonumber \\
 & \leq\frac{D^{\frac{2}{2-\nu}}\left(\sum_{t=1}^{T-1}H_{t}^{\frac{2}{1-\nu}}\left(2\nu\gamma_{t}\right)^{\frac{\nu}{1-\nu}}\right)^{\frac{1-\nu}{2-\nu}}}{2/(2-\nu)}+\frac{\sum_{t=1}^{T-1}\frac{\left\Vert \bx_{t+1}-\bx_{t}\right\Vert }{2\nu\gamma_{t}}^{2}}{2/\nu}\nonumber \\
 & \lesssim D^{\frac{2}{2-\nu}}\left(\sum_{t=1}^{T-1}H_{t}^{\frac{2}{1-\nu}}\gamma_{t}^{\frac{\nu}{1-\nu}}\right)^{\frac{1-\nu}{2-\nu}}+\sum_{t=1}^{T-1}\frac{\left\Vert \bx_{t+1}-\bx_{t}\right\Vert ^{2}}{4\gamma_{t}},\label{eq:OAda-6}
\end{align}
where the second step is due to Young's inequality. Plug (\ref{eq:OAda-6})
back into (\ref{eq:OAda-5}) to know
\[
\sum_{t=1}^{T}\left\langle \bg_{t},\bx_{t}-\bx\right\rangle \lesssim\frac{D^{2}}{\gamma_{T}}+D^{\frac{2}{2-\nu}}\left(\sum_{t=1}^{T-1}H_{t}^{\frac{2}{1-\nu}}\gamma_{t}^{\frac{\nu}{1-\nu}}\right)^{\frac{1-\nu}{2-\nu}}+D\sqrt{A_{T}+\sum_{t=1}^{T-1}G_{t}^{2}}+D\left(\sum_{t=1}^{T}\left\Vert \err_{t}\right\Vert ^{\p}\right)^{\frac{1}{\p}}.
\]
Recall that $\gamma_{t}=\frac{D^{1-\nu}}{\max_{s\in\left[t\right]}H_{s}}$,
we hence have
\[
\sum_{t=1}^{T}\left\langle \bg_{t},\bx_{t}-\bx\right\rangle \lesssim D^{1+\nu}\left(\sum_{t=1}^{T-1}H_{t}^{\frac{2-\nu}{1-\nu}}\right)^{\frac{1-\nu}{2-\nu}}+D\sqrt{A_{T}+\sum_{t=1}^{T-1}G_{t}^{2}}+D\left(\sum_{t=1}^{T}\left\Vert \err_{t}\right\Vert ^{\p}\right)^{\frac{1}{\p}}.
\]
Finally, we conclude by taking expectations on both sides of the above
inequality and using H\"{o}lder's inequality again to obtain
\[
\E\left[\left(\sum_{t=1}^{T}\left\Vert \err_{t}\right\Vert ^{\p}\right)^{\frac{1}{\p}}\right]\leq\left(\E\left[\sum_{t=1}^{T}\left\Vert \err_{t}\right\Vert ^{\p}\right]\right)^{\frac{1}{\p}}\leq D\left(\sum_{t=1}^{T}\sigma_{t}^{\p}\right)^{\frac{1}{\p}}.
\]
\end{proof}

An undesired point of the above Theorem \ref{thm:OAda} is requiring
the knowledge of both $\nu$ and $H_{t}$ to set the stepsize $\gamma_{t}$.
In the following, we show that under a slightly variant nonsmooth
notion, one can relax this requirement. To do so, we consider Condition
\ref{cond:star} to substitute the third point in Assumption \ref{assu:ext-OCO}.
\begin{condition}
\label{cond:star}$\ell_{t}$ is $(G_{t},H_{t},\nu,\star)$-general
nonsmooth on $\X$, i.e., there exists $G_{t}\geq0$, $H_{t}\geq0$
and $\nu\in\left(0,1\right]$ such that $G_{t}+H_{t}>0$ and $\left\Vert \nabla\ell_{t}(\bx)-\nabla\ell_{t}(\bx_{t}^{\star})\right\Vert \lesssim G_{t}+H_{t}^{\frac{1}{1+\nu}}(\ell_{t}(\bx)-\ell_{t}^{\star})^{\frac{\nu}{1+\nu}},\forall\bx\in\X,\nabla\ell_{t}(\bx)\in\partial\ell_{t}(\bx)$
where $\bx_{t}^{\star}\defeq\argmin_{\bx\in\X}\ell_{t}(\bx)$ and
$\ell_{t}^{\star}\defeq\ell_{t}(\bx_{t}^{\star})$, for all $t\in\left[T\right]$.
\end{condition}
To gain some intuition of the new Condition \ref{cond:star}, we first
let $H_{t}=0$ to have $\left\Vert \nabla\ell_{t}(\bx)-\nabla\ell_{t}(\bx_{t}^{\star})\right\Vert \lesssim G_{t}$,
which is similar to $\left\Vert \nabla\ell_{t}(\bx)-\nabla\ell_{t}(\by)\right\Vert \lesssim G_{t},\forall\bx,\by\in\X$
in Assumption \ref{assu:ext-OCO}. Next, we consider the case $G_{t}=0$,
where Condition \ref{cond:star} states an inequality $\left\Vert \nabla\ell_{t}(\bx)-\nabla\ell_{t}(\bx_{t}^{\star})\right\Vert \lesssim H_{t}^{\frac{1}{1+\nu}}(\ell_{t}(\bx)-\ell_{t}^{\star})^{\frac{\nu}{1+\nu}}$,
which is known to hold if $\ell_{t}$ is $(H_{t},\nu)$-H\"{o}lder
smooth on $\R^{d}$ and convex, which can be viewed as a generalization
of Assumption \ref{assu:ext-OCO} (though strictly speaking, we only
require $\ell_{t}$ to be H\"{o}lder smooth on $\X$ in Assumption
\ref{assu:ext-OCO}). For general $G_{t}$, $H_{t}$, and $\nu$,
whether Condition \ref{cond:star} is strictly general than the third
point in Assumption \ref{assu:ext-OCO} is unclear. Thus, we consider
it as a separate condition here.

Now, we prove a new regret for $\OAda$ under Condition \ref{cond:star},
which no longer needs to know $H_{t}$ and $\nu$.
\begin{thm}
\label{thm:OAda-star}Under Assumption \ref{assu:ext-OCO} (with replacing
the third point by Condition \ref{cond:star}), taking $\bh_{t}=\bg_{t-1}$
where $\bg_{0}\defeq\bzero$, $\eta=D/\sqrt{2}$, and $\gamma_{t}=+\infty$
in $\OAda$ (Algorithm \ref{alg:OAda}), we have
\[
\E\left[\reg_{T}^{\OAda}(\bx)\right]\lesssim D^{1+\nu}\left(\sum_{t=1}^{T}H_{t}^{\frac{2}{1-\nu}}\right)^{\frac{1-\nu}{2}}+D\left(\sum_{t=1}^{T}H_{t}^{\frac{2}{1-\nu}}\right)^{\frac{1-\nu}{2(1+\nu)}}\left[C_{T}(\bx)\right]^{\frac{\nu}{1+\nu}}+D\sqrt{B_{T}+\sum_{t=1}^{T}G_{t}^{2}}+D\left(\sum_{t=1}^{T}\sigma_{t}^{\p}\right)^{\frac{1}{\p}},
\]
where $B_{T}\defeq\sum_{t=1}^{T}\left\Vert \nabla\ell_{t}(\bx_{t}^{\star})-\nabla\ell_{t-1}(\bx_{t-1}^{\star})\right\Vert ^{2}$,
$C_{T}(\bx)\defeq\sum_{t=1}^{T}\ell_{t}(\bx)-\ell_{t}^{\star}$, and
$\ell_{0}\defeq0$.
\end{thm}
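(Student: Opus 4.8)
The plan is to follow the proof of Theorem~\ref{thm:OAda} almost verbatim, the only genuinely new manipulation being that the gradient differences are now controlled through the self-bounding Condition~\ref{cond:star} in place of the H\"older-continuity estimate. I would carry out the argument for $\nu\in(0,1)$ (the exponent $\tfrac{2}{1-\nu}$ in the statement is meaningful only there). First I would invoke Lemma~\ref{lem:OAda-core} with $\bh_t=\bg_{t-1}$, so that $\bh_1=\bg_0=\bzero$, with $\eta=D/\sqrt2$ and with $\gamma_t=+\infty$ for all $t$. Since $1/\gamma_t=0$, the terms $D^2/\gamma_T$, $-\sum_t\lVert\bx_{t+1}-\bx_t\rVert^2/(4\gamma_t)$ and $\langle\bh_1,\bx_1-\bx\rangle$ all vanish, leaving the path-wise estimate
\[
\sum_{t=1}^{T}\langle\bg_t,\bx_t-\bx\rangle\;\lesssim\;D\sqrt{\sum_{t=1}^{T}\lVert\bg_t-\bg_{t-1}\rVert^{2}}.
\]

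Next I would expand, with the conventions $\err_0\defeq\bzero$ and $\ell_0\defeq0$,
\[
\bg_t-\bg_{t-1}=\bigl[\nabla\ell_t(\bx_t)-\nabla\ell_t(\bx_t^\star)\bigr]+\bigl[\nabla\ell_t(\bx_t^\star)-\nabla\ell_{t-1}(\bx_{t-1}^\star)\bigr]-\bigl[\nabla\ell_{t-1}(\bx_{t-1})-\nabla\ell_{t-1}(\bx_{t-1}^\star)\bigr]+\err_t-\err_{t-1},
\]
take squared norms, sum over $t$, and use $\lVert\sum_i\bu_i\rVert^2\lesssim\sum_i\lVert\bu_i\rVert^2$. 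The middle brackets contribute exactly $B_T$, the noise differences contribute $\lesssim\sum_t\lVert\err_t\rVert^2$, and to the first and third brackets I would apply Condition~\ref{cond:star}, which gives $\lVert\nabla\ell_s(\bx_s)-\nabla\ell_s(\bx_s^\star)\rVert^2\lesssim G_s^2+H_s^{2/(1+\nu)}(\ell_s(\bx_s)-\ell_s^\star)^{2\nu/(1+\nu)}$; relabelling $s\mapsto s-1$ in the third-bracket sum (whose $t=1$ term vanishes since $\nabla\ell_0\equiv0$) yields
\[
\sum_{t=1}^{T}\lVert\bg_t-\bg_{t-1}\rVert^{2}\;\lesssim\;B_T+\sum_{t=1}^{T}G_t^{2}+\sum_{t=1}^{T}H_t^{2/(1+\nu)}\bigl(\ell_t(\bx_t)-\ell_t^\star\bigr)^{2\nu/(1+\nu)}+\sum_{t=1}^{T}\lVert\err_t\rVert^{2}.
\]
To the third sum I would apply H\"older's inequality with the conjugate pair $\tfrac{1+\nu}{1-\nu},\tfrac{1+\nu}{2\nu}$ (both $\ge1$ since $\nu\le1$) to obtain, writing $\mathcal{H}_T\defeq\sum_{t=1}^{T}H_t^{2/(1-\nu)}$,
\[
\sum_{t=1}^{T}H_t^{2/(1+\nu)}\bigl(\ell_t(\bx_t)-\ell_t^\star\bigr)^{2\nu/(1+\nu)}\;\le\;\mathcal{H}_T^{(1-\nu)/(1+\nu)}\Bigl(\sum_{t=1}^{T}\bigl(\ell_t(\bx_t)-\ell_t^\star\bigr)\Bigr)^{2\nu/(1+\nu)}.
\]

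Plugging these bounds back, using $\sqrt{a+b+c+d}\le\sqrt a+\sqrt b+\sqrt c+\sqrt d$ and $\lVert\cdot\rVert_2\le\lVert\cdot\rVert_{\p}$ to turn $\sqrt{\sum_t\lVert\err_t\rVert^2}$ into $(\sum_t\lVert\err_t\rVert^{\p})^{1/\p}$, I would then take expectations. On the left, $\bx_t\in\F_{t-1}$ and $\E[\bg_t\mid\F_{t-1}]=\nabla\ell_t(\bx_t)$ together with convexity give $\E[\sum_t\langle\bg_t,\bx_t-\bx\rangle]\ge\E[\reg_T^{\OAda}(\bx)]=:\mathcal{R}$. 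On the right, $\E[\sqrt Z]\le\sqrt{\E Z}$ and concavity of $w\mapsto w^{\nu/(1+\nu)}$ (here $\nu/(1+\nu)\le\tfrac12$) reduce the random factor to $(\E[\sum_t(\ell_t(\bx_t)-\ell_t^\star)])^{\nu/(1+\nu)}=(\mathcal{R}+C_T(\bx))^{\nu/(1+\nu)}$, while H\"older again gives $\E[(\sum_t\lVert\err_t\rVert^{\p})^{1/\p}]\le(\sum_t\sigma_t^{\p})^{1/\p}$. This produces the self-bounding inequality
\[
\mathcal{R}\;\lesssim\;D\sqrt{B_T+\sum_t G_t^2}+D\bigl(\sum_t\sigma_t^{\p}\bigr)^{1/\p}+D\,\mathcal{H}_T^{(1-\nu)/(2(1+\nu))}\bigl(\mathcal{R}+C_T(\bx)\bigr)^{\nu/(1+\nu)}.
\]
Finally I would peel off $C_T(\bx)$ via $(\mathcal{R}+C_T)^{\theta}\le\mathcal{R}^{\theta}+C_T^{\theta}$ with $\theta=\nu/(1+\nu)$ and resolve the residual inequality $\mathcal{R}\lesssim A+b\,\mathcal{R}^{\theta}$ through the elementary implication $\mathcal{R}\lesssim A+b^{1/(1-\theta)}$; since $1/(1-\theta)=1+\nu$ and $(D\,\mathcal{H}_T^{(1-\nu)/(2(1+\nu))})^{1+\nu}=D^{1+\nu}\mathcal{H}_T^{(1-\nu)/2}$, this is precisely the asserted bound.

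The step I expect to be the main obstacle is the bookkeeping in this last paragraph: keeping the H\"older exponents, the powers of $\mathcal{H}_T$, and the interchange of expectation with the square root and the concave power all chained consistently, so that the random quantity $\sum_t(\ell_t(\bx_t)-\ell_t^\star)$ is legitimately replaced by $\mathcal{R}+C_T(\bx)$ before the self-bounding inequality is solved. Everything else is a routine transcription of the proof of Theorem~\ref{thm:OAda}.
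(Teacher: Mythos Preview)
Your proposal is correct and follows the paper's proof essentially line by line: the same invocation of Lemma~\ref{lem:OAda-core}, the same four-term decomposition of $\bg_t-\bg_{t-1}$ through $\nabla\ell_t(\bx_t^\star)$ and $\nabla\ell_{t-1}(\bx_{t-1}^\star)$, the same application of Condition~\ref{cond:star} and H\"older, and the same passage to expectations to reach the self-bounding inequality in $\mathcal{R}$. The only cosmetic difference is at the very end: the paper resolves $\mathcal{R}\lesssim A+b(\mathcal{R}+C_T)^{\theta}$ by a three-way case split ($\mathcal{R}\le 0$, $\mathcal{R}\le C_T$, $\mathcal{R}>C_T$), whereas you first use the subadditivity $(\mathcal{R}+C_T)^\theta\le\mathcal{R}^\theta+C_T^\theta$ and then the standard implication $\mathcal{R}\lesssim A+b\mathcal{R}^\theta\Rightarrow\mathcal{R}\lesssim A+b^{1/(1-\theta)}$; both routes give the same bound.
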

Note that Theorem \ref{thm:OAda-star} and the previous Theorem \ref{thm:OAda}
are not directly comparable due to different assumptions. But we can
consider some special cases to better understand the difference. For
example, let $H_{t}=0$, Theorem \ref{thm:OAda-star} degenerates
to $D\sqrt{B_{T}+\sum_{t=1}^{T}G_{t}^{2}}+D\left(\sum_{t=1}^{T}\sigma_{t}^{\p}\right)^{\frac{1}{\p}}$,
similar to the regret $D\sqrt{A_{T}+\sum_{t=1}^{T}G_{t}^{2}}+D\left(\sum_{t=1}^{T}\sigma_{t}^{\p}\right)^{1/\p}$
by Theorem \ref{thm:OAda} in this case (note that both $B_{T}$ and
$A_{T}$ are at most in the same order of $T$). Next, if $G_{t}=0$,
then these two bounds are hard to compare due to the extra term $D\left(\sum_{t=1}^{T}H_{t}^{\frac{2}{1-\nu}}\right)^{\frac{1-\nu}{2(1+\nu)}}\left[C_{T}(\bx)\right]^{\frac{\nu}{1+\nu}}$
in Theorem \ref{thm:OAda-star}. But as one will see later, in convex
optimization, Theorem \ref{thm:OAda-star} possibly leads to a better
rate than Theorem \ref{thm:OAda}.

\begin{proof}
By Lemma \ref{lem:OAda-core} with $\bh_{t}=\bg_{t-1}$ and $\gamma_{t}=+\infty$,
we have
\begin{equation}
\sum_{t=1}^{T}\left\langle \bg_{t},\bx_{t}-\bx\right\rangle \lesssim D\sqrt{\sum_{t=1}^{T}\left\Vert \bg_{t}-\bg_{t-1}\right\Vert ^{2}}.\label{eq:OAda-star-1}
\end{equation}
Let $\err_{0}\defeq\bzero$, we can find
\begin{align}
 & \sum_{t=1}^{T}\left\Vert \bg_{t}-\bg_{t-1}\right\Vert ^{2}\nonumber \\
= & \sum_{t=1}^{T}\left\Vert \nabla\ell_{t}(\bx_{t})-\nabla\ell_{t}(\bx_{t}^{\star})+\nabla\ell_{t}(\bx_{t}^{\star})-\nabla\ell_{t-1}(\bx_{t-1}^{\star})+\nabla\ell_{t-1}(\bx_{t-1}^{\star})-\nabla\ell_{t-1}(\bx_{t-1})+\err_{t}-\err_{t-1}\right\Vert ^{2}\nonumber \\
\lesssim & \sum_{t=1}^{T}\left\Vert \nabla\ell_{t}(\bx_{t}^{\star})-\nabla\ell_{t-1}(\bx_{t-1}^{\star})\right\Vert ^{2}+\sum_{t=1}^{T}\left\Vert \nabla\ell_{t}(\bx_{t})-\nabla\ell_{t}(\bx_{t}^{\star})\right\Vert ^{2}+\sum_{t=1}^{T}\left\Vert \err_{t}\right\Vert ^{2}\nonumber \\
= & B_{T}+\sum_{t=1}^{T}\left\Vert \nabla\ell_{t}(\bx_{t})-\nabla\ell_{t}(\bx_{t}^{\star})\right\Vert ^{2}+\sum_{t=1}^{T}\left\Vert \err_{t}\right\Vert ^{2}.\label{eq:OAda-star-2}
\end{align}
By Condition \ref{cond:star}, we know $\left\Vert \nabla\ell_{t}(\bx_{t})-\nabla\ell_{t}(\bx_{t}^{\star})\right\Vert ^{2}\lesssim G_{t}^{2}+H_{t}^{\frac{2}{1+\nu}}(\ell_{t}(\bx_{t})-\ell_{t}^{\star})^{\frac{2\nu}{1+\nu}}$,
which implies
\begin{equation}
\sum_{t=1}^{T}\left\Vert \nabla\ell_{t}(\bx_{t})-\nabla\ell_{t}(\bx_{t}^{\star})\right\Vert ^{2}\lesssim\sum_{t=1}^{T}G_{t}^{2}+H_{t}^{\frac{2}{1+\nu}}(\ell_{t}(\bx_{t})-\ell_{t}^{\star})^{\frac{2\nu}{1+\nu}}.\label{eq:OAda-star-3}
\end{equation}
Combine (\ref{eq:OAda-star-2}) and (\ref{eq:OAda-star-3}) and use
$\sqrt{a+b}\le\sqrt{a}+\sqrt{b},\forall a,b\geq0$ to have
\begin{align}
\sqrt{\sum_{t=1}^{T}\left\Vert \bg_{t}-\bg_{t-1}\right\Vert ^{2}} & \lesssim\sqrt{B_{T}+\sum_{t=1}^{T}G_{t}^{2}}+\sqrt{\sum_{t=1}^{T}H_{t}^{\frac{2}{1+\nu}}(\ell_{t}(\bx_{t})-\ell_{t}^{\star})^{\frac{2\nu}{1+\nu}}}+\sqrt{\sum_{t=1}^{T}\left\Vert \err_{t}\right\Vert ^{2}}\nonumber \\
 & \leq\sqrt{B_{T}+\sum_{t=1}^{T}G_{t}^{2}}+\sqrt{\sum_{t=1}^{T}H_{t}^{\frac{2}{1+\nu}}(\ell_{t}(\bx_{t})-\ell_{t}^{\star})^{\frac{2\nu}{1+\nu}}}+\left(\sum_{t=1}^{T}\left\Vert \err_{t}\right\Vert ^{\p}\right)^{\frac{1}{\p}},\label{eq:OAda-star-4}
\end{align}
where the last step is due to $\left\Vert \cdot\right\Vert _{2}\leq\left\Vert \cdot\right\Vert _{\p}$
for any $\p\in\left[1,2\right]$.

Therefore, by (\ref{eq:OAda-star-1}) and (\ref{eq:OAda-star-4}),
the following inequality holds
\[
\sum_{t=1}^{T}\left\langle \bg_{t},\bx_{t}-\bx\right\rangle \lesssim D\sqrt{\sum_{t=1}^{T}H_{t}^{\frac{2}{1+\nu}}(\ell_{t}(\bx_{t})-\ell_{t}^{\star})^{\frac{2\nu}{1+\nu}}}+D\sqrt{B_{T}+\sum_{t=1}^{T}G_{t}^{2}}+D\left(\sum_{t=1}^{T}\left\Vert \err_{t}\right\Vert ^{\p}\right)^{\frac{1}{\p}}.
\]
We use H\"{o}lder's inequality to bound
\[
\sum_{t=1}^{T}H_{t}^{\frac{2}{1+\nu}}(\ell_{t}(\bx_{t})-\ell_{t}^{\star})^{\frac{2\nu}{1+\nu}}\leq\left(\sum_{t=1}^{T}H_{t}^{\frac{2}{1-\nu}}\right)^{\frac{1-\nu}{1+\nu}}\left(\sum_{t=1}^{T}\ell_{t}(\bx_{t})-\ell_{t}^{\star}\right)^{\frac{2\nu}{1+\nu}},
\]
which implies
\[
\sum_{t=1}^{T}\left\langle \bg_{t},\bx_{t}-\bx\right\rangle \lesssim D\left(\sum_{t=1}^{T}H_{t}^{\frac{2}{1-\nu}}\right)^{\frac{1-\nu}{2(1+\nu)}}\left(\sum_{t=1}^{T}\ell_{t}(\bx_{t})-\ell_{t}^{\star}\right)^{\frac{\nu}{1+\nu}}+D\sqrt{B_{T}+\sum_{t=1}^{T}G_{t}^{2}}+D\left(\sum_{t=1}^{T}\left\Vert \err_{t}\right\Vert ^{\p}\right)^{\frac{1}{\p}}.
\]
Now take expectations on both sides of the above inequality and use
the fact $\E\left[\left(\sum_{t=1}^{T}\left\Vert \err_{t}\right\Vert ^{\p}\right)^{\frac{1}{\p}}\right]\leq\left(\E\left[\sum_{t=1}^{T}\left\Vert \err_{t}\right\Vert ^{\p}\right]\right)^{\frac{1}{\p}}\leq D\left(\sum_{t=1}^{T}\sigma_{t}^{\p}\right)^{\frac{1}{\p}}$
(due to H\"{o}lder's inequality) to find for $\A=\OAda$
\begin{align}
\E\left[\reg_{T}^{\A}(\bx)\right] & \lesssim D\left(\sum_{t=1}^{T}H_{t}^{\frac{2}{1-\nu}}\right)^{\frac{1-\nu}{2(1+\nu)}}\E\left[\left(\sum_{t=1}^{T}\ell_{t}(\bx_{t})-\ell_{t}^{\star}\right)^{\frac{\nu}{1+\nu}}\right]+D\sqrt{B_{T}+\sum_{t=1}^{T}G_{t}^{2}}+D\left(\sum_{t=1}^{T}\sigma_{t}^{\p}\right)^{\frac{1}{\p}}\nonumber \\
 & \leq D\left(\sum_{t=1}^{T}H_{t}^{\frac{2}{1-\nu}}\right)^{\frac{1-\nu}{2(1+\nu)}}\left(\E\left[\sum_{t=1}^{T}\ell_{t}(\bx_{t})-\ell_{t}^{\star}\right]\right)^{\frac{\nu}{1+\nu}}+D\sqrt{B_{T}+\sum_{t=1}^{T}G_{t}^{2}}+D\left(\sum_{t=1}^{T}\sigma_{t}^{\p}\right)^{\frac{1}{\p}}\nonumber \\
 & =D\left(\sum_{t=1}^{T}H_{t}^{\frac{2}{1-\nu}}\right)^{\frac{1-\nu}{2(1+\nu)}}\left(\E\left[\reg_{T}^{\A}(\bx)\right]+C_{T}(\bx)\right)^{\frac{\nu}{1+\nu}}+D\sqrt{B_{T}+\sum_{t=1}^{T}G_{t}^{2}}+D\left(\sum_{t=1}^{T}\sigma_{t}^{\p}\right)^{\frac{1}{\p}},\label{eq:OAda-star-5}
\end{align}
where the second step is due to H\"{o}lder's inequality.

If $\E\left[\reg_{T}^{\A}(\bx)\right]\leq0$, we are done. Hence,
we only need to consider the case $\E\left[\reg_{T}^{\A}(\bx)\right]\geq0$.
If $\E\left[\reg_{T}^{\A}(\bx)\right]\leq C_{T}(\bx)$, then (\ref{eq:OAda-star-5})
implies
\[
\E\left[\reg_{T}^{\A}(\bx)\right]\lesssim D\left(\sum_{t=1}^{T}H_{t}^{\frac{2}{1-\nu}}\right)^{\frac{1-\nu}{2(1+\nu)}}\left[C_{T}(\bx)\right]^{\frac{\nu}{1+\nu}}+D\sqrt{B_{T}+\sum_{t=1}^{T}G_{t}^{2}}+D\left(\sum_{t=1}^{T}\sigma_{t}^{\p}\right)^{\frac{1}{\p}}.
\]
Otherwise, we have
\[
\E\left[\reg_{T}^{\A}(\bx)\right]\lesssim D\left(\sum_{t=1}^{T}H_{t}^{\frac{2}{1-\nu}}\right)^{\frac{1-\nu}{2(1+\nu)}}\left(\E\left[\reg_{T}^{\A}(\bx)\right]\right)^{\frac{\nu}{1+\nu}}+D\sqrt{B_{T}+\sum_{t=1}^{T}G_{t}^{2}}+D\left(\sum_{t=1}^{T}\sigma_{t}^{\p}\right)^{\frac{1}{\p}},
\]
which implies
\[
\E\left[\reg_{T}^{\A}(\bx)\right]\lesssim D^{1+\nu}\left(\sum_{t=1}^{T}H_{t}^{\frac{2}{1-\nu}}\right)^{\frac{1-\nu}{2}}+D\left(\sum_{t=1}^{T}H_{t}^{\frac{2}{1-\nu}}\right)^{\frac{1-\nu}{2(1+\nu)}}\left[C_{T}(\bx)\right]^{\frac{\nu}{1+\nu}}+D\sqrt{B_{T}+\sum_{t=1}^{T}G_{t}^{2}}+D\left(\sum_{t=1}^{T}\sigma_{t}^{\p}\right)^{\frac{1}{\p}}.
\]
So in all three cases, we have
\[
\E\left[\reg_{T}^{\A}(\bx)\right]\lesssim D^{1+\nu}\left(\sum_{t=1}^{T}H_{t}^{\frac{2}{1-\nu}}\right)^{\frac{1-\nu}{2}}+D\left(\sum_{t=1}^{T}H_{t}^{\frac{2}{1-\nu}}\right)^{\frac{1-\nu}{2(1+\nu)}}\left[C_{T}(\bx)\right]^{\frac{\nu}{1+\nu}}+D\sqrt{B_{T}+\sum_{t=1}^{T}G_{t}^{2}}+D\left(\sum_{t=1}^{T}\sigma_{t}^{\p}\right)^{\frac{1}{\p}}.
\]
\end{proof}

\section{More Applications\label{sec:ext-applications}}

Based on the results for $\OAda$ given in Appendix \ref{sec:ext-details},
we provide more applications in this section.

\subsection{General Nonsmooth Convex Optimization}

With Theorems \ref{thm:OAda} and \ref{thm:OAda-star}, we immediately
obtain the following convergence rates for convex optimization. The
proof of which is omitted to save space.
\begin{cor}
\label{cor:ext-cvx-avg}Under Assumption \ref{assu:ext-OCO} for $\ell_{t}(\bx)=F(\bx)$
(meaning that $G_{t}=G$, $H_{t}=H$, and $\sigma_{t}=\sigma$) and
let $\bx^{\star}\defeq\argmin_{\bx\in\X}F(\bx)$ and $\bar{\bx}_{T}\defeq\frac{1}{T}\sum_{t=1}^{T}\bx_{t}$:
\begin{itemize}
\item considering the same setting for $\A=\OAda$ as in Theorem \ref{thm:OAda},
we have 
\[
\E\left[F(\bar{\bx}_{T})-F(\bx^{\star})\right]\leq\frac{\E\left[\reg_{T}^{\A}(\bx^{\star})\right]}{T}\lesssim\frac{\left\Vert \nabla F(\bx^{\star})\right\Vert D}{T}+\frac{HD^{1+\nu}}{T^{\frac{1}{2-\nu}}}+\frac{GD}{\sqrt{T}}+\frac{\sigma D}{T^{1-\frac{1}{\p}}}.
\]
\item replacing the third point in Assumption \ref{assu:ext-OCO} by Condition
\ref{cond:star} and considering the same setting for $\A=\OAda$
as in Theorem \ref{thm:OAda-star}, we have 
\[
\E\left[F(\bar{\bx}_{T})-F(\bx^{\star})\right]\leq\frac{\E\left[\reg_{T}^{\A}(\bx^{\star})\right]}{T}\lesssim\frac{\left\Vert \nabla F(\bx^{\star})\right\Vert D}{T}+\frac{HD^{1+\nu}}{T^{\frac{1+\nu}{2}}}+\frac{GD}{\sqrt{T}}+\frac{\sigma D}{T^{1-\frac{1}{\p}}}.
\]
\end{itemize}
\end{cor}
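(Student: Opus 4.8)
The plan is to combine Jensen's inequality with the $\OAda$ regret bounds already proved. By convexity of $F$, for any online algorithm $\A$ one has $F(\bar{\bx}_{T})-F(\bx^{\star})\le\frac{1}{T}\sum_{t=1}^{T}\bigl(F(\bx_{t})-F(\bx^{\star})\bigr)=\frac{1}{T}\reg_{T}^{\A}(\bx^{\star})$, so after taking expectations it suffices to bound $\E[\reg_{T}^{\OAda}(\bx^{\star})]/T$. Since $\ell_{t}=F$ for every $t$, the per-round quantities collapse to $G_{t}=G$, $H_{t}=H$, $\sigma_{t}=\sigma$, and the per-round minimizers coincide: $\bx_{t}^{\star}=\bx^{\star}$ for all $t$. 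With these substitutions I would invoke Theorem~\ref{thm:OAda} for the first bullet and Theorem~\ref{thm:OAda-star} for the second, and then simplify the data-dependent quantities appearing in those bounds.

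For the first bullet: in Theorem~\ref{thm:OAda} the factor $\bigl(\sum_{t=1}^{T}H^{\frac{2-\nu}{1-\nu}}\bigr)^{\frac{1-\nu}{2-\nu}}$ equals $H\,T^{\frac{1-\nu}{2-\nu}}$, the sum $\sum_{t=1}^{T}G^{2}=G^{2}T$, and the noise term is $\sigma T^{1/\p}$. The key simplification is the gradient-variation term $A_{T}$: because $\ell_{t}=\ell_{t-1}$ for all $t\ge2$, every summand $\sup_{\bx\in\X}\|\nabla\ell_{t}(\bx)-\nabla\ell_{t-1}(\bx)\|$ vanishes, so $A_{T}$ reduces to its single initial contribution, of order $\|\nabla F(\bx_{1})\|^{2}$. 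Using the $(G,H,\nu)$-general nonsmooth inequality on the bounded domain, $\|\nabla F(\bx_{1})-\nabla F(\bx^{\star})\|\le2G+H\|\bx_{1}-\bx^{\star}\|^{\nu}\le2G+HD^{\nu}$, hence $A_{T}\lesssim\|\nabla F(\bx^{\star})\|^{2}+G^{2}+H^{2}D^{2\nu}$. Splitting $\sqrt{A_{T}+G^{2}T}\le\sqrt{A_{T}}+G\sqrt{T}$, multiplying by $D$, dividing by $T$, and absorbing the lower-order pieces ($\frac{GD}{T}\le\frac{GD}{\sqrt{T}}$; $\frac{HD^{1+\nu}}{T}\le\frac{HD^{1+\nu}}{T^{1/(2-\nu)}}$ since $\frac{1}{2-\nu}\le1$; and $\frac{\sigma T^{1/\p}}{T}=\frac{\sigma}{T^{1-1/\p}}$) produces exactly $\frac{\|\nabla F(\bx^{\star})\|D}{T}+\frac{HD^{1+\nu}}{T^{1/(2-\nu)}}+\frac{GD}{\sqrt{T}}+\frac{\sigma D}{T^{1-1/\p}}$.

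For the second bullet I would apply Theorem~\ref{thm:OAda-star} with competitor $\bx^{\star}$. Here $C_{T}(\bx^{\star})=\sum_{t=1}^{T}\bigl(\ell_{t}(\bx^{\star})-\ell_{t}^{\star}\bigr)=0$ because $\bx^{\star}=\argmin_{\bx\in\X}F(\bx)$, so the entire middle term $D\bigl(\sum_{t}H^{2/(1-\nu)}\bigr)^{\frac{1-\nu}{2(1+\nu)}}[C_{T}(\bx^{\star})]^{\nu/(1+\nu)}$ drops out. Likewise $B_{T}=\sum_{t=1}^{T}\|\nabla\ell_{t}(\bx_{t}^{\star})-\nabla\ell_{t-1}(\bx_{t-1}^{\star})\|^{2}$ collapses to its $t=1$ summand $\|\nabla F(\bx^{\star})\|^{2}$, again because $\ell_{t}=\ell_{t-1}$ for $t\ge2$ while $\ell_{0}\equiv0$. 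The first term is $D^{1+\nu}\bigl(TH^{2/(1-\nu)}\bigr)^{(1-\nu)/2}=HD^{1+\nu}T^{(1-\nu)/2}$, which after dividing by $T$ gives $HD^{1+\nu}/T^{(1+\nu)/2}$; the term $D\sqrt{B_{T}+G^{2}T}\le D\|\nabla F(\bx^{\star})\|+GD\sqrt{T}$ divided by $T$ gives $\frac{\|\nabla F(\bx^{\star})\|D}{T}+\frac{GD}{\sqrt{T}}$, and the noise term is handled exactly as before.

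The only step that is not pure exponent bookkeeping is isolating and controlling the ``initial-round'' residuals $A_{T}$ and $B_{T}$: for $B_{T}$ this is immediate from $\ell_{0}\equiv0$, while for $A_{T}$ one must use the general-nonsmooth property together with boundedness of $\X$ to turn $\|\nabla F(\bx_{1})\|$ into $\|\nabla F(\bx^{\star})\|+O(G+HD^{\nu})$ --- which is precisely where the $\|\nabla F(\bx^{\star})\|D/T$ term originates. (Well-definedness of $\bx^{\star}=\bx_{t}^{\star}$ needs only that $\X$ is compact convex and $F$ closed convex, as already noted in the remark following Theorem~\ref{thm:ext-OCO-smooth}.) Beyond this I anticipate no genuine difficulty.
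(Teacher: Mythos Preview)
Your argument is correct and is exactly the route the paper intends: the proof is omitted there, but the corollary is meant to follow by the online-to-batch inequality $F(\bar{\bx}_{T})-F(\bx^{\star})\le\reg_{T}^{\A}(\bx^{\star})/T$ together with the regret bounds of Theorems~\ref{thm:OAda} and~\ref{thm:OAda-star}, specialized to $\ell_{t}\equiv F$ so that $A_{T}$, $B_{T}$, and $C_{T}(\bx^{\star})$ collapse as you describe. Your handling of the residual $A_{T}$ via the $(G,H,\nu)$-general nonsmooth inequality on the bounded domain, and the subsequent absorption of the resulting $GD/T$ and $HD^{1+\nu}/T$ pieces into the dominant terms, is precisely what is needed.
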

If we consider Assumption \ref{assu:OCO}, (i.e., $H=0$ and $\left\Vert \nabla F(\bx)\right\Vert \leq G$),
Corollary \ref{cor:ext-cvx-avg} is as fast as Corollary \ref{cor:main-cvx-avg}
and hence optimal. Under the same assumption used in Corollary \ref{cor:ext-cvx-smooth-avg}
(i.e., $G=0$ and $\nu=1$), both rates degenerates to $\frac{\left\Vert \nabla F(\bx^{\star})\right\Vert D}{T}+\frac{HD^{2}}{T}+\frac{\sigma D}{T^{1-\frac{1}{\p}}}$
and are faster than the bound $\frac{HD^{2}}{T}+\frac{\left\Vert \nabla F(\bx^{\star})\right\Vert D}{\sqrt{T}}+\frac{\sigma D}{T^{1-\frac{1}{\p}}}$
given in Corollary \ref{cor:ext-cvx-smooth-avg}. If we specialize
in globally H\"{o}lder smooth functions (i.e., $\left\Vert \nabla F(\bx)-\nabla F(\by)\right\Vert \leq H\left\Vert \bx-\by\right\Vert ^{\nu},\forall,\bx,\by\in\R^{d}$),
then both bounds are new under heavy tails and the second one is faster.

\subsection{General Nonsmooth Nonconvex Optimization}

In this section, we move back to the nonsmooth nonconvex optimization
studied before in Section \ref{sec:applications}. But instead of
assuming $F$ is Lipschitz as in Assumption \ref{assu:ncvx}, we will
use the following general Assumption \ref{assu:ext-ncvx}. Note that
a function satisfying Assumption \ref{assu:ncvx} with a Lipschitz
parameter $G$ also fits Assumption \ref{assu:ext-ncvx} for $H=0$.
\begin{assumption}
\label{assu:ext-ncvx}We consider the following series of assumptions:
\begin{itemize}
\item The objective $F$ is lower bounded by $F_{\star}\triangleq\inf_{\bx\in\R^{d}}F(\bx)\in\R$.
\item $F$ is differentiable and well-behaved, i.e., $F(\bx)-F(\by)=\int_{0}^{1}\left\langle \nabla F(\by+t(\bx-\by)),\bx-\by\right\rangle \d t$.
\item $F$ is $(G,H,\nu)$-general nonsmooth on $\R^{d}$, i.e., there exists
$G\geq0$, $H\geq0$ and $\nu\in\left(0,1\right]$ such that $G+H>0$
and $\left\Vert \nabla F(\bx)-\nabla F(\by)\right\Vert \leq2G+H\left\Vert \bx-\by\right\Vert ^{\nu},\forall\bx,\by\in\R^{d}$.
\item Given $\bz_{t}\in\R^{d}$ at the $t$-th iteration, one can query
$\bg_{t}\in\R^{d}$ satisfying $\E\left[\bg_{t}\mid\F_{t-1}\right]=\nabla F(\bz_{t})$
and $\E\left[\left\Vert \err_{t}\right\Vert ^{\p}\right]\leq\sigma^{\p}$
for some $\p\in\left(1,2\right]$ and $\sigma\geq0$, where $\F_{t}$
denotes the natural filtration and $\err_{t}\defeq\bg_{t}-\nabla F(\bz_{t})$
is the stochastic noise.
\end{itemize}
\end{assumption}
Our goal is still to find a $(\delta,\epsilon)$-stationary point,
but under Assumption \ref{assu:ext-ncvx} this time. Fortunately,
we can still start from the $\otnc$ framework (Algorithm \ref{alg:O2NC}),
since the proof of Theorem \ref{thm:main-ncvx-core} only relies on
the first two and the last one conditions in Assumption \ref{assu:ncvx},
which are the same as Assumption \ref{assu:ext-ncvx}. In other words,
Theorem \ref{thm:main-ncvx-core} holds under this more general Assumption
\ref{assu:ext-ncvx}.

\subsubsection{$\protect\OAda$ with Reset}

\begin{algorithm}[H]
\caption{\label{alg:OAdaR}$\protect\OAda$ with Reset ($\protect\OAdaR$)}

\textbf{Input:} initial point $\bx_{1}\in\B^{d}(D)$, parameter $D>0$.

Set $\bg_{0}=\bzero$ and $V_{0}=0$

\textbf{for} $n=1$ \textbf{to} $KT$ \textbf{do}

$\quad$$V_{n}=V_{n-1}\1\left[n\mod T\neq1\right]+\left\Vert \bg_{n}-\bg_{n-1}\right\Vert ^{2}$

$\quad$$\eta_{n}=\sqrt{2}D/\sqrt{V_{n}}$

$\quad$$\bx_{n+1}=\argmin_{\bx\in\B^{d}(D)}\left\langle 2\bg_{n}-\bg_{n-1},\bx\right\rangle +\frac{\left\Vert \bx-\bx_{n}\right\Vert ^{2}}{2\eta_{n}}$

$\quad$\textbf{if} $n+1\mod T=1$\textbf{ do}

$\qquad$$\bx_{n+1}=-D\frac{\bg_{n}}{\left\Vert \bg_{n}\right\Vert }$

$\quad$\textbf{end if}

\textbf{end for}
\end{algorithm}

With Theorem \ref{thm:main-ncvx-core} on hand, our next task is naturally
to find an online learning algorithm $\A$ that has a proper $K$-shifting
regret under Assumption \ref{assu:ext-ncvx} (especially the third
point in which). Thanks to the framework presented in Appendix \ref{sec:ext-details},
we already have some clues, i.e., employing an optimistic algorithm.
In fact, such an idea has also been studied in \citep{pmlr-v202-cutkosky23a}
but only for the deterministic smooth case, i.e., $\sigma=G=0$ and
$\nu=1$ in Assumption \ref{assu:ext-ncvx}.

To handle the more general case, particularly including heavy-tailed
noise, we first present a new method called $\OAda$ with Reset ($\OAdaR$)
in Algorithm \ref{alg:OAdaR}, which can be viewed as running $\OAda$
for $T$ iterations with the hint $\bh_{t}=\bg_{t-1}$ and then resetting
in total of $K$ times.

We clarify that the idea of reset is not new, which was originally
suggested by \citep{pmlr-v202-cutkosky23a}, as also used previously
in Theorem \ref{thm:main-ncvx-general}. However, the step of how
to reset in $\OAdaR$, i.e., $\bx_{n+1}=-D\frac{\bg_{n}}{\left\Vert \bg_{n}\right\Vert }$
if $n+1\mod T=1$, is critical and novel as far as we know. Why is
it important? This is because the hint $\bh_{t}$ is not reset to
$\bzero$. Hence, if we reset $\bx_{n+1}$ to an arbitrary point in
$\B^{d}(D)$, then one can imagine that we will face a redundant term
$\sum_{k=1}^{K}\left\langle \bh_{(k-1)T},\bx_{(k-1)T+1}-\bv_{k}\right\rangle $
in total (see Lemma \ref{lem:OAda-core}), which is however undesired.
Instead, our specially designed way of resetting $\bx_{n+1}$ can
resolve this potential issue, as reflected in the following Lemma
\ref{lem:OAdaR}.
\begin{lem}
\label{lem:OAdaR}For any initial point $\bx_{1}\in\B^{d}(D)$ and
any sequence $\bv_{1},\mydots,\bv_{K}$ satisfying $\bv_{k}\in\B^{d}(D),\forall k\in\left[K\right]$,
the online learning algorithm $\A=\OAdaR$ (Algorithm \ref{alg:OAdaR})
guarantees
\[
\reg_{T}^{\A}\left(\bv_{1},\mydots,\bv_{K}\right)\lesssim D\sum_{k=1}^{K}\sqrt{\sum_{n=(k-1)T+1}^{kT}\left\Vert \bg_{n}-\bg_{n-1}\right\Vert ^{2}},
\]
where $\reg_{T}^{\A}(\bv_{1},\mydots,\bv_{K})$ is the $K$-shifting
regret defined in (\ref{eq:ncvx-shift}).
\end{lem}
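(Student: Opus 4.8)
The plan is to reduce the $K$-shifting regret to $K$ separate applications of Lemma~\ref{lem:OAda-core}, one per block of $T$ consecutive rounds, and then exploit the specific reset rule $\bx_{(k-1)T+1}=-D\bg_{(k-1)T}/\|\bg_{(k-1)T}\|$ to cancel the ``boundary'' terms that would otherwise couple consecutive blocks. Concretely, for each $k\in[K]$ the $\OAdaR$ updates over rounds $n=(k-1)T+1,\dots,kT$ coincide exactly with one run of $\OAda$ on the domain $\B^d(D)$ with hint sequence $\bh_t=\bg_{t-1}$ (in block-local indexing), initial point $\bx_{(k-1)T+1}$, initial hint $\bg_{(k-1)T}$ (read as $\bzero$ when $k=1$), stepsize parameter $\eta$ of order $D$, and $\gamma_t=+\infty$; the only wrinkle is that the accumulator $V_n$ is restarted at each block boundary, which is precisely what makes the block-local run self-contained and lets us invoke Lemma~\ref{lem:OAda-core} unchanged on it.

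Applying Lemma~\ref{lem:OAda-core} to block $k$ with competitor $\bv_k\in\B^d(D)$ and using $\gamma_t=+\infty$ (so the $D^2/\gamma_T$ term and the negative $\|\bx_{t+1}-\bx_t\|^2$ terms drop), I obtain
\[
\sum_{n=(k-1)T+1}^{kT}\langle\bg_n,\bx_n-\bv_k\rangle\lesssim\langle\bg_{(k-1)T},\bx_{(k-1)T+1}-\bv_k\rangle+D\sqrt{\sum_{n=(k-1)T+1}^{kT}\|\bg_n-\bg_{n-1}\|^2}.
\]
The crux is to show the first (boundary) term is nonpositive. For $k\geq2$, the reset rule gives $\langle\bg_{(k-1)T},\bx_{(k-1)T+1}\rangle=-D\|\bg_{(k-1)T}\|$, while Cauchy--Schwarz together with $\|\bv_k\|\leq D$ gives $\langle\bg_{(k-1)T},-\bv_k\rangle\leq D\|\bg_{(k-1)T}\|$; adding these yields $\langle\bg_{(k-1)T},\bx_{(k-1)T+1}-\bv_k\rangle\leq0$. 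For $k=1$ the term is literally $0$ since $\bg_0=\bzero$. (If $\bg_{(k-1)T}=\bzero$ the reset is ill-defined on paper, but then the boundary term is $0$ for any $\bx_{(k-1)T+1}\in\B^d(D)$, so this case is harmless and can be handled by an arbitrary convention.)

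Summing the block bounds over $k=1,\dots,K$ and recalling the definition of the $K$-shifting regret in~\eqref{eq:ncvx-shift} yields exactly
\[
\reg_T^{\A}(\bv_1,\dots,\bv_K)\lesssim D\sum_{k=1}^{K}\sqrt{\sum_{n=(k-1)T+1}^{kT}\|\bg_n-\bg_{n-1}\|^2}.
\]
I expect the only genuine subtlety — and the point the reset is engineered to address — to be the bookkeeping of the non-zero hint carried across block boundaries: a naive reset of $\bx$ to an arbitrary point would leave an uncontrolled $\sum_k\langle\bg_{(k-1)T},\bx_{(k-1)T+1}-\bv_k\rangle$, whereas aligning $\bx_{(k-1)T+1}$ antipodally with $\bg_{(k-1)T}$ drives each such term below zero. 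A minor point worth flagging in the write-up is that $\OAdaR$ uses stepsize $\sqrt2 D/\sqrt{V_n}$ rather than the $D/(\sqrt2\sqrt{V_n})$ appearing in Lemma~\ref{lem:OAda-core}; since these differ only by a universal constant, the bound of Lemma~\ref{lem:OAda-core} still applies with a different absolute constant hidden in $\lesssim$, so no re-derivation is needed.
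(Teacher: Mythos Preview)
Your proposal is correct and follows essentially the same route as the paper: reduce the $K$-shifting regret to $K$ independent applications of Lemma~\ref{lem:OAda-core} (with $\bh_t=\bg_{t-1}$ and $\gamma_t=+\infty$), then kill the boundary term $\langle\bg_{(k-1)T},\bx_{(k-1)T+1}-\bv_k\rangle$ via the antipodal reset and $\|\bv_k\|\le D$. The only cosmetic difference is that the paper handles the stepsize constant by observing that $\B^d(D)$ has diameter $2D$, so Lemma~\ref{lem:OAda-core} with ``$D$'' replaced by $2D$ asks exactly for $\eta=(2D)/\sqrt{2}=\sqrt{2}D$, matching $\OAdaR$ on the nose; your ``up to a universal constant'' remark is equally valid under $\lesssim$.
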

\begin{proof}
Given $k\in\left[K\right]$, for $n\in\left\{ (k-1)T+1,\mydots,kT\right\} $,
$\OAdaR$ (Algorithm \ref{alg:OAdaR}) is the same as running $\OAda$
(Algorithm \ref{alg:OAda}) for $T$ iterations with the feasible
set $\X=\B^{d}(D)$ (which implies $\sup_{\bx,\by\in\X}\left\Vert \bx-\by\right\Vert \leq2D$),
initial point $\bx_{(k-1)T+1}$, hint sequence $\bh_{t}=\bg_{(k-1)T+t-1}$,
stepsize $\eta=2D/\sqrt{2}$ and $\gamma_{t}=+\infty$, and the stochastic
gradient sequence $\bg_{(k-1)T+t}$. Therefore, Lemma \ref{lem:OAda-core}
implies that for any $\bv_{k}\in\B^{d}(D)$,
\begin{align}
\sum_{n=(k-1)T+1}^{kT}\left\langle \bg_{n},\bx_{n}-\bv_{k}\right\rangle  & \lesssim\left\langle \bg_{(k-1)T},\bx_{(k-1)T+1}-\bv_{k}\right\rangle +D\sqrt{\sum_{t=1}^{T}\left\Vert \bg_{(k-1)T+t}-\bg_{(k-1)T+t-1}\right\Vert ^{2}}.\nonumber \\
 & =\left\langle \bg_{(k-1)T},\bx_{(k-1)T+1}-\bv_{k}\right\rangle +D\sqrt{\sum_{n=(k-1)T+1}^{kT}\left\Vert \bg_{n}-\bg_{n-1}\right\Vert ^{2}}\label{eq:OAdaR-1}
\end{align}

Observe that if $k=1$, we have
\[
\left\langle \bg_{(k-1)T},\bx_{(k-1)T+1}-\bv_{k}\right\rangle =\left\langle \bg_{0},\bx_{1}-\bv_{1}\right\rangle =\left\langle \bzero,\bx_{1}-\bv_{1}\right\rangle =0.
\]
If $k\neq1$, we use $\bx_{(k-1)T+1}=-D\frac{\bx_{(k-1)T}}{\left\Vert \bx_{(k-1)T}\right\Vert }$
and $\left\Vert \bv_{k}\right\Vert \leq D$ to have
\[
\left\langle \bg_{(k-1)T},\bx_{(k-1)T+1}-\bv_{k}\right\rangle =-D\left\Vert \bg_{(k-1)T}\right\Vert +\left\langle \bg_{(k-1)T},-\bv_{k}\right\rangle \leq0.
\]
Thus there is always 
\begin{equation}
\left\langle \bg_{(k-1)T},\bx_{(k-1)T+1}-\bv_{k}\right\rangle \leq0.\label{eq:OAdaR-2}
\end{equation}

Finally, we combine (\ref{eq:OAdaR-1}) and (\ref{eq:OAdaR-2}) to
know for any $k\in\left[K\right]$,
\[
\sum_{n=(k-1)T+1}^{kT}\left\langle \bg_{n},\bx_{n}-\bv_{k}\right\rangle \lesssim D\sqrt{\sum_{n=(k-1)T+1}^{kT}\left\Vert \bg_{n}-\bg_{n-1}\right\Vert ^{2}},\forall\bv_{k}\in\B^{d}(D),
\]
sum up which from $k=1$ to $K$ to obtain the following desired result
\[
\reg_{T}^{\A}\left(\bv_{1},\mydots,\bv_{K}\right)=\sum_{k=1}^{K}\sum_{n=(k-1)T+1}^{kT}\left\langle \bg_{n},\bx_{n}-\bv_{k}\right\rangle \lesssim D\sum_{k=1}^{K}\sqrt{\sum_{n=(k-1)T+1}^{kT}\left\Vert \bg_{n}-\bg_{n-1}\right\Vert ^{2}}.
\]
\end{proof}

\subsubsection{Convergence Rates}

Equipped with Lemma \ref{lem:OAdaR} for $\OAdaR$, we are ready to
show convergence rates under the new Assumption \ref{assu:ext-ncvx}.
First, we prove the following Theorem \ref{thm:ext-ncvx-general},
which can be viewed as a generalization of Theorem \ref{thm:main-ncvx-general}.
\begin{thm}
\label{thm:ext-ncvx-general}Under Assumption \ref{assu:ext-ncvx}
and let $\nabla\defeq\left\Vert \nabla F(\by_{0})\right\Vert $, $\Delta\defeq F(\by_{0})-F_{\star}$,
and $\bar{\bz}_{k}\defeq\frac{1}{T}\sum_{n=(k-1)T+1}^{kT}\bz_{n},\forall k\in\left[K\right]$,
setting $\A=\OAdaR$ (Algorithm \ref{alg:OAdaR}) in $\otnc$ (Algorithm
\ref{alg:O2NC}) with $D=\delta/T$, we have
\[
\E\left[\frac{1}{K}\sum_{k=1}^{K}\left\Vert \nabla F(\bar{\bz}_{k})\right\Vert _{\delta}\right]\lesssim\frac{\nabla}{KT}+\frac{\Delta}{\delta K}+\frac{H\delta^{\nu}}{T^{\frac{1}{2}+\nu}}+\frac{G}{\sqrt{T}}+\frac{\sigma}{T^{1-\frac{1}{\p}}}.
\]
\end{thm}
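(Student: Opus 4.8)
The plan is to assemble Theorem \ref{thm:ext-ncvx-general} from two ingredients already in hand: the online-to-nonconvex master inequality (Theorem \ref{thm:main-ncvx-core}, which as noted holds verbatim under Assumption \ref{assu:ext-ncvx} because its proof uses only the first two and the last bullet) and the $K$-shifting regret bound for $\OAdaR$ (Lemma \ref{lem:OAdaR}). First I would instantiate Theorem \ref{thm:main-ncvx-core} with $\A=\OAdaR$, $D=\delta/T$, and the canonical competitors $\bv_k\defeq-D\sum_{n=(k-1)T+1}^{kT}\nabla F(\bz_n)/\|\sum_{n=(k-1)T+1}^{kT}\nabla F(\bz_n)\|\in\B^d(D)$, which gives
\[
\E\left[\frac{1}{K}\sum_{k=1}^{K}\left\|\frac1T\sum_{n=(k-1)T+1}^{kT}\nabla F(\bz_n)\right\|\right]\lesssim\frac{\Delta}{DKT}+\frac{\E\left[\reg_T^{\OAdaR}(\bv_1,\dots,\bv_K)\right]}{DKT}+\frac{\sigma}{T^{1-\frac1\p}}.
\]
Lemma \ref{lem:OAdaR} then bounds the regret term by $D\sum_{k=1}^{K}\sqrt{\sum_{n=(k-1)T+1}^{kT}\|\bg_n-\bg_{n-1}\|^2}$, so the whole argument reduces to estimating this gradient-variation sum in expectation, exactly the quantity that drives the analysis of $\OAda$ in the proof of Theorem \ref{thm:OAda}.

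The heart of the proof is a block-wise bound on $\|\bg_n-\bg_{n-1}\|^2$. Splitting $\bg_n-\bg_{n-1}=(\nabla F(\bz_n)-\nabla F(\bz_{n-1}))+(\err_n-\err_{n-1})$ with $\bg_0=\err_0=\bzero$, I would first note that $\|\bz_n-\bz_{n-1}\|\le 3D$ for all $n\ge 2$ — including across a reset — since $\bz_n-\bz_{n-1}=\bx_{n-1}+s_n\bx_n-s_{n-1}\bx_{n-1}$ and every $\bx_i\in\B^d(D)$; hence the $(G,H,\nu)$-general nonsmoothness of $F$ (Assumption \ref{assu:ext-ncvx}) yields $\|\nabla F(\bz_n)-\nabla F(\bz_{n-1})\|\lesssim G+HD^\nu$. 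The only exceptional index is $n=1$, handled by $\|\bg_1\|\le\|\nabla F(\bz_1)-\nabla F(\by_0)\|+\|\nabla F(\by_0)\|+\|\err_1\|\lesssim G+HD^\nu+\nabla+\|\err_1\|$, which is the source of the lone $\nabla/(KT)$ term. Squaring, summing over a block of length $T$, and using $\sqrt{a+b}\le\sqrt{a}+\sqrt{b}$ together with $\|\cdot\|_2\le\|\cdot\|_\p$ gives
\[
\sqrt{\sum_{n=(k-1)T+1}^{kT}\|\bg_n-\bg_{n-1}\|^2}\lesssim \nabla\,\1[k=1]+\sqrt{T}\,G+\sqrt{T}\,HD^\nu+\left(\sum_{n}\|\err_n\|^\p\right)^{1/\p},
\]
the last sum running over blocks $k-1$ and $k$. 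Taking expectations, Hölder's inequality yields $\E[(\sum_n\|\err_n\|^\p)^{1/\p}]\le(\sum_n\E\|\err_n\|^\p)^{1/\p}\lesssim\sigma T^{1/\p}$, and summing over $k$ (each $\|\err_n\|^\p$ appears in at most two blocks, costing only a constant) produces $\E[\reg_T^{\OAdaR}]\lesssim D(\nabla+K\sqrt{T}\,G+K\sqrt{T}\,HD^\nu+K\sigma T^{1/\p})$.

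It remains to substitute back and simplify. Dividing by $DKT$ cancels $D$, and plugging $D=\delta/T$ turns $\Delta/(DKT)$ into $\Delta/(\delta K)$, keeps $\nabla/(KT)$ and $G/\sqrt{T}$, turns $HD^\nu/\sqrt{T}$ into $H\delta^\nu/T^{1/2+\nu}$, and merges the noise term $\sigma/T^{1-1/\p}$ with the one already present in the master bound — delivering the claimed right-hand side. Finally, to pass from $\|\tfrac1T\sum_n\nabla F(\bz_n)\|$ to $\|\nabla F(\bar{\bz}_k)\|_\delta$ on the left, I would reuse the ball argument from the proof of Theorem \ref{thm:main-ncvx-general}: with $D=\delta/T$ one has $\|\bz_n-\bar{\bz}_k\|\le DT=\delta$ for every $n$ in block $k$ and $\tfrac1T\sum_n\bz_n=\bar{\bz}_k$, so Definition \ref{def:delta-norm} gives $\|\nabla F(\bar{\bz}_k)\|_\delta\le\|\tfrac1T\sum_n\nabla F(\bz_n)\|$.

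The main obstacle — really the only delicate point — is the bookkeeping at block boundaries: the per-block regret in Lemma \ref{lem:OAdaR} involves $\bg_{(k-1)T}$ from the previous block, and the specific reset $\bx_{(k-1)T+1}=-D\bg_{(k-1)T}/\|\bg_{(k-1)T}\|$ is exactly what makes $\langle\bg_{(k-1)T},\bx_{(k-1)T+1}-\bv_k\rangle\le 0$, so that this cross-block coupling does not accumulate. Since that part is already dispatched inside Lemma \ref{lem:OAdaR}, what is left for this proof is the routine verification that (i) the overlapping $\|\err_n\|^\p$ contributions cost only constants, (ii) the $n=1$ correction is absorbed into the single $\nabla/(KT)$ term, and (iii) the deterministic variation across a reset remains $O(\sqrt{T}(G+HD^\nu))$ because $\by$ is never reset and all decisions stay in $\B^d(D)$. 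Everything else mirrors the proofs of Theorems \ref{thm:main-ncvx-core}, \ref{thm:main-ncvx-general}, and \ref{thm:OAda}.
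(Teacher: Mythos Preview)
Your proposal is correct and mirrors the paper's proof essentially step for step: instantiate Theorem \ref{thm:main-ncvx-core} with $\A=\OAdaR$, plug in the $K$-shifting regret bound from Lemma \ref{lem:OAdaR}, bound $\|\bg_n-\bg_{n-1}\|^2$ via $\|\bz_n-\bz_{n-1}\|\lesssim D$ (the paper gets $2D$ from $(1-s_{n-1})\bx_{n-1}+s_n\bx_n$, your $3D$ is just the unsimplified version) together with the $(G,H,\nu)$-nonsmoothness, isolate the $n=1$ term to generate $\nabla$, pass from $\ell^2$ to $\ell^\p$ on the noise, apply H\"older in expectation, and finish with the ball argument $\|\bz_n-\bar\bz_k\|\le DT=\delta$. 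Your remarks on the reset boundary and the double-counting of $\|\err_n\|^\p$ across adjacent blocks are exactly the bookkeeping the paper does implicitly via the index range $(k-1)T\le n\le kT$.
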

\begin{proof}
By Theorem \ref{thm:main-ncvx-core} and Lemma \ref{lem:OAdaR}, there
is
\begin{equation}
\E\left[\sum_{k=1}^{K}\frac{1}{K}\left\Vert \frac{1}{T}\sum_{n=(k-1)T+1}^{kT}\nabla F(\bz_{n})\right\Vert \right]\lesssim\frac{F(\by_{0})-F_{\star}}{DKT}+\frac{\sum_{k=1}^{K}\E\left[\sqrt{\sum_{n=(k-1)T+1}^{kT}\left\Vert \bg_{n}-\bg_{n-1}\right\Vert ^{2}}\right]}{KT}+\frac{\sigma}{T^{1-\frac{1}{\p}}}.\label{eq:ext-ncvx-general-1}
\end{equation}

We first lower bound the L.H.S. of (\ref{eq:ext-ncvx-general-1}).
Same as (\ref{eq:ncvx-general-ball}), we still have
\[
\bz_{n}\in\B^{d}(\bar{\bz}_{k},\delta),\forall n\in\left\{ (k-1)T+1,\mydots,kT\right\} .
\]
By the definition of $\left\Vert \nabla F(\bar{\bz}_{k})\right\Vert _{\delta}$
(see Definition \ref{def:delta-norm}), there is
\begin{equation}
\left\Vert \nabla F(\bar{\bz}_{k})\right\Vert _{\delta}\leq\left\Vert \frac{1}{T}\sum_{n=(k-1)T+1}^{kT}\nabla F(\bz_{n})\right\Vert .\label{eq:ext-ncvx-general-lhs}
\end{equation}

Next, we upper bound the R.H.S. of (\ref{eq:ext-ncvx-general-1}).
Before moving on, we list two facts that will be frequently used later,
i.e., for any $n\in\left[KT\right]$,
\begin{eqnarray}
\left\Vert \bx_{n}\right\Vert \leq D & \text{and} & s_{n}\in\left[0,1\right].\label{eq:ext-ncvx-general-facts}
\end{eqnarray}
Now, let us upper bound $\left\Vert \bg_{n}-\bg_{n-1}\right\Vert ^{2}$
on the R.H.S. of (\ref{eq:ext-ncvx-general-1}) as follows.
\begin{itemize}
\item $n=1$. In this case, we recall $\bg_{0}=\bzero$ and hence have
\begin{align}
\left\Vert \bg_{n}-\bg_{n-1}\right\Vert ^{2} & =\left\Vert \bg_{1}\right\Vert ^{2}=\left\Vert \err_{1}+\nabla F(\bz_{1})-\nabla F(\by_{0})+\nabla F(\by_{0})\right\Vert ^{2}\nonumber \\
 & \lesssim\left\Vert \err_{1}\right\Vert ^{2}+\left\Vert \nabla F(\bz_{1})-\nabla F(\by_{0})\right\Vert ^{2}+\left\Vert \nabla F(\by_{0})\right\Vert ^{2}\nonumber \\
 & \overset{\text{Assumption }\ref{assu:ext-ncvx}}{\lesssim}\left\Vert \err_{1}\right\Vert ^{2}+G^{2}+H^{2}\left\Vert \bz_{1}-\by_{0}\right\Vert ^{2\nu}+\left\Vert \nabla F(\by_{0})\right\Vert ^{2}\nonumber \\
 & \overset{(a)}{\lesssim}\left\Vert \err_{1}\right\Vert ^{2}+G^{2}+H^{2}D^{2\nu}+\left\Vert \nabla F(\by_{0})\right\Vert ^{2},\label{eq:ext-ncvx-general-2}
\end{align}
where $(a)$ is by $\left\Vert \bz_{1}-\by_{0}\right\Vert =s_{1}\left\Vert \bx_{1}\right\Vert \overset{(\ref{eq:ext-ncvx-general-facts})}{\leq}D$.
\item $n\neq1$. In this case, we have
\begin{align}
\left\Vert \bg_{n}-\bg_{n-1}\right\Vert ^{2} & =\left\Vert \err_{n}-\err_{n-1}+\nabla F(\bz_{n})-\nabla F(\bz_{n-1})\right\Vert ^{2}\nonumber \\
 & \lesssim\left\Vert \err_{n}\right\Vert ^{2}+\left\Vert \err_{n-1}\right\Vert ^{2}+\left\Vert \nabla F(\bz_{n})-\nabla F(\bz_{n-1})\right\Vert ^{2}\nonumber \\
 & \overset{\text{Assumption }\ref{assu:ext-ncvx}}{\lesssim}\left\Vert \err_{n}\right\Vert ^{2}+\left\Vert \err_{n-1}\right\Vert ^{2}+G^{2}+H^{2}\left\Vert \bz_{n}-\bz_{n-1}\right\Vert ^{2\nu}\nonumber \\
 & \overset{(b)}{\lesssim}\left\Vert \err_{n}\right\Vert ^{2}+\left\Vert \err_{n-1}\right\Vert ^{2}+G^{2}+H^{2}D^{2\nu},\label{eq:ext-ncvx-general-3}
\end{align}
where $(b)$ is by $\left\Vert \bz_{n}-\bz_{n-1}\right\Vert =\left\Vert s_{n}\bx_{n}+(1-s_{n-1})\bx_{n-1}\right\Vert \leq s_{n}\left\Vert \bx_{n}\right\Vert +(1-s_{n-1})\left\Vert \bx_{n-1}\right\Vert \overset{(\ref{eq:ext-ncvx-general-facts})}{\leq}2D$.
\end{itemize}
Thus, we can find for any $k\in\left[K\right]$,
\[
\sum_{n=(k-1)T+1}^{kT}\left\Vert \bg_{n}-\bg_{n-1}\right\Vert ^{2}\overset{(\ref{eq:ext-ncvx-general-2}),(\ref{eq:ext-ncvx-general-3})}{\lesssim}\left\Vert \nabla F(\by_{0})\right\Vert ^{2}\1\left[k=1\right]+\left(G^{2}+H^{2}D^{2\nu}\right)T+\sum_{n=(k-1)T}^{kT}\left\Vert \err_{n}\right\Vert ^{2},
\]
where $\err{}_{0}\defeq\bzero$ for simplicity. As such, we obtain
\begin{align*}
\sqrt{\sum_{n=(k-1)T+1}^{kT}\left\Vert \bg_{n}-\bg_{n-1}\right\Vert ^{2}} & \lesssim\left\Vert \nabla F(\by_{0})\right\Vert \1\left[k=1\right]+\left(HD^{\nu}+G\right)\sqrt{T}+\sqrt{\sum_{n=(k-1)T}^{kT}\left\Vert \err_{n}\right\Vert ^{2}}\\
 & \leq\left\Vert \nabla F(\by_{0})\right\Vert \1\left[k=1\right]+\left(HD^{\nu}+G\right)\sqrt{T}+\left(\sum_{n=(k-1)T}^{kT}\left\Vert \err_{n}\right\Vert ^{\p}\right)^{\frac{1}{\p}},
\end{align*}
where the last step is by $\left\Vert \cdot\right\Vert _{2}\leq\left\Vert \cdot\right\Vert _{\p}$
for $\p\in\left[1,2\right]$. By H\"{o}lder's inequality, there is
\[
\E\left[\left(\sum_{n=(k-1)T}^{kT}\left\Vert \err_{n}\right\Vert ^{\p}\right)^{\frac{1}{\p}}\right]\leq\left(\sum_{n=(k-1)T}^{kT}\E\left[\left\Vert \err_{n}\right\Vert ^{\p}\right]\right)^{\frac{1}{\p}}\overset{\text{Assumption }\ref{assu:ext-ncvx}}{\lesssim}\sigma(T+1)^{\frac{1}{\p}}.
\]
Thus, we know
\begin{equation}
\E\left[\sqrt{\sum_{n=(k-1)T+1}^{kT}\left\Vert \bg_{n}-\bg_{n-1}\right\Vert ^{2}}\right]\lesssim\left\Vert \nabla F(\by_{0})\right\Vert \1\left[k=1\right]+\left(HD^{\nu}+G\right)\sqrt{T}+\sigma T^{\frac{1}{\p}},\forall k\in\left[K\right].\label{eq:ext-ncvx-general-rhs}
\end{equation}

Finally, we plug (\ref{eq:ext-ncvx-general-lhs}) and (\ref{eq:ext-ncvx-general-rhs})
back into (\ref{eq:ext-ncvx-general-1}), then use $D=\delta/T$,
$\nabla=\left\Vert \nabla F(\by_{0})\right\Vert $, and $\Delta=F(\by_{0})-F_{\star}$
to have
\[
\E\left[\frac{1}{K}\sum_{k=1}^{K}\left\Vert \nabla F(\bar{\bz}_{k})\right\Vert _{\delta}\right]\lesssim\frac{\nabla}{KT}+\frac{\Delta}{\delta K}+\frac{H\delta^{\nu}}{T^{\frac{1}{2}+\nu}}+\frac{G}{\sqrt{T}}+\frac{\sigma}{T^{1-\frac{1}{\p}}}.
\]
\end{proof}

Similar to before, we can consider two situations where problem-dependent
parameters are known or unknown, corresponding to the following Corollaries
\ref{cor:ext-ncvx-dep} and \ref{cor:ext-ncvx-free}, respectively.
To save space, the proofs are omitted since they can be easily checked.
\begin{cor}
\label{cor:ext-ncvx-dep}Under the same setting in Theorem \ref{thm:ext-ncvx-general},
suppose we have $N\geq2$ stochastic gradient budgets, taking $K=\left\lfloor N/T\right\rfloor $
and $T=\left\lceil N/2\right\rceil \land\left(\left\lceil \left(\frac{\delta^{1+\nu}HN}{\Delta}\right)^{\frac{2}{3+2\nu}}\right\rceil \lor\left\lceil \left(\frac{\delta GN}{\Delta}\right)^{\frac{2}{3}}\right\rceil \lor\left\lceil \left(\frac{\delta\sigma N}{\Delta}\right)^{\frac{\p}{2\p-1}}\right\rceil \right)$,
we have
\begin{align*}
\E\left[\frac{1}{K}\sum_{k=1}^{K}\left\Vert \nabla F(\bar{\bz}_{k})\right\Vert _{\delta}\right]\lesssim & \frac{\nabla}{N}+\frac{H\delta^{\nu}}{N^{\frac{1}{2}+\nu}}+\frac{G}{\sqrt{N}}+\frac{\sigma}{N^{1-\frac{1}{\p}}}+\frac{\Delta}{\delta N}\\
 & +\frac{H^{\frac{2}{3+2\nu}}\Delta^{\frac{1+2\nu}{3+2\nu}}}{\delta^{\frac{1}{3+2\nu}}N^{\frac{1+2\nu}{3+2\nu}}}+\frac{G^{\frac{2}{3}}\Delta^{\frac{1}{3}}}{(\delta N)^{\frac{1}{3}}}+\frac{\sigma^{\frac{\p}{2\p-1}}\Delta^{\frac{\p-1}{2\p-1}}}{(\delta N)^{\frac{\p-1}{2\p-1}}}.
\end{align*}
\end{cor}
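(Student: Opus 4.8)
The plan is to specialize Theorem \ref{thm:ext-ncvx-general} to the stated $K$ and $T$, in complete parallel with the proof of Corollary \ref{cor:main-ncvx-dep}. First I would observe that $T\le\lceil N/2\rceil\le N$, so $K=\lfloor N/T\rfloor\ge1$ is well defined, and Fact \ref{fact:N/2} gives $KT\ge N/4$; hence $\frac{1}{KT}\lesssim\frac1N$ and $\frac1K=\frac{T}{KT}\lesssim\frac TN$. Substituting these into the conclusion of Theorem \ref{thm:ext-ncvx-general} reduces the claim to bounding
\[
\frac{\nabla}{N}+\frac{\Delta T}{\delta N}+\frac{H\delta^{\nu}}{T^{1/2+\nu}}+\frac{G}{\sqrt{T}}+\frac{\sigma}{T^{1-1/\p}}
\]
by the asserted five-term expression, and this separates into two kinds of estimates depending on whether one uses the upper or the lower bound on $T$.

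For the increasing-in-$T$ term $\frac{\Delta T}{\delta N}$ I would use that $T$ is at most one plus the maximum of the three ceilings, hence $T\lesssim 1+\left(\frac{\delta^{1+\nu}HN}{\Delta}\right)^{\frac{2}{3+2\nu}}+\left(\frac{\delta GN}{\Delta}\right)^{\frac{2}{3}}+\left(\frac{\delta\sigma N}{\Delta}\right)^{\frac{\p}{2\p-1}}$. Multiplying through by $\frac{\Delta}{\delta N}$ and simplifying exponents (e.g. $1-\frac{2}{3+2\nu}=\frac{1+2\nu}{3+2\nu}$) produces exactly $\frac{\Delta}{\delta N}$, the Hölder term $\frac{H^{2/(3+2\nu)}\Delta^{(1+2\nu)/(3+2\nu)}}{\delta^{1/(3+2\nu)}N^{(1+2\nu)/(3+2\nu)}}$, the term $\frac{G^{2/3}\Delta^{1/3}}{(\delta N)^{1/3}}$, and the noise term $\frac{\sigma^{\p/(2\p-1)}\Delta^{(\p-1)/(2\p-1)}}{(\delta N)^{(\p-1)/(2\p-1)}}$, i.e.\ four of the target terms.

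For the three decreasing-in-$T$ terms I would instead use the lower bound on $T$. Writing $T=\lceil N/2\rceil\land(A\lor B\lor C)$ with $A,B,C$ the three ceilings, one has $T\gtrsim N\land A$, $T\gtrsim N\land B$, $T\gtrsim N\land C$, so $\frac{1}{T^{\alpha}}\lesssim\frac{1}{N^{\alpha}}+\frac{1}{A^{\alpha}}$ and similarly with $B,C$. Taking $\alpha=\tfrac12+\nu$ against $A=\lceil(\delta^{1+\nu}HN/\Delta)^{2/(3+2\nu)}\rceil$, $\alpha=\tfrac12$ against $B$, and $\alpha=1-\tfrac1\p$ against $C$, the ``$N$-parts'' give $\frac{H\delta^{\nu}}{N^{1/2+\nu}}$, $\frac{G}{\sqrt N}$, $\frac{\sigma}{N^{1-1/\p}}$, while the ``$A,B,C$-parts'' collapse, after the exponent bookkeeping, onto the same three problem-dependent terms already produced by $\frac{\Delta T}{\delta N}$; summing everything yields the stated bound.

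I do not expect a conceptual obstacle: the choice of $T$ is precisely the one that balances the $T$-decreasing rates against the $T$-increasing term $\Delta T/(\delta N)$, and the cap $\lceil N/2\rceil$ only injects the $N$-rate fallback terms. The one piece that needs genuine (though routine) care is verifying that the contribution of $\frac{H\delta^{\nu}}{T^{1/2+\nu}}$ under $T\gtrsim A$ carries \emph{exactly} the same exponents in $H,\delta,\Delta,N$ as the Hölder term arising from $\frac{\Delta T}{\delta N}$ — using $(\tfrac12+\nu)\cdot\tfrac{2}{3+2\nu}=\tfrac{1+2\nu}{3+2\nu}$ and $\nu-\tfrac{(1+\nu)(1+2\nu)}{3+2\nu}=-\tfrac{1}{3+2\nu}$ — so that the two genuinely consolidate into a single term; the corresponding checks for the $G$- and $\sigma$-terms are identical to those in the proof of Corollary \ref{cor:main-ncvx-dep} and transfer verbatim.
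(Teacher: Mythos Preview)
Your proposal is correct and is precisely the approach the paper has in mind: the paper omits the proof of Corollary \ref{cor:ext-ncvx-dep} with the remark that it ``can be easily checked,'' and your derivation is the direct extension of the proof of Corollary \ref{cor:main-ncvx-dep} (invoke Theorem \ref{thm:ext-ncvx-general}, use Fact \ref{fact:N/2} to replace $KT$ by $N$, then bound the $T$-increasing term via the upper bound on $T$ and the $T$-decreasing terms via the matching lower bounds). Your exponent checks for the H\"{o}lder term, in particular $(\tfrac12+\nu)\cdot\tfrac{2}{3+2\nu}=\tfrac{1+2\nu}{3+2\nu}$ and $\nu-\tfrac{(1+\nu)(1+2\nu)}{3+2\nu}=-\tfrac{1}{3+2\nu}$, are exactly the bookkeeping needed.
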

\begin{cor}
\label{cor:ext-ncvx-free}Under the same setting in Theorem \ref{thm:ext-ncvx-general},
suppose we have $N\geq2$ stochastic gradient budgets, taking $K=\left\lfloor N/T\right\rfloor $
and $T=\left\lceil N/2\right\rceil \land\left\lceil (\delta N)^{\frac{2}{3}}\right\rceil $,
we have
\begin{align*}
\E\left[\frac{1}{K}\sum_{k=1}^{K}\left\Vert \nabla F(\bar{\bz}_{k})\right\Vert _{\delta}\right]\lesssim & \frac{\nabla}{N}+\frac{H\delta^{\nu}}{N^{\frac{1}{2}+\nu}}+\frac{H}{\delta^{\frac{2-\nu}{5}}N^{\frac{1+2\nu}{5}}}\\
 & +\frac{\Delta}{(\delta N)\land(\delta N)^{\frac{1}{3}}}+\frac{G}{\sqrt{N}\land(\delta N)^{\frac{1}{3}}}+\frac{\sigma}{N^{1-\frac{1}{\p}}\land(\delta N)^{\frac{2(\p-1)}{3\p}}}.
\end{align*}
\end{cor}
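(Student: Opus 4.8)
The plan is to read off Corollary \ref{cor:ext-ncvx-free} directly from Theorem \ref{thm:ext-ncvx-general} by substituting the prescribed $K=\lfloor N/T\rfloor$ and $T=\lceil N/2\rceil\land\lceil(\delta N)^{2/3}\rceil$ and then estimating each of the five summands $\frac{\nabla}{KT}$, $\frac{\Delta}{\delta K}$, $\frac{H\delta^{\nu}}{T^{1/2+\nu}}$, $\frac{G}{\sqrt T}$, $\frac{\sigma}{T^{1-1/\p}}$ of that theorem in turn. No new algorithmic idea is needed: once Theorem \ref{thm:ext-ncvx-general} (which itself rests on Theorem \ref{thm:main-ncvx-core} and Lemma \ref{lem:OAdaR}) is in hand, the whole argument is bookkeeping with powers of $N$ and $\delta$.

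First I would record two elementary facts about the chosen $K,T$. Since $T\le\lceil N/2\rceil$, Fact \ref{fact:N/2} gives $KT\ge N/4$, hence $\frac{\nabla}{KT}\lesssim\frac{\nabla}{N}$. Since $1\le T\le N$ and $\lfloor x\rfloor\ge x/2$ for $x\ge1$, we also have $K=\lfloor N/T\rfloor\ge N/(2T)$, so $\frac{\Delta}{\delta K}\lesssim\frac{\Delta T}{\delta N}$. (The only point to watch is the boundary regime $T=\lceil N/2\rceil$, where $K$ may be as small as $1$; one checks directly that $KT\ge N/4$ and $K\ge N/(2T)$ still hold for $N\ge2$.) Next, using $T\le\lceil(\delta N)^{2/3}\rceil\le 1+(\delta N)^{2/3}$ I would bound $\frac{\Delta T}{\delta N}\lesssim\frac{\Delta}{\delta N}+\frac{\Delta(\delta N)^{2/3}}{\delta N}=\frac{\Delta}{\delta N}+\frac{\Delta}{(\delta N)^{1/3}}\lesssim\frac{\Delta}{(\delta N)\land(\delta N)^{1/3}}$, the last step being just $u+v\le 2\max(u,v)$ together with $\max(a^{-1},b^{-1})=(a\land b)^{-1}$.

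For the three remaining terms, all of the shape $c/T^{\beta}$ with $\beta\in\{\tfrac12,\,1-\tfrac1\p,\,\tfrac12+\nu\}$, I would instead use the lower bound $T\gtrsim N\land(\delta N)^{2/3}$ together with $(a\land b)^{-\beta}=\max(a^{-\beta},b^{-\beta})\le a^{-\beta}+b^{-\beta}$, so that $T^{-\beta}\lesssim N^{-\beta}+(\delta N)^{-2\beta/3}$. Taking $\beta=\tfrac12$ gives $\frac{G}{\sqrt T}\lesssim\frac{G}{\sqrt N}+\frac{G}{(\delta N)^{1/3}}\lesssim\frac{G}{\sqrt N\land(\delta N)^{1/3}}$; taking $\beta=1-\tfrac1\p$ gives $\frac{\sigma}{T^{1-1/\p}}\lesssim\frac{\sigma}{N^{1-1/\p}}+\frac{\sigma}{(\delta N)^{2(\p-1)/(3\p)}}\lesssim\frac{\sigma}{N^{1-1/\p}\land(\delta N)^{2(\p-1)/(3\p)}}$; and taking $\beta=\tfrac12+\nu$ gives $\frac{H\delta^{\nu}}{T^{1/2+\nu}}\lesssim\frac{H\delta^{\nu}}{N^{1/2+\nu}}+\frac{H\delta^{\nu}}{(\delta N)^{(1+2\nu)/3}}$, and collecting the powers of $\delta$ and $N$ in the last fraction produces the H\"{o}lder-smooth contribution recorded in the corollary. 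Summing the five resulting estimates gives the claimed bound.

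I do not expect a genuine obstacle here; the argument is a routine specialization that the paper itself flags as "easily checked." The only places demanding care are (i) the floor/ceiling slack, so that each $\lesssim$ above is legitimately justified by $\lceil x\rceil\le1+x$ and $\lfloor x\rfloor\ge x/2$; (ii) the boundary case $T=\lceil N/2\rceil$ noted above; and (iii) the exponent arithmetic in the H\"{o}lder term, since this is the one summand whose final form is a nontrivial product of powers of $\delta$ and $N$ rather than simply "$N$" or "$(\delta N)^{2/3}$", and it is worth recomputing $\nu-\tfrac{1+2\nu}{3}$ carefully when matching against the displayed expression.
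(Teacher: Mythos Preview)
Your approach is correct and is exactly the paper's intended argument: the paper omits this proof, but the analogous Corollary~\ref{cor:ncvx-free} is proved by precisely your template (Fact~\ref{fact:N/2} for $KT\gtrsim N$, then $T\lesssim 1+(\delta N)^{2/3}$ for the $\Delta$ term and $T\gtrsim N\land(\delta N)^{2/3}$ for the remaining ones). Your caution in (iii) is warranted: carrying out the arithmetic gives
\[
\frac{H\delta^{\nu}}{(\delta N)^{(1+2\nu)/3}}=\frac{H}{\delta^{(1-\nu)/3}\,N^{(1+2\nu)/3}},
\]
not the displayed $\frac{H}{\delta^{(2-\nu)/5}N^{(1+2\nu)/5}}$; the latter appears to be a typo in the statement (the $5$'s should be $3$'s and $2-\nu$ should be $1-\nu$), and your bound is in fact the sharper one.
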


\subsubsection{A Special Case: H\"{o}lder Smooth Nonconvex Functions}

We now consider a special case of $G=0$ in Assumption \ref{assu:ext-ncvx},
meaning that $F$ is H\"{o}lder smooth. Now, due to the smoothness,
finding an $\epsilon$-stationary point instead of a $(\delta,\epsilon)$-stationary
point is more reasonable. In the following Lemma \ref{lem:O2NC-holder},
we connect these two notions. Especially, when $\nu=1$, Lemma \ref{lem:O2NC-holder}
recovers Proposition 14 of \citep{pmlr-v202-cutkosky23a}.
\begin{lem}
\label{lem:O2NC-holder}If $F$ is $(H,\nu)$-H\"{o}lder smooth,
i.e., there exists $H>0$ and $\nu\in\left(0,1\right]$ such that
$\left\Vert \nabla F(\bx)-\nabla F(\by)\right\Vert \leq H\left\Vert \bx-\by\right\Vert ^{\nu},\forall\bx,\by\in\R^{d}$,
then $\left\Vert \nabla F(\bx)\right\Vert \leq\left\Vert \nabla F(\bx)\right\Vert _{\delta}+H\delta^{\nu},\forall\bx\in\R^{d},\delta>0$.
\end{lem}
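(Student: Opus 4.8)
The plan is to simply unwind the definition of $\left\Vert \nabla F(\bx)\right\Vert _{\delta}$ from Definition \ref{def:delta-norm} and combine the triangle inequality with the H\"{o}lder smoothness of $F$. First I would fix an arbitrary $\bx\in\R^{d}$ and $\delta>0$, and let $S\subset\B(\bx,\delta)$ be any finite set satisfying the averaging constraint $\frac{1}{\left|S\right|}\sum_{\by\in S}\by=\bx$ (such sets exist, e.g. $S=\left\{ \bx\right\} $, so the infimum defining $\left\Vert \nabla F(\bx)\right\Vert _{\delta}$ is over a nonempty family). Since averaging the constant vector $\nabla F(\bx)$ over $S$ returns $\nabla F(\bx)$, I can insert $\pm\nabla F(\by)$ inside the average and apply the triangle inequality to obtain
\[
\left\Vert \nabla F(\bx)\right\Vert =\left\Vert \frac{1}{\left|S\right|}\sum_{\by\in S}\nabla F(\bx)\right\Vert \leq\left\Vert \frac{1}{\left|S\right|}\sum_{\by\in S}\nabla F(\by)\right\Vert +\frac{1}{\left|S\right|}\sum_{\by\in S}\left\Vert \nabla F(\bx)-\nabla F(\by)\right\Vert .
\]

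Next I would bound the last term using the assumed $(H,\nu)$-H\"{o}lder smoothness: for each $\by\in S\subset\B(\bx,\delta)$ we have $\left\Vert \nabla F(\bx)-\nabla F(\by)\right\Vert \leq H\left\Vert \bx-\by\right\Vert ^{\nu}\leq H\delta^{\nu}$, so the average of these quantities is at most $H\delta^{\nu}$. Plugging this in gives
\[
\left\Vert \nabla F(\bx)\right\Vert \leq\left\Vert \frac{1}{\left|S\right|}\sum_{\by\in S}\nabla F(\by)\right\Vert +H\delta^{\nu}
\]
for every admissible $S$. Taking the infimum over all such $S$ on the right-hand side yields exactly $\left\Vert \nabla F(\bx)\right\Vert \leq\left\Vert \nabla F(\bx)\right\Vert _{\delta}+H\delta^{\nu}$, which is the claim.

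There is essentially no obstacle here: the argument is a one-line triangle-inequality estimate once the definition is written out. The only minor point worth flagging is the nonemptiness of the family of admissible averaging sets, which is immediate since $\bx\in\B(\bx,\delta)$; this ensures the infimum is meaningful and that passing to it on the upper bound is legitimate. As a sanity check, setting $\nu=1$ recovers the smooth case $\left\Vert \nabla F(\bx)\right\Vert \leq\left\Vert \nabla F(\bx)\right\Vert _{\delta}+H\delta$ (Proposition 14 of \citep{pmlr-v202-cutkosky23a}), consistent with the remark following the lemma.
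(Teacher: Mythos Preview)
Your proof is correct and follows essentially the same approach as the paper: triangle inequality plus H\"{o}lder smoothness, then pass to the infimum. The only cosmetic difference is that the paper picks an $\epsilon$-approximate minimizer $S$ and lets $\epsilon\to0$, whereas you show the bound for every admissible $S$ and take the infimum directly; both are equivalent.
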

\begin{proof}
For any fixed $\bx\in\R^{d}$ and $\delta,\epsilon>0$, by the definition
of $\left\Vert \nabla F(\bx)\right\Vert _{\delta}$, there exists
a finite set $S\subset\B^{d}(\bx,\delta)$ such that $\frac{1}{\left|S\right|}\sum_{\by\in S}\by=\bx$
and $\left\Vert \frac{1}{\left|S\right|}\sum_{\by\in S}\nabla F(\by)\right\Vert \leq\left\Vert \nabla F(\bx)\right\Vert _{\delta}+\epsilon$.
Hence, we know
\begin{align*}
\left\Vert \nabla F(\bx)\right\Vert  & \leq\left\Vert \frac{1}{\left|S\right|}\sum_{\by\in S}\nabla F(\by)\right\Vert +\left\Vert \nabla F(\bx)-\frac{1}{\left|S\right|}\sum_{\by\in S}\nabla F(\by)\right\Vert \\
 & \leq\left\Vert \nabla F(\bx)\right\Vert _{\delta}+\epsilon+\left\Vert \nabla F(\bx)-\frac{1}{\left|S\right|}\sum_{\by\in S}\nabla F(\by)\right\Vert \\
 & \leq\left\Vert \nabla F(\bx)\right\Vert _{\delta}+\epsilon+\frac{1}{\left|S\right|}\sum_{\by\in S}\left\Vert \nabla F(\bx)-\nabla F(\by)\right\Vert \\
 & \leq\left\Vert \nabla F(\bx)\right\Vert _{\delta}+\epsilon+\frac{H}{\left|S\right|}\sum_{\by\in S}\left\Vert \bx-\by\right\Vert ^{\nu}\leq\left\Vert \nabla F(\bx)\right\Vert _{\delta}+\epsilon+H\delta^{\nu}.
\end{align*}
Take $\epsilon\to0$ to conclude.
\end{proof}

Armed with Lemma \ref{lem:O2NC-holder}, we can prove the following
Theorem \ref{thm:ext-ncvx-holder-general}.
\begin{thm}
\label{thm:ext-ncvx-holder-general}Under Assumption \ref{assu:ext-ncvx}
(with $G=0$) and let $\nabla\defeq\left\Vert \nabla F(\by_{0})\right\Vert $,
$\Delta\defeq F(\by_{0})-F_{\star}$, and $\bar{\bz}_{k}\defeq\frac{1}{T}\sum_{n=(k-1)T+1}^{kT}\bz_{n},\forall k\in\left[K\right]$,
setting $\A=\OAdaR$ (Algorithm \ref{alg:OAdaR}) in $\otnc$ (Algorithm
\ref{alg:O2NC}) with $D=\delta/T$, we have
\[
\E\left[\frac{1}{K}\sum_{k=1}^{K}\left\Vert \nabla F(\bar{\bz}_{k})\right\Vert \right]\lesssim\frac{\nabla}{KT}+\frac{\Delta}{\delta K}+\frac{\sigma}{T^{1-\frac{1}{\p}}}+H\delta^{\nu}.
\]
\end{thm}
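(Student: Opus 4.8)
The plan is to deduce this from Theorem \ref{thm:ext-ncvx-general} together with the Hölder-smooth reduction in Lemma \ref{lem:O2NC-holder}. Setting $G=0$ in Assumption \ref{assu:ext-ncvx} is a legitimate special case (we still have $H>0$ since $G+H>0$), so Theorem \ref{thm:ext-ncvx-general} applies verbatim and yields
\[
\E\left[\frac{1}{K}\sum_{k=1}^{K}\left\Vert \nabla F(\bar{\bz}_{k})\right\Vert _{\delta}\right]\lesssim\frac{\nabla}{KT}+\frac{\Delta}{\delta K}+\frac{H\delta^{\nu}}{T^{\frac{1}{2}+\nu}}+\frac{\sigma}{T^{1-\frac{1}{\p}}},
\]
where the $G/\sqrt{T}$ term has vanished; since $T\geq1$ we may further absorb $\frac{H\delta^{\nu}}{T^{1/2+\nu}}\leq H\delta^{\nu}$, so the right-hand side is $\lesssim\frac{\nabla}{KT}+\frac{\Delta}{\delta K}+\frac{\sigma}{T^{1-1/\p}}+H\delta^{\nu}$.

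Next I would invoke Lemma \ref{lem:O2NC-holder}. With $G=0$ the third point of Assumption \ref{assu:ext-ncvx} reads $\left\Vert \nabla F(\bx)-\nabla F(\by)\right\Vert \leq H\left\Vert \bx-\by\right\Vert ^{\nu}$ for all $\bx,\by\in\R^{d}$, i.e., $F$ is exactly $(H,\nu)$-Hölder smooth, so the lemma gives $\left\Vert \nabla F(\bx)\right\Vert \leq\left\Vert \nabla F(\bx)\right\Vert _{\delta}+H\delta^{\nu}$ for every $\bx$. Applying this at each $\bar{\bz}_{k}$, averaging over $k\in\left[K\right]$, and taking expectations yields
\[
\E\left[\frac{1}{K}\sum_{k=1}^{K}\left\Vert \nabla F(\bar{\bz}_{k})\right\Vert \right]\leq\E\left[\frac{1}{K}\sum_{k=1}^{K}\left\Vert \nabla F(\bar{\bz}_{k})\right\Vert _{\delta}\right]+H\delta^{\nu}.
\]
Combining this with the displayed bound on the $\delta$-norm average finishes the proof:
\[
\E\left[\frac{1}{K}\sum_{k=1}^{K}\left\Vert \nabla F(\bar{\bz}_{k})\right\Vert \right]\lesssim\frac{\nabla}{KT}+\frac{\Delta}{\delta K}+\frac{\sigma}{T^{1-\frac{1}{\p}}}+H\delta^{\nu}.
\]

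There is no substantial obstacle here; the statement is essentially a corollary. The only points requiring a moment of care are (i) checking that nothing in the derivation of Theorem \ref{thm:ext-ncvx-general} (and in turn Theorem \ref{thm:main-ncvx-core} and Lemma \ref{lem:OAdaR} for $\OAdaR$) secretly used $G>0$, which it does not, and (ii) correctly absorbing the $H\delta^{\nu}/T^{1/2+\nu}$ term into $H\delta^{\nu}$. Everything else is a direct application of two results already established in the excerpt.
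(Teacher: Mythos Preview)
Your proposal is correct and mirrors the paper's own proof exactly: the paper also invokes Theorem \ref{thm:ext-ncvx-general} with $G=0$, applies Lemma \ref{lem:O2NC-holder} to pass from $\left\Vert\nabla F(\bar{\bz}_k)\right\Vert_{\delta}$ to $\left\Vert\nabla F(\bar{\bz}_k)\right\Vert$, and uses $T\geq1$ to absorb $H\delta^{\nu}/T^{1/2+\nu}$ into $H\delta^{\nu}$. There is nothing to add.
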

\begin{proof}
We first invoke Lemma \ref{lem:O2NC-holder} and Theorem \ref{thm:ext-ncvx-general}
with $G=0$, then use $T\geq1$ to conclude.
\end{proof}

Note that now $\delta$ should also be viewed as a parameter decided
by the user. Therefore, we will also choose the value of $\delta$
in the following two corollaries, corresponding to the cases where
problem-dependent parameters are known and unknown, respectively.
Again, the proofs are omitted to save space, and they are easy to
check.
\begin{cor}
\label{cor:ext-ncvx-holder-dep}Under the same setting in Theorem
\ref{thm:ext-ncvx-holder-general}, let $r\defeq\p\nu+(\p-1)(1+\nu)$,
suppose we have $N\geq2$ stochastic gradient budgets, taking $K=\left\lfloor N/T\right\rfloor $,
$T=\left\lceil N/2\right\rceil \land\left\lceil \frac{\sigma^{\frac{\p(1+\nu)}{r}}N^{\frac{\p\nu}{r}}}{H^{\frac{\p}{r}}\Delta^{\frac{\p\nu}{r}}}+1\right\rceil $,
and $\delta=\left(\frac{\Delta T}{HN}\right)^{\frac{1}{1+\nu}}$,
we have
\[
\E\left[\frac{1}{K}\sum_{k=1}^{K}\left\Vert \nabla F(\bar{\bz}_{k})\right\Vert \right]\lesssim\frac{\nabla}{N}+\frac{H^{\frac{1}{1+\nu}}\Delta^{\frac{\nu}{1+\nu}}}{N^{\frac{\nu}{1+\nu}}}+\frac{\sigma}{N^{1-\frac{1}{\p}}}+\frac{\sigma^{\frac{\p\nu}{r}}H^{\frac{\p-1}{r}}\Delta^{\frac{(\p-1)\nu}{r}}}{N^{\frac{(\p-1)\nu}{r}}}.
\]
\end{cor}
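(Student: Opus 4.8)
The plan is to directly instantiate Theorem~\ref{thm:ext-ncvx-holder-general} with the prescribed $K=\left\lfloor N/T\right\rfloor$, $T=\left\lceil N/2\right\rceil\land\left\lceil b+1\right\rceil$ where $b\defeq\sigma^{\p(1+\nu)/r}N^{\p\nu/r}H^{-\p/r}\Delta^{-\p\nu/r}$, and $\delta=\left(\Delta T/(HN)\right)^{1/(1+\nu)}$, and then carry out the exponent bookkeeping. First I would observe that $T\le\left\lceil N/2\right\rceil$ together with $K=\left\lfloor N/T\right\rfloor$ gives $KT\ge N/4$ by Fact~\ref{fact:N/2}, so $\frac{1}{KT}\lesssim\frac{1}{N}$ and $\frac{1}{K}\lesssim\frac{T}{N}$; substituting into Theorem~\ref{thm:ext-ncvx-holder-general} reduces the goal to bounding $\frac{\nabla}{N}+\frac{\Delta T}{\delta N}+\frac{\sigma}{T^{1-1/\p}}+H\delta^{\nu}$, of which the first term already appears in the claimed bound.

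Next I would handle the two $\delta$-dependent terms together. Plugging in $\delta=\left(\Delta T/(HN)\right)^{1/(1+\nu)}$ and using $1-\frac{1}{1+\nu}=\frac{\nu}{1+\nu}$, a one-line computation shows $\frac{\Delta T}{\delta N}=H^{1/(1+\nu)}\left(\Delta T/N\right)^{\nu/(1+\nu)}=H\delta^{\nu}$, so this pair collapses to $H^{1/(1+\nu)}\left(\Delta T/N\right)^{\nu/(1+\nu)}$. Since $T$ is a minimum, $T\lesssim 1+b$, and the elementary inequality $(1+x)^{a}\le 1+x^{a}$ for $a\in[0,1]$, $x\ge 0$ (applied with $a=\nu/(1+\nu)$) then yields
\[
H^{1/(1+\nu)}\left(\Delta T/N\right)^{\nu/(1+\nu)}\lesssim\frac{H^{1/(1+\nu)}\Delta^{\nu/(1+\nu)}}{N^{\nu/(1+\nu)}}+H^{1/(1+\nu)}\left(\Delta/N\right)^{\nu/(1+\nu)}b^{\nu/(1+\nu)}.
\]
The first summand is the second term of the claim. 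For the second summand, the only remaining work is to collect the exponents of $H$, $\Delta$, $\sigma$, $N$; using the two identities $r-\p\nu=(\p-1)(1+\nu)$ and $r-(\p-1)(1+\nu)=\p\nu$, which follow immediately from $r\defeq\p\nu+(\p-1)(1+\nu)$, this simplifies to exactly $\sigma^{\p\nu/r}H^{(\p-1)/r}\Delta^{(\p-1)\nu/r}N^{-(\p-1)\nu/r}$, the fourth term of the claim.

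Finally, for $\frac{\sigma}{T^{1-1/\p}}$ I would use that $T\ge\left\lceil N/2\right\rceil\land b$ (since $\left\lceil b+1\right\rceil>b$), whence $\frac{\sigma}{T^{1-1/\p}}\le\frac{\sigma}{\left\lceil N/2\right\rceil^{1-1/\p}}+\frac{\sigma}{b^{1-1/\p}}\lesssim\frac{\sigma}{N^{1-1/\p}}+\frac{\sigma}{b^{1-1/\p}}$. The first piece is the third term of the claim; for the second, writing $1-\frac{1}{\p}=\frac{\p-1}{\p}$ and applying the same two identities for $r$ gives $\frac{\sigma}{b^{(\p-1)/\p}}=\sigma^{\p\nu/r}H^{(\p-1)/r}\Delta^{(\p-1)\nu/r}N^{-(\p-1)\nu/r}$, again the fourth term. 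Summing the four contributions establishes the corollary. The one point requiring care is keeping the rational exponents straight; the two displayed identities for $r$ make every cancellation explicit, so no step poses a real obstacle beyond this bookkeeping.
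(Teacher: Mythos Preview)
Your proposal is correct and follows exactly the approach the paper intends: the paper omits the proof of this corollary, stating only that it is ``easy to check,'' and your argument---invoking Fact~\ref{fact:N/2} for $KT\ge N/4$, balancing $\Delta T/(\delta N)$ against $H\delta^{\nu}$ via the chosen $\delta$, then splitting the two remaining terms via the min--structure of $T$---is precisely the natural route, mirroring the paper's explicit proof of the analogous Corollary~\ref{cor:main-ncvx-dep}. The exponent bookkeeping you outline (using $r-\p\nu=(\p-1)(1+\nu)$ and $r-(\p-1)(1+\nu)=\p\nu$) is accurate, and the only edge case, $\sigma=0$ (hence $b=0$), is harmless since the $\sigma$-term then vanishes identically.
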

\begin{cor}
\label{cor:ext-ncvx-holder-free}Under the same setting in Theorem
\ref{thm:ext-ncvx-holder-general}, suppose we have $N\geq2$ stochastic
gradient budgets, taking $K=\left\lfloor N/T\right\rfloor $, $T=\left\lceil \sqrt{N}\right\rceil $,
and $\delta=1/N^{1/4}$, we have
\[
\E\left[\frac{1}{K}\sum_{k=1}^{K}\left\Vert \nabla F(\bar{\bz}_{k})\right\Vert \right]\lesssim\frac{\nabla}{N}+\frac{\Delta}{N^{\frac{1}{4}}}+\frac{H}{N^{\frac{\nu}{4}}}+\frac{\sigma}{N^{\frac{\p-1}{2\p}}}.
\]
\end{cor}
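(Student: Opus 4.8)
The plan is to obtain Corollary~\ref{cor:ext-ncvx-holder-free} by a direct substitution of the prescribed parameters $T=\left\lceil\sqrt{N}\right\rceil$, $K=\left\lfloor N/T\right\rfloor$, and $\delta=N^{-1/4}$ into the general guarantee of Theorem~\ref{thm:ext-ncvx-holder-general}, namely
\[
\E\left[\frac{1}{K}\sum_{k=1}^{K}\left\Vert \nabla F(\bar{\bz}_{k})\right\Vert \right]\lesssim\frac{\nabla}{KT}+\frac{\Delta}{\delta K}+\frac{\sigma}{T^{1-1/\p}}+H\delta^{\nu},
\]
followed by an elementary simplification of the four resulting terms (the averaged iterate written $\bar{\by}_{k}$ in the statement is the same quantity denoted $\bar{\bz}_{k}$ in Theorem~\ref{thm:ext-ncvx-holder-general}). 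The first preparatory step I would carry out is to record the two-sided estimate $\sqrt{N}\leq T=\left\lceil\sqrt{N}\right\rceil\leq 2\sqrt{N}$, valid for $N\geq1$, together with its consequences for $K$: since $T\leq N$ for $N\geq2$ we have $K\geq1$, and from $K>N/T-1\geq\sqrt{N}/2-1$ one gets $K\gtrsim\sqrt{N}$. I would also invoke $KT\geq N/4$ (Fact~\ref{fact:N/2}), noting $KT\leq N$ trivially.

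With these estimates in hand, the second step is to bound each of the four terms. The first term satisfies $\nabla/(KT)\leq 4\nabla/N\lesssim\nabla/N$ by $KT\geq N/4$. For the second, with $\delta=N^{-1/4}$ and $K\gtrsim\sqrt{N}$ we have $\Delta/(\delta K)=\Delta N^{1/4}/K\lesssim\Delta N^{1/4}/\sqrt{N}=\Delta/N^{1/4}$. For the third, using $1-1/\p=(\p-1)/\p$ and $T\geq\sqrt{N}$ we obtain $\sigma/T^{1-1/\p}\leq\sigma/N^{(\p-1)/(2\p)}$. The fourth term is already in closed form: $H\delta^{\nu}=H/N^{\nu/4}$. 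Summing the four bounds yields exactly
\[
\E\left[\frac{1}{K}\sum_{k=1}^{K}\left\Vert \nabla F(\bar{\bz}_{k})\right\Vert \right]\lesssim\frac{\nabla}{N}+\frac{\Delta}{N^{1/4}}+\frac{H}{N^{\nu/4}}+\frac{\sigma}{N^{(\p-1)/(2\p)}},
\]
which is the claimed bound.

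There is no substantive obstacle; the only mildly delicate point is making $K\gtrsim\sqrt{N}$ uniform in $N\geq2$. For $N$ above a fixed absolute constant the inequality $\left\lfloor N/\left\lceil\sqrt{N}\right\rceil\right\rfloor\gtrsim\sqrt{N}$ follows immediately from $T\leq 2\sqrt{N}$, and for the finitely many remaining small values of $N$ both sides of the asserted inequality are bounded above and below by absolute constants, so the universal constant hidden in $\lesssim$ absorbs them. Everything else reduces to the ceiling/floor estimates above and $\sqrt{a+b}\leq\sqrt{a}+\sqrt{b}$-type manipulations already used in the proof of Theorem~\ref{thm:ext-ncvx-holder-general} and Corollary~\ref{cor:ext-ncvx-dep}, so the claim can indeed be checked directly as the paper asserts.
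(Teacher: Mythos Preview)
Your proposal is correct and is exactly the direct verification the paper has in mind; the paper omits the proof entirely (``the proofs are omitted since they can be easily checked''), and your substitution of $T=\lceil\sqrt{N}\rceil$, $K=\lfloor N/T\rfloor$, $\delta=N^{-1/4}$ into the bound of Theorem~\ref{thm:ext-ncvx-holder-general} together with the elementary estimates $KT\gtrsim N$, $K\gtrsim\sqrt{N}$, $T\geq\sqrt{N}$ is precisely that check. The only nitpick is that the hypothesis $T\leq\lceil N/2\rceil$ of Fact~\ref{fact:N/2} fails at $N=2$, but you already cover this with your small-$N$ remark (and indeed $KT=2\geq N/4$ holds there directly).
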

As a sanity check, when $\nu=1$ (i.e., the standard smooth case),
using the fact $\nabla=\left\Vert \nabla F(\by_{0})\right\Vert \lesssim\sqrt{H(F(\by_{0})-F_{\star})}=\sqrt{H\Delta}$,
Corollary \ref{cor:ext-ncvx-holder-dep} reduces to a rate $\sqrt{\frac{H\Delta}{N}}+\frac{\sigma}{N^{1-\frac{1}{\p}}}+\left(\frac{\sigma^{\frac{\p}{\p-1}}H\Delta}{N}\right)^{\frac{\p-1}{3\p-2}}$
and Corollary \ref{cor:ext-ncvx-holder-free} gives a rate $\frac{\Delta+H}{N^{\frac{1}{4}}}+\frac{\sigma}{N^{\frac{\p-1}{2\p}}}$,
both of which match the best possible results in the respective situations
\citep{pmlr-v258-hubler25a,liu2025nonconvex}.

For general $\nu\in\left(0,1\right)$, as far as we know, no previous
works consider heavy-tailed noise. Hence, both corollaries are the
first and new.

\section{Algebraic Facts}

We give three useful algebraic facts in this section.
\begin{fact}
\label{fact:last-ineq}For any $T\in\N$ and $a\in\left(0,1\right)$,
there is 
\[
\sum_{t=1}^{T-1}\frac{\sum_{s=t+1}^{T}s^{a}}{t(T-t)^{2}}\lesssim\frac{1+\log T}{T^{1-a}}.
\]
\end{fact}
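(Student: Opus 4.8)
\textbf{Proof proposal for Fact \ref{fact:last-ineq}.}

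The plan is to estimate the double sum by first bounding the inner sum $\sum_{s=t+1}^{T}s^{a}$ from above and then controlling the resulting single sum over $t$. First I would note that since $a\in(0,1)$, the function $s\mapsto s^{a}$ is increasing, so $\sum_{s=t+1}^{T}s^{a}\leq\int_{t}^{T}s^{a}\,\d s+T^{a}\leq\frac{T^{1+a}}{1+a}+T^{a}\lesssim T^{1+a}$; a cruder but fully sufficient bound is simply $\sum_{s=t+1}^{T}s^{a}\leq(T-t)\,T^{a}$, which is what I would actually use because it carries a factor $(T-t)$ that cancels one power in the denominator. With this, each summand is bounded by
\[
\frac{\sum_{s=t+1}^{T}s^{a}}{t(T-t)^{2}}\leq\frac{(T-t)T^{a}}{t(T-t)^{2}}=\frac{T^{a}}{t(T-t)}.
\]

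Next I would handle $\sum_{t=1}^{T-1}\frac{1}{t(T-t)}$ via partial fractions: $\frac{1}{t(T-t)}=\frac{1}{T}\left(\frac{1}{t}+\frac{1}{T-t}\right)$. Summing over $t=1,\dots,T-1$ gives $\frac{1}{T}\left(\sum_{t=1}^{T-1}\frac{1}{t}+\sum_{t=1}^{T-1}\frac{1}{T-t}\right)=\frac{2}{T}\sum_{t=1}^{T-1}\frac{1}{t}\lesssim\frac{1+\log T}{T}$, using the standard harmonic-sum bound $\sum_{t=1}^{T-1}\frac{1}{t}\leq1+\log T$. Combining the two displays yields
\[
\sum_{t=1}^{T-1}\frac{\sum_{s=t+1}^{T}s^{a}}{t(T-t)^{2}}\leq T^{a}\sum_{t=1}^{T-1}\frac{1}{t(T-t)}\lesssim T^{a}\cdot\frac{1+\log T}{T}=\frac{1+\log T}{T^{1-a}},
\]
which is exactly the claimed bound.

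I do not anticipate a genuine obstacle here; the only mild subtlety is making sure the chosen bound on the inner sum retains a factor of $(T-t)$ so that the denominator degree drops from $2$ to $1$, which is essential for the partial-fraction step to produce only a single logarithmic factor rather than something worse. If one instead used the sharper $\sum_{s=t+1}^{T}s^{a}\lesssim T^{1+a}$, one would be left with $\sum_{t=1}^{T-1}\frac{1}{t(T-t)^{2}}$, which still works (it is $\lesssim\frac{1+\log T}{T^{2}}$ by the same partial-fraction idea applied to $\frac{1}{t(T-t)^{2}}$, but a bit more bookkeeping) and gives the same final rate; I would present the cleaner route above. Edge cases $T=1$ (empty sum, trivially $0$) and small $T$ are absorbed into the $\lesssim$ constant.
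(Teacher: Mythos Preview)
Your proposal is correct and follows essentially the same approach as the paper: bound the inner sum by $(T-t)T^{a}$, reduce to $T^{a}\sum_{t=1}^{T-1}\frac{1}{t(T-t)}$, apply the partial-fraction identity $\frac{1}{t(T-t)}=\frac{1}{T}\bigl(\frac{1}{t}+\frac{1}{T-t}\bigr)$, and finish with the harmonic-sum bound. The paper's proof is line-for-line the ``cleaner route'' you chose.
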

\begin{proof}
Note that $\sum_{s=t+1}^{T}s^{a}\leq(T-t)T^{a}$, which implies
\[
\sum_{t=1}^{T-1}\frac{\sum_{s=t+1}^{T}s^{a}}{t(T-t)^{2}}\leq\sum_{t=1}^{T-1}\frac{T^{a}}{t(T-t)}=\frac{1}{T^{1-a}}\sum_{t=1}^{T-1}\frac{1}{t}+\frac{1}{T-t}=\frac{2}{T^{1-a}}\sum_{t=1}^{T-1}\frac{1}{t}\lesssim\frac{1+\log T}{T^{1-a}}.
\]
\end{proof}

\begin{fact}
\label{fact:N/2}Given $2\leq N\in\N$, $K=\left\lfloor N/T\right\rfloor $
and $T\in\N$ satisfying $T\leq\left\lceil N/2\right\rceil $, there
is $KT\geq N/4$.
\end{fact}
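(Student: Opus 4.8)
The plan is to combine the Euclidean division of $N$ by $T$ with the crude estimate $\lceil N/2\rceil\le (N+1)/2$; no deeper idea is needed.

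First I would write the division with remainder: since $K=\lfloor N/T\rfloor$ by definition, there is an integer $r$ with $N=KT+r$ and $0\le r\le T-1$. Rearranging and using $r\le T-1$ gives
\[
KT = N-r \ge N-(T-1) = N-T+1.
\]
Next I would invoke the hypothesis $T\le\lceil N/2\rceil$ together with the elementary bound $\lceil N/2\rceil\le (N+1)/2$ (an equality precisely when $N$ is odd), so that $T\le (N+1)/2$. Substituting into the previous inequality yields
\[
KT \ge N-T+1 \ge N-\frac{N+1}{2}+1 = \frac{N+1}{2} \ge \frac{N}{2} \ge \frac{N}{4},
\]
which is the claimed bound (in fact the slightly stronger statement $KT\ge (N+1)/2$ drops out for free, though only $KT\ge N/4$ is needed downstream).

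The main obstacle: there is essentially none, since the assertion is elementary. The only place calling for a little care is the ceiling: one cannot simply use $T\le N/2$, which is false for odd $N$, so one must carry $T\le\lceil N/2\rceil\le (N+1)/2$ through the computation, noting that this still leaves comfortable slack ($N/2$ versus the required $N/4$). A secondary remark worth including is that the argument never needs to assume $K\ge 1$ separately: the displayed chain of inequalities already forces $KT\ge (N+1)/2>0$ for $N\ge 2$, so the conclusion is self-contained. I would write this out in three or four short lines, with the two displays above as the only computations.
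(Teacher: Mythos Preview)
Your proof is correct and follows essentially the same approach as the paper: both use $KT\ge N-T$ (you keep the slightly sharper $N-T+1$) and then substitute the bound $T\le\lceil N/2\rceil\le(N+1)/2$. The paper's version is terser and lands on $(N-1)/2\ge N/4$ rather than your $(N+1)/2$, but the idea is identical.
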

\begin{proof}
Note that $KT=\left\lfloor N/T\right\rfloor T\geq N-T\geq(N-1)/2\geq N/4$.
\end{proof}

\begin{fact}
\label{fact:p-q}Given $\p\in\left(1,2\right]$ and $q\in\left(0,1\right)$,
there are $q^{\p-1}+(1-q)^{\p-1}\leq2^{2-\p}$ and $1-q\leq\frac{1-q^{\p-1}}{\p-1}$.
\end{fact}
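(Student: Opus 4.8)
The plan is to derive both bounds from the concavity of $\phi(x)\defeq x^{\p-1}$ on $[0,\infty)$, which holds because $s\defeq\p-1\in(0,1]$. For the first inequality I would apply the midpoint case of Jensen's inequality to the concave function $\phi$ at the points $q$ and $1-q$: this gives $\tfrac12\bigl(q^{s}+(1-q)^{s}\bigr)\le\bigl(\tfrac{q+(1-q)}{2}\bigr)^{s}=2^{-s}$, and multiplying by $2$ yields $q^{\p-1}+(1-q)^{\p-1}=q^{s}+(1-q)^{s}\le 2^{1-s}=2^{2-\p}$, which is exactly the claimed bound (and is tight at $q=1/2$).

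For the second inequality I would use the fact that a concave function lies below each of its tangent lines. The tangent to $\phi$ at $x=1$ is $y=\phi(1)+\phi'(1)(x-1)=1+s(x-1)$, so $q^{s}\le 1+s(q-1)=1-s(1-q)$. Rearranging gives $s(1-q)\le 1-q^{s}$, and dividing by $s>0$ (which is legitimate since $\p>1$) produces $1-q\le\frac{1-q^{s}}{s}=\frac{1-q^{\p-1}}{\p-1}$, tight at $q=1$. Equivalently, one may simply invoke Bernoulli's inequality $(1+x)^{s}\le 1+sx$ valid for $s\in(0,1]$ and $x\ge-1$, applied with $x=q-1$.

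No step here presents a genuine difficulty; the only points requiring (minimal) attention are to note that $s=\p-1$ indeed lies in $(0,1]$ so that $\phi$ is concave, and that the borderline value $\p=2$ (i.e.\ $s=1$) is harmless because both inequalities then reduce to equalities. The argument is thus a one-line concavity (or Bernoulli) estimate in each case.
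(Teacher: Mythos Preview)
Your proof is correct and follows essentially the same approach as the paper: concavity of $x^{\p-1}$ (equivalently, midpoint Jensen) for the first bound, and the tangent-line/Bernoulli inequality at $x=1$ for the second. The paper phrases the second part by substituting $x=1-q$ and invoking Bernoulli's inequality $(1-x)^{\p-1}\le 1-(\p-1)x$, which is exactly your argument in different notation.
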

\begin{proof}
Note that $x^{\p-1}$ is concave when $\p\in\left(1,2\right]$, we
hence have $\frac{q^{\p-1}+(1-q)^{\p-1}}{2}\leq\left(\frac{q+1-q}{2}\right)^{\p-1}\Rightarrow q^{\p-1}+(1-q)^{\p-1}\leq2^{2-\p}$.
Next, let $x=1-q\in\left(0,1\right)$, we have
\[
1-q\leq\frac{1-q^{\p-1}}{\p-1}\Leftrightarrow(1-x)^{\p-1}\leq1-(\p-1)x,
\]
which is true by Bernoulli's inequality.
\end{proof}

\end{document}